\documentclass{article}
\PassOptionsToPackage{numbers, compress}{natbib}
\usepackage[final]{neurips_2023}

\usepackage{algorithm,algorithmic}
\usepackage{algorithmic}

\usepackage{graphicx}
\usepackage{multirow,hhline}
\usepackage{booktabs}

\newcommand{\splitFthree}{ H }
\newcommand{\defSplitFthree}{ 
    \frac{1}{ (1-\mu_1+\varepsilon_2)(\mu_1-\varepsilon_2) h^2(\mu_1,\varepsilon_2) } 
    }

\newcommand{\defU}{
    \lceil \frac{ \ln\del{ T \kl(\mu_a + \varepsilon_1, \mu_1 - \varepsilon_2) \vee e^2} }
                { \kl(\mu_a+\varepsilon_1,\mu_1-\varepsilon_2) } 
    \rceil}
\newcommand{\BoundFOne}{\frac{1}{\kl(\mu_a + \varepsilon_1, \mu_1 - \varepsilon_2)}}
\newcommand{\BoundFTwo}{\frac{1}{\kl(\mu_a + \varepsilon_1, \mu_a)}}

\newcommand{\BoundFThree}{
    6 H \ln\del{ \del{\fr{T}{H} \wedge H} \vee e^2 }
    +
    \frac{4}{\kl(\mu_1-\varepsilon_2,\mu_1)}
}

\newcommand{\BoundFThreeOneFirst}{ 
    6 H \ln\del{ \fr{T}{H} \vee e^2 } +
    \frac{1}{\kl(\mu_1-\varepsilon_2,\mu_1)}
}

\newcommand{\BoundFThreeOneSecond}{
    6 H \ln (H \vee e^2) + 
    \fr{1}{\kl(\mu_1-\varepsilon_2,\mu_1)}
}

\newcommand{\BoundFThreeOne}{ 
    6 H \ln\del{ \del { \fr{T}{H} \wedge H } \vee e^2 } +
    \frac{1}{\kl(\mu_1-\varepsilon_2,\mu_1)}
}

\newcommand{\BoundFThreeTwo}{ 
    \fr{3}{\kl(\mu_1-\varepsilon_2,\mu_1)}
}

\def\dmu{{\dot\mu}}
\def\cd{\cdot}
\usepackage{def}
\usepackage{commath}
\usepackage{graphicx}
\usepackage{float}
\usepackage{comment}
\usepackage{xspace}

\graphicspath{ {./pics/} }

\newcommand{\kl}{\mathsf{kl}}
\newcommand{\Ber}{\mathrm{Bernoulli}}
\newcommand{\Reg}{\mathrm{Reg}}

\def\sig{\sigma}
\newcommand{\UCB}{\mathrm{UCB}}

\usepackage[normalem]{ulem}

\renewcommand{\paragraph}[1]{\noindent\textbf{#1}}

\usepackage[utf8]{inputenc} 
\usepackage[T1]{fontenc}    
\usepackage{hyperref}       
\usepackage{url}            
\usepackage{booktabs}       
\usepackage{amsfonts}       
\usepackage{nicefrac}       
\usepackage{microtype}      
\usepackage{xcolor}         

\title{Kullback-Leibler Maillard Sampling for Multi-armed Bandits with Bounded Rewards}

\author{
  Hao Qin\\
  University of Arizona\\
  \texttt{hqin@arizona.edu} \\
  \And
  Kwang-Sung Jun \\
  University of Arizona\\
  \texttt{kjun@cs.arizona.edu} \\
  \And
  Chicheng Zhang \\
  University of Arizona\\
  \texttt{chichengz@cs.arizona.edu} \\
}

\allowdisplaybreaks

\begin{document}

\maketitle

\begin{abstract}
  We study $K$-armed bandit problems where the reward distributions of the arms are all supported on the $[0,1]$ interval. 
  Maillard sampling~\cite{maillard13apprentissage}, an attractive alternative to Thompson sampling, has recently been shown to achieve competitive regret guarantees in the sub-Gaussian reward setting~\cite{bian2022maillard} while maintaining closed-form action probabilities, which is useful for offline policy evaluation. In this work, we analyze the Kullback-Leibler Maillard Sampling (KL-MS) algorithm, a natural extension of Maillard sampling {and a special case of Minimum Empirical Divergence (MED)~\cite{honda2011asymptotically}} for achieving a KL-style finite-time gap-dependent regret bound. 
  We show that KL-MS enjoys the asymptotic optimality when the rewards are Bernoulli and has an {adaptive} worst-case regret bound of the form $O(\sqrt{\mu^*(1-\mu^*) K T \ln K} + K \ln T)$, where $\mu^*$ is the expected reward of the optimal arm, and $T$ is the time horizon length; {this is the first time such adaptivity is reported in the literature for an algorithm with asymptotic optimality guarantees.}
\end{abstract}

\setlength{\abovedisplayskip}{3pt}
\setlength{\belowdisplayskip}{4pt}
\setlength{\abovedisplayshortskip}{3pt}
\setlength{\belowdisplayshortskip}{4pt}
\textfloatsep=.5em
\vspace{-.5em}
\section{Introduction}
\vspace{-.5em}

The multi-armed bandit (abbrev. MAB) problem~\cite{thompson33onthelikelihood,lai85asymptotically,lattimore20bandit}, a stateless version of the reinforcement learning problem, has received much attention by the research community, due to its relevance in may applications such as online advertising, recommendation, and clinical trials.
In a multi-armed bandit problem, a  learning agent has access to a set of $K$ arms (also known as actions), where for each $i \in [K] := \cbr{1,\ldots,K}$, arm $i$ is associated with a distribution $\nu_i$ with mean $\mu_i$; at each time step $t$, the agent adaptively chooses an arm $I_t \in [K]$ by sampling from a probability distribution $p_t\in\Delta^{K-1}$ 
and receives reward $y_t \sim \nu_{I_t}$, based on the information the agent has so far. 
The goal of the agent is to minimize its pseudo-regret over $T$ time steps:
$
\Reg(T) = T \mu^* - \EE\sum_{t=1}^T y_t
$,
where $\mu^* = \max_i \mu_i$ is the optimal expected reward.

In this paper, we study the multi-armed bandit setting where reward distributions of all arms are supported on $[0,1]$.
\footnote{
  All of our results can be extended to distributions supported in $\sbr{L,U}$ for any known $L\le U$ by shifting and scaling the rewards to lie in $[0,1]$.
}
An important special case is Bernoulli bandits, where for each arm $i$, $\nu_i = \Ber(\mu_i)$ for some $\mu_i \in [0,1]$. It has practical relevance in settings such as computational advertising, where the reward feedback is oftentimes binary (click vs. not-click, buy vs. not-buy). 

 
Broadly speaking, there are two popular families of provably regret-efficient algorithms for bounded-reward bandit problems: deterministic exploration algorithms (such as KL-UCB~\citep{garivier2011kl, cappe2013kullback,maillard11finite}) and randomized exploration algorithms (such as Thompson sampling (TS)~\citep{thompson33onthelikelihood}). 
Randomized exploration algorithms such as TS have been very popular, perhaps due to its excellent empirical performance and the ability to cope with delayed rewards better than deterministic counterparts~\citep{chapelle11anempirical}.  
In addition, the logged data collected from randomized exploration, of the form $(I_t, p_{t,I_t}, y_t)_{t=1}^T$, where $p_{t,I_t}$ is the probability with which arm $I_t$ was chosen, are useful for offline evaluation purposes by employing the inverse propensity weighting (IPW) estimator~\citep{horvitz52generalization} or the doubly robust estimator~\citep{robins95semiparametric}.
However, calculating the arm sampling probability distribution $p_t$ for Thompson sampling is nontrivial.
Specifically, there is no known closed-form \footnote{Suppose the arms' mean reward posterior distributions' PMFs and PDFs are $(p_1, \ldots, p_K)$ and $(F_1, \ldots, F_K)$ respectively; for example, they are Beta distributions with different parameters, i.e. $p_i(x) = x^{a_i-1} (1-x)^{b_i-1} I(x \in [0,1])$ for some $a_i, b_i$. To the best of our knowledge, the action probabilities have the following integral expression: 
$
\PP(I_t = a) 
= 
\int_{\RR} p_a(x) \prod_{i \neq a} F_i(x) \dif x
$ and cannot be further simplified.}, and generic numerical integration methods and Monte-Carlo approximations 
suffer from instability issues: the time complexity for obtaining a numerical precision of $\eps$ is $\Omega(\mathrm{poly}(1/\eps))$~\cite{novak2016some}.
This is too slow to be useful especially for web-scale deployments; e.g., Google AdWords receives $\sim$237M clicks per day.
Furthermore, the computed probability will be used after taking the inversion, which means that even moderate amount of errors are intolerable.
Indeed, Figure~\ref{fig:offline} shows that the offline evaluation with Thompson sampling as the behavioral policy will be largely biased and inaccurate due to the errors from the Monte Carlo approximation.

\begin{figure}
  \centering
  \begin{minipage}{0.4\textwidth}
  \includegraphics[width=\linewidth]{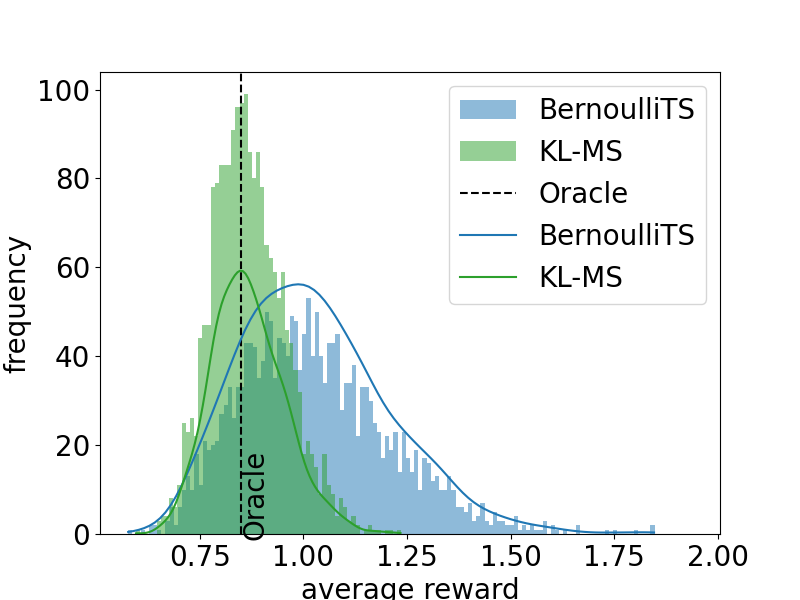}
  \end{minipage}
  \hfill
  \begin{minipage}{0.58 \textwidth}
   Figure 1: Histogram of the average rewards computed from the offline evaluation where
   the logged data is collected from Bernoulli TS and KL-MS (Algorithm~\ref{alg:KL-MS}) in a Bernoulli bandit environment with the mean reward (0.8, 0.9) with time horizon $T=10,000$.
For Bernoulli TS's log, we approximate the action probability by Monte Carlo Sampling with 1000 samples for each step. 
Here we estimate the expected reward of the uniform policy which has expected average reward of 0.85 (black dashed line). 
Across 2000 trials, the logged data of KL-MS induces an MSE of $0.00796$; however, for half of the trials, the IPW estimator induced by Bernoulli TS's log returns invalid values due to the action probability estimates being zero. 
Even excluding those invalid values, the Bernoulli TS's logged data induces an MSE of $0.02015$. See Appendix~\ref{sec:addl-experiments} for additional experiments.
  \end{minipage}
  \label{fig:offline}
\end{figure}

Recently, many studies have introduced alternative randomized algorithms that allow an efficient computation of $p_t$~\cite{honda2011asymptotically,maillard13apprentissage,cb17boltzmann,zimmert21tsallis}.
Of these, Maillard sampling (MS) ~\cite{maillard13apprentissage,bian2022maillard}, a Gaussian adaptation of the Minimum Empirical Divergence (MED) algorithm \cite{honda2011asymptotically} originally designed for finite-support reward distributions, provides a simple algorithm for the sub-Gaussian bandit setting that computes $p_t$ in a closed form: 
\begin{align}
  p_{t,a} \propto \exp\del{- N_{t-1,a} \fr{\hat\Delta_{t-1,a}^2}{2\sig^2}}
  \label{label:ms-rule}
\end{align}
where at time step $t$, $N_{t,a}$ is the number of pulling arm $a$. 
We define the estimator of $\mu_a$ as $\hmu_{t, a} := \frac{\sum_{s=1}^t \onec{I_t = a} y_t}{N_{t, a}}$ and the best performed mean value as $\hmu_{t, \max} := \max_{a\in[K]} \mu_{t, a}$.
$\hat\Delta_{t-1,a} = \max_{a'}\hmu_{t-1,a'} - \hmu_{t-1,a}$ is the empirical suboptimality gap of arm $a$, and $\sigma$ is the subgaussian parameter of the reward distribution of all arms. 
For sub-Gaussian reward distributions, MS enjoys the asymptotic optimality under the special case of Gaussian rewards
and a near-minimax optimality~\cite{bian2022maillard}, making it an attractive alternative to Thompson sampling.
{Also, MS satisfies the sub-UCB criterion (see Section~\ref{sec:prelims} for a precise definition) to help establish sharp finite-time instance-dependent regret guarantees.}
Can we adapt MS to the bounded reward setting and achieve the asymptotic, minimax optimality and sub-UCB criterion while computing the sampling probability in a closed-form?
In this paper, we make significant progress on this question.

\paragraph{Our contributions.} We focus on a Bernoulli adaptation of MS that we call Kullback-Leibler Maillard Sampling (abbrev. KL-MS) and perform a finite-time analysis of it in the bounded-reward bandit problem. 
KL-MS uses a sampling probability similar to MS but tailored to the $[0,1]$-bounded reward setting:

\[
    p_{t,a} \propto \exp\del{ -N_{t-1,a} \kl\del{\hmu_{t-1, a}, \hmu_{t-1, \max}} },
\]
where $\kl(\mu, \mu') := \mu \ln\frac{\mu}{\mu'} + (1-\mu) \ln\frac{1-\mu}{1-\mu'}$ is the binary Kullback-Leibler (KL) divergence. {We can also view KL-MS as an instantiation of MED~\cite{honda2011asymptotically} for Bernoulli rewards; See Section~\ref{sec:related} for a detailed comparison.}

KL-MS performs an efficient exploration for bounded rewards since one can use $\kl(a,b) \ge 2(a-b)^2$ to verify that the probability being assigned to each empirical non-best arm by KL-MS is never larger than that of MS with $\sigma^2 = 1/4$, the best sub-Gaussian parameter for the bounded rewards in $[0,1]$.
We show that KL-MS achieves a sharp finite-time regret guarantee (Theorem~\ref{thm:expected-regret-total}) that can be simultaneously converted to: 
\begin{itemize}\itemsep0em
    \item an asymptotic regret upper bound (Theorem~\ref{thm:asymptotic-optimality}), which is asymptotically optimal when specialized to the Bernoulli bandit setting;
    \item a $\sqrt{T}$-style regret guarantee of $O(\sqrt{\mu^*(1-\mu^*) K T \ln K} + K\ln(T))$ (Theorem~\ref{thm:minimax-regret-bound}) where $\mu^*$ is the mean reward of the best arm.
    This bound has two salient features. 
    First, in the worst case, it is at most a $\sqrt{\ln K}$ factor suboptimal than the minimax optimal regret of $\Theta(\sqrt{KT})$~\cite{audibert09minimax,auer03nonstochastic}. 
    Second, its $\tilde{O}(\sqrt{\mu^*(1-\mu^*)}$ coefficient adapts to the variance of the optimal arm reward; this is the first time such adaptivity is reported in the literature for an algorithm with asymptotical optimality guarantees. \footnote{As side results, we show in Appendix~\ref{sec:refined} that with some modifications of the analysis, 
existing algorithms~\cite{audibert09exploration,cappe2013kullback,menard17minimax} also achieve regret of the form $\tilde{O}(\sqrt{\mu^*(1-\mu^*) \mathrm{poly}(K) T})$ for $[0,1]$-bounded reward MABs.}
    {\item a sub-UCB regret guarantee, which many existing minimax optimal algorithms~\citep{menard17minimax, garivier2022kl} have not been proven to satisfy. 
    }

\end{itemize}

\begin{table}
\centering
\begin{tabular}{ccccl}
\hline 
\text{Algorithm\&} & \multicolumn{2}{c}{\text{Finite-Time Regret}} & \text {Closed-form} & Reference \\
\text{Analysis}    & \text{Minimax Ratio}         & \text{Sub-UCB} & \text {Probability}    &   \\
\hline 
\text { TS }     &$\sqrt{\ln K}$& \text { yes } & \text { no }  & See the caption \\
\text { ExpTS }  &$\sqrt{\ln K}$& \text { yes } & \text { no }  & \citet{jinfinite}    \\
$\text{ ExpTS}^+$& $1$          & $-^{\star\star}$  & \text { no }  & \citet{jinfinite}    \\
$\text{ kl-UCB }$&$\sqrt{\ln T}$& \text { yes } & \text { N/A } & \citet{cappe2013kullback}\\
$\text{kl-UCB++}$&    $ 1 $     & $-^{\star\star}$  & \text { N/A } & \citet{menard17minimax}\\
$\text{kl-UCB-switch}$& $1$     & $-^{\star\star}$  & \text { N/A } & \citet{garivier2022kl} \\
\text { MED }    &      $-$     &       $-$     & $\text {no}^\star$  & \citet{honda2011asymptotically}\\
\text { DMED }   &      $-$     &       $-$     & \text {N/A}   & \citet{honda2012finite} \\
\text { IMED }   &      $-$     &       $-$     & \text {N/A}   & \citet{honda15non} \\
\text { KL-MS}   &$\sqrt{\ln K}$& \text { yes } & \text { yes } & \text{this paper}\\
\hline
\end{tabular}
\vspace{.3em}
\caption{Comparison of regret bounds for bounded reward distributions; for space constraints we only include those that achieves the asymptotic optimality for the special case of Bernoulli distributions (this excludes, e.g., Maillard Sampling~\cite{maillard13apprentissage,bian2022maillard}, Tsallis-INF~\cite{zimmert21tsallis} and UCB-V~\cite{audibert2009exploration}).
\lq$-$\rq indicates that the corresponding analysis is not reported. 
\lq N/A\rq indicates that the algorithm does have closed-form, but it is deterministic.
\lq$\star$\rq indicates that its computational complexity for calculating the action probability is $\ln(1/\text{precision})$.
\lq$\star\star$\rq indicates that we conjecture that the algorithm is not sub-UCB. 
The results on TS are reported by \citet{agrawal2013further,agrawal2017near,korda2013thompson}.
}
\end{table}

We also conduct experiments that show that thanks to its  closed-form action probabilities, KL-MS generates much more reliable logged data than Bernoulli TS with Monte Carlo estimation of action probabilities; this is reflected in their offline evaluation performance 
using the IPW estimator; see Figure~\ref{fig:offline} and Appendix~\ref{sec:addl-experiments} for more details.

\section{Preliminaries}
\label{sec:prelims}

Let $N_{t,a}$ be the number of times arm $a$ has been pulled until time step $t$ (inclusively). Denote the suboptimality gap of arm $a$ by $\Delta_a := \mu^* - \mu_a$, where $\mu^* = \max_{i \in [K]} \mu_i$ is the optimal expected reward.
Denote the empirical suboptimality gap of arm $a$ by $\hat{\Delta}_{t,a}:= \hat{\mu}_{t,\max} - \hat{\mu}_{t,a}$;
here, $\hat{\mu}_{t,a}$ is the empirical estimation to $\mu_{a}$ up to time step $t$, i.e., $\hat{\mu}_{t,a}:= \frac{1}{N_{t,a}}\sum_{s=1}^{t} y_s \onec{I_s = a}$, and $\hat{\mu}_{t,\max} = \max_{a\in[K]} \hmu_{t,a}$ is the best empirical reward at time step $t$.
For arm $a$, define $\tau_a(s) := \min\cbr{t \geq 1: N_{t,a}=s}$ at the time step when arm $a$ is pulled for the $s$-th time, which is a stopping time; we also use $\hat{\mu}_{(s),a} := \hat{\mu}_{\tau_a(s),a}$ to denote empirical mean of the first $s$ reward values received from pulling arm $a$.

We define the Kullback-Leibler divergence between two distributions $\nu$ and $\rho$ as $\KL(\nu, \rho) = \EE_{X \sim \nu}\sbr{ \ln \frac{\dif \nu}{\dif \rho} (X) }$ 
if $\nu$ is absolutely continuous w.r.t. $\rho$, and $= +\infty$ otherwise.
Recall that we define the binary Kullback-Leibler divergence between two numbers $\mu,\mu'$ in $[0,1]$ as $\kl(\mu, \mu') := \mu \ln\frac{\mu}{\mu'} + (1-\mu) \ln\frac{1-\mu}{1-\mu'}$, which is also the KL divergence between two Bernoulli distributions with mean parameters $\mu$ and $\mu'$ respectively. We define $\dmu = \mu(1-\mu)$, which is the variance of $\Ber(\mu)$ but otherwise an upper bound on any distribution supported on $[0,1]$ with mean $\mu$; see Lemma \ref{lemma:control-variance} for a formal justification.

In the regret analysis, we will oftentimes use the following notation for comparison up to constant factors:
define $f \lesssim g$ (resp. $f \gtrsim g$) to denote that $f \leq C g$ (resp. $f \geq C g$) for some numerical constant $C > 0$. 
We define $a \vee b$ and $a \wedge b$ as $\max(a,b)$ and $\min(a,b)$, respectively.
For an event $E$, we use $E^c$ to denote its complement.

Below, we define some useful criteria for measuring the performance of bandit algorithms, specialized to the $[0,1]$ bounded reward setting.

\paragraph{Asymptotic optimality in the Bernoulli reward setting} An algorithm is asymptotically optimal in the Bernoulli reward setting~\cite{lai85asymptotically,burnetas96optimal} if for any Bernoulli bandit instance $(\nu_a = \Ber(\mu_a))_{a \in [K]}$, 
$
            \limsup_{T \to \infty}
            \frac{\Reg(T)}{\ln{T}} =
            \sum_{a:\Delta_a > 0} 
            \frac{\Delta_a}{\kl(\mu_a, \mu^*)}.
$
    
        \paragraph{Minimax ratio}
        The minimax optimal regret of the $[0,1]$ bounded reward bandit problem is $\Theta\del{\sqrt{KT}}$ \cite{audibert09minimax,auer03nonstochastic}. Given a $K$-armed bandit problem with time horizon $T$, an algorithm has a minimax ratio of $f(T,K)$ if its has a worst-case regret bound of $O(\sqrt{KT}f(T,K))$.

        \paragraph{Sub-UCB} Sub-UCB is originally defined in the context of sub-Gaussian bandits~\citep{lattimore20bandit}:
        given a bandit problem with $K$ arms whose reward distributions are all sub-Gaussian, 
        an algorithm is said to be sub-UCB
        if there exists some positive constants $C_1$ and $C_2$, such that for all $\sigma^2$-sub-Gaussian bandit instances, 
            $
                \Reg(T) \leq C_1 \sum_{a: \Delta_a >0} \Delta_a + C_2 \sum_{a:\Delta_a>0} \frac{\sigma^2}{\Delta_a}\ln{T}
            $.
       Specialized to our setting, as any distribution supported on $[0,1]$ is also $\frac14$-sub-Gaussian, and all suboptimal arm gaps $\Delta_a \in (0,1]$ are such that $\Delta_a < \frac{1}{\Delta_a}$, the above sub-UCB criterion simplifies to: there exists some positive constant $C$, such that for all $[0,1]$-bounded reward bandit instances, 
        $
        \Reg(T) \leq C \sum_{a:\Delta_a>0} \frac{\ln{T}}{\Delta_a}
        $.

\vspace{-.5em}
\section{Related Work}
\label{sec:related}
\vspace{-.5em}

\paragraph{Bandits with bounded rewards.} 
Early works of~\citet{lai85asymptotically,burnetas96optimal} show that in the bounded reward setting, for any consistent stochastic bandit algorithm, the regret is lower bounded by $(1 + o(1)) \sum_{a: \Delta_a > 0}\frac{\Delta_a \ln T}{\KL_{\inf}(\nu_a, \mu^*)}$ and $\KL_{\inf}(\nu_a, \mu^*)$ is defined as
\begin{align} \label{eqn:KL-inf}
    \KL_{\inf}(\nu, \mu^*) := \inf\cbr{\KL(\nu, \rho): \EE_{X\sim \rho} \sbr{X} > \mu^*, \mathrm{supp}(\rho) \subset [0,1]},
\end{align}
where the random variable follows a distribution $\rho$ bounded in $\sbr{0, 1}$.
Therefore, any algorithm whose regret upper bound matches the lower bound is said to achieve asymptotic optimality.
\citet{cappe2013kullback} propose the KL-UCB algorithm and provide a finite time regret analysis, which is further refined by~\citet[][Chapter 10]{lattimore20bandit}.
Another line of work establishes asymptotic and finite-time regret guarantees for
Thompson sampling algorithms and its variants~\cite{agrawal12analysis,agrawal2017near,kaufmann12thompson,jinfinite}, which, when specialized to the Bernoulli bandit setting, can be combined with Beta priors for the Bernoulli parameters to design efficient algorithms.

A number of studies even go beyond the Bernoulli-KL-type regret bound and adapt to the variance of each arm in the bounded reward setting.
UCB-V~\cite{audibert09exploration} achieves a regret bound that adapts to the variance.
Efficient-UCBV~\cite{mukherjee18efficient} achieves a variance-adaptive regret bound and also an optimal minimax regret bound $O(\sqrt{KT})$, but it is not sub-UCB.
\citet{honda2011asymptotically} propose the MED algorithm that is asymptotically optimal for bounded rewards, but it only works for rewards that with finite supports. 
\citet{honda15non} propose the Indexed MED (IMED) algorithm that can handle a more challenging case where the reward distributions are supported in $(-\infty, 1]$.

As with worst-case regret bounds, 
first, it is well-known that for Bernoulli bandits as well as bandits with $[0,1]$ bounded rewards, the minimax optimal regrets are of order $\Theta(\sqrt{K T})$~\cite{auer03nonstochastic,audibert09minimax}. 
Of the algorithms that enjoy asymptotic optimality under the Bernoulli reward setting described above, KL-UCB~\cite{cappe2013kullback} has a worst-case regret bound of $O(\sqrt{K T \ln T})$, which is refined by the KL-UCB++ algorithm~\cite{menard17minimax} that has a worst-case regret bound of $O(\sqrt{K T})$.
We also show in Appendix~\ref{sec:kl-ucb-refined} and \ref{sec:kl-ucb++-refined} that with some modifications of existing analysis, KL-UCB and KL-UCB++ enjoy a regret bound of $O(\sqrt{ \mu^*(1-\mu^*) K T \ln T})$ and $O(\sqrt{ \mu^*(1-\mu^*) K^3 T \ln T})$ respectively. 
Although the regret is worse in the order of $K$, it adapts to $\mu^*$ and will have a better regret when $\mu^*$ is small (say, $\mu^* \leq 1/K^2$).
{KL-UCB++\cite{menard2017minimax} and KL-UCB-Switch\cite{garivier2022kl} achieves $O(\sqrt{KT})$ regret in the finite-time regime and asymptotic optimality, while the sub-UCB criterion has not been satisfied.
However, \citet[\S3]{lattimore2018refining} shows that MOSS~\cite{audibert2009minimax} suffers a sub-optimal regret worse than UCB-like algorithms because of not satisfying sub-UCB criteria, and we suspect that KL-UCB-switch experience the same issue as MOSS.}
For Thompson Sampling style algorithms,~\citet{agrawal2013further} shows that the original Thompson Sampling algorithm has a worst-case regret of $O(\sqrt{K T \ln K})$, and the ExpTS+ algorithm~\cite{jinfinite} has a worst-case regret of $O(\sqrt{KT})$.

\paragraph{Randomized exploration for bandits.} Many randomized exploration methods have been proposed for multi-armed bandits. Perhaps the most well-known is Thompson sampling~\cite{thompson33onthelikelihood}, which is shown to achieve Bayesian and frequentist-style regret bounds in a broad range of settings~\cite{russo14learning,agrawal12analysis,kaufmann12thompson,korda2013thompson,jin2021mots,jinfinite}. 
A drawback of Thompson sampling, as mentioned above, is that the action probabilities cannot be obtained easily and robustly. To cope with this, a line of works design randomized exploration algorithms with action probabilities in closed forms. 
For sub-Gaussian bandits, \citet{cb17boltzmann} propose a variant of the Boltzmann exploration rule (that is, the action probabilities are proportional to exponential to empirical rewards, scaled by some positive numbers), and show that it has $O\del{\frac{K \ln^2 T}{\Delta}}$ instance-dependent and $O\del{\sqrt{KT} \ln K}$ worst-case regret bounds respectively, where $\Delta = \min_{a: \Delta_a > 0} \Delta_a$ is the minimum suboptimalty gap. 
Maillard sampling (MS; Eq.~\eqref{label:ms-rule}) is an algorithm proposed by the thesis of \citet{maillard13apprentissage} where the author reports  that MS achieves the asymptotic optimality and has a finite-time regret  of order 
$\sum_{a: \Delta_a > 0} \rbr{ \frac{\ln T}{\Delta_a} + \frac{1}{\Delta_a^3} }$ from which a worst-case regret bound of $O(\sqrt K T^{3/4})$ can be derived. 
MED~\cite{honda2011asymptotically}, albeit achieves asymptotic optimality for a broad family of bandits with finitely supported reward distributions, also has a high finite-time regret bound of at least $\sum_{a: \Delta_a > 0} \rbr{ \frac{\ln T}{\Delta_a} + \frac{1}{\Delta_a^{2|\mathrm{supp}(\nu_1)|-1}} }$.
\footnote{A close examination of~\cite{honda2011asymptotically}'s Lemma 9 (specifically, equation (20)) shows that for each suboptimal arm $a$, the authors bound $\EE\sbr{N_{T,a}}$ by a term at least $\sum_{t=1}^T K\del{t+1}^{|\mathrm{supp}(\nu_1)|}\cdot$ $\exp\del{-t C(\mu_1, \mu_1-\varepsilon)}$, where $C(\mu,\mu'):= \frac{(\mu-\mu')^2}{2\mu'(1+\mu)}$ and $\varepsilon \leq \Delta_a$; this is $\Omega\rbr{ \frac{1}{\Delta_a^{2|\mathrm{supp}(\nu_1)|}} }$ when $\mu_1$ is bounded away from $0$ and $1$.}
Recently,~\citet{bian2022maillard} report a refined analysis of~\citet{maillard13apprentissage}'s sampling rule, showing that it 
has a finite time regret of order $\sum_{a: \Delta_a > 0} \frac{\ln(T\Delta_a^2)}{\Delta_a} + O\del{\sum_{a: \Delta_a > 0} \frac{1}{\Delta_a} \ln(\frac{1}{\Delta_a}) }$, 
and additionally enjoys a $O\del{ \sqrt{KT \ln T} }$ worst-case regret, and by inflating the exploration slightly (called MS$^+$), the bound can be improved and enjoy the minimax regret of $O\del{ \sqrt{KT \ln K} }$, which matches the best-known regret bound among those that satisfy sub-UCB criterion, except for AdaUCB.
In fact, it is easy to adapt our proof technique in this paper to show that MS, without any further modification, achieves a $O\del{ \sqrt{KT \ln K} }$ worst-case regret.

Randomized exploration has also been studied from a nonstochastic bandit perspective~\cite{auer03nonstochastic,audibert09minimax}, where randomization serves both as a tool for exploration and a way to hedge bets against the nonstationarity of the arm rewards. Many recent efforts focus on designing randomized exploration bandit algorithms that achieve ``best of both worlds'' adaptive guarantees, i.e., achieving logarithmic regret for stochastic environments while achieving $\sqrt{T}$ regret for adversarial environments~\cite[e.g.][]{zimmert21tsallis,wei18more}.

{\paragraph{Binarization trick.}
It is a folklore result that bandits with $[0,1]$ bounded reward distributions can be reduced to Bernoulli bandits via a simple binarization trick: at each time step $t$, the learner sees reward $r_t \in [0,1]$, draws $\tilde{r}_t \sim \Ber(r_t)$ and feeds it to a Bernoulli bandit algorithm. However, this reduction does not result in asymptotic optimality for the general bounded reward setting, where the asymptotic optimal regret is of the form 
$(1 + o(1)) \sum_{a: \Delta_a > 0}\frac{\Delta_a \ln T}{\KL_{\inf}(\nu_a, \mu^*)}$ with $\KL_{\inf}(\nu_a, \mu^*)$ defined in the Eq~\eqref{eqn:KL-inf}.
If we combine the binarization trick and the MED algorithm in the bounded reward setting, the size of the support set is viewed as $2$, the finite-time regret bound is at best as $O(K^{1/4}T^{3/4})$ (ignoring  logarithmic factors), which is much higher than $O(\sqrt{KT})$.}

\paragraph{Bandit algorithms with worst-case regrets that depend on the optimal reward.}
Recent linear logistic bandit works have shown worst-case regret bounds that depend on the variance of the best arm~\cite{mason2022experimental,abeille2021instance}.
When the arms are standard basis vectors, logistic bandits are equivalent to Bernoulli bandits, and the bounds of \citet{abeille2021instance} become $\tilde O\del{K\sqrt{\dmu^* T} + \fr{K^2}{\dmu_{\min}} \wedge (K^2 + A) } $ where $\dmu_{\min} = \min_{i\in[K]}\dmu_i$ and $A$ is an instance dependent quantity that can be as large as $T$.
This bound, compared to ours, has an extra factor of $\sqrt{K}$ in the leading term and the lower order term has an extra factor of $K$.
Even worse, it has the term $\dmu_{\min}^{-1}$ in the lower order term, which can be arbitrarily large. 
The bound in \citet{mason2022experimental} becomes $\tilde O\del{\sqrt{\dmu^* KT }+ \dmu_{\min}^{-1} K^2}$, which matches our bound in the leading term up to logarithmic factors yet still have extra factors of $K$ and $\dmu_{\min}^{-1}$ in the lower order term.

\vspace{-.5em}
\section{Main Result}
\label{sec:main}
\vspace{-.5em}

\paragraph{The KL Maillard Sampling Algorithm.} We propose an algorithm called KL Maillard sampling (KL-MS) for bounded reward distributions (Algorithm~\ref{alg:KL-MS}). 
For the first $K$ times steps, the algorithm pulls each arm once (steps~\ref{step:t-leq-k-1} to~\ref{step:t-leq-k-2}); this ensures that starting from time step $K+1$, the estimates of the reward distribution of all arms are well-defined. 
From time step $t = K+1$ on, the learner computes the empirical mean $\hat{\mu}_{t-1,a}$ of all arms $a$. 
For each arm $a$, the learner computes the binary KL divergence between $\hat{\mu}_{t-1,a}$ and $\hat{\mu}_{t-1, \max}$, $\kl(\hat{\mu}_{t-1,a},\hat{\mu}_{t-1,\max})$, as a measure of empirical suboptimality of that arm. The sampling probability of arm $a$, denoted by $p_{t,a}$, is proportional to the exponential of negative product between $N_{t-1,a}$ and $\kl(\hat{\mu}_{t-1,a},\hat{\mu}_{t-1,\max})$ (Eq.~\eqref{eqn:kl-ms-rule} of step~\ref{step:sampling-prob}). This policy naturally trades off between exploration and exploitation: arm $a$ is sampled with higher probability, if either it has not been pulled many times ($N_{t-1,a}$ is small) or it appears to be close to optimal empirically ($\kl(\hat{\mu}_{t-1,a},\hat{\mu}_{t-1,\max}))$ is small). 
The algorithm samples an arm $I_t$ from $p_t$, and observe a reward $y_t$ of the arm chosen. 

We remark that if the reward distributions $\nu_i$'s are Bernoulli, KL-MS is equivalent to the MED algorithm~\citep{honda2011asymptotically} since in this case, all reward distributions have a binary support of $\{0,1\}$.
However, 
KL-MS is different from MED in general: MED computes the empirical distributions of arm rewards $\hat{F}_{t-1,a}$, and chooses action according to probabilities $p_{t,a} \propto \exp(-N_{t-1,a} D_{t-1,a} )$; here, $D_{t-1,a} := \KL(\hat{F}_{t-1,a}, \hat{\mu}_{t-1,\max})$ (recall its definition in Section~\ref{sec:related}) is the ``minimum empirical divergence'' between arm $a$ and the highest empirical mean reward, which is different from the binary KL divergence of the mean rewards used in KL-MS.

    \begin{algorithm}[t]
    \begin{algorithmic}[1]
    \STATE \textbf{Input:} $K\geq 2$
    \FOR{$t=1,2,\cdots,T$}
        \IF{$t\leq K$} 
        \label{step:t-leq-k-1}
            \STATE Pull the arm $I_t=t$ and observe reward $y_t \sim \nu_i$.
            \label{step:t-leq-k-2}
        \ELSE 
            \STATE For every $a \in [K]$, compute 
            \begin{align}
                p_{t,a} = \frac{1}{M_t}\exp\del{-N_{t-1,a} \cdot \kl(\hat{\mu}_{t-1,a},\hat{\mu}_{t-1,\max})}
                \label{eqn:kl-ms-rule}
            \end{align}
            where $M_t = \sum_{a=1}^K \exp(-N_{t-1,a} \kl(\hat{\mu}_{t-1,a},\hat{\mu}_{t-1,\max}))$ 
            is the normalizer. 
            \label{step:sampling-prob}
            \STATE Pull the arm $I_t \sim p_t$.
            \STATE Observe reward $y_t \sim \nu_{I_t}$.
        \ENDIF
    \ENDFOR
    \end{algorithmic}
    \caption{KL Maillard Sampling (KL-MS)}
    \label{alg:KL-MS}
    \end{algorithm}

    \subsection{Main Regret Theorem}

    Our main result of this paper is the following theorem on the regret guarantee of KL-MS (Algorithm~\ref{alg:KL-MS}). Without loss of generality, throughout the rest of the paper, we assume $\mu_1 \ge \mu_2 \ge \cdots \ge \mu_K$.

    \begin{theorem} \label{thm:expected-regret-total}
    For any $K$-arm bandit problem with reward distribution supported on $[0,1]$, 
    KL-MS has regret bounded as follows. For any $\Delta \geq 0$ and $c \in (0, \frac14]$: 
    \begin{align}
    \Reg(T)
    &\leq
    T\Delta 
    +
    \sum_{a: \Delta_a > \Delta}  
    \frac{\Delta_a \ln(T \kl(\mu_a + c \Delta_a, \mu_1 - c \Delta_a) \vee e^2 )}{\kl(\mu_a + c \Delta_a, \mu_1 - c \Delta_a)}
    \nonumber \\
    &~~ + 
    392 \sum_{a: \Delta_a > \Delta} \rbr{\frac{\dmu_1 + \Delta_a}{c^2 \Delta_a}} \ln\rbr{ \rbr{  \frac{\dmu_1 + \Delta_a}{c^2 \Delta_a^2} \wedge \frac{c^2 T \Delta_a^2}{\dmu_1 + \Delta_a }} \vee e^2 } 
    \label{eqn:kl-ms-main-regret}
    \end{align}
    \end{theorem}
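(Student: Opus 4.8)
The plan is to bound $\EE[N_{T,a}]$ for each suboptimal arm separately and then sum. From $\Reg(T)=\sum_{a:\Delta_a>0}\Delta_a\EE[N_{T,a}]$, the arms with $\Delta_a\le\Delta$ together contribute at most $\Delta\sum_a\EE[N_{T,a}]\le T\Delta$, so it remains to control $\EE[N_{T,a}]$ for a fixed arm $a$ with $\Delta_a>\Delta$. Fix such an $a$ and set $\varepsilon_1=\varepsilon_2=c\Delta_a$; since $c\le\tfrac14$ we have $\mu_a+\varepsilon_1<\mu_1-\varepsilon_2$, so $\kappa:=\kl(\mu_a+\varepsilon_1,\mu_1-\varepsilon_2)>0$. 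With the ``clean event'' $\mathcal G_t:=\{\hat\mu_{t-1,a}\le\mu_a+\varepsilon_1\}\cap\{\hat\mu_{t-1,\max}\ge\mu_1-\varepsilon_2\}$, I split every pull in round $t>K$ as
\[
\onec{I_t=a}\le\onec{I_t=a,\mathcal G_t}+\onec{I_t=a,\hat\mu_{t-1,a}>\mu_a+\varepsilon_1}+\onec{\hat\mu_{t-1,\max}<\mu_1-\varepsilon_2},
\]
so that $\EE[N_{T,a}]\le 1+\EE[F_1]+\EE[F_2]+\EE[F_3]$, where $F_1,F_2,F_3$ are the corresponding time sums. The target bounds are $\EE[F_1]\le U+\BoundFOne$ with $U:=\defU$, $\EE[F_2]\le\BoundFTwo$, and $\EE[F_3]\le\BoundFThree$ with $H:=\defSplitFthree$.

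For $F_1$, monotonicity of $\kl$ gives $\kl(\hat\mu_{t-1,a},\hat\mu_{t-1,\max})\ge\kappa$ on $\mathcal G_t$, hence $p_{t,a}\le e^{-N_{t-1,a}\kappa}$ there. Since for $t>K$ at most $U-1$ rounds with $I_t=a$ can have $N_{t-1,a}<U$, it remains to bound the overflow; taking conditional expectations and using that $\mathcal G_t$ and $N_{t-1,a}$ are $\mathcal F_{t-1}$-measurable, $\EE\big[\sum_{t>K}\onec{I_t=a,\mathcal G_t,N_{t-1,a}\ge U}\big]\le e^{-U\kappa}\,\EE\big[\#\{t>K:N_{t-1,a}\ge U\}\big]\le e^{-U\kappa}T\le\BoundFOne$, where the last step uses $e^{-U\kappa}\le(T\kappa\vee e^2)^{-1}$. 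For $F_2$, each value $s$ of $N_{t-1,a}$ is realized by at most one round with $I_t=a$, at which $\hat\mu_{t-1,a}=\hat\mu_{(s),a}$; since $\hat\mu_{(s),a}$ is the mean of $s$ i.i.d.\ draws from $\nu_a$ no matter the policy, the Chernoff bound for $[0,1]$-valued variables gives $\PP(\hat\mu_{(s),a}>\mu_a+\varepsilon_1)\le e^{-s\kl(\mu_a+\varepsilon_1,\mu_a)}$, and summing the geometric series gives $\EE[F_2]\le\BoundFTwo$.

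The term $F_3$ is the crux. As $\hat\mu_{t-1,\max}<\mu_1-\varepsilon_2$ forces $\hat\mu_{t-1,1}<\mu_1-\varepsilon_2$, I bound $F_3\le\sum_t\onec{\hat\mu_{t-1,1}<\mu_1-\varepsilon_2}$ and peel by the frozen value $s=N_{t-1,1}$ of the optimal arm's pull count. A Chernoff bound restricts which values $s$ can be ``bad'' (have $\hat\mu_{(s),1}<\mu_1-\varepsilon_2$): were each such round immediately followed by a pull of arm $1$, these rounds would contribute only $\sum_{s\ge1}e^{-s\kl(\mu_1-\varepsilon_2,\mu_1)}\le\tfrac1{\kl(\mu_1-\varepsilon_2,\mu_1)}$. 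The difficulty is that a bad frozen value can persist over many rounds, since while $N_{\cdot,1}=s$ arm $1$ is not pulled, so one must bound the length of such a \emph{starved} stretch. Within such a stretch $p_{t,1}=e^{-s\,\kl(\hat\mu_{(s),1},\hat\mu_{t-1,\max})}/M_t$: if $\hat\mu_{t-1,\max}$ is close to $\hat\mu_{(s),1}$ then $p_{t,1}\gtrsim 1/K$ and the stretch is short; otherwise some arm is empirically over-optimistic relative to $\mu_1-\varepsilon_2$, which is self-limiting because such an arm is sampled at a high rate precisely while it looks good, so its empirical mean returns below the threshold within $O(H)$ rounds in expectation, where $H=\defSplitFthree$ encodes the local curvature of $q\mapsto\kl(q,\mu_1)$ at $q=\mu_1-\varepsilon_2$. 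Balancing ``a starved stretch cannot exceed the horizon $T$'' against ``a starved stretch lasts $O(H)$ rounds'' produces the truncated factor $\ln\del{\del{\tfrac{T}{H}\wedge H}\vee e^2}$, and carefully summing over bad frozen values and starved stretches gives $\EE[F_3]\le\BoundFThree$. I expect this to be the most delicate step, both for the combinatorial bookkeeping over stretches and for making the self-correction argument quantitative with explicit constants; it is the analogue, in the binary-KL geometry, of the hardest lemma in the analyses of Maillard sampling~\cite{bian2022maillard} and MED~\cite{honda2011asymptotically}.

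Finally, combining the three estimates, $\EE[N_{T,a}]\le U+\BoundFOne+\BoundFTwo+\BoundFThree+1$. Multiplying by $\Delta_a$ and summing over $\{a:\Delta_a>\Delta\}$, the term $\Delta_a U$ matches the first sum in~\eqref{eqn:kl-ms-main-regret} up to the additive ceiling, which contributes at most $\Delta_a$ and is absorbed into the lower-order terms. For the remaining terms I use two-sided variance-sensitive comparisons showing that $\kl(p,q)$ and $(p-q)^2/(\dot q+|p-q|)$ agree up to absolute constants when $p$ and $q$ are comparable, which rewrite $\BoundFOne$, $\BoundFTwo$, $\tfrac1{\kl(\mu_1-\varepsilon_2,\mu_1)}$ and $H$ as constant multiples of $\tfrac{\dmu_1+\Delta_a}{c^2\Delta_a^2}$ or $\tfrac{\dmu_a}{c^2\Delta_a^2}$; the elementary identity $\dmu_a-\dmu_1=\Delta_a(2\mu_1-1-\Delta_a)\le\Delta_a$ then lets me replace every $\dmu_a$ by $\dmu_1+\Delta_a$, and substituting the value of $H$ turns $\ln\del{\del{\tfrac{T}{H}\wedge H}\vee e^2}$ into the logarithmic factor of~\eqref{eqn:kl-ms-main-regret}. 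Tracking the numerical constants through these conversions and the geometric sums yields the stated constant $392$.
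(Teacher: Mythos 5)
Your decomposition of $\EE[N_{T,a}]$ into the burn-in term $u$, the steady-state term $F_1$, the overestimation term $F_2$, and the underestimation term $F_3$ is exactly the paper's, and your treatments of $F_1$ (monotonicity of $\kl$ plus the choice of $u$) and $F_2$ (reindexing by the pull count of arm $a$ and a Chernoff--geometric-sum bound) match the paper's Lemmas~\ref{lemma:F1-upper-bound} and~\ref{lemma:F2-upper-bound}. The problem is $F_3$, which you correctly identify as the crux but do not actually prove. Your first move there---relaxing $\onec{I_t=a,\ \hat{\mu}_{t-1,\max}<\mu_1-\varepsilon_2}$ to $\onec{\hat{\mu}_{t-1,1}<\mu_1-\varepsilon_2}$---throws away the indicator that arm $a$ is pulled, and this is precisely the leverage the paper keeps. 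The paper's mechanism (Lemma~\ref{lemma:prob-transfer}) is to write $\PP(I_t=a\mid\cH_{t-1})\leq \exp\del{N_{t-1,1}\,\kl(\hat{\mu}_{t-1,1},\hat{\mu}_{t-1,\max})}\,\PP(I_t=1\mid\cH_{t-1})$ and then reindex the sum by $k=N_{t-1,1}$; because each value of $k$ is realized by exactly one pull of arm~1, every ``starved stretch'' collapses to a single term $M_k=\EE\sbr{\onec{\hat{\mu}_{(k),1}\leq\mu_1-\varepsilon_2}\exp\del{k\,\kl(\hat{\mu}_{(k),1},\mu_1-\varepsilon_2)}}$, which is then controlled by a double-integration argument, and the refined $\ln\del{(T/H)\wedge H}$ factor comes from intersecting with the high-probability event $\Ecal=\cbr{\forall k\leq H,\ \kl(\hat{\mu}_{(k),1},\mu_1)\leq 2\ln(T/k)/k}$ via a time-uniform maximal inequality. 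None of this machinery has to confront the question you set yourself, namely how long the count $N_{t,1}$ can stay frozen at a bad value.

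By contrast, your route forces you to bound the expected length of a starved stretch directly, and the argument you sketch for it is not a proof. The claim that an over-optimistic competing arm ``returns below the threshold within $O(H)$ rounds in expectation'' needs its own concentration argument (and $H$ is defined in terms of $\mu_1$ and $\varepsilon_2$, not in terms of the competing arm's statistics, so it is not even the natural scale for that event); during the wait, a different arm can become over-optimistic, so the bookkeeping over which arm is currently suppressing $p_{t,1}$ does not terminate after one round of the argument; and the fallback bound $p_{t,1}\gtrsim 1/K$ would inject a factor of $K$ into the per-arm bound that is absent from $\BoundFThree$. If you push the stretch-length computation through, the expected stretch length at frozen count $s$ is governed by $\EE[1/p_{t,1}]$, which brings you back to the same inflation factor $\exp\del{s\,\kl(\hat{\mu}_{(s),1},\hat{\mu}_{t-1,\max})}$ that the probability-transfer lemma produces---but with extra slack from the normalizer and from Jensen in the wrong direction. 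So the missing ingredient is concrete: you need the probability-transfer inequality (or an equivalent) to convert pulls of arm $a$ under $C_{t-1}^c$ into weighted pulls of arm~1 before any counting is done; without it the $F_3$ bound, and hence the $392\sum_a(\cdots)$ term of Eq.~\eqref{eqn:kl-ms-main-regret}, is not established. The final assembly step (Pinsker-type two-sided bounds on $\kl$, Lemma~\ref{lemma:H-concavity-ineq} for $H$, and $\dmu_a\leq\dmu_1+\Delta_a$) is consistent with the paper and fine as described.
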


    The regret bound of Theorem~\ref{thm:expected-regret-total} is composed of three terms. The first term is
    $T \Delta$, which controls the contribution of regret from all $\Delta$-near-optimal arms. 
    The second term is 
    asymptotically $(1+o(1))\sum_{a: \Delta_a > 0}
    \frac{\Delta_a}{\kl(\mu_a, \mu_1)} \ln(T)$ with an appropriate choice of $c$, which is a term that grows in $T$ in a logarithmic rate. The third term is simultaneously upper bounded by 
    two expressions.
    One is $\sum_{a: \Delta_a > 0} \rbr{\frac{\dmu_1 + \Delta_a}{c^2 \Delta_a}} \ln\rbr{ \frac{c^2 T \Delta_a^2}{\dmu_1 + \Delta_a} \vee e^2 }$, which is of order $\ln T$ and helps establish a tight worst-case regret bound (Theorem~\ref{thm:minimax-regret-bound});
    the other is 
    $\sum_{a: \Delta_a > 0} \rbr{\frac{\dmu_1 + \Delta_a}{c^2 \Delta_a}} \ln\rbr{ \rbr{  \frac{\dmu_1 + \Delta_a}{c^2 \Delta_a^2} } \vee e^2 }$, which does not grow in $T$ and helps establish a tight asymptotic upper bound on the regret (Theorem~\ref{thm:asymptotic-optimality}).
     
    To the best of our knowledge, existing regret analysis on Bernoulli bandits or bandits with bounded support have regret bounds of the form 
    \[
    \Reg(T)
    \leq 
    T\Delta 
    +
    \sum_{a: \Delta_a > \Delta}  
    \frac{\Delta_a \ln(T)}{\kl(\mu_a + c \Delta_a, \mu_1 - c \Delta_a)}
    + 
    O\del[4]{ \sum_{a: \Delta_a > \Delta} \frac{1}{c^2 \Delta_a} },
    \]
    for some $c >0$, where the third term is much larger than its counterpart given by Theorem~\ref{thm:expected-regret-total} when $\Delta_a$ and $\dmu_1$ are small. 
    As we will see shortly, as a consequence of its tighter bounds, our regret theorem yields a superior worst-case regret guarantee over previous works.

    \begin{theorem}[Sub-UCB]
    \label{thm:sub-UCB}
    KL-MS's regret is bounded by $\Reg(T) \lesssim \sum_{a: \Delta_a > 0} \frac{\ln T}{\Delta_a}$. Therefore, KL-MS is sub-UCB.   
    \end{theorem}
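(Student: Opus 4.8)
The plan is to derive Theorem~\ref{thm:sub-UCB} directly from the master bound of Theorem~\ref{thm:expected-regret-total} by fixing the two free parameters to absolute constants. Concretely, I would apply~\eqref{eqn:kl-ms-main-regret} with $\Delta = 0$ and $c = \tfrac14$ (admissible since $c \in (0,\tfrac14]$). The choice $\Delta = 0$ kills the $T\Delta$ term and makes every remaining sum range over the suboptimal arms $\{a : \Delta_a > 0\}$, while keeping $c$ a constant lets me match each summand against $\tfrac{\ln T}{\Delta_a}$. After this, it suffices to show, for every $a$ with $\Delta_a > 0$, that
\[
\text{(i)}\quad \frac{\Delta_a \ln\!\big(T\, x_a \vee e^2\big)}{x_a} \lesssim \frac{\ln T}{\Delta_a}, \qquad \text{where } x_a := \kl\!\big(\mu_a + \tfrac{\Delta_a}{4},\, \mu_1 - \tfrac{\Delta_a}{4}\big),
\]
and
\[
\text{(ii)}\quad \frac{\dmu_1 + \Delta_a}{c^2 \Delta_a}\, \ln\!\Big(\big(\tfrac{\dmu_1+\Delta_a}{c^2\Delta_a^2} \wedge \tfrac{c^2 T\Delta_a^2}{\dmu_1+\Delta_a}\big) \vee e^2\Big) \lesssim \frac{\ln T}{\Delta_a},
\]
and then sum over the at most $K$ suboptimal arms. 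Throughout I assume $T \ge 2$, which is without loss of generality since $\Reg(T) \le T$.

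For (i), the workhorse is Pinsker's inequality $\kl(p,q) \ge 2(p-q)^2$. Because $c = \tfrac14 < \tfrac12$, the arguments of $x_a$ lie in $[0,1]$ and $(\mu_1 - \tfrac{\Delta_a}{4}) - (\mu_a + \tfrac{\Delta_a}{4}) = \tfrac{\Delta_a}{2} > 0$, so $x_a \ge 2\big(\tfrac{\Delta_a}{2}\big)^2 = \tfrac{\Delta_a^2}{2}$. I would then use the elementary inequality $\ln(T x_a \vee e^2) \le \ln T + (\ln x_a)_+ + 2$ (valid for $T \ge 1$) and bound the three resulting pieces: $\frac{\Delta_a \ln T}{x_a} \le \frac{2\ln T}{\Delta_a}$; $\frac{\Delta_a (\ln x_a)_+}{x_a} \le \frac{\Delta_a}{e}$, using $\sup_{y > 0}\frac{\ln y}{y} = \frac{1}{e}$; and $\frac{2\Delta_a}{x_a} \le \frac{4}{\Delta_a}$. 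Since $\Delta_a \le 1$ and $\ln T \ge \ln 2$, the last two are each $\lesssim \frac{\ln T}{\Delta_a}$, which gives (i).

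For (ii), I would bound prefactor and logarithm separately. Using $\dmu_1 = \mu_1(1-\mu_1) \le \tfrac14$, $\Delta_a \le 1$, and $c = \tfrac14$, the prefactor is at most $\frac{16(\dmu_1 + \Delta_a)}{\Delta_a} \le \frac{20}{\Delta_a}$; for the logarithm I discard the minimum in favour of the branch $\tfrac{c^2 T\Delta_a^2}{\dmu_1+\Delta_a} \le \tfrac{T\Delta_a}{16} \le T$, so $\ln(\,\cdot\, \vee e^2) \le \ln(T \vee e^2) \lesssim \ln T$ for $T \ge 2$; multiplying and summing yields (ii), and hence $\Reg(T) \lesssim \sum_{a:\Delta_a>0}\frac{\ln T}{\Delta_a}$. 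None of this is genuinely hard; the only mildly delicate point is (i), since the argument of the outer logarithm contains $x_a = \kl(\cdots)$, which can be either much smaller than $1$ (small $\Delta_a$) or larger than $1$ --- this is exactly why one leans on the boundedness of $y\mapsto\frac{\ln y}{y}$ and the $e^2$ flooring inside the logarithm rather than on a crude bound for $\kl$. A final bookkeeping step is to verify that every constant hidden by $\lesssim$ is absolute, i.e., independent of $T$, $K$, and the instance.
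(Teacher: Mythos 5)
Your proof is correct and takes essentially the same route as the paper: instantiate Theorem~\ref{thm:expected-regret-total} at $\Delta = 0$, $c = \tfrac14$, and dominate each per-arm term by $\ln T/\Delta_a$. The only (immaterial) difference is that the paper first passes through an intermediate bound (Lemma~\ref{lem:kl-ms-pre-sub-ucb}) using the refined KL lower bound $\kl \gtrsim \Delta_a^2/(\dmu_1+\Delta_a)$ and the monotonicity of $x \mapsto \ln(Tx \vee e^2)/x$, whereas you use plain Pinsker and split the logarithm; both yield the same absolute constants up to $\lesssim$.
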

    
    Sub-UCB criterion is important for measuring a bandit algorithm's finite-time instance-dependent performance. 
    Indeed,  \citet[\S3]{lattimore2018refining} 
    points out that MOSS \cite{audibert2009minimax} does not satisfy sub-UCB and that it leads to a strictly suboptimal regret in a specific instance compared to the standard UCB algorithm \cite{auer02using}. 
    A close inspection of the finite-time regret bounds of existing asymptotically optimal and minimax optimal algorithms for the $[0,1]$-reward setting, such as KL-UCB++~\cite{menard17minimax} and KL-UCB-switch~\cite{garivier2022kl}, reveals that
    they are not sub-UCB.
    Thus, we speculate that they would also have a suboptimal performance in the aforementioned instance.

    In light of Theorem~\ref{thm:expected-regret-total}, our first corollary is that KL Maillard sampling achieves the following adaptive worst-case regret guarantee. 
   
    \begin{theorem}[Adaptive worst-case regret]\label{thm:minimax-regret-bound}
        For any $K$-arm bandit problem with reward distribution supported on $[0,1]$, 
        KL-MS has regret bounded as:
        $
            \Reg(T) \lesssim  
            \sqrt{\dmu_1 KT\ln{K}} + K \ln T
        $.
    \end{theorem}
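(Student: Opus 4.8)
The plan is to deduce the bound from Theorem~\ref{thm:expected-regret-total} by fixing the constant $c=\tfrac14$ and then choosing $\Delta$ to balance the resulting terms; throughout, $c$ is an absolute constant absorbed into $\lesssim$, and I assume $\dmu_1>0$ (when $\dmu_1=0$, i.e.\ $\mu^*\in\{0,1\}$, the $\dmu_1/\Delta_a$ contributions below vanish and one directly gets $\Reg(T)\lesssim K\ln T$). The first step puts the second and third terms of~\eqref{eqn:kl-ms-main-regret} into a common form: for the third term this is immediate (drop the first branch of the minimum and absorb the numerical constant), giving a bound of order $\sum_{a:\Delta_a>\Delta}\frac{\dmu_1+\Delta_a}{\Delta_a}\ln\del{\frac{T\Delta_a^2}{\dmu_1+\Delta_a}\vee e^2}$; for the second term I would combine (a) a variance-adaptive lower bound $\kl\del{\mu_a+\tfrac{\Delta_a}{4},\mu_1-\tfrac{\Delta_a}{4}}\gtrsim\frac{\Delta_a^2}{\dmu_1+\Delta_a}$ with (b) the elementary fact that $x\mapsto\frac{\ln(Tx\vee e^2)}{x}$ is nonincreasing on $(0,\infty)$, which again yields a bound $\lesssim\sum_{a:\Delta_a>\Delta}\frac{\dmu_1+\Delta_a}{\Delta_a}\ln\del{\frac{T\Delta_a^2}{\dmu_1+\Delta_a}\vee e^2}$. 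To prove (a) I would split on whether $\mu_a+\tfrac{\Delta_a}{4}$ and $\mu_1-\tfrac{\Delta_a}{4}$ lie on the same side of $\tfrac12$: if so, use $\kl(p,q)=\frac{(p-q)^2}{2\xi(1-\xi)}$ for some $\xi$ between $p,q$ together with $\xi(1-\xi)\le\max\{p(1-p),q(1-q)\}\lesssim\dmu_1+\Delta_a$; if they straddle $\tfrac12$, then $\mu_1$ is within $O(\Delta_a)$ of $\tfrac12$, so $\dmu_1+\Delta_a$ is at least an absolute constant and plain Pinsker $\kl(p,q)\ge2\bigl(\tfrac{\Delta_a}{2}\bigr)^2$ suffices. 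Combining, $\Reg(T)\lesssim T\Delta+\sum_{a:\Delta_a>\Delta}\frac{\dmu_1+\Delta_a}{\Delta_a}\ln\del{\frac{T\Delta_a^2}{\dmu_1+\Delta_a}\vee e^2}$.

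The second step decomposes $\frac{\dmu_1+\Delta_a}{\Delta_a}=1+\frac{\dmu_1}{\Delta_a}$. The ``$1$'' part contributes $\sum_{a:\Delta_a>\Delta}\ln\del{\frac{T\Delta_a^2}{\dmu_1+\Delta_a}\vee e^2}\le K\ln(T\vee e^2)\lesssim K\ln T$, using $\frac{T\Delta_a^2}{\dmu_1+\Delta_a}\le T\Delta_a\le T$. For the ``$\frac{\dmu_1}{\Delta_a}$'' part I enlarge the logarithm via $\dmu_1+\Delta_a\ge\dmu_1$ and use that $x\mapsto\frac1x\ln\del{\frac{Tx^2}{\dmu_1}\vee e^2}$ is nonincreasing on $(0,\infty)$, so each summand is at most its value at $x=\Delta$, giving $\frac{K\dmu_1}{\Delta}\ln\del{\frac{T\Delta^2}{\dmu_1}\vee e^2}$. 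Thus $\Reg(T)\lesssim T\Delta+\frac{K\dmu_1}{\Delta}\ln\del{\frac{T\Delta^2}{\dmu_1}\vee e^2}+K\ln T$. Finally I optimize over $\Delta$: taking $\Delta=\sqrt{\frac{K\dmu_1\ln(e^2K)}{T}}\vee e\sqrt{\frac{\dmu_1}{T}}$ makes $\frac{T\Delta^2}{\dmu_1}$ of order $K\ln K$, hence the logarithm $O(\ln K)$, so $T\Delta$ and $\frac{K\dmu_1}{\Delta}\ln(\cdot)$ are each $O\bigl(\sqrt{\dmu_1KT\ln K}\bigr)$, yielding $\Reg(T)\lesssim\sqrt{\dmu_1KT\ln K}+K\ln T$; for the few smallest $K$ (where the second branch of the ``$\vee$'' binds) one verifies the claim up to absolute constants, or uses $\Reg(T)\le T$.

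The step I expect to be the main obstacle is ingredient~(a) above: establishing $\kl\del{\mu_a+\tfrac{\Delta_a}{4},\mu_1-\tfrac{\Delta_a}{4}}\gtrsim\frac{\Delta_a^2}{\dmu_1+\Delta_a}$ uniformly over $\mu_1\in[0,1]$ and $\Delta_a\in(0,\mu_1]$, since the crude Pinsker bound $\kl\gtrsim\Delta_a^2$ is too weak to recover the $\sqrt{\dmu_1}$ coefficient and the estimate needs care near $\mu_1\to0,1$ and near $\mu_1=\tfrac12$. Once this divergence estimate and the monotonicity facts in (b) and the second step are in place, the rest is routine algebra and the optimization of $\Delta$.
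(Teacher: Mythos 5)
Your proposal is correct and follows essentially the same route as the paper: apply Theorem~\ref{thm:expected-regret-total} with $c=\tfrac14$, unify the terms into $\sum_{a:\Delta_a>\Delta}\frac{\dmu_1+\Delta_a}{\Delta_a}\ln\del{\frac{T\Delta_a^2}{\dmu_1+\Delta_a}\vee e^2}$ via the variance-adaptive divergence bound $\kl(\mu_a+c\Delta_a,\mu_1-c\Delta_a)\gtrsim\frac{\Delta_a^2}{\dmu_1+\Delta_a}$ and the monotonicity of $x\mapsto\frac{\ln(Tx\vee e^2)}{x}$ (the paper's Lemma~\ref{lem:kl-ms-pre-sub-ucb}), then split $\frac{\dmu_1+\Delta_a}{\Delta_a}=1+\frac{\dmu_1}{\Delta_a}$ and optimize $\Delta\asymp\sqrt{\dmu_1 K\ln K/T}$. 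The only cosmetic difference is in the divergence estimate you flag as the main obstacle: the paper's Lemma~\ref{lemma:KL-lower-bound} obtains it without case analysis from the integral representation $\kl(\mu_i,\mu_j)=\int_{\mu_i}^{\mu_j}\frac{x-\mu_i}{V(x)}\dif x$ together with the $1$-Lipschitzness bound $V(x)\le\dmu_j+\Delta$, whereas your mean-value argument with a split around $\tfrac12$ also works.
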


    An immediate corollary is that KL Maillard sampling has a regret of order $O(\sqrt{KT \ln K})$, which is a factor of $O(\sqrt{\ln K})$ within the minimax optimal regret $\Theta(\sqrt{KT})$~\cite{menard17minimax,audibert09minimax}.
    This also matches the worst-case regret bound $O(\sqrt{VKT\ln(K)})$ of~\citet{jinfinite} where $V=\fr14$ is the worst-case variance for Bernoulli bandits using a Thompson sampling-style algorithm.
    Another main feature of this regret bound is its adaptivity to $\dmu_1$, the variance of the reward of the optimal arm for the Bernoulli bandit setting, or its upper bound in the general bounded reward setting (see Lemma~\ref{lemma:control-variance}).
    Specifically, 
    if $\mu_1$ is close to 0 or 1, $\dmu_1$ is very small, which results in the regret being much smaller than $O(\sqrt{K T \ln K})$. 
    
    Note that UCB-V~\cite{audibert09exploration} and KL-UCB/KL-UCB++, while not reported, enjoy a worst-case regret bound of $O(\sqrt{\dmu_1 KT\ln T})$, which is worse than our bound in its logarithmic factor; see Appendix~\ref{sec:ucbv} and~\ref{sec:kl-ucb-refined} for the proofs.
    Among these, UCB-V does not achieve the asymptotic optimality for the Bernoulli case.
    While logistic linear bandits~\cite{abeille2021instance,mason2022experimental} can be applied to Bernoulli $K$-armed bandits and achieve similar worst-case regret bounds involving $\dmu_1$, their lower order term can be much worse as discussed in Section~\ref{sec:related}.

    Our second corollary is that KL Maillard sampling achieves a tight asymptotic regret guarantee for the special case of Bernoulli rewards: 

    \begin{theorem}(Asymptotic Optimality) \label{thm:asymptotic-optimality}
        For any $K$-arm bandit problem with reward distribution supported on $[0,1]$, 
        KL-MS satisfies the following asymptotic regret upper bound:
        \begin{align}
            \limsup_{T \rightarrow \infty} \fr{ \Reg(T) }{\ln(T)}
            =
            \sum_{a\in[K]:\Delta_a>0}
            \frac{\Delta_a}{\kl(\mu_a, \mu_1)}
            \label{eqn:kl-ms-asympt}
        \end{align}
    \end{theorem}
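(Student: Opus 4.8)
The plan is to obtain Theorem~\ref{thm:asymptotic-optimality} as a corollary of the finite-time bound in Theorem~\ref{thm:expected-regret-total}, by first specializing $\Delta=0$ and then sending the free parameter $c$ to $0$ \emph{after} letting $T\to\infty$. Fix an arbitrary $c\in(0,\frac14]$ and instantiate Theorem~\ref{thm:expected-regret-total} with $\Delta=0$. The $T\Delta$ term vanishes. For the third term I use the branch $\frac{\dmu_1+\Delta_a}{c^2\Delta_a^2}$ inside the logarithm, so that $392\sum_{a:\Delta_a>0}\frac{\dmu_1+\Delta_a}{c^2\Delta_a}\ln\del{\frac{\dmu_1+\Delta_a}{c^2\Delta_a^2}\vee e^2}$ is a finite constant not depending on $T$; hence, after dividing by $\ln T$ and letting $T\to\infty$, it contributes nothing. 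This is precisely why the third term of Theorem~\ref{thm:expected-regret-total} is stated as a minimum of two expressions: the other branch serves the worst-case bound, while this one serves the asymptotic bound.

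For the second term, note that for fixed $c$ the divergence $\kl(\mu_a+c\Delta_a,\mu_1-c\Delta_a)$ is a fixed positive constant: whenever $\Delta_a>0$ we have $2c\Delta_a<\Delta_a=\mu_1-\mu_a$, so $0<\mu_a+c\Delta_a<\mu_1-c\Delta_a<1$ (the strict upper bound since $c\Delta_a>0$ and $\mu_1\le 1$), which makes $\kl(\mu_a+c\Delta_a,\mu_1-c\Delta_a)$ finite and strictly positive. Therefore $\frac{\ln\del{T\,\kl(\mu_a+c\Delta_a,\mu_1-c\Delta_a)\vee e^2}}{\ln T}\to 1$ as $T\to\infty$, and we obtain
\[
\limsup_{T\to\infty}\ \Reg(T)/\ln T\ \le\ \sum_{a:\Delta_a>0}\frac{\Delta_a}{\kl(\mu_a+c\Delta_a,\mu_1-c\Delta_a)}.
\]
Since the left-hand side is independent of $c$, I may take the infimum over $c\in(0,\frac14]$ on the right. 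The map $c\mapsto\kl(\mu_a+c\Delta_a,\mu_1-c\Delta_a)$ is continuous and nonincreasing on $(0,\frac14]$ (both arguments move apart as $c$ decreases, and $\kl(p,q)$ is decreasing in $p$ for $p<q$ and increasing in $q$ for $q>p$), so this infimum equals the limit as $c\downarrow 0$, namely $\sum_{a:\Delta_a>0}\frac{\Delta_a}{\kl(\mu_a,\mu_1)}$, where the degenerate cases $\mu_a=0$ or $\mu_1=1$ are handled by the convention that a summand with infinite denominator is $0$. This gives the ``$\le$'' direction.

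For the reverse inequality --- which upgrades the bound to the stated equality and, in particular, yields asymptotic optimality in the Bernoulli case --- I would invoke the classical lower bound of \citet{lai85asymptotically,burnetas96optimal}. Theorem~\ref{thm:sub-UCB} shows $\Reg(T)=O(\ln T)$ on every instance, hence $\Reg(T)=o(T^\alpha)$ for all $\alpha>0$, i.e., KL-MS is uniformly efficient (consistent). Consequently, for every Bernoulli instance, $\liminf_{T\to\infty}\Reg(T)/\ln T\ \ge\ \sum_{a:\Delta_a>0}\Delta_a/\KL_{\inf}(\Ber(\mu_a),\mu_1)=\sum_{a:\Delta_a>0}\Delta_a/\kl(\mu_a,\mu_1)$, matching the upper bound.

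I expect no substantive obstacle here: the heavy lifting is already packaged into Theorem~\ref{thm:expected-regret-total}, and what remains is careful bookkeeping --- confirming the third term is $T$-independent under the chosen branch, and managing the iterated limit ($T\to\infty$, then $c\downarrow 0$) together with the boundary behaviour of $\kl$ near $0$ and $1$. The one genuinely nontrivial point would arise only if one demanded the equality for \emph{all} $[0,1]$-bounded instances rather than just the Bernoulli specialization: then the matching lower bound no longer follows from \citet{lai85asymptotically} and would instead require a direct instance-dependent lower bound on $\EE[N_{T,a}]$, exploiting that KL-MS's exploration is driven solely by empirical means and therefore cannot beat the rate $\frac{\ln T}{\kl(\mu_a,\mu_1)}$.
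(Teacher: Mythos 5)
Your proposal is correct and follows essentially the same route as the paper: both obtain the result as a corollary of Theorem~\ref{thm:expected-regret-total} with $\Delta=0$, kill the third term by selecting the $T$-independent branch $\frac{\dmu_1+\Delta_a}{c^2\Delta_a^2}$ of the minimum, and recover $\kl(\mu_a,\mu_1)$ in the denominator by continuity as $c\to 0$. The only technical difference is in the limit bookkeeping: the paper couples $c$ to $T$ via $c=\frac{1}{\ln\ln T}$ and takes a single diagonal limit (which then requires checking that $c^4\ln T\to\infty$ so the lower-order term still vanishes), whereas you fix $c$, let $T\to\infty$, and only then take the infimum over $c$ using monotonicity/continuity of $c\mapsto\kl(\mu_a+c\Delta_a,\mu_1-c\Delta_a)$. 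Your iterated-limit version is marginally cleaner and equally valid. One genuine point in your favour: the paper's proof establishes only the ``$\leq$'' direction even though the theorem is stated as an equality; you explicitly supply the matching ``$\geq$'' via consistency (from Theorem~\ref{thm:sub-UCB}) plus the Lai--Robbins/Burnetas--Katehakis lower bound, and you correctly flag that this matching only holds for the Bernoulli specialization --- for general $[0,1]$-supported rewards the lower bound involves $\KL_{\inf}$ and the stated equality is not justified by either argument. That caveat is accurate and worth keeping in mind when reading the theorem as stated.
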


    Specialized to the Bernoulli bandit setting, 
    in light of the asymptotic lower bounds~\cite{lai85asymptotically,burnetas96optimal},
    the above asymptotic regret upper bound implies that KL-MS is asymptotically optimal.
    
    While the regret guarantee of KL-MS is not asymptotically optimal for the general $[0,1]$ bounded reward setting, it nevertheless is a better regret guarantee than naively viewing this problem as a sub-Gaussian bandit problem and applying sub-Gaussian bandit algorithms on it. To see this, note that any reward distribution supported on $[0,1]$ is $\frac14$-sub-Gaussian; therefore, standard sub-Gaussian bandit algorithms will yield an asymptotic regret $(1+o(1))\sum_{a\in[K]:\Delta_a>0} \frac{\ln T}{2\Delta_a}$. This is always no better than the asymptotic regret provided by Eq.~\eqref{eqn:kl-ms-asympt}, in view of Pinsker's inequality that $\kl(\mu_a, \mu_1) \geq 2\Delta_a^2$.

\vspace{-.5em}
\section{Proof Sketch of Theorem~\ref{thm:expected-regret-total}}
\label{sec:proof-sketch}
\vspace{-.5em}

We provide an outline of our proof of Theorem~\ref{thm:expected-regret-total}, with full proof details deferred to Appendix~\ref{sec:full-proof}.
Our approach is akin to the recent analysis of the sub-Gaussian Maillard Sampling algorithm in \citet{bian2022maillard} with several refinements tailored to the bounded reward setting and achieving $\sqrt{\ln K}$ minimax ratio. 
First, for any time horizon length $T$, $\Reg(T)$ can be bounded by:
\begin{align} 
    \Reg(T) 
    =
    \sum_{a\in[K]:\Delta_a > 0} \Delta_a \EE\sbr{N_{T,a}}
    \leq
    \Delta T + \sum_{a\in[K]:\Delta_a > \Delta} \Delta_a \EE\sbr{N_{T,a}}, 
    \label{eqn:reg-decomp}
\end{align}
i.e., the total regret can be decomposed to a $T\Delta$ term and 
the sum of regret $\Delta_a \EE\sbr{N_{T,a}}$ from pulling $\Delta$-suboptimal arms $a$. Therefore, in subsequent analysis, we focus on bounding $\EE\sbr{N_{T,a}}$. To this end, we show the following lemma.

\begin{lemma} \label{lemma:expected-sub-optimal-arm-pull-main}
    For any suboptimal arm $a$, let $\varepsilon_1, \varepsilon_2 > 0$ be such that $\varepsilon_1 + \varepsilon_2 < \Delta_a$. 
    Then its expected number of pulls is bounded as:
    \begin{align}
    \EE\sbr{ N_{T,a} }
    \leq & 1 + \frac{ \ln\del{ T \kl(\mu_a + \varepsilon_1, \mu_1 - \varepsilon_2) \vee e^2} }
                { \kl(\mu_a+\varepsilon_1,\mu_1-\varepsilon_2) } 
    + \BoundFOne + \BoundFTwo \nonumber
    \\
    & + \BoundFThree,
    \label{eqn:arm-pull-bound-main}
    \end{align}
    where $\splitFthree := \defSplitFthree \lesssim \frac{2\dmu_1 + \varepsilon_2}{\varepsilon_2^2}$ and $h(\mu_1,\varepsilon_2):= \ln \del{\fr{(1-\mu_1+\varepsilon_2)\mu_1}{(1-\mu_1)(\mu_1-\varepsilon_2)}}$.
\end{lemma}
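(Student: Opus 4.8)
Fix a suboptimal arm $a$ and $\varepsilon_1,\varepsilon_2>0$ with $\varepsilon_1+\varepsilon_2<\Delta_a$, and write $\kappa:=\kl(\mu_a+\varepsilon_1,\mu_1-\varepsilon_2)$. Since arm $a$ is pulled exactly once during the first $K$ rounds, $N_{T,a}=1+\sum_{t=K+1}^{T}\onec{I_t=a}$, so it suffices to bound the expectation of the sum. The plan is to split each summand using two events measurable with respect to $\mathcal F_{t-1}$ (all observations before round $t$): $A_{t-1}:=\cbr{\hat\mu_{t-1,a}\le\mu_a+\varepsilon_1}$ (arm $a$ is not over-estimated) and $B_{t-1}:=\cbr{\hat\mu_{t-1,\max}\ge\mu_1-\varepsilon_2}$ (some arm, in particular a candidate for the optimal one, is not under-estimated), giving $\sum_{t>K}\onec{I_t=a}\le\sum_{t>K}\onec{I_t=a,A_{t-1},B_{t-1}}+\sum_{t>K}\onec{I_t=a,A_{t-1}^c}+\sum_{t>K}\onec{B_{t-1}^c}$; I will bound the three expectations by $1+\frac{\ln(T\kappa\vee e^2)}{\kappa}+\BoundFOne$, by $\BoundFTwo$, and by $\BoundFThree$ respectively.

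\emph{The first two terms.} On $A_{t-1}\cap B_{t-1}$ we have $\hat\mu_{t-1,a}\le\mu_a+\varepsilon_1<\mu_1-\varepsilon_2\le\hat\mu_{t-1,\max}$; since $x\mapsto\kl(x,y)$ decreases on $[0,y]$ and $y\mapsto\kl(x,y)$ increases on $[x,1]$, this gives $\kl(\hat\mu_{t-1,a},\hat\mu_{t-1,\max})\ge\kappa$, and with $M_t\ge1$, $p_{t,a}\le e^{-N_{t-1,a}\kappa}$. Splitting off the first $u:=\defU$ pulls of $a$ and using $\EE\sbr{\onec{I_t=a}\mid\mathcal F_{t-1}}=p_{t,a}$, the expectation of the first term is at most $u+Te^{-u\kappa}\le1+\frac{\ln(T\kappa\vee e^2)}{\kappa}+\frac1\kappa$ by the choice of $u$. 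For the second term, indexing pulls of $a$ by the number of prior pulls gives $\sum_{t>K}\onec{I_t=a,A_{t-1}^c}\le\sum_{s\ge1}\onec{\hat\mu_{(s),a}>\mu_a+\varepsilon_1}$; the Chernoff bound for $[0,1]$-bounded rewards (MGF dominated by that of $\Ber(\mu_a)$) gives $\PP(\hat\mu_{(s),a}>\mu_a+\varepsilon_1)\le e^{-s\kl(\mu_a+\varepsilon_1,\mu_a)}$, and summing the geometric series yields $\BoundFTwo$.

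\emph{The third term} (the crux). On $B_{t-1}^c=\cbr{\hat\mu_{t-1,\max}<\mu_1-\varepsilon_2}$ every arm is under-estimated; in particular $\hat\mu_{t-1,1}\le\hat\mu_{t-1,\max}<\mu_1-\varepsilon_2$, so monotonicity in the second argument gives $\kl(\hat\mu_{t-1,1},\hat\mu_{t-1,\max})\le\kl(\hat\mu_{t-1,1},\mu_1-\varepsilon_2)$ and hence $p_{t,1}\ge\frac1{M_t}e^{-N_{t-1,1}\kl(\hat\mu_{t-1,1},\mu_1-\varepsilon_2)}$. I group the rounds in $B_{t-1}^c$ by the ``phase'' $j=N_{t-1,1}$ of arm $1$ (throughout which $\hat\mu_{t-1,1}=\hat\mu_{(j),1}$ is frozen) and, for each $j$, bound the number of phase-$j$ rounds in $B_{t-1}^c$ that occur before arm $1$ is next pulled: on each such round arm $1$ is pulled with probability at least the above lower bound, so by geometric domination this count, conditional on the $\sigma$-field $\mathcal F_{\tau_1(j)}$ up to the $j$-th pull of arm $1$, has mean $\lesssim\min\cbr{T,\ e^{j\kl(\hat\mu_{(j),1},\mu_1-\varepsilon_2)}}$ --- here one must argue that for this count the combined sampling mass on the near-optimal arms behaves like that of a single arm, so that $M_t$ costs no extra factor of $K$. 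Taking expectation over $\hat\mu_{(j),1}$ by a layer-cake identity together with the Chernoff tail $\PP(\hat\mu_{(j),1}<m)\le e^{-j\kl(m,\mu_1)}$, and using the identity $\kl(m,\mu_1)-\kl(m,\mu_1-\varepsilon_2)=\ln\frac{1-\mu_1+\varepsilon_2}{1-\mu_1}-m\,h(\mu_1,\varepsilon_2)$ --- which makes the resulting integral collapse to $\frac Hj e^{-j\kl(\mu_1-\varepsilon_2,\mu_1)}$ with $H=\defSplitFthree$ --- the phase-$j$ contribution is $\lesssim e^{-j\kl(\mu_1-\varepsilon_2,\mu_1)}\del{1+\frac Hj}$. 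Summing over $j\ge1$ gives $\lesssim\frac1{\kl(\mu_1-\varepsilon_2,\mu_1)}+H\ln\frac1{\kl(\mu_1-\varepsilon_2,\mu_1)}$, and folding in the trivial bound ``third term $\le T$'' (which caps the phase count and truncates the series) produces $\BoundFThree$; the explicit constants ($6$, $4$, and the $392$ of Theorem~\ref{thm:expected-regret-total}) come out of carefully tracking these geometric-series and layer-cake estimates and bounding $h(\mu_1,\varepsilon_2)$ from below so as to identify $H$ with $\frac{2\dmu_1+\varepsilon_2}{\varepsilon_2^2}$ up to a constant.

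\emph{Main obstacle.} Terms one and two are routine. The difficulty is concentrated in the third term, where the under-estimation of the optimal arm is self-reinforcing: while under-estimated it is sampled rarely and recovers slowly, which is exactly when the phase lengths are long. The geometric-domination bound must be paired with the sharp Chernoff tail and the algebraic identity above to see that, summed over phases, the total is only $\tilde O(H)$ rather than $\tilde O\del{H/\kl(\mu_1-\varepsilon_2,\mu_1)}=\tilde O(H^2)$, and the normalizer $M_t$ must be controlled without paying a spurious $K$. This mirrors the corresponding step in \citet{bian2022maillard} for the sub-Gaussian sampling rule; adapting it to the binary-KL rule additionally forces one to replace sub-Gaussian concentration throughout with the Bernoulli-MGF-domination Chernoff bounds valid for $[0,1]$-bounded rewards.
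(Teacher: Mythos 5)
Your decomposition is the same as the paper's (your $A_{t-1},B_{t-1}$ are its $D_{t-1},C_{t-1}$, and your burn-in split via $u$ reproduces its $B_{t-1}$), and your treatment of the first two terms, as well as the probability-transfer, geometric-domination, layer-cake and Bregman-identity machinery giving the per-phase contribution $\del{1+\fr{2\splitFthree}{j}}e^{-j\kl(\mu_1-\varepsilon_2,\mu_1)}$, matches the paper's Lemmas~\ref{lemma:F1-upper-bound}, \ref{lemma:F2-upper-bound} and \ref{lemma:bound-F3}. (Your worry about the normalizer $M_t$ costing a factor of $K$ is moot: in the transfer one compares $p_{t,a}/p_{t,1}$ and $M_t$ cancels exactly, as in Lemma~\ref{lemma:prob-transfer}.)

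However, there is a genuine gap in your third term. Summing $\sum_j \fr{\splitFthree}{j}e^{-j\kl(\mu_1-\varepsilon_2,\mu_1)}$ gives only $O\del{\splitFthree\ln(\splitFthree\vee e^2)+\fr{1}{\kl(\mu_1-\varepsilon_2,\mu_1)}}$, i.e.\ the ``$H$'' branch of the minimum in $\BoundFThree$, and ``folding in the trivial bound $\le T$'' does not produce the ``$\fr{T}{\splitFthree}$'' branch: whenever $\sqrt{T}\ll \splitFthree \ll T$ one has $\splitFthree\ln\del{\fr{T}{\splitFthree}} \ll T$ and $\splitFthree\ln\del{\fr{T}{\splitFthree}}\ll \splitFthree\ln \splitFthree$ (e.g.\ $\splitFthree = T/\ln^2 T$ gives a gap of order $\ln T/\ln\ln T$), so neither of your two bounds dominates the claimed one. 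This branch is exactly what upgrades the minimax ratio from $\sqrt{\ln T}$ to $\sqrt{\ln K}$, and it requires an additional idea absent from your proposal: restrict the probability transfer for phases $j\le\splitFthree$ to the time-uniform event $\Ecal=\cbr{\forall k\le \lfloor\splitFthree\rfloor,\ \kl(\hat\mu_{(k),1},\mu_1)\le \fr{2\ln(T/k)}{k} \text{ or } \hat\mu_{(k),1}\ge\mu_1}$, which truncates the integration range at $\alpha_k$ and turns the phase-$j$ integral into $2\ln(T/j)$ (summing to $2\splitFthree\ln(T/\splitFthree)+2\splitFthree$), and separately pay $T\,\PP(\Ecal^c)\le 2\splitFthree$ via the peeling/maximal-inequality bound of Lemma~\ref{lemma:seq-estimator-deviation-bernoulli}. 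Without this step (adapted from \citet{menard17minimax,jinfinite}) your argument proves Lemma~\ref{lemma:expected-sub-optimal-arm-pull-main} only with $\ln(\splitFthree\vee e^2)$ in place of $\ln\del{(\fr{T}{\splitFthree}\wedge\splitFthree)\vee e^2}$, which suffices for asymptotic optimality and sub-UCB but not for Theorem~\ref{thm:minimax-regret-bound} as stated.
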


Theorem~\ref{thm:expected-regret-total} follows immediately from Lemma~\ref{lemma:expected-sub-optimal-arm-pull-main}. See section~\ref{sec:full-proof} for details; we show a sketch here. 
\begin{proof}[Proof sketch of Theorem~\ref{thm:expected-regret-total}]
Fix any $c \in (0, \frac14]$. Let $\varepsilon_1 = \varepsilon_2 = c\Delta_a$; by the choice of $c$, $\varepsilon_1 + \varepsilon_2 < \Delta_a$. 
From Lemma~\ref{lemma:expected-sub-optimal-arm-pull-main}, $\EE\sbr{N_{T,a}}$ is bounded by Eq.~\eqref{eqn:arm-pull-bound-main}.
Plugging in the values of $\varepsilon_1 = \varepsilon_2$, and using Lemma~\ref{lemma:KL-lower-bound} that lower bounds the binary KL divergence, along with Lemma~\ref{lemma:H-concavity-ineq} that gives $\splitFthree \lesssim \frac{2\dmu_1 + \varepsilon_2}{\varepsilon_2^2}$, and algebra, 
all terms except the second term on the right hand side of Eq.~\eqref{eqn:arm-pull-bound-main} are bounded by  
\[
\left( \frac{34}{c^2} + \frac{4}{(1-2c)^2} \right) \rbr{\frac{\dmu_1 + \Delta_a}{c^2 \Delta_a^2}} \ln\del[3]{ \del[3]{  \frac{\dmu_1 + \Delta_a}{c^2\Delta_a^2} \wedge \frac{c^2 T \Delta_a^2}{\dmu_1 + \Delta_a} } \vee e^2 }.
\]
As a result,  KL-MS satisfies that, for any arm $a$, for any $c \in (0, \frac14]$: 
    \begin{align*}
    \EE\sbr{N_{T,a}}
    \leq& 
    \frac{\ln(T \kl(\mu_a + c \Delta_a, \mu_1 - c \Delta_a) \vee e^2 )}{\kl(\mu_a + c \Delta_a, \mu_1 - c \Delta_a)} \\
    &+
    \left( \frac{34}{c^2} + \frac{4}{(1-2c)^2} \right)\rbr{\frac{\dmu_1 + \Delta_a}{c^2 \Delta_a^2}} \ln\rbr{ \rbr{  \frac{\dmu_1 + \Delta_a}{ c^2\Delta_a^2} \wedge \frac{c^2 T \Delta_a^2}{\dmu_1 + \Delta_a }} \vee e^2 }  .
    \end{align*}
Theorem~\ref{thm:expected-regret-total}  follows by plugging the above bound to Eq.~\eqref{eqn:reg-decomp} for arms $a$ s.t. $\Delta_a > \Delta$ with $c=\frac{1}{4}$.
\end{proof}

\vspace{-.5em}
\subsection{Proof sketch of Lemma~\ref{lemma:expected-sub-optimal-arm-pull-main}}
\vspace{-.5em}

We sketch the proof of Lemma~\ref{lemma:expected-sub-optimal-arm-pull-main} in this subsection.
For full details of the proof, please refer to Appendix~\ref{sec:pf-expected-sub-optimal-arm-pull}.
We first set up some useful notations that will be used throughout the proof. 
Let $u := \defU$. 
We define the following events
    \[
        A_t := \cbr{I_t = a}, \quad 
        B_t := \cbr{N_{t,a} < u}, \quad
        C_t := \cbr{\hat{\mu}_{t,\max} \geq \mu_1-{\varepsilon_2}}, \quad
        D_t := \cbr{\hat{\mu}_{t,a} \leq \mu_a+{\varepsilon_1}}, \quad
    \]

By algebra, one has the following elementary upper bound on $\EE\sbr{N_{T,a}}$:
$\EE\sbr{N_{T,a}} \leq  u + \EE\sbr{\sum_{t=K+1}^T\one\cbr{ A_t, B_{t-1}^c }}
$. Intuitively, the $u$ term serves to control the length of a "burn-in" phase when the number of pulls to arm $a$ is at most $u$. 
It now remains to control the second term, the number of pulls to arm $a$ after it is large enough, i.e., $N_{t-1,a} \geq u$. We decompose it to $F1$, $F2$, and $F3$, resulting in the following inequality: 
\begin{align*}
    \EE\sbr{N_{T,a}} \leq &  u + \underbrace{\EE\sbr[4]{\sum_{t=K+1}^T 
    \one\cbr{A_t, B_{t-1}^c, 
        C_{t-1}, D_{t-1}
        }
    }}_{=:F1}
    \\
    &+ \underbrace{\EE\sbr[4]{\sum_{t=K+1}^T 
    \one\cbr{A_t, B_{t-1}^c, 
        C_{t-1}, 
        D_{t-1}^c,
        }
    }}_{=:F2}
     + \underbrace{\EE\sbr[4]{\sum_{t=K+1}^T 
        \one\cbr{A_t, B_{t-1}^c, C_{t-1}^c
        }
    }}_{=:F3}
\end{align*}

Here:
\vspace{-.5em}
\begin{itemize}
\item $F1$ corresponds to the ``steady state'' when the empirical means of arm $a$ and the optimal arm are both estimated accurately, i.e., $\hat{\mu}_{t-1,\max} \geq \mu_1 - \varepsilon_2$ and $\hat{\mu}_{t-1,a}  \leq  \mu_a + \varepsilon_1$. It can be straightforwardly bounded by $\BoundFOne$, as we show in Lemma~\ref{lemma:F1-upper-bound} (section~\ref{sec:f1}).  
\item $F2$ corresponds to the case when the empirical mean of arm $a$ is abnormally high, i.e., $\hat{\mu}_{t-1,a} > \mu_a + \varepsilon_1$. It can be straightforwardly bounded by $\BoundFTwo$, as we show in Lemma~\ref{lemma:F2-upper-bound} 
(section~\ref{sec:f2}).  
\item $F3$ corresponds to the case when the empirical mean of the optimal arm is abnormally low, i.e., $\hat{\mu}_{t-1,\max} \leq \mu_1 - \varepsilon_2$; it is the most challenging term and we discuss our techniques in bounding it in detail below (section~\ref{sec:f3}). 
\end{itemize}
\vspace{-.5em}

We provide an outline of our analysis of $F3$ in Appendix~\ref{sec:analysis-roadmap} and sketch its main ideas and technical challenges here.

We follow the derivation from \citet{bian2022maillard} by first using a probability transferring argument (Lemma~\ref{lemma:prob-transfer}) to bound the expected counts of pulling suboptimal arm $a$ by the expectation of indicators of pulling the optimal arm with a multiplicative factor and then change the counting from global time step $t$ to local count of pulling the optimal arm. Then, $F3$ is bounded by,
    \[
    \sum_{k=1}^\infty
    \underbrace{\EE\sbr{ 
     \one\cbr{ \hat{\mu}_{(k),1} \leq \mu_1-\varepsilon_2 } \exp( k \cdot \kl(\hat{\mu}_{(k),1}, \mu_1-\varepsilon_2)}
    }_{M_k}.
    \]
Intuitively, each $M_k$ should be controlled: when $\exp( k \cdot \kl(\hat{\mu}_{(k),1}, \mu_1-\varepsilon_2))$ is large, $\hat{\mu}_{(k),1}$ must significantly negatively deviate from $\mu_1 - \varepsilon_2$, which happens with low probability by Chernoff bound (Lemma~\ref{lemma:maximal-inequality}).
Using a double integration argument, we can bound each $M_k$ by 
\[ 
M_k \leq 
\del{ \frac{ 2 \splitFthree }
{ k } 
+ 
1} \exp(-k\kl(\mu_1 - \varepsilon_2, \mu_1)).
\]
Summing over all $k$, we can bound $F3_1$ by $O\del{\splitFthree \ln\del{\splitFthree \vee e^2} + \frac{1}{\kl\del{\mu_1-\varepsilon_2, \mu_1}}}$.
Combining the bounds on $F1$ and $F2$, we can show a bound on $\EE\sbr{N_{T,a}}$ similar to Eq.~\eqref{eqn:arm-pull-bound-main} without the ``$\frac{T}{H} \wedge$'' term in the logarithmic factor. This yields a regret bound of KL-MS, in the form of Eq.~\eqref{eqn:kl-ms-main-regret} without the ``$\wedge \frac{c^2 T \Delta_a^2}{\dmu_1 + \Delta_a}$'' term in the logarithmic factor. Such a regret bound can be readily used to show KL-MS's Bernoulli asympototic optimality and sub-UCB property. An adaptive worst-case regret bound of $\sqrt{ \dmu_1 K T \ln\del{T}}$ also follows immediately.

To show that MS has a tighter adaptive worst-case regret bound of $\sqrt{ \dmu_1 K T \ln\del{K}}$, we adopt a technique in~\cite{menard17minimax,jinfinite}. First, we observe that the looseness of the above bound on $F3$ comes from small $k$ (denoted as $F3_1:= \sum_{k \leq \splitFthree} M_k$), as the summation of $M_k$ for large $k$ (denoted as $F3_2 := \sum_{k > \splitFthree} M_k$) is well-controlled.
The key challenge in a better control of $F3_1$ comes from the difficulty in bounding the tail probability of $\hat{\mu}_{(k),1}$ for $k < \splitFthree$ beyond Chernoff bound.
To cope with this, we observe that a modified version of $F3_1$ that contains an extra favorable indicator of $\kl( \hat{\mu}_{(k),1}, \mu_1 ) \leq \frac{2\ln\del{T/k}}{k}$, 
denoted as:
\[
\sum_{k \leq \splitFthree}
\EE\sbr{ 
     \one\cbr{ \hat{\mu}_{(k),1} \leq \mu_1-\varepsilon_2, \kl( \hat{\mu}_{(k),1}, \mu_1 ) \leq \frac{2\ln\del{T/k}}{k} } \exp( k \cdot \kl(\hat{\mu}_{(k),1}, \mu_1-\varepsilon_2)}
\]
can be well-controlled. 
Utilizing this introduces another term in the regret analysis, $T \cdot \PP(\Ecal^C)$, where $\Ecal = \cbr[1]{\forall k \in [1,H], \kl( \hat{\mu}_{(k),1}, \mu_1 ) \leq \frac{2\ln\del{T/k}}{k}}$, which we bound by $O(H)$ via a time-uniform version of Chernoff bound. Putting everything together, we prove a bound of $F3$ of $O\del[1]{ \splitFthree \ln\del[1]{  (\frac{T}{\splitFthree} \wedge \splitFthree) \vee e^2 } + \frac{1}{\kl\del{\mu_1-\varepsilon_2, \mu_1}}}$, which yields our final regret bound of KL-MS in Theorem~\ref{thm:expected-regret-total} and the refined minimax ratio.

\begin{remark}
Although our technique is inspired by~\cite{jinfinite,menard17minimax}, 
we carefully set the case splitting threshold for $N_{t-1,1}$ (to obtain $F3_1$ and $F3_2$) to be $\splitFthree = O(\tfrac{\dmu_1 + \epsilon_2}{\epsilon_2^2})$, which is significantly different from prior works ($\tilde{O}(\tfrac{1}{\epsilon_2^2})$). 
\end{remark}

\begin{remark}
    One can port our proof strategy back to sub-Gaussian MS and show that it achieves a minimax ratio of $\sqrt{\ln K}$ as opposed to $\sqrt{\ln T}$ reported in~\citet{bian2022maillard}; a sketch of the proof is in Appendix~\ref{sec:sgms}.
    Recall that \citet{bian2022maillard} proposed another algorithm MS$^+$ that achieved the minimax ratio of $\sqrt{\ln K}$ at the price of extra exploration.
    Our result makes MS$^+$ obsolete; MS should be preferred over MS$^+$ at all times.
\end{remark}

\vspace{-.5em}
\section{Conclusion}
\vspace{-.5em}

We have proposed KL-MS, a KL version of Maillard sampling for stochastic multi-armed bandits in the $[0,1]$-bounded reward setting, with a closed-form probability computation, which is highly amenable to off-policy evaluation.
Our algorithm requires constant time complexity with respect to the target numerical precision in computing the action probabilities, and our regret analysis shows that KL-MS achieves the best regret bound among those in the literature that allows computing the action probabilities with $O(\text{polylog}(1/\text{precision}))$ time complexity, for example, Tsallis-INF~\cite{zimmert21tsallis}, EXP3++~\cite{seldin2014one}, in the stochastic setting.

Our study opens up numerous open problems.
One immediate open problem is to generalize KL-MS to handle exponential family reward distributions.
Another exciting direction is to design randomized and off-policy-amenable algorithms that achieve the asymptotic optimality for bounded rewards (i.e., as good as IMED~\cite{honda15non}).

One possible avenue is to extend MED~\cite{honda2011asymptotically} and remove the restriction that the reward distribution must have bounded support.
Furthermore, it would be interesting to extend MS to structured bandits 
and find connections to the Decision-Estimation Coefficient~\cite{foster21thestatistical}, which have recently been reported to characterize the optimal minimax regret rate for structured bandits.
Finally, we believe MS is practical by incorporating the booster hyperparameter introduced in~\citet{bian2022maillard}.
Extensive empirical evaluations on real-world problems would be an interesting future research direction.

\bibliographystyle{abbrvnat}
\bibliography{references,library-shared}

\clearpage

\appendix

\paragraph{Acknowledgments.} We thank Kyoungseok Jang for helpful discussions on refinements of binary Pinsker's inequality (Lemma~\ref{lemma:KL-lower-bound}). 
Hao Qin and Chicheng Zhang gratefully acknowledge funding support from the University of Arizona FY23 Eighteenth Mile TRIF Funding. 

\section{Proof of Worst-case Regret Bounds (Theorem \ref{thm:minimax-regret-bound}) and Sub-UCB Property (Theorem~\ref{thm:sub-UCB})}
\label{sec:proof-worst-case}

Before proving Theorem~\ref{thm:minimax-regret-bound}, we first state and prove a useful lemma that gives us an upper bound to the regret, which is useful for subsequent minimax ratio analysis and asymptotic analysis. This regret bound consists of two components, which correspond to arms with suboptimality gaps at most or greater than a predetermined threshold $\Delta$ respectively
. The former is bounded by $T\Delta$, while the latter is upper bounded by a finer $\tilde{O}(\sum_{a: \Delta_a > \Delta} \frac{\dmu_1}{\Delta_a} + K)$ term.
\begin{lemma}
\label{lem:kl-ms-pre-sub-ucb}
For KL-MS, its regret is bounded by: for any $\Delta \geq 0$, 
\[
\Reg(T)
\leq 
T\Delta 
+
O\del{
    \sum_{a: \Delta_a > \Delta} 
        \rbr{\frac{\dmu_1 + \Delta_a}{\Delta_a}} 
        \ln\rbr{ \frac{T\Delta_a^2}{\dmu_1 + \Delta_a} \vee e^2 }  
}
\]
\end{lemma}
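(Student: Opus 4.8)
The plan is to obtain Lemma~\ref{lem:kl-ms-pre-sub-ucb} as a direct consequence of the finite-time bound in Theorem~\ref{thm:expected-regret-total}. First I would apply that theorem with the given threshold $\Delta$ and a fixed universal constant $c=\frac14$ (its exact value is immaterial; any fixed $c\in(0,\frac14]$ works and only affects hidden constants). This reduces the task to showing that each of the two summations on the right-hand side of Eq.~\eqref{eqn:kl-ms-main-regret} is $O\del{\sum_{a:\Delta_a>\Delta}\frac{\dmu_1+\Delta_a}{\Delta_a}\ln\del{\frac{T\Delta_a^2}{\dmu_1+\Delta_a}\vee e^2}}$.

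The third summation is handled by bookkeeping: with $c$ a fixed constant, $\frac{\dmu_1+\Delta_a}{c^2\Delta_a}=\Theta\del{\frac{\dmu_1+\Delta_a}{\Delta_a}}$, and since $(A\wedge B)\vee e^2\le B\vee e^2$ for all $A$ and $c^2\le1$, the logarithmic factor $\ln\del{\del{\frac{\dmu_1+\Delta_a}{c^2\Delta_a^2}\wedge\frac{c^2T\Delta_a^2}{\dmu_1+\Delta_a}}\vee e^2}$ is at most $\ln\del{\frac{T\Delta_a^2}{\dmu_1+\Delta_a}\vee e^2}$; dropping the ``$\wedge$'' only enlarges the bound while keeping it in the desired form.

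The crux is the second summation, $\sum_{a:\Delta_a>\Delta}\frac{\Delta_a\ln\del{T\kl(\mu_a+c\Delta_a,\mu_1-c\Delta_a)\vee e^2}}{\kl(\mu_a+c\Delta_a,\mu_1-c\Delta_a)}$, for which I would establish the two-sided estimate $\kl(\mu_a+c\Delta_a,\mu_1-c\Delta_a)=\Theta\del{\frac{\Delta_a^2}{\dmu_1+\Delta_a}}$. The two arguments are separated by $(1-2c)\Delta_a=\Theta(\Delta_a)$ and both lie in $[\mu_a,\mu_1]$, so what governs the divergence up to constants is the Bernoulli-variance proxy $v:=\sup_{z\in[\mu_a,\mu_1]}z(1-z)$ (for the lower bound) and the local variance $y(1-y)$ at $y=\mu_1-c\Delta_a$ (for the upper bound). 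An elementary case analysis --- splitting on whether $\frac12\in[\mu_a,\mu_1]$, and, when both means exceed $\frac12$, using the identity $\mu_a(1-\mu_a)-\dmu_1=\Delta_a(\mu_a+\mu_1-1)\le\Delta_a$ --- shows that both $v$ and $y(1-y)$ are $\Theta(\dmu_1+\Delta_a)$. Plugging the bound on $v$ into Lemma~\ref{lemma:KL-lower-bound} (a refined binary Pinsker inequality lower-bounding $\kl(x,y)$ by a constant times $(x-y)^2$ over a variance proxy on $[x\wedge y,x\vee y]$) gives $\kl(\mu_a+c\Delta_a,\mu_1-c\Delta_a)\gtrsim\frac{\Delta_a^2}{\dmu_1+\Delta_a}$, hence $\frac{\Delta_a}{\kl(\cdot)}\lesssim\frac{\dmu_1+\Delta_a}{\Delta_a}$; plugging the bound on $y(1-y)$ into the $\chi^2$-divergence upper bound $\kl(x,y)\le\frac{(x-y)^2}{y(1-y)}$ gives $\kl(\mu_a+c\Delta_a,\mu_1-c\Delta_a)\lesssim\frac{\Delta_a^2}{\dmu_1+\Delta_a}$, so $\ln\del{T\kl(\cdot)\vee e^2}\lesssim\ln\del{\frac{T\Delta_a^2}{\dmu_1+\Delta_a}\vee e^2}$ (via $\ln(Cx\vee e^2)\le(1+\frac{\ln C}{2})\ln(x\vee e^2)$ for $C\ge1$). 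Multiplying the two estimates, summing over $a$ with $\Delta_a>\Delta$, and adding the $T\Delta$ term gives the claim.

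I expect the two-sided KL estimate to be the main obstacle: the difficulty is not the divergence inequalities themselves (Pinsker and $\chi^2$ are standard) but verifying that the relevant Bernoulli-variance proxy along $[\mu_a,\mu_1]$ and at $\mu_1-c\Delta_a$ is $\Theta(\dmu_1+\Delta_a)$ uniformly over all $(\mu_1,\Delta_a)$, in particular when $\mu_1$ is near $0$ or $1$. The remainder is constant tracking together with the monotonicity manipulations of $\ln(\cdot\vee e^2)$ noted above.
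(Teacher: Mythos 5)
Your proposal is correct and follows the same overall skeleton as the paper's proof: apply Theorem~\ref{thm:expected-regret-total} with a fixed $c=\tfrac14$, absorb constants, relax the ``$\wedge$'' in the lower-order logarithm to $\tfrac{T\Delta_a^2}{\dmu_1+\Delta_a}$, and use the refined Pinsker bound (Lemma~\ref{lemma:KL-lower-bound}) together with the $1$-Lipschitzness of $z\mapsto z(1-z)$ to get $\kl(\mu_a+c\Delta_a,\mu_1-c\Delta_a)\gtrsim \frac{\Delta_a^2}{\dmu_1+\Delta_a}$. Where you diverge is in the treatment of the main term $\frac{\Delta_a\ln(T\kl(\cdot)\vee e^2)}{\kl(\cdot)}$: you bound the denominator and the numerator separately, which forces you to prove a matching \emph{upper} bound $\kl(\mu_a+c\Delta_a,\mu_1-c\Delta_a)\lesssim\frac{\Delta_a^2}{\dmu_1+\Delta_a}$ via the $\chi^2$ inequality and the variance estimate $(\mu_1-c\Delta_a)(1-\mu_1+c\Delta_a)\gtrsim\dmu_1+\Delta_a$ --- the step you rightly flag as the main obstacle. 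That estimate does hold uniformly (e.g.\ $\mu_1-c\Delta_a\geq\tfrac34\mu_1$ and $1-\mu_1+c\Delta_a\geq\tfrac12((1-\mu_1)+\tfrac14\Delta_a)$ give $(\mu_1-c\Delta_a)(1-\mu_1+c\Delta_a)\gtrsim\dmu_1+\mu_1\Delta_a$, and the two cases $\mu_1\leq\tfrac12$, $\mu_1>\tfrac12$ convert $\dmu_1+\mu_1\Delta_a$ into $\dmu_1+\Delta_a$ up to constants), so your argument goes through. The paper sidesteps this entirely: it invokes Lemma~\ref{lemma:monoticity}, the monotone decrease of $x\mapsto\frac{\ln(Tx\vee e^2)}{x}$, so that the single \emph{lower} bound on the KL divergence simultaneously controls the whole ratio, with Lemma~\ref{lem:log-c-move} cleaning up the constant inside the logarithm. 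The paper's route is shorter and avoids the delicate variance-proxy verification; yours buys a reusable two-sided estimate $\kl(\mu_a+c\Delta_a,\mu_1-c\Delta_a)=\Theta\bigl(\frac{\Delta_a^2}{\dmu_1+\Delta_a}\bigr)$ at the cost of that extra case analysis.
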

\begin{proof}
Applying Theorem \ref{thm:expected-regret-total} with $c = \frac14$, we have:
\begin{align*}
 & \Reg(T)
        \\
        \leq& 
            T\Delta 
            +
            \sum_{a: \Delta_a > \Delta}  
            \fr {\Delta_a \ln(T \kl(\mu_a + c \Delta_a, \mu_1 - c \Delta_a) \vee e^2 )}
                {\kl(\mu_a + c \Delta_a, \mu_1 - c \Delta_a)}
        \\
        &+ 
        392 \del{
            \sum_{a: \Delta_a > \Delta} 
                \rbr{\frac{\dmu_1 + \Delta_a}{c^4 \Delta_a}} 
                \ln\rbr{ \rbr{  \frac{\dmu_1 + \Delta_a}{\Delta_a^2} \wedge \frac{T\Delta_a^2}{\dmu_1 + \Delta_a} } \vee e^2 } 
        }
                \tag{Theorem \ref{thm:expected-regret-total}}
        \\
        \leq& 
            T\Delta 
            +
            O\del{
                \sum_{a: \Delta_a > \Delta} 
                    \rbr{\frac{\dmu_1 + \Delta_a}{\Delta_a}} 
                    \ln\rbr{ \frac{T\Delta_a^2}{\dmu_1 + \Delta_a} \vee e^2 }  
            },
                \tag{Lemma \ref{lemma:monoticity} and Lemma \ref{lemma:KL-lower-bound} }
\end{align*}
here, the second inequality is because we choose $\fr{T\Delta_a^2}{\dmu_1+\Delta_a}$ as the upper bound in the lower order term then we use Lemma \ref{lemma:KL-lower-bound} to lower bound $\kl(\mu_a+c\Delta_a,\mu_1-c\Delta_a) \gtrsim \frac{\Delta_a^2}{\dmu_a + \Delta_a}$ and Lemma \ref{lemma:monoticity} that $x \mapsto \fr{\ln(T x \vee e^2)}{x}$ is monotonically decreasing when $x \geq 0$. Also by 1-Lipshitzness of $z \mapsto z (1-z)$, we have $(\mu_1-c\Delta_a)(1-(\mu_1-c\Delta_a)) \leq \dmu_1 + c\Delta_a$ and all terms except $T\Delta$ will be merged into the $O( \cdot )$ term.
\end{proof}

    \begin{proof}[Proof of Theorem~\ref{thm:minimax-regret-bound}]
    Let $\Delta = \sqrt{\frac{ \dmu_1 K \ln{K}}{T}}$, from {Lemma~\ref{lem:kl-ms-pre-sub-ucb}} we have
    
    \begin{align*}
       \Reg(T)
       \leq & 
       T\Delta 
            +
            O\del{
                \sum_{a: \Delta_a > \Delta} 
                    \rbr{\frac{\dmu_1 + \Delta_a}{\Delta_a}} 
                    \ln\rbr{ \frac{T\Delta_a^2}{\dmu_1 + \Delta_a} \vee e^2 }  
            }
        \\
        \leq& 
            T\Delta 
            +
            O\del{
                \sum_{a: \Delta_a > \Delta} 
                    \frac{\dmu_1}{\Delta_a} 
                    \ln\rbr{ \frac{T\Delta_a^2}{\dmu_1} \vee e^2 }  
            }
            +
            O\del{
                \sum_{a: \Delta_a > \Delta} 
                    \ln\rbr{ \rbr{ T\Delta_a } \vee e^2 }  
            }
        \\
        \leq& 
            T\Delta 
            +
            O\del{
                \frac{K \dmu_1}{\Delta} 
                \ln\rbr{ \frac{T\Delta^2}{\dmu_1} \vee e^2 }  
            }
            +
            O\del{
                K  \ln\rbr{ T }  
            }
            \tag{Lemma \ref{lemma:monoticity}}
        \\
        \leq&
            O\del{ \sqrt{\dmu_1 KT\ln{K}} } + 
            O\del{ K \ln\del{ T } },
    \end{align*}
    where in the second inequality, we split fraction $\fr{\dmu_1+\Delta_a}{\Delta_a}$ into $\fr{\dmu_1}{\Delta_a}$ and $1$, then bound each term separately. 
    The second-to-last inequality is 
    due to the monotonicity of $x \mapsto \fr{\ln{(b x^2 \vee e^2})}{x}$ proven in Lemma \ref{lemma:monoticity}; 
    The last inequality is
    by algebra. 
    \end{proof}

\begin{proof}[Proof of Theorem~\ref{thm:sub-UCB}]
This is an immediate consequence of Lemma~\ref{lem:kl-ms-pre-sub-ucb} with $\Delta = 0$, along with the observations that 
$\frac{\dmu_1 + \Delta_a}{\Delta_a} \leq \frac{2}{\Delta_a}$,
and 
$\frac{T\Delta_a^2}{\dmu_1 + \Delta_a} \leq T$.
\end{proof}

\section{Proof of Asymptotic Optimality (Theorem \ref{thm:asymptotic-optimality})}
We establish asymptotic optimality of KL-MS by analyzing the ratio between the expected regret to $\ln T$ and letting $T \to \infty$.
    \begin{proof}

    Starting from Theorem \ref{thm:expected-regret-total} and letting $\Delta = 0$ and $c = \fr{1}{\ln\ln{T}}$:
    \begin{align*}
        & \limsup_{T \to \infty} \fr{ \Reg(T) }{\ln(T)}
        \\
        \leq&
        \lim_{T \to \infty}
        \sum_{a\in[K]: \Delta_a > 0}  
        \fr {\Delta_a \ln(T \kl(\mu_a + c \Delta_a, \mu_1 - c \Delta_a) \vee e^2 )}
            {\ln{T} \kl(\mu_a + c \Delta_a, \mu_1 - c \Delta_a)}
        \\
        &+ 
        \lim_{T \rightarrow \infty}
        392 \del{  
            \sum_{a\in[K]: \Delta_a > 0} 
            \rbr{\frac{\dmu_1 + \Delta_a}{c^4 \ln{T} \Delta_a}} \ln\rbr{ \rbr{  \frac{\dmu_1 + \Delta_a}{\Delta_a^2} } \vee e^2 }  
        } 
                \tag{Theorem \ref{thm:expected-regret-total}}
        \\
        \leq&
            \lim_{T \rightarrow \infty} \sum_{a\in[K]:\Delta_a>0} 
            \fr {\Delta_a \ln(T \kl(\mu_a + c \Delta_a, \mu_1 - c \Delta_a) \vee e^2 )}
            {\kl(\mu_a + c \Delta_a, \mu_1 - c \Delta_a) \ln{T} }
        \\
        =&
            \lim_{T \rightarrow \infty} \sum_{a\in[K]:\Delta_a>0} 
            \fr {\Delta_a \ln(T \kl(\mu_a + c \Delta_a, \mu_1 - c \Delta_a) \vee e^2 )}
            { \kl(\mu_a, \mu_1) \ln{T} }
            \cd
            \fr {\kl(\mu_a, \mu_1)}{\kl(\mu_a + c \Delta_a, \mu_1 - c \Delta_a)}
        \\
        =&
            \sum_{a\in[K]:\Delta_a>0}
            \frac{\Delta_a}{\kl(\mu_a, \mu_1)},
                \tag{By the continuity of $\kl(\cdot, \cdot)$}
    \end{align*}
    where the first inequality is because 
    of the fact that $\ln\rbr{ \rbr{  \frac{\dmu_1 + \Delta_a}{c^2 \Delta_a^2} \wedge \frac{c^2 T}{\frac{\dmu_1 + \Delta_a}{\Delta_a^2}} } \vee e^2 } \leq \frac{1}{c^2} \ln\rbr{ \frac{\dmu_1 + \Delta_a}{ \Delta_a^2} \vee e^2 } $ due to Lemma~\ref{lem:log-c-move}, and,
     the second inequality is due to that when $T \rightarrow \infty$, $c^4 \ln{T} = \fr{\ln{T}}{(\ln\ln{T})^4} \to \infty$. 
     \end{proof}

\section{Full Proof of Theorem~\ref{thm:expected-regret-total}}
\label{sec:full-proof}

\subsection{A general lemma on the expected arm pulls and its implication to Theorem~\ref{thm:expected-regret-total}}
\label{sec:kl-lemma-thm}

We first present a general lemma that bounds the number of pulls to arm $a$ by KL-MS; due to its technical nature, we defer its proof to Section~\ref{sec:pf-expected-sub-optimal-arm-pull} and focus on its implication to Theorem~\ref{thm:expected-regret-total} in this section.

\begin{lemma}[Lemma~\ref{lemma:expected-sub-optimal-arm-pull-main} restated]\label{lemma:expected-sub-optimal-arm-pull}
    For any suboptimal arm $a$, let $\varepsilon_1, \varepsilon_2 > 0$ be such that $\varepsilon_1 + \varepsilon_2 < \Delta_a$. 
    Then its expected number of pulls is bounded as:
    \begin{align}
    \EE\sbr{ N_{T,a} }
    \leq & 1 + \frac{ \ln\del{ T \kl(\mu_a + \varepsilon_1, \mu_1 - \varepsilon_2) \vee e^2} }
                { \kl(\mu_a+\varepsilon_1,\mu_1-\varepsilon_2) } 
    + \BoundFOne + \BoundFTwo \\
    & + \BoundFThree,
    \label{eqn:arm-pull-bound}
    \end{align}
    where $\splitFthree := \defSplitFthree$ and $h(\mu_1,\varepsilon_2):= \ln \del{\fr{(1-\mu_1+\varepsilon_2)\mu_1}{(1-\mu_1)(\mu_1-\varepsilon_2)}}$.
\end{lemma}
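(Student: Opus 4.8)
\textbf{Proof plan for Lemma~\ref{lemma:expected-sub-optimal-arm-pull}.}

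The plan is to follow the decomposition already previewed in the proof sketch and to bound each piece separately. First I would fix a suboptimal arm $a$ and the constants $\varepsilon_1,\varepsilon_2>0$ with $\varepsilon_1+\varepsilon_2<\Delta_a$, set $u := \defU$, and introduce the events $A_t = \{I_t = a\}$, $B_t = \{N_{t,a} < u\}$, $C_t = \{\hat\mu_{t,\max}\ge\mu_1-\varepsilon_2\}$, $D_t = \{\hat\mu_{t,a}\le\mu_a+\varepsilon_1\}$. The first elementary step is to observe that $N_{T,a} \le u + \sum_{t=K+1}^T \one\{A_t, B_{t-1}^c\}$, since before arm $a$ has been pulled $u$ times it contributes at most $u$ to the count, and after that each pull is counted in the sum. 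Taking expectations and splitting the event $B_{t-1}^c$ according to whether $C_{t-1}$ and $D_{t-1}$ hold gives the three terms $F1$ (both good estimates hold), $F2$ ($D_{t-1}^c$: arm $a$'s mean over-estimated), and $F3$ ($C_{t-1}^c$: the optimal arm's mean under-estimated). I would also use $u \le 1 + \frac{\ln(T\kl(\mu_a+\varepsilon_1,\mu_1-\varepsilon_2)\vee e^2)}{\kl(\mu_a+\varepsilon_1,\mu_1-\varepsilon_2)}$ to match the $1 + \frac{\ln(\cdots)}{\kl(\cdots)}$ term in the statement.

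Next I would bound $F1$ and $F2$, each of which is comparatively routine. For $F1$: on the event $C_{t-1}\cap D_{t-1}$ we have $\kl(\hat\mu_{t-1,a},\hat\mu_{t-1,\max})\ge\kl(\mu_a+\varepsilon_1,\mu_1-\varepsilon_2)$ by monotonicity of $\kl$, so $p_{t,a}\le\exp(-N_{t-1,a}\kl(\mu_a+\varepsilon_1,\mu_1-\varepsilon_2))$; summing this geometric-type series over the pulls of $a$ (using $N_{t-1,a}\ge u$ is not even needed here, only $N_{t-1,a}\ge 1$ through the stopping-time reindexing $\tau_a(s)$) yields $F1 \le \BoundFOne$. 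This is Lemma~\ref{lemma:F1-upper-bound}. For $F2$: we transfer the count to the local clock of arm $a$ and use a Chernoff/maximal-inequality bound on $\PP(\hat\mu_{(s),a} > \mu_a + \varepsilon_1)$, which after summation over $s$ gives $F2\le\BoundFTwo$; this is Lemma~\ref{lemma:F2-upper-bound}. I would cite both lemmas rather than reprove them.

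The main obstacle is $F3$, the term where the optimal arm is badly under-estimated; this is where all the new technical work lies and where the $H = \defSplitFthree$ threshold and the ``$\frac{T}{H}\wedge H$'' logarithmic factor come from. Here I would first apply the probability-transfer argument (Lemma~\ref{lemma:prob-transfer}) to replace $\EE[\one\{A_t, B_{t-1}^c, C_{t-1}^c\}]$ by $\EE[\one\{I_t = 1, C_{t-1}^c\}\cdot \exp(N_{t-1,1}\kl(\hat\mu_{t-1,1},\mu_1-\varepsilon_2))]$ (using that on $C_{t-1}^c$, $\hat\mu_{t-1,\max}\le\mu_1-\varepsilon_2$ so $\kl(\hat\mu_{t-1,1},\hat\mu_{t-1,\max})\le\kl(\hat\mu_{t-1,1},\mu_1-\varepsilon_2)$, and the ratio $p_{t,a}/p_{t,1}$ is at most $\exp(N_{t-1,1}\kl(\hat\mu_{t-1,1},\mu_1-\varepsilon_2))$ on this event). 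Re-indexing to the local clock of arm $1$ bounds $F3$ by $\sum_{k\ge 1} M_k$ with $M_k := \EE[\one\{\hat\mu_{(k),1}\le\mu_1-\varepsilon_2\}\exp(k\kl(\hat\mu_{(k),1},\mu_1-\varepsilon_2))]$. A double-integration argument (integrating the tail bound on $\hat\mu_{(k),1}$ from the maximal inequality, Lemma~\ref{lemma:maximal-inequality}, against the exponential weight) gives $M_k \le \left(\frac{2H}{k}+1\right)\exp(-k\kl(\mu_1-\varepsilon_2,\mu_1))$, where the factor $H = \defSplitFthree$ arises from bounding the curvature of $x\mapsto\kl(x,\mu_1-\varepsilon_2)$ near $x = \mu_1-\varepsilon_2$ relative to $\kl(\mu_1-\varepsilon_2,\mu_1)$. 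Summing naively over $k$ would give $O\!\left(H\ln(H\vee e^2)+\frac{1}{\kl(\mu_1-\varepsilon_2,\mu_1)}\right)$, which suffices for asymptotic optimality and sub-UCB but only gives a $\sqrt{\ln T}$ minimax ratio. To get the sharper $\frac{T}{H}\wedge H$ factor, I would split $\sum_k M_k = F3_1 + F3_2$ at $k = H$: $F3_2 = \sum_{k>H}M_k$ is controlled directly since $\frac{2H}{k}\le 2$ there, giving the $\frac{1}{\kl(\mu_1-\varepsilon_2,\mu_1)}$-type contribution. For $F3_1 = \sum_{k\le H}M_k$, I would insert the extra indicator $\one\{\kl(\hat\mu_{(k),1},\mu_1)\le\frac{2\ln(T/k)}{k}\}$, bounding the modified sum by replacing the exponential weight via $\exp(k\kl(\hat\mu_{(k),1},\mu_1-\varepsilon_2)) \le \exp(k\kl(\hat\mu_{(k),1},\mu_1))\cdot(\text{bounded factor})$ and then using the inserted indicator to cap $\exp(k\kl(\hat\mu_{(k),1},\mu_1))\le T/k$, which leads to $\sum_{k\le H}\frac{T}{k}\cdot(\cdots) \lesssim H\ln(\frac{T}{H}\vee e^2)$; and accounting for the removed portion contributes the term $T\cdot\PP(\Ecal^c)$ with $\Ecal = \{\forall k\in[1,H]: \kl(\hat\mu_{(k),1},\mu_1)\le\frac{2\ln(T/k)}{k}\}$, which a time-uniform Chernoff bound (peeling over dyadic ranges of $k$) controls by $O(H)$. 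Combining, $F3 \le \BoundFThree$. Finally, adding the bounds $u + F1 + F2 + F3 \le 1 + \frac{\ln(T\kl(\mu_a+\varepsilon_1,\mu_1-\varepsilon_2)\vee e^2)}{\kl(\mu_a+\varepsilon_1,\mu_1-\varepsilon_2)} + \BoundFOne + \BoundFTwo + \BoundFThree$ yields Eq.~\eqref{eqn:arm-pull-bound}, and the stated bound $H \lesssim \frac{2\dmu_1+\varepsilon_2}{\varepsilon_2^2}$ follows from Lemma~\ref{lemma:H-concavity-ineq}. The delicate points I expect to fight with are getting the constants in the $M_k$ double integration right, and verifying that the time-uniform concentration over $k\in[1,H]$ really costs only $O(H)$ rather than $O(H\ln H)$ after the peeling.
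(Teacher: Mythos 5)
Your proposal tracks the paper's proof essentially step for step: the same $u$ and event decomposition into $F1$, $F2$, $F3$, the same treatment of $F3$ (probability transfer to arm $1$, the $M_k$ bound $\bigl(\tfrac{2H}{k}+1\bigr)e^{-k\kl(\mu_1-\varepsilon_2,\mu_1)}$ via double integration, the split at $k=H$, the event $\Ecal$ with radii $\tfrac{2\ln(T/k)}{k}$, and $T\PP(\Ecal^c)\lesssim H$ via peeling), and the final assembly $u+F1+F2+F3$ correctly yields Eq.~\eqref{eqn:arm-pull-bound}.

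The one genuine flaw is your parenthetical claim that $F1\le\BoundFOne$ needs only $N_{t-1,a}\ge 1$ ``through the stopping-time reindexing $\tau_a(s)$'' and that $N_{t-1,a}\ge u$ is not needed. That step would fail. Writing $\kl_0:=\kl(\mu_a+\varepsilon_1,\mu_1-\varepsilon_2)$, the inequality $p_{t,a}\le\exp(-N_{t-1,a}\kl_0)$ on $C_{t-1}\cap D_{t-1}$ is a bound on the conditional probability $\EE\sbr{\one\cbr{A_t}\mid\cH_{t-1}}$; to invoke it you must take this conditional expectation, which consumes the very indicator $\one\cbr{A_t}$ that the reindexing to the local clock $\tau_a(s)$ relies on --- you cannot use both at once. (Contrast with $F2$, where reindexing is legitimate because the counted event $D_{\tau_a(k)-1}^c$ is itself exponentially unlikely by Chernoff; in $F1$ the event $C_{t-1}\cap D_{t-1}$ is not unlikely, only the conditional probability of pulling $a$ on it is small.) Concretely, if the estimates are always accurate and $p_{t,a}=\exp(-N_{t-1,a}\kl_0)$ exactly, the expected number of pulls of $a$ after the $u$-th within horizon $T$ is of order $T e^{-u\kl_0}\approx\frac{1}{\kl_0}$, whereas a geometric sum over pull indices $s>u$ would give roughly $\frac{e^{-u\kl_0}}{\kl_0}\approx\frac{1}{T\kl_0^2}$, which is false. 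The paper's Lemma~\ref{lemma:F1-upper-bound} instead uses $B_{t-1}^c$ to get the uniform bound $p_{t,a}\le\exp(-u\kl_0)\le\frac{1}{T\kl_0}$ and then sums over all $T$ global time steps; the event $B_{t-1}^c$ is precisely what makes $F1$ controllable and cannot be dropped. Since you defer to Lemma~\ref{lemma:F1-upper-bound} anyway, the conclusion you use is correct, but the justification you sketch for it is not.
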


We now use Lemma~\ref{lemma:expected-sub-optimal-arm-pull} to conclude Theorem~\ref{thm:expected-regret-total}.

\begin{proof}[Proof of Theorem~\ref{thm:expected-regret-total}]
Fix any $c \in (0, \frac14]$. Let $\varepsilon_1 = \varepsilon_2 = c\Delta_a$; note that by the choice of $c$, $\varepsilon_1 + \varepsilon_2 < \Delta_a$. 
From Lemma~\ref{lemma:expected-sub-optimal-arm-pull}, $\EE\sbr{N_{T,a}}$ is bounded by Eq.~\eqref{eqn:arm-pull-bound}.
We now plug in the value of $\varepsilon_1, \varepsilon_2$, and 
further upper bound the third to the sixth terms of the right hand side of Eq.~\eqref{eqn:arm-pull-bound}: 
\begin{itemize}
\item 
\[ 
\BoundFOne
\leq 
\frac{
2(\mu_1-\varepsilon_2)(1-\mu_1+\varepsilon_2) + 2(\Delta_a-\varepsilon_1-\varepsilon_2)}
{(\Delta_a-\varepsilon_1-\varepsilon_2)^2}
\leq
\frac{2}{(1-2c)^2} \cdot
\frac{\dot{\mu}_1 + \Delta_a}{\Delta_a^2}
\]
\item 
\[
\BoundFTwo
\leq 
\fr {2\dmu_a + 2\varepsilon_1}
                    {\varepsilon_1^2}
\leq
\frac{4}{c^2} \cdot
\frac{\dot{\mu}_1 + \Delta_a}{\Delta_a^2}
\]
\item 
\[
\frac{4}{\kl(\mu_1-\varepsilon_2,\mu_1)}
\leq 
\fr {8\dmu_1 + 8\varepsilon_2}
                    {\varepsilon_2^2}
\leq
\frac{8}{c^2} \cdot
\frac{\dot{\mu}_1 + \Delta_a}{\Delta_a^2}
\]
\item By Lemma~\ref{lemma:H-concavity-ineq}, $H \leq \frac{ 2 \dmu_1 + 2 \varepsilon_2}{\varepsilon_2^2} \leq \frac{2}{c^2} \cdot \frac{\dmu_1 + \Delta_a}{\Delta_a^2}$, and by Lemma~\ref{lem:ln-x-x-decr}, the function $H \mapsto 6 H \ln\del{ (\fr{T}{H} \wedge H) \vee e^2 }$ is monotonically increasing, we have that
\begin{align*}
    6 H \ln\del{ (\fr{T}{H} \wedge H) \vee e^2 }
    \leq & 
    \frac{12}{c^2}\cdot
    \rbr{\frac{\dmu_1 + \Delta_a}{\Delta_a^2}} 
        \ln\rbr{ \rbr{  \frac{\dmu_1 + \Delta_a}{c^2 \Delta_a^2} \wedge \frac{c^2 T  \Delta_a^2}{\dmu_1 + \Delta_a }} \vee e^2 }  
\end{align*}
\end{itemize}

Combining all the above bounds and Eq.~\eqref{eqn:arm-pull-bound}, 
KL-MS satisfies that, for any arm $a$, 
for any $c \in (0, \frac14]$: 
    \begin{align}
    \EE\sbr{N_{T,a}}
    \leq& 
    \frac{\ln(T \kl(\mu_a + c \Delta_a, \mu_1 - c \Delta_a) \vee e^2 )}{\kl(\mu_a + c \Delta_a, \mu_1 - c \Delta_a)}
    \nonumber \\
    &+ 
    \del{ \frac{24}{c^2} + \frac{4}{(1-2c)^2} } \rbr{\frac{\dmu_1 + \Delta_a}{\Delta_a^2}} \ln\rbr{ \rbr{  \frac{\dmu_1 + \Delta_a}{c^2 \Delta_a^2} \wedge \frac{c^2 T  \Delta_a^2}{\dmu_1 + \Delta_a} } \vee e^2 }
    \label{eqn:arm-pull-bound-simp}
    \end{align}

For any $\Delta \geq 0$, we now bound the pseudo-regret of KL-MS as follows:
\begin{align*}
& \Reg(T) \\
= &  \sum_{a: \Delta_a > 0} \Delta_a \EE\sbr{N_{T,a}} \\
= & \sum_{a: \Delta_a \in (0,\Delta]} \Delta_a \EE\sbr{N_{T,a}} + \sum_{a: \Delta_a > \Delta} \Delta_a \EE\sbr{N_{T,a}} \\
\leq & \Delta T + \sum_{a: \Delta_a > \Delta} \Delta_a \frac{\ln(T \kl(\mu_a + c \Delta_a, \mu_1 - c \Delta_a) \vee e^2 )}{\kl(\mu_a + c \Delta_a, \mu_1 - c \Delta_a)} \\
    & + 
    \del{ \frac{24}{c^2} + \frac{4}{(1-2c)^2} } \sum_{a: \Delta_a > \Delta} \rbr{\frac{\dmu_1 + \Delta_a}{ \Delta_a}} \ln\rbr{ \rbr{  \frac{\dmu_1 + \Delta_a}{c^2 \Delta_a^2} \wedge \frac{c^2 T \Delta_a^2}{\dmu_1 + \Delta_a} } \vee e^2 }, 
\end{align*}
where the last inequality is from Eq.~\eqref{eqn:arm-pull-bound-simp}.
Then we pick $c=\frac{1}{4}$ and conclude the proof of the theorem.
\end{proof}

\subsection{Proof of Lemma~\ref{lemma:expected-sub-optimal-arm-pull}: arm pull count decomposition and additional notations}
\label{sec:pf-expected-sub-optimal-arm-pull}

In this subsection, we prove Lemma~\ref{lemma:expected-sub-optimal-arm-pull}. 
We first 
recall the following set of useful notations defined in Section~\ref{sec:proof-sketch}:

Recall that $u = \defU$, and  
we have defined the following events
    \begin{align*}
        & A_t := \cbr{I_t = a} \\
        & B_t := \cbr{N_{t,a} < u} \\
        & C_t := \cbr{\hat{\mu}_{t,\max} \geq \mu_1-{\varepsilon_2}} \\
        & D_t := \cbr{\hat{\mu}_{t,a} \leq \mu_a+{\varepsilon_1}} 
    \end{align*}

\paragraph{A useful decomposition of the expected number of pulls to arm $a$.} With the notations above, we bound the expected number of pulling any suboptimal $a$ by  
decomposing the arm pull indicator $\one\cbr{I_t = a}$ according to events $B_{t-1}, C_{t-1}^c$ and $D_{t-1}$ in a cascading manner:
    \begin{align}
    \EE[N_{T,a}] 
    &= 
        \EE\sbr{ \sum_{t=1}^T \one\cbr{I_t = a} }
    \\
    &= 
        1 + \EE\sbr{ \sum_{t=K+1}^T \one\cbr{A_t} }
            \tag{Definition of Algorithm~\ref{alg:KL-MS}}
    \\
    &=
        1
        +
        \EE\sbr{ \sum_{t=K+1}^T \one\cbr{A_t, B_{t-1}} } 
        + 
        \EE\sbr{ \sum_{t=K+1}^T \one\cbr{A_t, B_{t-1}^c} }
    \\
    &\leq 
        1 + (u - 1) + \EE\sbr{\sum_{t=K+1}^T \one\cbr{A_t, B_{t-1}^c}}
            \tag{Lemma \ref{lemma:sum-prob-cumul-sum}
            }
    \\
    &= u + \underbrace{\EE\sbr{\sum_{t=K+1}^T 
    \one\cbr{A_t, B_{t-1}^c, 
        C_{t-1}, D_{t-1}
        }
    }}_{F1}
        \label{eqn:F_1}
    \\
    &\quad + \underbrace{\EE\sbr{\sum_{t=K+1}^T 
    \one\cbr{A_t, B_{t-1}^c, 
        C_{t-1}, 
        D_{t-1}^c
        }
    }}_{F2}
        \label{eqn:F_2}
    \\
    &\quad + \underbrace{\EE\sbr{\sum_{t=K+1}^T 
        \one\cbr{A_t, B_{t-1}^c, C_{t-1}^c
        }
    }}_{F3}
        \label{eqn:F_3}
    \end{align}

Given the above decomposition, the lemma is now an immediate consequence of the definition of $u$, Lemmas~\ref{lemma:F1-upper-bound},~\ref{lemma:F2-upper-bound} and~\ref{lemma:F3-upper-bound} (that bounds $F1, F2, F3$ respectively), which we state and prove in  Appendix~\ref{sec:bounding-f-i}.

\section{Bounding the number of arm pulls in each case}
\label{sec:bounding-f-i}

    \subsection{F1}
    \label{sec:f1}
    In this section we bound $F1$. This is the case that $\hat{\mu}_{t,a}$ is small and $\hat{\mu}_{t,\max}$ is large, so that $\kl(\hat{\mu}_{t,a}, \hat{\mu}_{t,\max})$ do not significantly underestimate
    $\kl(\mu_a, \mu_1)$, which will imply that suboptimal arm $a$ will be only pulled a small number of times due to the arm selection rule (Eq.~\eqref{eqn:kl-ms-rule}). 
    Note that $u$ is set carefully so that $F1$ is bounded just enough to be lower than the $\frac{\ln T}{\kl(\mu_a, \mu_1)}$ Bernoulli asymptotic lower bound. 

    \begin{lemma} \label{lemma:F1-upper-bound}
        \[ 
            F1 \leq \BoundFOne
        \]
    \end{lemma}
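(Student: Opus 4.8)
The plan is to bound $F1$ by controlling, for each time step $t$, the conditional probability $p_{t,a}$ of pulling arm $a$ on the favorable event $B_{t-1}^c \cap C_{t-1} \cap D_{t-1}$, and then summing over $t$. Since $B_{t-1}, C_{t-1}, D_{t-1}$ are all measurable with respect to the history $\mathcal{F}_{t-1}$ up to time $t-1$, and $\EE\sbr{\one\cbr{A_t}\mid \mathcal{F}_{t-1}} = p_{t,a}$, the tower rule gives
\[
F1 = \sum_{t=K+1}^T \EE\sbr{\, p_{t,a}\cdot \one\cbr{B_{t-1}^c, C_{t-1}, D_{t-1}}\,}.
\]
It then suffices to show that on the event $\cbr{B_{t-1}^c, C_{t-1}, D_{t-1}}$ we deterministically have $p_{t,a} \le \frac{1}{T\,\kl(\mu_a+\varepsilon_1,\mu_1-\varepsilon_2)}$, since then $F1 \le (T-K)\cdot \frac{1}{T\,\kl(\mu_a+\varepsilon_1,\mu_1-\varepsilon_2)} \le \BoundFOne$.

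To establish the pointwise bound on $p_{t,a}$, first note that the arm attaining $\hat\mu_{t-1,\max}$ contributes $\exp(0)=1$ to the normalizer $M_t$ in Eq.~\eqref{eqn:kl-ms-rule}, so $M_t \ge 1$ and hence $p_{t,a} \le \exp\del{-N_{t-1,a}\,\kl(\hat\mu_{t-1,a}, \hat\mu_{t-1,\max})}$. On $D_{t-1}$ we have $\hat\mu_{t-1,a}\le \mu_a+\varepsilon_1$, on $C_{t-1}$ we have $\hat\mu_{t-1,\max}\ge \mu_1-\varepsilon_2$, and the assumption $\varepsilon_1+\varepsilon_2<\Delta_a$ implies $\mu_a+\varepsilon_1 < \mu_1-\varepsilon_2 < 1$; therefore $\hat\mu_{t-1,a}\le \mu_a+\varepsilon_1 < \mu_1-\varepsilon_2 \le \hat\mu_{t-1,\max}$. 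Using the monotonicity of the binary KL divergence (for $x \le z$, $\kl(x,z)$ is non-increasing in $x$ and non-decreasing in $z$), we first decrease the first argument from $\hat\mu_{t-1,a}$ up to $\mu_a+\varepsilon_1$ and then the second argument down from $\hat\mu_{t-1,\max}$ to $\mu_1-\varepsilon_2$, obtaining $\kl(\hat\mu_{t-1,a}, \hat\mu_{t-1,\max}) \ge \kl(\mu_a+\varepsilon_1, \mu_1-\varepsilon_2)$. Combined with $N_{t-1,a}\ge u$ on $B_{t-1}^c$, this yields $p_{t,a}\le \exp\del{-u\,\kl(\mu_a+\varepsilon_1,\mu_1-\varepsilon_2)}$.

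Finally I would plug in the definition $u = \defU$, which gives $u\,\kl(\mu_a+\varepsilon_1,\mu_1-\varepsilon_2) \ge \ln\del{T\,\kl(\mu_a+\varepsilon_1,\mu_1-\varepsilon_2)\vee e^2} \ge \ln\del{T\,\kl(\mu_a+\varepsilon_1,\mu_1-\varepsilon_2)}$, hence $p_{t,a}\le \frac{1}{T\,\kl(\mu_a+\varepsilon_1,\mu_1-\varepsilon_2)}$ on the favorable event, and the claimed bound follows by summing over the at most $T$ time steps. There is no serious obstacle here; the only points requiring a little care are justifying $M_t\ge 1$, invoking the binary-KL monotonicity with the correct ordering $\hat\mu_{t-1,a}\le \mu_a+\varepsilon_1 < \mu_1-\varepsilon_2 \le \hat\mu_{t-1,\max}$ (including the degenerate cases $\hat\mu_{t-1,a}=0$ or $\hat\mu_{t-1,\max}=1$, where the relevant KL terms are still finite or $+\infty$ and monotonicity still gives the inequality), and noting that the $\vee e^2$ truncation in $u$ is harmless (if $T\,\kl(\mu_a+\varepsilon_1,\mu_1-\varepsilon_2)<e^2$ then $u\,\kl(\mu_a+\varepsilon_1,\mu_1-\varepsilon_2)\ge 2$ and $e^{-2}\le \tfrac{1}{T\,\kl(\mu_a+\varepsilon_1,\mu_1-\varepsilon_2)}$ anyway).
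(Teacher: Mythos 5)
Your proposal is correct and follows essentially the same route as the paper's proof: condition on $\cH_{t-1}$, use $M_t \geq 1$ to bound $p_{t,a} \leq \exp(-N_{t-1,a}\kl(\hat{\mu}_{t-1,a},\hat{\mu}_{t-1,\max}))$ (the paper's Lemma~\ref{lemma:prob-transfer}), invoke KL monotonicity on the events $B_{t-1}^c, C_{t-1}, D_{t-1}$, and finish with the definition of $u$. Your explicit handling of the $\vee e^2$ truncation is a detail the paper leaves implicit, but the argument is the same.
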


    \begin{proof}
    Recall the notations that $A_t = \cbr{I_t = a}$, $B_{t-1}^c = \cbr{N_{t-1,a} \geq u}$, $C_{t-1} = \cbr{\hat{\mu}_{t-1,\max}\geq\mu_1-{\varepsilon_2}}$, 
    $D_{t-1} = \cbr{ \hat{\mu}_{t-1,a}\leq\mu_a+{\varepsilon_1} }$.
    We have:
        
    \begin{align}
        F1=& \EE\sbr{
            \sum_{t=K+1}^T 
            \one\cbr{
                A_t, B_{t-1}^c, C_{t-1}, D_{t-1}
                }
            }
        \\
        =& \sum_{t=K+1}^{T} 
            \EE\sbr{
                \EE\sbr{
                \one\cbr{
                    A_t, B_{t-1}^c, C_{t-1}, D_{t-1}
                    }
                    \mid \cH_{t-1}
                } 
            }
                \tag{Law of total expectation }
        \\
        =&  \sum_{t=K+1}^{T} 
            \EE\sbr{
                \one\cbr{B_{t-1}^c, C_{t-1}, D_{t-1}}
                \EE\sbr{
                \one\cbr{
                    A_t  
                    }
                    \mid \cH_{t-1}
                } 
            }
                \tag{$B_{t-1}, C_{t-1}, D_{t-1}$ are $\cH_{t-1}$-measurable}
        \\
        \leq&  \sum_{t=K+1}^{T} 
            \EE\sbr{
                \one\cbr{B_{t-1}^c, C_{t-1}, D_{t-1}}
                \exp(-N_{t-1,a} \kl(\hat{\mu}_{t-1,a},\hat{\mu}_{t-1,\max}))
            }
                \tag{By Lemma \ref{lemma:prob-transfer}}
        \\
        \leq&  \sum_{t=K+1}^{T} 
            \EE\sbr{
                \one\cbr{B_{t-1}^c}
                \exp(-u \cdot \kl(\mu_a + \varepsilon_1, \mu_1 - \varepsilon_2 ))
            }
                \tag{Based on $B_{t-1}^c, C_{t-1}$ and $ D_{t-1}$, there is $N_{t-1,a} \geq u$ and $\kl\del{ \hat{\mu}_{t-1,a}, \hat{\mu}_{t-1,\max} } \geq \kl\del{ \mu_a+\varepsilon_1, \mu_1-\varepsilon_2 }$}
        \\
        \leq& T \cdot 
            \exp(-u \cdot \kl(\mu_a + \varepsilon_1, \mu_1 - \varepsilon_2 ))
                \tag{$\one\cbr{\cdot} \leq 1$}
        \\
        \leq& T \cdot 
            \frac{1}{T \kl(\mu_a + \varepsilon_1, \mu_1 - \varepsilon_2)}
                \tag{Recall definition of $u$}
        \\
        =& \BoundFOne \nonumber \qedhere
    \end{align} 
    \end{proof}
    
    \subsection{F2}
    \label{sec:f2}
    In this section we upper bound $F2$. This is the case when the suboptimal arm $a$'s mean reward is overestimated by at least $\varepsilon_2$. Intuitively this should not happen too many times, due to the concentration between the empirical mean reward and the population mean reward of arm $a$.
    
    \begin{lemma} \label{lemma:F2-upper-bound}
        \[
            F2 \leq \BoundFTwo
        \]
    \end{lemma}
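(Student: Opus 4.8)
The quantity to control is $F2 = \EE\sbr{\sum_{t=K+1}^T \one\cbr{A_t, B_{t-1}^c, C_{t-1}, D_{t-1}^c}}$, the expected number of rounds in which arm $a$ is pulled while its empirical mean is overestimated, $\hat\mu_{t-1,a} > \mu_a + \varepsilon_1$. The key observation is that on the event $A_t \cap D_{t-1}^c$, arm $a$ is pulled for the $N_{t-1,a}{+}1$-st time, and its empirical mean \emph{after this round} is $\hat\mu_{(N_{t-1,a}+1),a}$. So the plan is first to pass from the global clock $t$ to the local pull-count clock: bound $F2$ by $\EE\sbr{\sum_{s\ge 1}\one\cbr{\hat\mu_{(s),a} > \mu_a + \varepsilon_1}}$, or more precisely by $\sum_{s\ge 1}\PP(\hat\mu_{(s),a} > \mu_a + \varepsilon_1)$. (One has to be slightly careful to line up whether the deviation event is read off before or after the current pull; either way the reindexing only loses an additive constant at most, and the paper's stated bound $\frac{1}{\kl(\mu_a+\varepsilon_1,\mu_a)}$ suggests they absorb such constants or the sum starts cleanly.)

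Second, I would bound each tail probability by the Chernoff/Cram\'er bound for bounded (in particular $[0,1]$-supported) random variables: $\PP\bigl(\hat\mu_{(s),a} \ge \mu_a + \varepsilon_1\bigr) \le \exp\bigl(-s\,\kl(\mu_a+\varepsilon_1,\mu_a)\bigr)$, using that the rewards from arm $a$ lie in $[0,1]$ with mean $\mu_a$ and that $\hat\mu_{(s),a}$ is an average of $s$ of them. This is exactly the content of (an upper-tail version of) the maximal/concentration inequality the paper cites as Lemma~\ref{lemma:maximal-inequality}. Then $F2 \le \sum_{s\ge 1} e^{-s\,\kl(\mu_a+\varepsilon_1,\mu_a)} \le \frac{1}{e^{\kl(\mu_a+\varepsilon_1,\mu_a)}-1} \le \frac{1}{\kl(\mu_a+\varepsilon_1,\mu_a)}$, where the last step uses $e^x - 1 \ge x$. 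That gives precisely $\BoundFTwo$.

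The structure should mirror the $F1$ proof almost verbatim: condition on $\cH_{t-1}$, note $B_{t-1},C_{t-1},D_{t-1}$ are $\cH_{t-1}$-measurable, but here instead of using the arm-selection rule (Lemma~\ref{lemma:prob-transfer}) to exploit the small sampling probability, one sums $\one\cbr{A_t}$ over $t$ to convert to a sum over pull indices $s$, and then the only real input is the one-sided concentration bound for the empirical mean of arm $a$'s rewards at the stopping times $\tau_a(s)$. The main thing to get right — and the only place any subtlety lurks — is the reindexing from $t$ to $s$ together with making sure the deviation event used at pull $s$ is the one measurable with respect to the first $s$ rewards of arm $a$, so that the Chernoff bound applies cleanly at each fixed $s$ (a union bound over $s$ is not needed since we sum expectations of disjoint-in-$s$ indicators). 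Everything else is the geometric-series estimate and $e^x-1\ge x$.
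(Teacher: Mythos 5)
Your proposal is correct and follows essentially the same route as the paper: reindex the sum from the global time $t$ to the local pull count via the stopping times $\tau_a(\cdot)$ (the paper handles the off-by-one you flag by noting $\hat{\mu}_{\tau_a(k)-1,a} = \hat{\mu}_{\tau_a(k-1),a}$ and shifting the index, losing nothing), drop the extraneous events, apply the upper-tail Chernoff bound $\PP(\hat{\mu}_{(k),a} > \mu_a + \varepsilon_1) \leq \exp(-k\,\kl(\mu_a+\varepsilon_1,\mu_a))$ at each fixed $k$, and finish with the geometric series and $e^x \geq 1+x$. No gaps.
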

    
    \begin{proof}
    Recall the notations that $A_t = \cbr{I_t = a}$, $B_{t-1}^c = \cbr{N_{t-1,a} \geq u}$, $C_{t-1} = \cbr{\hat{\mu}_{t-1,\max}\geq\mu_1-{\varepsilon_2}}$, 
    $D_{t-1}^c = \cbr{ \hat{\mu}_{t-1,a}>\mu_a+{\varepsilon_1} }$. We have:

    \begin{align}
    F2 =
    & 
    \EE\sbr{\sum_{t=K+1}^T 
        \one\cbr{
            A_t, 
            B_{t-1}^c, 
            C_{t-1},
            D_{t-1}^c
            }} 
    \\
    \leq & 
        \EE\sbr{
        \sum_{k=2}^\infty 
        \one\cbr{
            B_{\tau_a(k)-1}^c, 
            C_{\tau_a(k)-1},
            D_{\tau_a(k)-1}^c
            }}
            \tag{implies that only when $t =\tau_a(k)$ for some $k \geq 2$ the inner indicator is non-zero
            }
    \\
    \leq & 
        \EE\sbr{
        \sum_{k=2}^\infty 
        \one\cbr{ D_{\tau_a(k)-1}^c }}
            \tag{Drop unnecessary conditions}
    \\
    = & 
        \EE\sbr{
        \sum_{k=2}^\infty 
        \one\cbr{ D_{\tau_a(k-1)}^c }}
            \tag{$\hat{\mu}_{\tau_a(k)-1,a} = \hat{\mu}_{\tau_a(k-1),a} $}
    \\
    = & 
        \EE\sbr{
        \sum_{k=1}^\infty 
        \one\cbr{ D_{\tau_a(k)}^c }}
            \tag{shift time index $t$ }
    \\
    \leq &
        \sum_{k=1}^\infty
        \exp( - k\cdot \kl(\mu_a + \varepsilon_1, \mu_a) ) 
            \tag{By Lemma \ref{lemma:maximal-inequality}}
    \\
    \leq &
        \frac{\exp( - \kl(\mu_a + \varepsilon_1, \mu_a) )}
             {1 - \exp( - \kl(\mu_a + \varepsilon, \mu_a) )}
            \tag{Geometric sum}
    \\
    \leq & 
        \BoundFTwo
            \tag{Applying inequality $e^x\ge 1+x$ when $x\ge 0$}
    \\
            \label{eqn:F_2_upper_bound}
    \end{align}

    Note that in the first inequality, we use the observation that for every $t \geq K+1$ such that $A_t$ happens, there exists a unique $k \geq 2$ such that $t = \tau_a(k)$.
    The third inequality is due to the Chernoff's inequality (Lemma \ref{lemma:maximal-inequality}) on the random variable $\hat{\mu}_{\tau_a(k),a}-\mu_a$. Given any $\tau_a(k)$, $\hat{\mu}_{\tau_a(k),a}$ is the running average reward of the first $k$'s pulling of arm $a$. In each pulling of arm $a$ the reward follows a bounded distribution $\nu_a$ with mean $\mu_a$ independently.
    \end{proof}

    \subsection{F3}
    \label{sec:f3}
    In this section we upper bound $F3$, which counts the expected number of times steps when arm $a$ is pulled while $\hat{\mu}_{t-1,\max}$ underestimates $\mu_1$ by at least $\varepsilon_2$. Our main result of this section is the following lemma:

    \begin{lemma} \label{lemma:F3-upper-bound}
        \begin{align*}
            F3 \leq \BoundFThree,
        \end{align*}
        where we recall that $H = \defSplitFthree$.
    \end{lemma}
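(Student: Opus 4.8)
The plan is to bound $F3$ by following the roadmap sketched in Section~\ref{sec:proof-sketch}: first reduce the global-time sum to a local sum over pulls of the optimal arm, then split that sum at the threshold $H$ into $F3_1$ (small pull counts) and $F3_2$ (large pull counts), and handle each piece separately. Concretely, I would start by applying the probability-transfer argument (Lemma~\ref{lemma:prob-transfer}) exactly as in the $F1$ analysis, but this time keeping the event $C_{t-1}^c = \{\hat\mu_{t-1,\max} < \mu_1 - \varepsilon_2\}$; on $C_{t-1}^c$ the best arm's empirical mean underestimates $\mu_1$, so the transferred factor $\exp(-N_{t-1,1}\,\kl(\hat\mu_{t-1,1},\hat\mu_{t-1,\max}))$ can be lower-bounded in the exponent using $\hat\mu_{t-1,1}\ge\hat\mu_{t-1,\max}$ and monotonicity of $\kl$ to get $\exp(N_{t-1,1}\,\kl(\hat\mu_{t-1,1},\mu_1-\varepsilon_2))$ — actually an \emph{upper} bound on the indicator-weighted quantity. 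Changing the counting from the global clock $t$ to the local count $k = N_{t-1,1}$ of pulls of arm $1$ (each value of $k$ occurs for at most one $t$), $F3$ is bounded by $\sum_{k\ge 1} M_k$ where $M_k := \EE[\one\{\hat\mu_{(k),1}\le\mu_1-\varepsilon_2\}\exp(k\,\kl(\hat\mu_{(k),1},\mu_1-\varepsilon_2))]$.

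Next I would bound each $M_k$ via a double-integration (layer-cake) argument against the Chernoff tail bound (Lemma~\ref{lemma:maximal-inequality}): writing the expectation of the exponential as an integral over thresholds $\lambda$ of $\PP(\hat\mu_{(k),1}\le\mu_1-\varepsilon_2,\ \exp(k\,\kl(\hat\mu_{(k),1},\mu_1-\varepsilon_2))>\lambda)$, and noting the event forces $\hat\mu_{(k),1}$ to deviate downward from $\mu_1-\varepsilon_2$ by an amount controlled by $\tfrac{\ln\lambda}{k}$, Chernoff gives a tail of roughly $\exp(-k\,\kl(\cdot,\mu_1))$. Carefully chaining these — and using the variance-style control $\kl(\mu_1-\varepsilon_2,\mu_1)\gtrsim \varepsilon_2^2/(\dmu_1+\varepsilon_2)$ and the quantity $H = \defSplitFthree$ which captures the ratio $\kl(\mu_1-\varepsilon_2,\mu_1-\varepsilon_2)$-type curvature — yields $M_k \le \left(\tfrac{2H}{k}+1\right)\exp(-k\,\kl(\mu_1-\varepsilon_2,\mu_1))$. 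Summing the $\tfrac{2H}{k}$ part only up to $k\le H$ and the $\exp(-k\,\kl(\mu_1-\varepsilon_2,\mu_1))$ part as a geometric series gives the crude bound $F3 \lesssim H\ln(H\vee e^2) + \tfrac{1}{\kl(\mu_1-\varepsilon_2,\mu_1)}$, which already suffices for asymptotic optimality and sub-UCB but is too weak for the $\sqrt{\ln K}$ minimax ratio.

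To sharpen to the stated bound with the $\tfrac{T}{H}\wedge H$ inside the logarithm, I would refine the treatment of $F3_1 = \sum_{k\le H} M_k$ by inserting the favorable event $\Ecal = \{\forall k\in[1,H]:\ \kl(\hat\mu_{(k),1},\mu_1)\le \tfrac{2\ln(T/k)}{k}\}$, splitting $F3_1 \le \sum_{k\le H}\EE[\one\{\cdots,\Ecal\}(\cdots)] + T\,\PP(\Ecal^c)$. On $\Ecal$, the extra constraint $\kl(\hat\mu_{(k),1},\mu_1)\le\tfrac{2\ln(T/k)}{k}$ bounds how far $\hat\mu_{(k),1}$ can drop, which caps $\exp(k\,\kl(\hat\mu_{(k),1},\mu_1-\varepsilon_2))$ by roughly $T/k$ up to constants (via the triangle-like relation between $\kl(\cdot,\mu_1)$ and $\kl(\cdot,\mu_1-\varepsilon_2)$ on the relevant range), so each summand contributes $O(1)$ times a $\ln(T/k)$-type factor, and summing over $k\le H$ gives $O(H\ln((T/H)\vee e^2))$ — this is where the $T/H$ branch of the $\wedge$ comes in, since $\sum_{k\le H}\ln(T/k) \lesssim H\ln(T/H)$ when $T\gtrsim H$. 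The term $T\,\PP(\Ecal^c)$ is controlled by a time-uniform (peeling / maximal) version of the Chernoff bound applied to the optimal arm's running average, giving $\PP(\Ecal^c) = O(H/T)$ and hence $T\,\PP(\Ecal^c) = O(H)$. Combining with $F3_2 \le \sum_{k>H}M_k \lesssim \tfrac{1}{\kl(\mu_1-\varepsilon_2,\mu_1)}$ and the crude $O(H\ln(H\vee e^2))$ bound (taking the minimum of the two log arguments) yields $F3 \le 6H\ln((\tfrac{T}{H}\wedge H)\vee e^2) + \tfrac{4}{\kl(\mu_1-\varepsilon_2,\mu_1)} = \BoundFThree$, as claimed.

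The main obstacle I anticipate is the refined bound on the $\Ecal$-restricted sum: one must argue quantitatively that the constraint $\kl(\hat\mu_{(k),1},\mu_1)\le\tfrac{2\ln(T/k)}{k}$, together with $\hat\mu_{(k),1}\le\mu_1-\varepsilon_2$, implies $k\,\kl(\hat\mu_{(k),1},\mu_1-\varepsilon_2)\le \ln(T/k)+O(1)$ uniformly over the feasible range of $\hat\mu_{(k),1}$ — this requires a careful comparison of $\kl(x,\mu_1)$ and $\kl(x,\mu_1-\varepsilon_2)$ as functions of $x\in[0,\mu_1-\varepsilon_2]$, where the gap between the two is exactly what $H$ and $h(\mu_1,\varepsilon_2)$ quantify, and getting the constants to land at $6$ and $4$ will take some care. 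A secondary technical point is choosing the threshold for $\Ecal$ (the factor $2$ in $\tfrac{2\ln(T/k)}{k}$) and the peeling schedule in the time-uniform Chernoff bound so that $T\,\PP(\Ecal^c) = O(H)$ rather than something with an extra logarithmic factor.
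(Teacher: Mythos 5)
Your proposal follows essentially the same route as the paper's proof: the same split of $F3$ at the threshold $H$ into $F3_1$ and $F3_2$, the same probability-transfer and change to the local clock of arm~$1$, the same per-$k$ bound $M_k \le (2H/k+1)e^{-k\,\kl(\mu_1-\varepsilon_2,\mu_1)}$ via the double-integration argument, the same event $\Ecal$ with threshold $\tfrac{2\ln(T/k)}{k}$ and the peeling bound $T\,\PP(\Ecal^c)=O(H)$, and the same $\wedge$ of the two logarithmic bounds on $F3_1$. The only imprecision is in how the $\Ecal$-restricted summands become $O(\ln(T/k))$: this does not come from capping $\exp(k\,\kl(\hat\mu_{(k),1},\mu_1-\varepsilon_2))$ by $T/k$ pointwise, but from the cancellation $f_k(y)\exp(-k\,\kl(\mu_1-y,\mu_1))\le 1$ inside the double integral, which reduces each term to $k\,\kl(\mu_1-\alpha_k,\mu_1-\varepsilon_2)\le 2\ln(T/k)$ — a step you correctly flag as the main obstacle.
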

    
\subsubsection{Roadmap of analysis}
\label{sec:analysis-roadmap}
    
    Before proving the lemma, we sketch the key ideas underlying our proof. First, 
    note that by the KL-MS sampling rule (Eq.~\eqref{eqn:kl-ms-rule}), at any time step $t$, 
    $p_{t,1}$ should not be too small ($p_{t,1} = \exp(-N_{t-1,1} \kl(\hat{\mu}_{t-1,1}, \hat{\mu}_{t-1,\max})) / M_t$), and as a result,
    the conditional probability of pulling arm $a$, $p_{t,a}$ should be not much higher than that of arm 1, $p_{t,1}$;
    using this along with a ``probability transfer'' argument similar to~\cite{agrawal2017near,bian2022maillard} 
    (see Lemma~\ref{lemma:prob-transfer} for a formal statement)
    tailored to KL-MS sampling rule, 
    we have: 
    \begin{align*} 
    F3 \leq \EE\sbr{ \sum_{t=K+1}^T \one \cbr{ A_t, C_{t-1}^c } }
    \leq & \EE \sbr{ \sum_{t=K+1}^T \one \cbr{ I_t =1, C_{t-1}^c} \exp(N_{t-1,1} \cdot \kl(\hat{\mu}_{t-1,1}, \mu_1-\varepsilon_2) } \\
    \leq & \EE \sbr{ \sum_{t=K+1}^T \one \cbr{ I_t =1, \hat{\mu}_{t-1,1} \leq \mu_1 - \varepsilon_2} \exp(N_{t-1,1} \cdot \kl(\hat{\mu}_{t-1,1}, \mu_1-\varepsilon_2) }
    \end{align*}

    By filtering the time steps when $I_t = 1$, the above can be upper bounded by an expectation over the outcomes in arm 1:  
    \[
    \sum_{k=1}^\infty
    \EE\sbr{ 
     \one\cbr{ \hat{\mu}_{(k),1} \leq \mu_1-\varepsilon_2 } \exp( k \cdot \kl(\hat{\mu}_{(k),1}, \mu_1-\varepsilon_2)
    }
    \]
    Intuitively, this is well-controlled, as by Chernoff bound (Lemma~\ref{lemma:maximal-inequality}), the probability that $\one\cbr{ \hat{\mu}_{(k),1} \leq \mu_1-\varepsilon_2 }$ is nonzero is exponentially small in $k$;  
    therefore, the expectation of 
    $\one\cbr{ \hat{\mu}_{(k),1} \leq \mu_1-\varepsilon_2 } \exp( k \cdot \kl(\hat{\mu}_{(k),1}, \mu_1-\varepsilon_2)$
    can be controlled. After a careful calculation that utilizes
    a double-integral argument (that significantly simplifies similar arguments in~\cite{bian2022maillard,jinfinite}), we can show that it is at most 

    \[
        2 \splitFthree \sum_{k=1}^{\lfloor \splitFthree \rfloor}
        \frac{ 1 }
        { k } +
        \fr{1}{\kl(\mu_1-\varepsilon_2,\mu_1)}
    \]

    Summing this over all $k$, we can upper bound $F3$ by 
    \begin{equation}
    F3 
    \leq 
    O\del{ H \ln\del{ H \vee e^2 }
    +
    \frac{1}{\kl(\mu_1-\varepsilon_2,\mu_1)} }.
    \label{eqn:f3-bound-coarse}
    \end{equation}

    A slight generalization of the above argument yields the following useful lemma which further focuses on bounding the expected number of time steps when the number of pulls of arm 1 is in interval $(m, n]$; we defer its proof to Section~\ref{sec:useful-lemma-f3}: 
    \begin{lemma} 
    \label{lemma:bound-F3}
        Recall the notations $A_t = \cbr{I_t = a}$, $C_{t-1} = \cbr{\hat{\mu}_{t-1,\max} \geq \mu_1-{\varepsilon_2}}$. 
        Define event $S_{t} = \cbr{N_{t,1} > m }$ and $T_{t} = \cbr{N_{t,1} \leq n}$ where $m \leq n$ and $m,n \in \NN \cup \cbr{\infty}$. Then we have the following inequality: 
    
        \begin{align*}
            \EE \sbr{ \sum_{t=K+1}^T
            \one \cbr{ A_t, C^c_{t-1}, S_{t-1}, T_{t-1} } }
            \leq
            \sum_{k=m+1}^{n}  
                \del{ \frac{ 2 \splitFthree }
                     { k } +
                     1}
                     \exp(-k\kl(\mu_1 - \varepsilon_2, \mu_1))
        \end{align*}
    \end{lemma}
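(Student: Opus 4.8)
Lemma~\ref{lemma:bound-F3} is the workhorse behind the $F3$ analysis, so the plan is to prove it by carefully executing the ``probability transfer + change of counting + double integral'' pipeline that was sketched in Section~\ref{sec:analysis-roadmap}, now restricted to the window $N_{t-1,1} \in (m,n]$.

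\textbf{Step 1: Probability transfer.} First I would apply Lemma~\ref{lemma:prob-transfer} (the KL-MS analogue of the Agrawal-style probability-transfer argument) to replace the indicator $\one\cbr{A_t}$ by $\one\cbr{I_t = 1}\exp\del{N_{t-1,1}\kl(\hat\mu_{t-1,1},\hat\mu_{t-1,\max})}$. On the event $C_{t-1}^c = \cbr{\hat\mu_{t-1,\max} < \mu_1-\varepsilon_2}$, monotonicity of $\kl(x,\cdot)$ gives $\kl(\hat\mu_{t-1,1},\hat\mu_{t-1,\max}) \le \kl(\hat\mu_{t-1,1},\mu_1-\varepsilon_2)$ when $\hat\mu_{t-1,1} \le \mu_1-\varepsilon_2$, and on $C_{t-1}^c$ we indeed have $\hat\mu_{t-1,1} \le \hat\mu_{t-1,\max} < \mu_1-\varepsilon_2$. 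Hence the summand is bounded by $\one\cbr{I_t=1,\ \hat\mu_{t-1,1}\le\mu_1-\varepsilon_2,\ S_{t-1},T_{t-1}}\exp\del{N_{t-1,1}\kl(\hat\mu_{t-1,1},\mu_1-\varepsilon_2)}$.

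\textbf{Step 2: Change of counting from $t$ to $k = N_{t-1,1}$.} Since each time step with $I_t = 1$ corresponds to a unique value $k = N_{t-1,1}$ of the running pull count, and $\hat\mu_{t-1,1} = \hat\mu_{(k),1}$ at that step, the sum over $t$ collapses to a sum over $k$, with the windowing events $S_{t-1}, T_{t-1}$ restricting $k \in \{m+1,\ldots,n\}$. This gives
\[
\EE\sbr{\sum_{t=K+1}^T \one\cbr{A_t,C_{t-1}^c,S_{t-1},T_{t-1}}} \le \sum_{k=m+1}^n \EE\sbr{\one\cbr{\hat\mu_{(k),1}\le\mu_1-\varepsilon_2}\exp\del{k\,\kl(\hat\mu_{(k),1},\mu_1-\varepsilon_2)}} =: \sum_{k=m+1}^n M_k.
\]
Here $\hat\mu_{(k),1}$ is an average of $k$ i.i.d.\ draws from $\nu_1$, which is what makes the concentration tools applicable.

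\textbf{Step 3: Bounding each $M_k$ by a double-integral argument.} This is the main obstacle. I would write $M_k = \EE[W_k]$ with $W_k := \one\cbr{\hat\mu_{(k),1}\le\mu_1-\varepsilon_2}\exp(k\,\kl(\hat\mu_{(k),1},\mu_1-\varepsilon_2))$ a nonnegative random variable, and use $\EE[W_k] = \int_0^\infty \PP(W_k \ge s)\,\dif s$. For $W_k \ge s$ with $s$ large, $\hat\mu_{(k),1}$ must be far below $\mu_1 - \varepsilon_2$, and Chernoff's inequality (Lemma~\ref{lemma:maximal-inequality}) controls $\PP(\hat\mu_{(k),1} \le x)$ by $\exp(-k\,\kl(x,\mu_1))$ for $x \le \mu_1 - \varepsilon_2$. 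The delicate part is converting the event $\{W_k \ge s\}$ into a threshold event on $\hat\mu_{(k),1}$ and then integrating $\exp(-k\,\kl(\cdot,\mu_1))$ against it; parametrizing by $x$ (the value of $\hat\mu_{(k),1}$) and swapping the order of the two integrals, one picks up the factor $\exp(-k\,\kl(\mu_1-\varepsilon_2,\mu_1))$ from the boundary $x = \mu_1 - \varepsilon_2$, and the remaining integral over $x < \mu_1 - \varepsilon_2$ contributes the $\frac{2H}{k}$ term — this is exactly where the constant $H = \defSplitFthree$ enters, through a lower bound on the second derivative (or a chord estimate) of $x \mapsto \kl(x,\mu_1-\varepsilon_2) - \kl(x,\mu_1)$ near $x = \mu_1-\varepsilon_2$. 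The ``$+1$'' absorbs the contribution from $s \le 1$ (equivalently $\hat\mu_{(k),1}$ close to $\mu_1-\varepsilon_2$), where the exponential factor is at most $1$ and one just uses $\PP(\cdot) \le \exp(-k\,\kl(\mu_1-\varepsilon_2,\mu_1))$. Assembling, $M_k \le \del{\frac{2H}{k} + 1}\exp(-k\,\kl(\mu_1-\varepsilon_2,\mu_1))$, and summing over $k \in \{m+1,\ldots,n\}$ gives the claimed bound. I expect Step 3's double-integral bookkeeping — in particular getting the clean constant $2H$ rather than something messier — to be the technical crux; the referenced simplification over \cite{bian2022maillard,jinfinite} presumably lives here.
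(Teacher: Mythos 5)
Your Steps 1 and 2 reproduce the paper's argument exactly: the probability transfer via Lemma~\ref{lemma:prob-transfer}, the observation that on $C_{t-1}^c$ one has $\hat{\mu}_{t-1,1}\le\hat{\mu}_{t-1,\max}<\mu_1-\varepsilon_2$ and hence $\kl(\hat{\mu}_{t-1,1},\hat{\mu}_{t-1,\max})\le\kl(\hat{\mu}_{t-1,1},\mu_1-\varepsilon_2)$, and the reindexing by the pull count of arm~1 so that $S_{t-1},T_{t-1}$ restrict the sum to $k\in\{m+1,\dots,n\}$. Your Step 3 is also the paper's strategy in outline: write $M_k$ as $f_k(\varepsilon_2)\,\PP(X_k\ge\varepsilon_2)$ plus an integral of $f_k'(y)$ against $\PP(X_k\ge y)$, bound both tails by Chernoff (Lemma~\ref{lemma:maximal-inequality}); the first piece yields the ``$+1$'' (since $f_k(\varepsilon_2)=1$) and the second yields $2\splitFthree/k$. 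However, this is precisely the part you leave unexecuted, and your guess for where $\splitFthree$ comes from is off: the function $y\mapsto\kl(\mu_1-y,\mu_1-\varepsilon_2)-\kl(\mu_1-y,\mu_1)$ is not handled by a second-derivative or chord estimate --- it is \emph{exactly affine} in $y$ (its second derivative vanishes), which is the Bregman three-point identity (Lemma~\ref{lemma:Bregman-equation}): it equals $-\kl(\mu_1-\varepsilon_2,\mu_1)-(y-\varepsilon_2)\,h(\mu_1,\varepsilon_2)$. This lets the factor $\exp(-k\kl(\mu_1-\varepsilon_2,\mu_1))$ pull out of the integral, and the remaining quantity $\int_{\varepsilon_2}^{\mu_1} k\,e^{-k(y-\varepsilon_2)h(\mu_1,\varepsilon_2)}\,\dif \kl(\mu_1-y,\mu_1-\varepsilon_2)$ is then evaluated explicitly by one more exchange of integrals together with Lemma~\ref{lemma:int-exp-bound}: each integration of the exponential contributes a factor $1/(k\,h(\mu_1,\varepsilon_2))$, and bounding the resulting $\frac1x+\frac1{1-x}$ weight gives the $\frac{1}{(\mu_1-\varepsilon_2)(1-\mu_1+\varepsilon_2)}$; these combine to exactly $2\splitFthree/k$. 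So the proposal is the right approach, but the identity-plus-explicit-integration computation that produces the clean constant --- the actual content of the lemma --- is still owed.
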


    Naively, the bound of $F3$ given by Eq.~\eqref{eqn:f3-bound-coarse}, when combined with previous bounds on $F1$, $F2$, suffice to bound $\EE[N_{T,a}]$ by 
    \[
    \frac{\ln(T \kl(\mu_a + c \Delta_a, \mu_1 - c \Delta_a) \vee e^2 )}{\kl(\mu_a + c \Delta_a, \mu_1 - c \Delta_a)}
    +
    O\left( \rbr{\frac{\dmu_1 + \Delta_a}{c^2 \Delta_a^2}} \ln\rbr{ \frac{\dmu_1 + \Delta_a}{c^2 \Delta_a^2}  \vee e^2 }  \right)
    \]
    which establishes KL-MS's asymptotic optimality in the Bernoulli setting and a $O(\sqrt{ \dmu_1 K T \ln T } + K \ln T)$ regret bound. 
    To show a refined $O(\sqrt{ \dmu_1 K T \ln K } + K \ln T)$ regret bound, we prove another bound of $F3$: 
    \begin{equation}
    F3 
    \leq 
    O\del{ H \ln\del{ \frac T H \vee e^2 }
    +
    \frac{1}{\kl(\mu_1-\varepsilon_2,\mu_1)} }.
    \label{eqn:f3-bound-refine}
    \end{equation}
    This bound is sometimes stronger than bound~\eqref{eqn:f3-bound-coarse}, since its logarithmic factor depends on $\frac{T}{H}$, which can be substantially smaller than $H$. This alternative bound is crucial to achieve to achieve the $\sqrt{\ln K}$ minimax ratio; see 
    Appendix~\ref{sec:proof-worst-case} and the proof of Theorem~\ref{thm:minimax-regret-bound} therein for details.
    
    To this end, we decompose $F3$ according to whether the number of times arm 1 get pulled exceeds threshold $H$:
    \begin{align}
        F3 
        \leq & \EE\sbr{\sum_{t=K+1}^T 
                \one\cbr{
                    A_t, 
                    C_{t-1}^c}
                    }
        \nonumber \\
        = & 
            \underbrace{
            \EE\sbr{\sum_{t=K+1}^T 
                \one\cbr{A_t, C_{t-1}^c, E_{t-1}
                         }}
            }_{=: F3_1}
        + 
            \underbrace{
            \EE\sbr{\sum_{t=K+1}^T 
                \one\cbr{A_t, C_{t-1}^c, E_{t-1}^c
                         }}
            }_{=: F3_2},
            \label{eqn:f3-decomp}
    \end{align}
    where $E_{t} := \cbr{N_{t,1} \leq \splitFthree }$. 

    Intuitively, $F3_2$ is small as when number of time steps arm 1 is pulled is large, $\hat{\mu}_{t-1,1} \leq \mu_1 - \varepsilon_2$ is unlikely to happen.  
    Indeed, using Lemma~\ref{lemma:bound-F3} with $m = \lfloor H \rfloor, n = \infty$, we immediately have $F3_2 \leq O(\frac{1}{\kl(\mu_1 - \varepsilon_2, \mu_1)})$. 

    It remains to bound $F3_1$. 
    These terms are concerned with the time steps when arm 1 is pulled at most $\lfloor H \rfloor$ times. 
    Inspired by~\cite{menard17minimax,jin2021mots}, 
    we introduce an event $\mathcal{E}:= \cbr{\forall k\in \sbr{1, \lfloor \splitFthree \rfloor}, \hat{\mu}_{(k),1} \in L_{k,1} }$ (see the definition of $L_{k,1}$ in Eq.~\eqref{eqn:l-s}) and use it to induce a split: 
    
    \begin{align*}
    F3_1
    \leq & 
            \EE\sbr{\sum_{t=K+1}^T 
                \one\cbr{A_t, C_{t-1}^c, E_{t-1}, \Ecal
                         }}
            + 
            \EE\sbr{\sum_{t=K+1}^T 
                \one\cbr{\Ecal^c
                         }}
                    \\
    \leq & 
    \EE\sbr{\sum_{t=K+1}^T 
                \one\cbr{A_t, C_{t-1}^c, E_{t-1}, \hat{\mu}_{t-1,1} \geq \mu_1 - \alpha_{N_{t-1,1}}
                         }}
            + 
            \EE\sbr{\sum_{t=K+1}^T 
                \one\cbr{\Ecal^c
                         }}
    \end{align*}
    A probability transferring argument on the first term shows that it is bounded by $O\del{ H \ln\del{ \frac T H \vee e^2 }}$;
    the second term is at most $T \PP(\Ecal^c)$, which in turn is at most $H$ using a peeling device and maximal Chernoff  inequality (Lemma~\ref{lemma:seq-estimator-deviation-bernoulli}). Combining these two, we prove Eq.~\eqref{eqn:f3-bound-refine}, which concludes the proof of Lemma~\ref{lemma:F3-upper-bound}.

    \subsubsection{Proof of Lemma~\ref{lemma:F3-upper-bound}}

    \paragraph{Additional notations.} In the proof of Lemma~\ref{lemma:F3-upper-bound}, 
    we will use the following notations: we denote ramdom variable $X_k := \mu_1 - \hat{\mu}_{(k),1}$, and denote its probability density function by $p_{X_k}(x)$. We also define function $f_k(x) := \exp(k \cdot \kl(\mu_1 - x, \mu_1 - \varepsilon_2))$.

    \begin{proof}[Proof of Lemma~\ref{lemma:F3-upper-bound}]
        Recall that we introduce $E_{t} := \cbr{N_{t,1} \leq \splitFthree }$; and according to $E_{t-1}$ we obtain the decomposition Eq.~\eqref{eqn:f3-decomp} above that $F3 \leq F3_1 + F3_2$.

    As we will prove in  Lemmas~\ref{claim:F3-1-upper-bound} and~\ref{claim:F3-2-upper-bound}, 
    $F3_1$ and $F3_2$ are bounded by $\BoundFThreeOne$ and $\BoundFThreeTwo$, respectively. The lemma follows from combining these two bounds by algebra.
    \end{proof}

    \subsubsection{\mathinhead{F3_1}{F3-1}}

    \begin{lemma} \label{claim:F3-1-upper-bound}
        \begin{align*}
            F3_1 \leq \BoundFThreeOne
        \end{align*}
    \end{lemma}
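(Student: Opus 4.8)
\textbf{Proof proposal for Lemma~\ref{claim:F3-1-upper-bound}.}
The plan is to follow the roadmap sketched in Section~\ref{sec:analysis-roadmap}: split $F3_1$ using the good event $\Ecal$, and bound the two resulting pieces separately. First I would write
\[
F3_1 \leq \underbrace{\EE\sbr{\sum_{t=K+1}^T \one\cbr{A_t, C_{t-1}^c, E_{t-1}, \Ecal}}}_{=:F3_{1a}} + \underbrace{T\cdot\PP(\Ecal^c)}_{=:F3_{1b}},
\]
where $\Ecal = \cbr{\forall k \in [1,\lfloor H\rfloor], \hat\mu_{(k),1} \in L_{k,1}}$ with $L_{k,1}$ being the high-probability confidence region defined via $\kl(\hat\mu_{(k),1},\mu_1)\leq \frac{2\ln(T/k)}{k}$ (equivalently $\hat\mu_{(k),1}\geq \mu_1 - \alpha_k$ for an appropriate $\alpha_k$). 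The key point is that on $\Ecal$, whenever $N_{t-1,1}=k\leq H$, the exponent $\exp(k\cdot\kl(\hat\mu_{(k),1},\mu_1-\varepsilon_2))$ is no longer merely controlled by raw Chernoff tail bounds but by the explicit cap $\exp(k\cdot\kl(\mu_1-\alpha_k,\mu_1-\varepsilon_2))$, which after using the decomposition of $\kl$ around $\mu_1$ and the bound $k\cdot\kl(\mu_1-\alpha_k,\mu_1)\leq 2\ln(T/k)$ gives a factor like $(T/k)^2$ times a decaying-in-$k$ term — the $\ln(T/H)$ dependence then emerges when we sum over $k\leq H$.

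For $F3_{1a}$, I would apply the probability-transfer argument (Lemma~\ref{lemma:prob-transfer}) exactly as in the sketch: bound $\one\cbr{A_t}$ conditionally by $\exp(N_{t-1,1}\cdot\kl(\hat\mu_{t-1,1},\mu_1-\varepsilon_2))$ (using $p_{t,1}\geq \exp(-N_{t-1,1}\kl(\hat\mu_{t-1,1},\hat\mu_{t-1,\max}))/M_t$ and $\hat\mu_{t-1,\max}\geq \hat\mu_{t-1,1}$ together with $C_{t-1}^c$ giving $\hat\mu_{t-1,\max}\leq\mu_1-\varepsilon_2$, hence $\hat\mu_{t-1,1}\leq\mu_1-\varepsilon_2$), then re-index from global time $t$ to the local pull count $k=N_{t-1,1}$ of arm $1$, noting that each value of $k$ is attained by at most one relevant $t$. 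This yields
\[
F3_{1a}\leq \sum_{k=1}^{\lfloor H\rfloor}\EE\sbr{\one\cbr{\hat\mu_{(k),1}\in L_{k,1},\ \hat\mu_{(k),1}\leq\mu_1-\varepsilon_2}\exp(k\cdot\kl(\hat\mu_{(k),1},\mu_1-\varepsilon_2))}.
\]
On the event $\hat\mu_{(k),1}\in L_{k,1}$ the integrand is bounded deterministically (using $\kl(\mu_1-x,\mu_1-\varepsilon_2)=\kl(\mu_1-x,\mu_1)+\text{(cross terms)}$ and monotonicity in $x$), so a short double-integral / direct computation gives a per-$k$ bound of order $\frac{1}{k}\exp(-k\,\kl(\mu_1-\varepsilon_2,\mu_1))\cdot(\text{something})$ plus the $(T/k)$-type contribution; summing $\sum_{k\leq H}\frac1k \lesssim \ln(H)$ and combining with the $T/H$ scaling gives the $6H\ln(\frac TH\vee e^2)$ term. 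For $F3_{1b}=T\,\PP(\Ecal^c)$, I would union-bound over a geometric peeling of the index $k\in[1,H]$ and apply the maximal Chernoff inequality for the Bernoulli-type sequential estimator (Lemma~\ref{lemma:seq-estimator-deviation-bernoulli}) on each block; the choice $\kl$-threshold $\frac{2\ln(T/k)}{k}$ is calibrated so that $\PP(\exists k: \hat\mu_{(k),1}\notin L_{k,1})\lesssim \frac{H}{T}$, whence $T\,\PP(\Ecal^c)\lesssim H$, contributing to the $6H$ coefficient. Adding the $\frac{1}{\kl(\mu_1-\varepsilon_2,\mu_1)}$ residue from the geometric tail of the $F3_{1a}$ sum completes the bound $F3_1\leq \BoundFThreeOne$.

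The main obstacle I anticipate is the careful bookkeeping in $F3_{1a}$: one must simultaneously (i) exploit the confidence-region indicator to cap the exponential, (ii) keep the resulting bound summable in $k$ so that the $\sum 1/k$ produces exactly a $\ln(T/H)$ and not a $\ln T$ or $\ln H$, and (iii) make sure the peeling in $F3_{1b}$ uses the same threshold so the two pieces balance. The delicate step is choosing the confidence radius $\alpha_k$ (equivalently the $\frac{2\ln(T/k)}{k}$ cutoff) large enough that $\PP(\Ecal^c)$ is $O(H/T)$ yet small enough that the capped exponent $\exp(k\,\kl(\mu_1-\alpha_k,\mu_1-\varepsilon_2))$ stays controlled; the factor $2$ and the $H=\Theta(\frac{\dmu_1+\varepsilon_2}{\varepsilon_2^2})$ scaling (via Lemma~\ref{lemma:H-concavity-ineq}) are precisely what make this work, and getting the constants to land at $6H$ and $\frac{1}{\kl(\mu_1-\varepsilon_2,\mu_1)}$ will require the refined binary-KL estimates (Lemma~\ref{lemma:KL-lower-bound}) rather than crude Pinsker bounds.
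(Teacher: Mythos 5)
Your proposal correctly reproduces the paper's strategy for \emph{one half} of the claimed bound: the split of $F3_1$ into a part on the good event $\Ecal$ plus $T\,\PP(\Ecal^c)$, the probability-transfer and re-indexing to local pull counts of arm $1$, the deterministic cap of the exponential on $\Ecal$ via $k\,\kl(\mu_1-\alpha_k,\mu_1-\varepsilon_2)\leq 2\ln(T/k)$, and the peeling argument giving $T\,\PP(\Ecal^c)\lesssim H$. This is exactly how the paper establishes $F3_1 \leq \BoundFThreeOneFirst$. (One bookkeeping remark: the $\ln(T/H)$ in that bound comes from $\sum_{k\leq H}\ln(T/k)\leq H\ln(T/H)+H$ via the integral inequality, not from $\sum_{k\leq H}1/k\lesssim\ln H$; the latter sum belongs to the other half of the argument described below.)

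The genuine gap is that the statement you must prove has $\ln\del{\del{\tfrac{T}{H}\wedge H}\vee e^2}$, and your argument only yields the logarithm with $\tfrac{T}{H}$, which is the \emph{weaker} of the two bounds whenever $H<\sqrt{T}$. To get the $\wedge\, H$ you need a second, independent bound $F3_1\leq \BoundFThreeOneSecond$, which the paper obtains without the event $\Ecal$ at all: it applies Lemma~\ref{lemma:bound-F3} with $m=0$, $n=\lfloor H\rfloor$ to get $F3_1\leq\sum_{k=1}^{\lfloor H\rfloor}\del{\tfrac{2H}{k}+1}\exp(-k\,\kl(\mu_1-\varepsilon_2,\mu_1))$, and then uses $\sum_{k\leq H}1/k\lesssim\ln(H\vee e^2)$ together with the geometric tail for the $+1$ terms. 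Your proof as written never produces this alternative bound, so it does not establish the lemma as stated. In addition, the monotonicity step $\lfloor H\rfloor\ln(T/\lfloor H\rfloor)\leq H\ln(T/H\vee e^2)$ requires $H\leq T/e$, and the whole sum over $k\in[1,\lfloor H\rfloor]$ is only meaningful for $H\geq 1$; the paper disposes of the regimes $H<1$ (where $F3_1=0$ since every arm has already been pulled once) and $H>T/e$ (where the trivial bound $F3_1\leq T<4H$ suffices) as separate cases, and your proposal would need to do the same.
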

    
    \begin{proof}
    We consider three cases.
    
    \paragraph{Case 1: $\splitFthree < 1$.} 
        In this case, $E_t$ cannot happen for $t \geq K+1$ since we have pulled each arm once in the first $K$ rounds and $N_{K,a}$ for any arm should be at least $1$. 
        Therefore 
        \[
            F3_1 = 0 \leq \BoundFThreeOne 
        \]

    \paragraph{Case 2: $\splitFthree > \fr{T}{e}$.}
        $T$ is relative small compared to $\frac{H}{e}$ and since the logarithmic term $\ln\del{\del{\frac{T}{h} \wedge H} \vee e^2} \geq \ln(e^2) = 2 $ is lower bounded by $2$, we have
        \begin{align*}
            F3_1 
            \leq& T < 4 H 
            \\
            \leq& 
            \BoundFThreeOne
        \end{align*}          
    \paragraph{Case 3: $1 \leq \splitFthree \leq \fr{T}{e}$.} It suffices to prove the following two inequalities:

    \begin{equation}
        F3_1 \leq \BoundFThreeOneFirst
        \label{eqn:f31-1}
    \end{equation}
    \begin{equation}
        F3_1 \leq \BoundFThreeOneSecond
        \label{eqn:f31-2}
    \end{equation}

    \paragraph{Case 3 -- Proof of Eq.~\eqref{eqn:f31-1}.} To show Eq.~\eqref{eqn:f31-1}, we first set up some useful notations.
    Recall from Section~\ref{sec:prelims} that we denote $\tau_1(s) = \min\{t\geq 1: N_{t,1}=s\}$ and $\hat{\mu}_{(s),1}:=\hat{\mu}_{\tau_1 (s),1}$.
    For $s \in \NN$, we first define 
    interval $L_{s,1}$ as:
    \begin{equation}
    L_{s,1} := \cbr{ \mu \in [0,1]: \kl( \mu, \mu_1 ) \leq \frac{2\ln (T/s)}{s} \text{ or } \mu \geq \mu_1}.
    \label{eqn:l-s}
    \end{equation}
    For notational convenience, we also define $\alpha_s = \mu_1 - \inf L_{s,1}$ and therefore $L_{s,1} = [\mu_1 - \alpha_s, 1]$.
    
    Define $\mathcal{E}$ as $\cbr{\forall k\in \sbr{1, \lfloor \splitFthree \rfloor}, \hat{\mu}_{(k),1} \in L_{k,1} }$. 
    We denote event $\mathcal{E}_{k}:= \cbr{\hat{\mu}_{(k),1} \in L_{k,1}}$; in this notation, $\mathcal{E} = \bigcap_{k=1}^{\lfloor \splitFthree \rfloor} \mathcal{E}_{k}$, that is, $\Ecal$ happens iff all $\mathcal{E}_k$ holds simultaneously for all $k$ less or equal to $\splitFthree$. 
    Note that Lemma~\ref{lemma:seq-estimator-deviation-bernoulli} implies that $\PP(\cE^c) \leq \frac{2H}{T}$.

    Therefore,
    \begin{align}
    F3_1
    \leq & 
            \EE\sbr{\sum_{t=K+1}^T 
                \one\cbr{A_t, C_{t-1}^c, E_{t-1}, \Ecal
                         }}
            + 
            \EE\sbr{\sum_{t=K+1}^T 
                \one\cbr{\Ecal^c
                         }}
                \nonumber    \\
    \leq & 
    \EE\sbr{\sum_{t=K+1}^T 
                \one\cbr{A_t, C_{t-1}^c, E_{t-1}, \cE_{N_{t-1,1}}
                         }}
            + 
            T \PP\rbr{\Ecal^c
            }
            \nonumber \\
            \leq & \EE\sbr{\sum_{t=K+1}^T 
                \one\cbr{A_t, C_{t-1}^c, E_{t-1}, \cE_{N_{t-1,1}}
                         }}
            + 2H,
    \label{eqn:f3-fix-arm-1-E}
    \end{align}
    where in the second inequality, we use the observation that if $\cE$ happens and $N_{t-1,1} \leq H$, $\cE_{N_{t-1,1}}$ also happens; in the third inequality, we recall that $\PP(\cE^c) \leq \frac{2H}{T}$.

    We continue upper bounding Eq.~\eqref{eqn:f3-fix-arm-1-E}. For the first term in Eq.~\eqref{eqn:f3-fix-arm-1-E}, we use a ``probability transfer'' argument (Lemma~\ref{lemma:prob-transfer}) to bound the probability of pulling the suboptimal arm by the probability of pulling optimal times an inflation term.

    \begin{align}
        & \EE \sbr{ \sum_{t=K+1}^T 
            \one\cbr{ A_t, C_{t-1}^c, E_{t-1}, \mathcal{E}_{N_{t-1,1}} }
            } 
        \\
        = & \EE\sbr{
            \sum_{t=K+1}^T 
            \one\cbr{ C_{t-1}^c, \mathcal{E}_{N_{t-1,1}}, E_{t-1} }
                    \cdot
                    \EE\sbr{
                        \one\cbr{A_t} \mid \mathcal{H}_{t-1}
                        }
                    }
                \tag{Law of total expectation}
        \\
        \leq & \EE\sbr{
            \sum_{t=K+1}^T 
            \one\cbr{ C_{t-1}^c, \mathcal{E}_{N_{t-1,1}}, E_{t-1} }
                    \cdot
                    \exp(N_{t-1,1} \cdot \kl(\hat{\mu}_{t-1,1}, \hat{\mu}_{t-1,\max}))
                    \EE\sbr{ \one\cbr{I_t=1} \mid \mathcal{H}_{t-1} }
                }
                \tag{By Lemma~\ref{lemma:prob-transfer}}
        \\
        = & \EE\sbr{
            \sum_{t=K+1}^T 
            \one\cbr{ I_t=1, C_{t-1}^c, \mathcal{E}_{N_{t-1,1}}, E_{t-1} }
                    \cdot
                    \exp(N_{t-1,1} \cdot \kl(\hat{\mu}_{t-1,1}, \hat{\mu}_{t-1,\max}))
                }
                \tag{Law of total expectation }
        \end{align}

        Then we make a series of manipulations to reduce the above to bounding the expectation of some function of the random  observations drawn from the optimal arm.
        First, note that for the summation inside the expectation above, each nonzero term corresponds to a time step $t$ such that $t = \tau_1(k)$ for some unique $k \geq 2$, therefore,

        \begin{align}
        &\leq \EE\sbr{
            \sum_{k=2}^\infty 
            \one\cbr{ C_{\tau_1(k)-1}^c, \mathcal{E}_{N_{\tau_1(k)-1,1}}, E_{\tau_1(k)-1} }
                    \cdot
                    \exp(N_{\tau_1(k)-1,1} \cdot \kl(\hat{\mu}_{\tau_1(k)-1,1}, \hat{\mu}_{\tau_1(k)-1,\max}))
                }
        \\
        &\leq \EE\sbr{
            \sum_{k=2}^\infty 
            \one\cbr{C_{\tau_1(k)-1}^c, \mathcal{E}_{N_{\tau_1(k)-1,1}}, E_{\tau_1(k)-1} }  
                    \cdot
                    \exp(N_{\tau_1(k)-1,1} \cdot \kl(\hat{\mu}_{\tau_1(k)-1,1}, \mu_1 - \varepsilon_2))
                }
                \tag{when the condition $C_{\tau_1(k)-1}^c$ holds, $\hat{\mu}_{\tau_1(k)-1,1 } \leq \hat{\mu}_{\tau_1(k)-1,\max} < \mu_1 - \varepsilon_2$
                }
        \\
         &\leq \EE\sbr{
            \sum_{k=2}^\infty 
            \one\cbr{\mathcal{E}_{N_{\tau_1(k)-1,1}}, E_{\tau_1(k)-1} }  
                    \cdot
                    \exp(N_{\tau_1(k)-1,1} \cdot \kl(\hat{\mu}_{\tau_1(k)-1,1}, \mu_1 - \varepsilon_2))
                }
                \tag{Dropping $C_{\tau_1(k)-1}^c$}
        \\
        &= \EE\sbr{
            \sum_{k=2}^\infty
                    \one\cbr{ \mathcal{E}_{k-1}, k-1 \leq H  } 
                    \exp((k-1) \cdot \kl(\hat{\mu}_{(k-1),1}, \mu_1 - \varepsilon_2))
                }
                \tag{$N_{\tau_1(k)-1} = k-1$ and $\hat{\mu}_{\tau_{1}(k)-1,1} = \hat{\mu}_{\tau(k-1),1}$}
        \\
        &= \EE\sbr{
            \sum_{k=1}^\infty
                    \one\cbr{ \mathcal{E}_{k}, k \leq H } 
                    \exp(k \cdot \kl(\hat{\mu}_{(k),1}, \mu_1 - \varepsilon_2))
                }
                \tag{shift index $k$ by $1$ }
        \\
        &= \EE\sbr{
            \sum_{k=1}^{\lfloor \splitFthree \rfloor} 
                    \one\cbr{\varepsilon_2 \leq \mu_1 - \hat{\mu}_{(k),1} \leq \alpha_{k}}
                    \cdot
                    \exp(k \cdot \kl(\hat{\mu}_{(k),1}, \mu_1 - \varepsilon_2))
                }
            +
            \sum_{k= \lfloor \splitFthree \rfloor + 1}^\infty
            0
                \tag{Under the conditions $ \mathcal{E}_{k}, E_{\tau_1(k+1)-1} $, when $k \geq \lfloor \splitFthree \rfloor + 1$, $E_{\tau_1(k+1)-1}$ is always false
                }
        \\
        &= \sum_{k=1}^{\lfloor \splitFthree \rfloor} 
            \EE\sbr{
                \one\cbr{\varepsilon_2 \leq X_k \leq \alpha_{k}}
                \cdot
                f_k(X_k)
                }
                \tag{Recall $X_k = \mu_1 - \hat{\mu}_{(k),1}$ 
                }   
        \\
                \label{eqn:F_3_1_after_prob_trans}
    \end{align}

        Here the Eq.~\eqref{eqn:F_3_1_after_prob_trans} is the sum of expectation of the function $f_k(X_k)$ over a bounded range $\cbr{\varepsilon_2 \leq X_k \leq \alpha_{k}}$ from $k=1$ to $\lfloor \splitFthree \rfloor$. Continuing Eq.~\eqref{eqn:F_3_1_after_prob_trans},

    \begin{align}
        &
            \EE \sbr{ 
            \sum_{t=K+1}^T
            \one\cbr{ A_t, C_{t-1}^c, E_{t-1}, \mathcal{E}_{N_{t-1,1}} }
            } \\
        \leq &
            \sum_{k=1}^{\lfloor \splitFthree \rfloor} 
            \EE[f_k(X_k) 
            \one[\cbr{\varepsilon_2 \leq X_k \leq \alpha_{k}}] 
        \\
        =& \sum_{k=1}^{\lfloor \splitFthree \rfloor}  
            \int_{\varepsilon_2}^{\alpha_{k}} 
            f_k(x) p_{X_k}(x) \diff x
                \tag{$p_{X_k}(\cdot)$ is the p.d.f. of $X_k$}
        \\
        =& \sum_{k=1}^{\lfloor \splitFthree \rfloor} 
            \int_{\varepsilon_2}^{\alpha_{k}}
            \left(f_k(\varepsilon_2) + \int_{\varepsilon_2}^{x}f'_k(y)\dy)\right) 
            p_{X_k}(x)\diff x 
                \tag{$f_k(x) = f_k(\varepsilon_2) + \int_{\varepsilon_2}^x f'_k(y)\dy$}
        \\
        =& \underbrace{
                \sum_{k=1}^{\lfloor \splitFthree \rfloor}             
                \int_{\varepsilon_2}^{\alpha_{k}}
                \int_{\varepsilon_2}^{x}
                f'_k(y)p_{X_k}(x)\dy\diff x
            }_{A} + 
        \underbrace{
                \sum_{k=1}^{\lfloor \splitFthree \rfloor}
                \int_{\varepsilon_2}^{\alpha_{k}} 
                p_{X_k}(x)\diff x
            }_{B}
                \label{eqn:F_3_1_decompose}
    \end{align}

    We denote the first term in Eq.~\eqref{eqn:F_3_1_decompose} as $A$ and the second one as $B$. Next we are going to handle $A$ and $B$ separately. Starting from the easier one,
    \begin{align}
        B
        =& 
            \sum_{k=1}^{\lfloor \splitFthree \rfloor}
            \int_{\varepsilon_2}^{\alpha_{k}} 
            p_{X_k}(x)\diff x
        \\
        \leq& 
            \sum_{k=1}^{\lfloor \splitFthree \rfloor}
            \PP(X_k \geq \varepsilon_2)
        \\
        =& 
            \sum_{k=1}^{\lfloor \splitFthree \rfloor}
            \PP(\hat{\mu}_{(k),1} \leq \mu_1 - \varepsilon_2)
        \\
        \leq& 
            \sum_{k=1}^{\lfloor \splitFthree \rfloor}
            \exp(-k \cdot \kl(\mu_1 - \varepsilon_2, \mu_1)) 
                \tag{Applying Lemma \ref{lemma:maximal-inequality}}
        \\
        \leq& 
            \sum_{k=1}^{\infty}
            \exp\del{-k \cdot \kl(\mu_1 - \varepsilon_2, \mu_1)} 
        \\
        \leq& 
           \frac{\exp\del{-\kl(\mu_1 - \varepsilon_2, \mu_1)}}
                {1 - \exp\del{-\kl(\mu_1 - \varepsilon_2, \mu_1)}}
                \tag{Geometric sum}
        \\
        =& 
           \frac{1}
                {\exp\del{\kl(\mu_1 - \varepsilon_2, \mu_1)} - 1}
        \\
        \leq& 
           \frac{1}{\kl(\mu_1 - \varepsilon_2, \mu_1)}
                \tag{$e^x \geq x + 1$ when $x \geq 0$}
        \\
                \label{eqn:F3_1_2_upper_bound}
    \end{align}

    On the other hand,

    \begin{align}
        A
        =& 
            \sum_{k=1}^{\lfloor \splitFthree \rfloor}     
            \int_{\varepsilon_2}^{\alpha_k}
            \int_{\varepsilon_2}^{x}
            f'_k(y)p_{X_k}(x)\dy \dx
        \\
        =&
            \sum_{k=1}^{\lfloor \splitFthree \rfloor} 
            \int_{\varepsilon_2}^{\alpha_k} 
            \int_{y}^{\alpha_k} 
            f'_k(y)p_{X_k}(x)\dx \dy
                \tag{Switching the order of integral}
        \\
        =&  
            \sum_{k=1}^{\lfloor \splitFthree \rfloor} 
            \int_{\varepsilon_2}^{\alpha_k}
            k \frac{\diff \kl(\mu_1 - y, \mu_1 - \varepsilon_2)}{\dy}
            f_k(y)
            \PP(y \leq X_k \leq \alpha_k)\dy
                \tag{Calculate inner integral}
        \\
        \leq& 
            \sum_{k=1}^{\lfloor \splitFthree \rfloor} 
            \int_{\varepsilon_2}^{\alpha_{k}} 
            k \frac{\diff \kl(\mu_1 - y, \mu_1 - \varepsilon_2)}{\dy}
            f_k(y) 
            \exp{(-k \cdot \kl(\mu_1 - y, \mu_1))} \dy
                \tag{Apply Lemma \ref{lemma:maximal-inequality}}
        \\
        \leq& 
            \sum_{k=1}^{\lfloor \splitFthree \rfloor} 
            \int_{\varepsilon_2}^{\alpha_{k}} 
            k \frac{\diff \kl(\mu_1 - y, \mu_1 - \varepsilon_2)}{\dy}
            \dy
                \tag{$f_k(y)\exp{(-k \cdot \kl(\mu_1 - y, \mu_1)} \leq 1$ when $y\in \sbr{\varepsilon_2, \alpha_k}$}
        \\
        =& 
            \sum_{k=1}^{\lfloor \splitFthree \rfloor} 
            k \kl(\mu_1 - \alpha_k, \mu_1 - \varepsilon_2)
                \tag{Fundamental Theorem of Calculus}
        \\
        \leq& 
            \sum_{k=1}^{\lfloor \splitFthree \rfloor} 
            2 \ln\frac{T}{k}
                \tag{Recall definition of $\alpha_k$}
                \label{eqn:apply-def-alpha}
        \\
        \leq& 
            2 \lfloor \splitFthree \rfloor \ln T
            - 
            2 \int_{1}^{\lfloor \splitFthree \rfloor} \ln{k} \diff k
                \tag{Integral inequality Lemma \ref{lemma:integral-inequality} }
        \\
        =& 
            2 \lfloor \splitFthree \rfloor \ln T 
            - 
            2 (k\ln{k}-k)|_{1}^{\lfloor \splitFthree \rfloor}
                \tag{the anti-derivative of $\ln x$ is $x\ln x - x$}
        \\
        =& 
            2 \lfloor \splitFthree \rfloor \ln T - 
            2 \lfloor \splitFthree \rfloor \ln\del{\lfloor \splitFthree \rfloor} +
            2 \lfloor \splitFthree \rfloor
        \\
        =& 
            2 \lfloor \splitFthree \rfloor \ln\del{ \fr{T}{\lfloor \splitFthree \rfloor} } +
            2 \lfloor \splitFthree \rfloor
        \\
        \leq&
            2 \splitFthree \ln\del{ \fr{T}{\splitFthree} \vee e^2 } +
            2 \splitFthree
                \tag{$x\ln\frac{T}{x}$ is monotonically increasing when $x \in (0, \fr{T}{e})$}
        \\
                \label{eqn:F3_1_1_upper_bound_I}
    \end{align}

    The fist inequality is due to the Lemma \ref{lemma:maximal-inequality}. In the second inequality, we use the fact that  when $y\in \sbr{\varepsilon_2, \alpha_k}$, $f_k(y) \exp{(-k \cdot \kl(\mu_1 - y, \mu_1))} \leq 1$. This is because
    \[
        f_k(y)\exp{(-k \cdot \kl(\mu_1 - y, \mu_1))}
        =
        \exp(k \cdot( \kl(\mu_1 - y, \mu_1 - \varepsilon_2) - \kl(\mu_1 - y, \mu_1))) 
        \leq 1
    \]
    In the third one we use the definition of $\alpha_k$ to bound $\kl(\mu_a-\alpha_k,\mu_1-\varepsilon_2)$ by $\ln\del{\fr{T}{k}}$.
    In the fourth inequality, we apply integral inequality Lemma \ref{lemma:integral-inequality} by letting $f(x):=\ln(x)$, $a=2$ and $b= \lfloor \splitFthree \rfloor$.
    For the last inequality, we use the fact that $x \mapsto x\ln\del{\fr{T}{x}}$ is monotonically increasing when $x \in (0, \fr{T}{e})$.

    We conclude that
    $F3_1$ is bounded by
    \begin{align}
        F3_1 
        \leq&
            A + B + 2 H
        \\
        \leq& 
            2 \splitFthree \ln\del{ \fr{T}{\splitFthree} \vee e^2 }
            + 2 \splitFthree
            + \fr{1}{\kl(\mu_1-\varepsilon_2, \mu_1)}
            + 2 H
        \\
        \leq& 
            \BoundFThreeOneFirst
                \label{eqn:F3_1_upper_bound_I}
    \end{align}

    \paragraph{Case 3 -- Proof of Eq.~\eqref{eqn:f31-2}.} Applying Lemma~\ref{lemma:bound-F3} by letting $m=0$ and $n=\lfloor \splitFthree \rfloor$, we have that 
    \begin{align}
        F3_1
        =& 
            \EE\sbr{ \sum_{t=K+1}^T 
                \one\cbr{ A_t, C_{t-1}^c, E_{t-1}
                }}
        \\
        \leq& 
        \sum_{k=1}^{\lfloor \splitFthree \rfloor}
        \frac{ 2\exp(-k\kl(\mu_1 - \varepsilon_2, \mu_1)) }
        { k (\mu_1-\varepsilon_2) (1-\mu_1+\varepsilon_2) h^2(\mu_1,\varepsilon_2) } +
        \sum_{k=1}^{\lfloor \splitFthree \rfloor} \exp(-k\kl(\mu_1 - \varepsilon_2, \mu_1))
        \\
        \leq & 
        2 \splitFthree \sum_{k=1}^{\lfloor \splitFthree \rfloor}
        \frac{ 1 }
        { k } +
        \fr{1}{\kl(\mu_1-\varepsilon_2,\mu_1)}
        \\
        \leq & 
        \BoundFThreeOneSecond
    \end{align}
    where in the second inequality, we use that $\exp(-k\kl(\mu_1 - \varepsilon_2, \mu_1)) \leq 1$ and the definition of $H$, as well as the fact that $\sum_{k=1}^{\lfloor \splitFthree \rfloor} \exp(-k t) \leq \sum_{k=1}^{\infty} \exp(-k t) = \frac{e^{-t}}{1 - e^{-t}} \leq \frac{1}{t}$; in the third inequality, we use the algebraic fact that for $t > 0$, $\sum_{k=1}^{\lfloor \splitFthree \rfloor} \frac1k \leq (1 + \ln( \lfloor \splitFthree \rfloor) \leq 2( \ln( \lfloor \splitFthree \rfloor ) \vee 1) \leq 2 \ln(H \vee e^2)$.
    
    Therefore, when $\splitFthree \in (1, \frac{T}{e})$, $F3_1$ can be bounded using Eq.~\eqref{eqn:f31-1} and Eq.~\eqref{eqn:f31-2} simultaneously, concluding the proof in Case 3.

    In summary, in all three cases, $F3_1$ is upper bounded by $\BoundFThreeOne$; this concludes the proof.
    \end{proof}
    
    \subsubsection{\mathinhead{F3_2}{F3-2}}
    
    As mentioned in the proof roadmap, 
    intuitively, $F3_2$ is small, since when number of times arm 1 is pulled is large, $\hat{\mu}_{t-1,1} \leq \mu_1 - \varepsilon_2$ is unlikely to happen.
    Here, we control $F3_2$ using Lemma~\ref{lemma:bound-F3}.

    \begin{claim} \label{claim:F3-2-upper-bound}
        \begin{align*}
            F3_2 \leq \BoundFThreeTwo
        \end{align*}
    \end{claim}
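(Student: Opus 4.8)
The plan is to read $F3_2$ off Lemma~\ref{lemma:bound-F3} by choosing the counting window to be exactly the range of $N_{t-1,1}$ that $E_{t-1}^c$ selects. Recall $E_{t-1}^c = \cbr{N_{t-1,1} > \splitFthree}$; since $N_{t-1,1}\in\NN$ and $\lfloor \splitFthree \rfloor \le \splitFthree < \lfloor \splitFthree \rfloor + 1$, this event coincides with $\cbr{N_{t-1,1} > \lfloor \splitFthree \rfloor}$, i.e., with the event $S_{t-1}$ of Lemma~\ref{lemma:bound-F3} for $m = \lfloor \splitFthree \rfloor$; taking $n=\infty$ makes $T_{t-1}$ vacuous. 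Hence
\[
F3_2 = \EE\sbr{\sum_{t=K+1}^T \one\cbr{A_t, C_{t-1}^c, E_{t-1}^c}}
\le \sum_{k=\lfloor \splitFthree \rfloor + 1}^\infty \del{\frac{2\splitFthree}{k} + 1}\exp\del{-k\,\kl(\mu_1-\varepsilon_2,\mu_1)}.
\]

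Next I would exploit that every summand has index $k \ge \lfloor \splitFthree \rfloor + 1 > \splitFthree$, so $\frac{2\splitFthree}{k} < 2$ and therefore $\frac{2\splitFthree}{k} + 1 < 3$. Factoring out this constant and enlarging the summation to all $k \ge 1$ only increases the right-hand side, giving
\[
F3_2 \le 3\sum_{k=1}^\infty \exp\del{-k\,\kl(\mu_1-\varepsilon_2,\mu_1)}
= \frac{3}{\exp\del{\kl(\mu_1-\varepsilon_2,\mu_1)} - 1}
\le \frac{3}{\kl(\mu_1-\varepsilon_2,\mu_1)} = \BoundFThreeTwo,
\]
where the geometric series converges and the last step uses $e^x \ge 1+x$ for $x\ge 0$, valid since $\varepsilon_2>0$ forces $\kl(\mu_1-\varepsilon_2,\mu_1)>0$.

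I expect essentially no obstacle here: all the substantive work — the probability-transfer inequality, the passage from global time steps to per-pull counts of arm $1$, and the double-integral tail estimate — has already been absorbed into Lemma~\ref{lemma:bound-F3}. The only points needing a little care are matching $E_{t-1}^c$ with the integer window $(\lfloor \splitFthree \rfloor,\infty]$ and replacing the mildly index-dependent factor $\frac{2\splitFthree}{k}+1$ by the absolute constant $3$ on that range before collapsing the tail via a geometric sum. No separate handling of the degenerate case $\splitFthree<1$ is required, since then $\lfloor \splitFthree \rfloor = 0$, the displayed sum already starts at $k=1$, and $k\ge1>\splitFthree$ still gives $\frac{2\splitFthree}{k}+1<3$.
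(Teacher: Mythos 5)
Your proposal is correct and follows essentially the same route as the paper: invoke Lemma~\ref{lemma:bound-F3} with $m=\lfloor \splitFthree\rfloor$, $n=\infty$, use $k>\splitFthree$ to kill the $\tfrac{2\splitFthree}{k}$ factor, and finish with a geometric sum and $e^x\ge 1+x$. If anything, your handling of the tail is slightly cleaner than the paper's, since you keep the sum in the form $3\sum_{k\ge 1}e^{-k\,\kl(\mu_1-\varepsilon_2,\mu_1)}=\tfrac{3}{e^{\kl(\mu_1-\varepsilon_2,\mu_1)}-1}\le\tfrac{3}{\kl(\mu_1-\varepsilon_2,\mu_1)}$ rather than first discarding the leading exponential factor.
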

    
    \begin{proof}
        $F3_2$ is the case where the number of arm pulling of optimal arm $1$ is lower bounded by $\splitFthree$.

    \begin{align}
        F3_2
        =& 
            \EE\sbr{ \sum_{t=K+1}^T 
                \one\cbr{ A_t, C_{t-1}^c, E_{t-1}^c
                }}
        \\
        \leq&
            \sum_{k=\lfloor \splitFthree \rfloor+1}^{\infty}
            \frac{ 2 \exp(-k\kl(\mu_1 - \varepsilon_2, \mu_1)) }
                 { k (\mu_1-\varepsilon_2) (1-\mu_1+\varepsilon_2) h^2(\mu_1,\varepsilon_2) } +
            \fr{1}{\kl(\mu_1-\varepsilon_2,\mu_1)}
                    \tag{Lemma \ref{lemma:bound-F3}}
        \\
        \leq&
            \sum_{k=\lfloor \splitFthree \rfloor+1}^{\infty}
            \frac{ 2 \exp(-k\kl(\mu_1 - \varepsilon_2, \mu_1)) }
                 { H (\mu_1-\varepsilon_2) (1-\mu_1+\varepsilon_2) h^2(\mu_1,\varepsilon_2) } +
            \fr{1}{\kl(\mu_1-\varepsilon_2,\mu_1)}
                    \tag{$\lfloor H \rfloor + 1 \geq H$}
        \\
        \leq&
            \sum_{k=\lfloor \splitFthree \rfloor+1}^{\infty}
                2 \exp(-k\kl(\mu_1 - \varepsilon_2, \mu_1)) +
            \fr{1}{\kl(\mu_1-\varepsilon_2,\mu_1)}
                    \tag{By the definition of $H$}
        \\
        \leq&  
            \fr {2 \exp(-(\lfloor H \rfloor + 1)\kl(\mu_1 - \varepsilon_2, \mu_1))}
                {1 - \exp(-\kl(\mu_1 - \varepsilon_2, \mu_1))} +
            \fr{1}{\kl(\mu_1-\varepsilon_2,\mu_1)}
                    \tag{Geometric sum}
        \\
        \leq&  
            \fr {2 }
                {1 - \exp(-\kl(\mu_1 - \varepsilon_2, \mu_1))} +
            \fr{1}{\kl(\mu_1-\varepsilon_2,\mu_1)}
                    \tag{$\exp(-x) \leq 1$ when $x \leq 0$}
        \\
        \leq&
            \BoundFThreeTwo
                    \label{eqn:F3_2_upper_bound}
    \end{align}
    The first inequality is true because Lemma \ref{lemma:bound-F3} by letting $m=\lfloor \splitFthree \rfloor$ and $n=\infty$, as well as the fact that for $t > 0$, $\sum_{k=\lfloor \splitFthree \rfloor+1}^{\infty} \exp(-k t) \leq \sum_{k=1}^{\infty} \exp(-k t) = \frac{e^{-t}}{1 - e^{-t}} \leq \frac{1}{t}$.
    \end{proof}

\subsubsection{Proof of Lemma~\ref{lemma:bound-F3}}
\label{sec:useful-lemma-f3}
    
    \begin{proof}[Proof of Lemma~\ref{lemma:bound-F3}]
    For any fixed $k$, recall that we denoted $f_k(x) = \exp(k\cdot \kl(\mu_1 - x, \mu_1-\varepsilon_2))$, $X_k = \mu_1 - \hat{\mu}_{\tau_1(k),1}$ and the pdf of $X_k$ as $p_{X_k}(x)$. 

    \begin{align}
    	&
    	    \EE\sbr{\sum_{t=u+1}^T 
    	        \one\cbr{A_{t}, C_{t-1}^c, S_{t-1}, T_{t-1} }
    	        }
    	\\
    	=&
            \sum_{t=u+1}^T
    	    \EE\sbr{ 
    	            \one\cbr{C_{t-1}^c, S_{t-1}, T_{t-1} }
    	            \EE\sbr{A_{t} \mid \cH_{t-1}}
    	            }
    	                \tag{Law of total expectation
                     }
    	\\
    	\leq&
            \sum_{t=u+1}^T
    	    \EE\sbr{ 
    	            \one\cbr{C_{t-1}^c, S_{t-1}, T_{t-1} } 
    	            \cdot 
    	            \exp(N_{t-1,1} \cdot \kl(\hat{\mu}_{t-1,1}, \hat{\mu}_{t-1,\max})) \EE\sbr{I_t = 1 \mid \cH_{t-1}}
    	            }
    	                \tag{Lemma \ref{lemma:prob-transfer}}
    	\\
    	\leq& 
            \sum_{t=u+1}^T 
    	    \EE\sbr{
    	            \one\cbr{C_{t-1}^c, S_{t-1}, T_{t-1} } 
    	            \cdot 
    	            \exp(N_{t-1,1} \cdot \kl(\hat{\mu}_{t-1,1}, \mu_1 - \varepsilon_2)) \EE\sbr{I_t = 1 \mid \cH_{t-1}}
    	            }
    	                \tag{when $C_{t-1}^c$ happens, $\kl(\hat{\mu}_{t-1,1}, \hat{\mu}_{t-1, \max}) \leq \kl(\hat{\mu}_{t-1,1}, \mu_1 - \varepsilon_2)$}
    	\\
        =&
            \sum_{t=u+1}^T 
    	    \EE\sbr{
    	            \one\cbr{I_t = 1, C_{t-1}^c, S_{t-1}, T_{t-1} } 
    	            \cdot 
    	            \exp(N_{t-1,1} \cdot \kl(\hat{\mu}_{t-1,1}, \mu_1-\varepsilon_2)) 
    	            }
    	                \tag{Law of total expectation}
    	\\
    	\leq& 
    	    \EE\sbr{
                \sum_{k=2}^\infty 
    	            \one\cbr{C_{\tau_1(k)-1}^c, k-1 \in (m,n]} 
    	            \cdot 
    	            \exp(N_{\tau_1(k)-1,1} \cdot \kl(\hat{\mu}_{(k-1),1}, \mu_1-\varepsilon_2)) 
    	            }
    	                \tag{for any $t$ such that $\one\cbr{I_t = 1}$ is nonzero, $t = \tau_1(k)$ for some unique $k$; $N_{\tau(k)-1,1} = k-1$, and $\hat{\mu}_{\tau(k)-1,1} = \hat{\mu}_{(k-1),1}$}
    	\\
    	=& 
    	    \EE\sbr{
                \sum_{k=2}^\infty 
    	            \one\cbr{C_{\tau_1(k)-1}^c, k \in (m+1, n+1]} 
    	            \cdot 
    	            \exp((k-1) \cdot \kl(\hat{\mu}_{(k-1),1}, \mu_1-\varepsilon_2)) 
    	            }
    	                \tag{algebra}
    	\\
    	\leq& 
    	    \EE\sbr{\sum_{k=m+2}^{n+1}
    	            \one\cbr{\mu_1\geq\mu_1-\hat{\mu}_{(k-1),1} > \varepsilon_2} 
    	            \cdot 
    	            \exp((k-1)\cdot \kl(\hat{\mu}_{(k-1),1}, \mu_1-\varepsilon_2))
    	        }
    	\\  
    	=& 
    	    \EE\sbr{\sum_{k=m+1}^{n}
    	            \one\cbr{\mu_1\geq\mu_1-\hat{\mu}_{(k),1} > \varepsilon_2} 
    	            \cdot 
    	            f_k(\mu_1 - \hat{\mu}_{(k),1})
    	        },
                        \tag{shift $k$ by $1$}
        \\
    	        \label{eqn:after-prob-transfer}
    \end{align}

    here, for the second to last inequality, we use the fact that when $S_{\tau_1(k)-1}$ happens, $k-1 > m$, and when $T_{\tau_1(k)-1}$ happens, $k-1 \leq n$.
    In the last inequality, we use the fact that when $C^c_{\tau_1(k)-1}$ happens, $\hat{\mu}_{\tau_1(k)-1,\max} < \mu_1 - \varepsilon_2$. Combining this with the fact that $\hat{\mu}_{(k-1),1} = \hat{\mu}_{\tau_1(k)-1,1} \leq \hat{\mu}_{\tau_1(k)-1,\max} $, we have $ \mu_1 - \hat{\mu}_{(k-1), 1} > \varepsilon_2$.

    Hence Eq.~\eqref{eqn:after-prob-transfer} becomes

    \begin{align*}
    	 \eqref{eqn:after-prob-transfer} 
    	 & =
    	    \EE\sbr{ 
    	        \sum_{k=m+1}^{n}
    	        \one\cbr{\mu_1 \geq X_k > \varepsilon_2} 
    	        \cdot 
    	        f_k(X_k)
    	        } 
    	 \\
    	 & = 
    	    \sum_{k=m+1}^{n}
    	    \int_{\varepsilon_2}^{\mu_1} f_k(x) p_{X_k}(x) \dx
    	 \\
    	 & = 
    	    \sum_{k=m+1}^{n}
    	        \int_{\varepsilon_2}^{\mu_1} p_{X_k}(x) \rbr{ f_k(\varepsilon_2) +
    	    \sum_{k=m+1}^{n}
    	        \int_{\varepsilon_2}^x f_k'(y) \dy } \dx
    	            \tag{$f_k(x) = f_k(\varepsilon_2) + \int_{\varepsilon_2}^x f_k'(y) \dy)$}
    	 \\
    	 & = 
    	    \sum_{k=m+1}^{n}
    	        \int_{\varepsilon_2}^{\mu_1} 
    	        p_{X_k}(x) 
    	        f_k(\varepsilon_2) \dx +
    	    \sum_{k=m+1}^{n}         
    	        \int_{\varepsilon_2}^{\mu_1}
    	        \int_{\varepsilon_2}^x 
    	            p_{X_k}(x)f_k'(y) \dy \dx
    	  \\
    	 & = 
    	    \underbrace{
    	    \sum_{k=m+1}^{n}
    	        \int_{\varepsilon_2}^{\mu_1} 
    	        p_{X_k}(x)f_k(\varepsilon_2) \dx
    	        }_{A} +
    	    \underbrace{
    	    \sum_{k=m+1}^{n}
    	        \int_{\varepsilon_2}^{\mu_1}\int_{y}^{\mu_1} 
    	        p_{X_k}(x)f_k'(y) \dx \dy
    	        }_{B}
    	            \tag{Exchange the order of integral}
    \end{align*}

    \paragraph{For $A$:}
        
    \begin{align}
        A
        &=
            \sum_{k=m+1}^{n}
	            \int_{\varepsilon_2}^{\mu_1} 
	            p_{X_k}(x)f_k(\varepsilon_2) \dx
	    \\
	    &\leq
            \sum_{k=m+1}^{n}
	            \int_{\varepsilon_2}^{\mu_1} 
	            p_{X_k}(x)f_k(\varepsilon_2) \dx
	    \\
	    &\leq
	        \sum_{k=m+1}^n
	            \exp\del{ -k \cdot \kl\del{ \mu_1 - \varepsilon_2, \mu_1 } }
	            f_k(\varepsilon_2)
	                \tag{By Lemma \ref{lemma:maximal-inequality}}
    	\\
    	&=
    	    \sum_{k=m+1}^n
	            \exp\del{ -k \cdot \kl\del{ \mu_1 - \varepsilon_2, \mu_1 } }
	                \label{eqn:A-upper-bound}
    \end{align}
    where the last equality is because $f_k(\varepsilon_2) = 1$.
    
    \paragraph{For $B$:}
    \begin{align}
        &
        B
        \\
        =& 
            \sum_{k=m+1}^{n}
            \int_{\varepsilon_2}^{\mu_1}
            \int_{\varepsilon_2}^{x}
            f'_k(y)p_{X_k}(x)\dy \dx
        \\
        =&
            \sum_{k=m+1}^{n}
            \int_{\varepsilon_2}^{\mu_1} 
            \int_{y}^{\mu_1} 
            f'_k(y)p_{X_k}(x)\dx \dy
                \tag{Switching the order of integral}
        \\
        =&  
            \sum_{k=m+1}^{n}
            \int_{\varepsilon_2}^{\mu_1}
            k \frac{\diff \kl(\mu_1 - y, \mu_1 - \varepsilon_2)}{\dy}
            f_k(y)
            \PP(y \leq x \leq \mu_1)\dy
                \tag{Calculate inner integral}
        \\
        =&
            \sum_{k=m+1}^{n}
            \int_{\varepsilon_2}^{\mu_1}
                f_k(y) \cdot 
                k \frac{\diff \kl(\mu_1-y,\mu_1-\varepsilon_2)}{\dy} \cdot 
                \exp(-k\cdot \kl(\mu_1-y,\mu_1)) \dy
                    \tag{Apply Lemma \ref{lemma:maximal-inequality}}
        \\
        =&
            \sum_{k=m+1}^{n}
            \int_{\varepsilon_2}^{\mu_1}
                \exp\del{
                    k(\kl(\mu_1-y,\mu_1-\varepsilon_2)-
                             \kl(\mu_1-y,\mu_1)) 
                    } \cdot 
                k \frac{\diff \kl(\mu_1-y,\mu_1-\varepsilon_2)}{\dy} \dy
        \\
        =& 
            \sum_{k=m+1}^{n}
            \int_{\varepsilon_2}^{\mu_1} 
                k\exp{\del{
                        -k \kl(\mu_1-\varepsilon_2,\mu_1)
                        }} \cd
        \\
        &
                \exp{\del{
                        k \del{y-\varepsilon_2}
                        \ln\del{\frac{(1 - \mu_1)(\mu_1 - \varepsilon_2)}
                                     {(1 - \mu_1 + \varepsilon_2)\mu_1}}
                    }}
            \frac{\diff \kl(\mu_1-y,\mu_1-\varepsilon_2)}{\dy} \dy 
                \tag{By Lemma~\ref{lemma:Bregman-equation} with $\phi(x) = x\ln(x)+(1-x)\ln(1-x)$, which induces $B_\phi(z,x) = \kl(z,x)$
                }
        \\
        =& 
            \sum_{k=m+1}^{n}
            \int_{\varepsilon_2}^{\mu_1}
            k\exp{\del{
                    -k \kl(\mu_1-\varepsilon_2,\mu_1)
                    -k \del{y - \varepsilon_2} h(\mu_1,\varepsilon_2)
                }}
            \frac{\diff \kl(\mu_1-y,\mu_1-\varepsilon_2)}{\dy} \dy
                \tag{Recall $\ln(\frac{(1-\mu_1+\varepsilon_2)\mu_1}{(1-\mu_1)(\mu_1-\varepsilon_2)}) = h(\mu_1,\varepsilon_2)$}
        \\
        =& 
            \sum_{k=m+1}^{n}
            \exp(-k \kl(\mu_1-\varepsilon_2,\mu_1)) 
            \cdot 
        \\
        &
            \del{
                    \underbrace{
                        \int_{\varepsilon_2}^{\mu_1} 
                            k\exp\del{- k \del{y-\varepsilon_2} h(\mu_1,\varepsilon_2)}
                            \frac{\dif  \kl(\mu_1 - y,\mu_1-\varepsilon_2)}{\dy} \dy
                    }_{ \mathrm{INT} }
                  }
            \label{eqn:B-after-change-variable}
    \end{align}

    Here, in the third to the last equation we have applied Lemma~\ref{lemma:Bregman-equation} and $\phi(x) = x\ln(x)+(1-x)\ln(1-x)$,$B_\phi(z,x)$ becomes $\kl(z,x)$. We set $z:=(\mu_1-y, 1-\mu_1+y)$, $x:=(\mu_1-\varepsilon_2, 1-\mu_1+\varepsilon_2)$ and $y:=(\mu, 1-\mu_1)$. Under this setting, according to Lemma~\ref{lemma:Bregman-equation}, we have $\kl(\mu_1-y, \mu_1-\varepsilon_2) - \kl(\mu_1-y, \mu_1) = -\kl(\mu_1-\varepsilon_2,\mu_1) + (y-\varepsilon_2)\ln\del{\frac{(1 - \mu_1)(\mu_1 - \varepsilon_2)}{(1 - \mu_1 + \varepsilon_2)\mu_1}}$.

    Next, we need to give an upper bound to the integral part $\mathrm{INT}$ carefully.
    By applying the observation below, the integral will become
    \begin{align}
    	\mathrm{INT} 
        = &  
            \int_{\varepsilon_2}^{\mu_1} 
            k\exp\del{- k \del{y-\varepsilon_2} h(\mu_1,\varepsilon_2) }
            \frac{\diff \kl(\mu_1-y,\mu_1-\varepsilon_2)}{\dy} \dy
        \\
	    = & 
	        \int_{\varepsilon_2}^{\mu_1} 
	        k\exp\del{- k \del{y-\varepsilon_2} h(\mu_1,\varepsilon_2) }
	        \frac{\diff \kl(\mu_1-y,\mu_1-\varepsilon_2)}{\diff (\mu_1-y)}
	        \frac{\diff (\mu_1-y)}{\dy} \dy
        \\
	    = & 
	        -\int_{\varepsilon_2}^{\mu_1} 
	        k\exp\del{- k \del{y-\varepsilon_2} h(\mu_1,\varepsilon_2) }
	        \left(
	            \ln{(\frac{\mu_1-y}{1-\mu_1+y})} - \ln{(\frac{\mu_1-\varepsilon_2}{1-\mu_1+\varepsilon_2})}
            \right) 
                \dy  
        \\
        = &  
            \int_{\varepsilon_2}^{\mu_1} 
                k
                \exp\del{- k \del{y-\varepsilon_2} h(\mu_1,\varepsilon_2) }
                \int_{\mu_1-y}^{\mu_1-\varepsilon_2} (\frac{1}{x}+\frac{1}{1-x}) \dx \dy
                    \tag{$\ln\frac{a}{b} = \int_a^b \frac{1}{x} \dx$}
        \\
        = &  
            \int_{0}^{\mu_1-\varepsilon_2} 
            \int_{\mu_1-x}^{\mu_1} 
                k
                \exp\del{- k \del{y-\varepsilon_2} h(\mu_1,\varepsilon_2) }
                (\frac{1}{x}+\frac{1}{1-x}) \dy \dx
    	            \tag{Change the order of integral}
	    \\
        = & 
            \int_{0}^{\mu_1-\varepsilon_2}
            \frac{\exp\del{ 
                        k \varepsilon_2 h(\mu_1,\varepsilon_2) 
                    }}
                    {h(\mu_1,\varepsilon_2)}
            \del{
                \exp\del{ -k \del{\mu_1-x} h(\mu_1,\varepsilon_2) } - 
                \exp\del{ -k \mu_1 h(\mu_1,\varepsilon_2) }
            } \cd
        \\
        &
                (\frac{1}{x}+\frac{1}{1-x}) \dx 
                    \tag{Calculate inner integral}
        \\
        = &  
            \frac{\exp\del{ 
                -k \del{\mu_1-\varepsilon_2} h(\mu_1,\varepsilon_2)
                }}
                {h(\mu_1,\varepsilon_2)} \cd
        \\
        &
                \left(
                    \underbrace{
    		            \int_{0}^{\mu_1-\varepsilon_2}
    		            \frac{\exp{(k x h(\mu_1,\varepsilon_2))}-1}{x} \dx
		            }_{\textit{part I}} 
		            + 
		            \underbrace{
		                \int_{0}^{\mu_1-\varepsilon_2}
		                \frac{\exp{(k x h(\mu_1,\varepsilon_2))}-1}{1-x} \dx
	                }_{\textit{part II}}
                \right)
                    \label{eqn:B-split}
    \end{align}

    \paragraph{Part I}
    For part I, we can bound it by
    \begin{align}
        \textit{Part I}
        & = 
        \int_{0}^{\mu_1-\varepsilon_2} 
            \frac{\exp\del{ k x h(\mu_1,\varepsilon_2) }-1}{x} \dx
        \\
        & = 
            \int_{0}^{(\mu_1-\varepsilon_2)k h(\mu_1,\varepsilon_2)}
                k h(\mu_1,\varepsilon_2)
                \frac{\exp{(y)}-1}{y} 
                \frac{1}{k h(\mu_1,\varepsilon_2)} \dy 
                    \tag{change variable $y=k x h(\mu_1,\varepsilon_2)$}
        \\
        & = 
            \int_{0}^{(\mu_1-\varepsilon_2)k h(\mu_1,\varepsilon_2)}
                \frac{\exp{(y)}-1}{y} \dy
        \\
        & \leq
            2 \frac{\exp\del{(\mu_1-\varepsilon_2) k h(\mu_1,\varepsilon_2)}}
                       {(\mu_1-\varepsilon_2) k h(\mu_1,\varepsilon_2)}
                    \tag{Using Lemma \ref{lemma:int-exp-bound} by letting $t = (\mu_1-\varepsilon_2) k h(\mu_1,\varepsilon_2)$}
        \\ 
                    \label{eqn:B-INT-PART-I}
    \end{align}

    \paragraph{Part II}
    For part II,
    \begin{align}
    	\textit{part II}
    	=& 
    	    \int_{0}^{\mu_1-\varepsilon_2}
    	        \frac{\exp\del{k x h(\mu_1,\varepsilon_2)}-1}{1-x} \dx                     
        \\
        \leq&
            \int_{0}^{\mu_1-\varepsilon_2}
                \frac{\exp\del{k x h(\mu_1,\varepsilon_2)}-1}{1-\mu_1+\varepsilon_2} \dx
                    \tag{Bound denominator by $1 - \mu_1 + \varepsilon_2$}
        \\
    	=&
    	    \frac{1}{1-\mu_1+\varepsilon_2}
    	    \del{
    	        \frac{1}{k h(\mu_1,\varepsilon_2)}
    	        \exp\del{ k x h(\mu_1,\varepsilon_2) }-x
    	        } \vert_0^{\mu_1-\varepsilon_2} 
    	            \tag{calculate integral}
        \\
        \leq&  
            \frac{\exp\del{ k \del{\mu_1-\varepsilon_2} h(\mu_1,\varepsilon_2) }}
                 {(1-\mu_1+\varepsilon_2)k h(\mu_1,\varepsilon_2)}
                    \label{eqn:B-INT-PART-II}
    \end{align}
    
    Hence from Eq.\eqref{eqn:B-INT-PART-I} and Eq.\eqref{eqn:B-INT-PART-II}, by multiplying the first factor in the Eq.~\eqref{eqn:B-split}, we can bound $\mathrm{INT}$ by
    \begin{align}
        \mathrm{INT}
        \leq&
        \frac{\exp\del{-k (\mu_1-\varepsilon_2) h(\mu_1,\varepsilon_2)}}
             {h(\mu_1,\varepsilon_2)}
            \del{ \text{part I} + \text{part II} }
        \\
        \leq&
        \frac{\exp\del{-k (\mu_1-\varepsilon_2) h(\mu_1,\varepsilon_2)}}
             {h(\mu_1,\varepsilon_2)}
            \del{ 
                2 \frac{\exp\del {(\mu_1-\varepsilon_2) k h(\mu_1,\varepsilon_2)}}{(\mu_1-\varepsilon_2) k h(\mu_1,\varepsilon_2)}
                +
                \frac{ \exp\del{ k \del{\mu_1-\varepsilon_2} h(\mu_1,\varepsilon_2) }}
                     {(1-\mu_1+\varepsilon_2)k h(\mu_1,\varepsilon_2)}
                }
        \\
        \leq &
            \frac{ 2 }{k h^2(\mu_1,\varepsilon_2)
            }
            \cdot 
            \rbr{
            \frac{1}{
            \mu_1-\varepsilon_2}
            +
            \frac{1}{1-\mu_1+\varepsilon_2
            }
            }
            = 
            \frac{ 2 }{k h^2(\mu_1,\varepsilon_2)
            }
            \cdot 
            \rbr{
            \frac{1}{
            (\mu_1-\varepsilon_2)(1-\mu_1+\varepsilon_2)}
            }
            \label{eqn:B-INT}
    \end{align}

    Therefore, we can upper bound $B$ by
    \begin{align}
        B 
        &\leq
            \sum_{k=m+1}^{n}
            \exp(-k \kl(\mu_1-\varepsilon_2,\mu_1)) 
            \cdot 
            \mathrm{INT}
        \\
        &= 
            \sum_{k=m+1}^{n}  
                \frac{ 2 \exp(-k\kl(\mu_1 - \varepsilon_2, \mu_1)) }
                     { k (\mu_1-\varepsilon_2) (1-\mu_1+\varepsilon_2) h^2(\mu_1,\varepsilon_2) } 
                    \label{eqn:B_upper_bound}
    \end{align}
    
    In a summary, by combining Eq.~\eqref{eqn:A-upper-bound} and Eq.~\eqref{eqn:B_upper_bound}, we have
    
    \begin{align*}
        &
        \sum_{t=K+1}^T
            \PP\del{ A_t, B^c_{t-1}, C^c_{t-1}, S_t, T_t }
        \\
        \leq& A + B
        \\
        \leq&
            \sum_{k=m+1}^{n}
            \del{ \frac{ 2 }
                     { k (\mu_1-\varepsilon_2) (1-\mu_1+\varepsilon_2) h^2(\mu_1,\varepsilon_2) } +
            1}  \exp(-k\kl(\mu_1 - \varepsilon_2, \mu_1)).
    \qedhere
    \end{align*}
    \end{proof}

\section{Auxiliary Lemmas}
    \subsection{Control Variance over Bounded Distribution}

    \begin{lemma} \label{lemma:control-variance}
        Let $\nu$ be a distribution supported on $[0,1]$ with mean $\mu$. Then, the variance of $\nu$ is no larger than $\dmu$.
    \end{lemma}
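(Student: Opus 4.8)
The plan is to reduce the variance bound to the elementary pointwise inequality $x^2 \le x$, valid for every $x \in [0,1]$. First I would let $X \sim \nu$ be a random variable supported on $[0,1]$ with $\EE[X] = \mu$, and record the variance in its standard form $\mathrm{Var}(X) = \EE[X^2] - (\EE[X])^2 = \EE[X^2] - \mu^2$.

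Next I would note that since $X \in [0,1]$ almost surely, we have $X^2 \le X$ almost surely (equivalently $X(1-X) \ge 0$). Taking expectations and using monotonicity of expectation gives $\EE[X^2] \le \EE[X] = \mu$. Substituting this into the previous display yields $\mathrm{Var}(X) \le \mu - \mu^2 = \mu(1-\mu) = \dmu$, which is the claim. Equality holds precisely when $X^2 = X$ almost surely, i.e.\ when $\nu$ is supported on $\{0,1\}$, recovering the Bernoulli case.

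There is essentially no obstacle: the only ingredient is the inequality $x^2 \le x$ on $[0,1]$, and the rest is the definition of variance together with linearity and monotonicity of expectation. The lemma's purpose in the paper is merely to justify that $\dmu$ may be used as a valid upper bound on the reward variance of any arm in the general $[0,1]$-bounded setting, so no sharper or more delicate argument is warranted.
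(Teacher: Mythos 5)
Your proof is correct and follows essentially the same argument as the paper's: both write $\mathrm{Var}(X) = \EE[X^2] - (\EE[X])^2$ and apply the pointwise inequality $X^2 \le X$ on $[0,1]$ to conclude $\mathrm{Var}(X) \le \mu - \mu^2 = \dmu$. The added remark about equality holding exactly in the Bernoulli case is a nice (though unnecessary) touch not present in the paper.
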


    \begin{proof}
        For a random variable $X \sim \nu$,

        \begin{align*}
            \Var_{X \sim \nu}[X]
            =& 
                \EE\sbr{X^2} - \del{\EE[X]}^2
            \\
            \leq&
                \EE\sbr{X} - \del{\EE[X]}^2
                    \tag{$X \geq X^2$ when $X \in \sbr{0,1}$}
            \\
            =&
                \mu - \mu^2
            \\
            =&
                \dmu
                    \tag{Recall that $\dmu = \mu(1-\mu)$}
        \end{align*}
    \end{proof}

    \subsection{Controlling the Moment Generating Function}

    \begin{lemma} \label{lemma:mgf-bounded-support}
        Let $\nu$ be a distribution with mean $\mu$ and support set $\mathcal{S} = \sbr{0,1}$. Then, moment generating function of $X \sim \nu$ is smaller than $1-\mu+e^{\lambda}$. More specifically,
        \begin{align}
            \EE_{X \sim \nu}\sbr{ e^{\lambda X} } \leq \mu e^\lambda + (1-\mu)
            \label{eqn:mgf-bounded-dist}
        \end{align}
    \end{lemma}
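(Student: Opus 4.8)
The plan is to exploit the convexity of the exponential map; this is a one-line ``chord bound'' argument and requires neither independence nor the variance bound of Lemma~\ref{lemma:control-variance}. Fix any $\lambda \in \RR$ and consider $g(x) = e^{\lambda x}$, which is convex on $\RR$. The key observation is a pointwise bound valid on the support: any $x \in [0,1]$ is the convex combination $x = (1-x)\cdot 0 + x \cdot 1$ of the two endpoints, so convexity of $g$ gives
\[
e^{\lambda x} \le (1-x)\, e^{\lambda \cdot 0} + x\, e^{\lambda \cdot 1} = (1-x) + x\, e^{\lambda}, \qquad \text{for all } x \in [0,1].
\]

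The next step is to integrate this inequality against $\nu$. Since $\nu$ is supported on $[0,1]$, the bound above holds $\nu$-almost surely, so by monotonicity and linearity of expectation,
\[
\EE_{X\sim\nu}\sbr{e^{\lambda X}} \le \EE_{X\sim\nu}\sbr{(1-X) + X e^{\lambda}} = (1-\mu) + \mu e^{\lambda},
\]
where the last equality uses $\EE_{X\sim\nu}[X] = \mu$. This is exactly Eq.~\eqref{eqn:mgf-bounded-dist}. The coarser first claim of the lemma, $\EE_{X\sim\nu}[e^{\lambda X}] \le 1 - \mu + e^{\lambda}$, then follows immediately since $\mu \le 1$ forces $\mu e^{\lambda} \le e^{\lambda}$.

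There is essentially no technical obstacle; the only points to be careful about are that the chord inequality is applied in the correct direction (the exponential lies \emph{below} the secant line over the interval spanned by $\mathrm{supp}(\nu)$), and that the argument places no restriction on the sign of $\lambda$ — in particular the bound is valid for negative $\lambda$, which is what makes this lemma usable for both the upper- and lower-tail Chernoff estimates (e.g.\ Lemma~\ref{lemma:maximal-inequality}) invoked elsewhere in the analysis.
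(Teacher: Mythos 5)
Your proof is correct and uses exactly the same argument as the paper: the convexity (chord/Jensen) bound $e^{\lambda x} \le (1-x) + x e^{\lambda}$ for $x \in [0,1]$, followed by taking expectations. Your write-up is in fact slightly more careful than the paper's, which elides the integration step.
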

    \begin{proof}
        Since $e^y$ is a convex function, we apply Jensen's inequality on two point $y=0$ and $y=\lambda$ with weights $1-x$ and $x$ respectively.
        \begin{align*}
            \exp\del{(1-x) \cd 0 + x \cd \lambda }
            \leq&
            (1-x) \cd e^{0} + x \cd e^{\lambda}
            \\
            \Rightarrow
            \EE\sbr{\exp\del{\lambda x} }
            \leq&
            (1-\mu) + \mu \cd e^{\lambda}
            \qedhere
        \end{align*}
    \end{proof}
    
    \subsection{Upper Bounding the Sum of Probability of Cumulative Arm Pulling}

    \begin{lemma} \label{lemma:sum-prob-cumul-sum}
        Let $\cbr{E_t}_{t=1}^T$ be a sequence of events determined at the time step $t$ and $N := B_{t_1-1}$. $M$ is an integer such that $1\leq N \leq M \leq T$. Let $t_1, t_2$ be time indices in $\NN$ such that $t_1 < t_2$ and $F_t := \cbr{\sum_{i=1}^t \one\cbr{ E_i } < M}$ which is the event of upper bounding cumulative count 
        Then, it holds deterministically that
        \begin{align}
            \sum_{t=t_1}^{t_2} \one\cbr{ E_t, F_{t-1} }
            \leq 
            M - N
            \label{eqn:sum-prob-cumul-sum}
        \end{align}
    \end{lemma}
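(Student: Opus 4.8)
The plan is to strip away the (irrelevant) probabilistic language and reduce the inequality to an elementary monotonicity-and-counting argument about the cumulative counter $C_t := \sum_{i=1}^t \one\cbr{E_i}$. First I would record two trivial facts: $C_t$ is non-decreasing in $t$, since $C_t - C_{t-1} = \one\cbr{E_t} \ge 0$; and, reading $N$ as the cumulative count of the events $E_i$ through time $t_1-1$, we have $C_{t_1-1} = N$. In this notation $F_{t-1} = \cbr{C_{t-1} < M}$, so the summand $\one\cbr{E_t, F_{t-1}}$ equals $1$ precisely when both $E_t$ occurs and $C_{t-1} \le M-1$; in that case $C_t = C_{t-1} + 1 \le M$.

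Next I would enumerate the contributing time steps: let $t_1 \le s_1 < s_2 < \cdots < s_m \le t_2$ be exactly the indices $t \in \cbr{t_1,\ldots,t_2}$ for which $\one\cbr{E_t, F_{t-1}} = 1$, so that the left-hand side of \eqref{eqn:sum-prob-cumul-sum} equals $m$, and it suffices to show $m \le M - N$. The key observation is that the counter strictly increases across these indices: since $C_{s_j} = C_{s_j-1} + 1$ and $C$ is non-decreasing, $C_{s_{j+1}-1} \ge C_{s_j} = C_{s_j-1} + 1$. Combining this with the base bound $C_{s_1-1} \ge C_{t_1-1} = N$ (here using $s_1 \ge t_1$ and monotonicity of $C$) and a one-line induction gives $C_{s_j-1} \ge N + (j-1)$ for all $j \in [m]$.

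Finally, because $s_j$ is a step on which the summand is $1$, we have $C_{s_j-1} \le M-1$; taking $j = m$ gives $N + (m-1) \le M-1$, i.e. $m \le M - N$, which is exactly \eqref{eqn:sum-prob-cumul-sum}.

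I do not anticipate a real obstacle here: the claim is purely deterministic — the ``events'' carry no probabilistic content — and its entire substance is the monotonicity-plus-pigeonhole step above. The only thing to be careful about is the index bookkeeping: that $F_{t-1}$ reads the counter at time $t-1$ (not $t$), and that the base case of the induction correctly uses $s_1 \ge t_1$ together with the monotonicity of $C$ to get $C_{s_1-1} \ge N$.
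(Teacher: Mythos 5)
Your argument is correct and complete: the reduction to the counter $C_t=\sum_{i=1}^t \one\cbr{E_i}$, the strict increase $C_{s_{j+1}-1}\ge C_{s_j-1}+1$ across contributing indices, and the pigeonhole conclusion $N+(m-1)\le M-1$ are exactly what the lemma needs. Note that the paper states this lemma without any proof at all, so there is nothing to compare against; your interpretation of the (typo-ridden) hypothesis ``$N:=B_{t_1-1}$'' as $N=C_{t_1-1}$ is the one consistent with how the lemma is invoked in the decomposition of $\EE[N_{T,a}]$ (there $E_t=A_t$, $M=u$, $t_1=K+1$, and $N=N_{K,a}=1$), and your proof supplies the missing deterministic counting argument.
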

    
    \subsection{Useful Integral Bound}

    \begin{lemma}\label{lemma:int-exp-bound}
        Let $f(t)=\int_0^t \frac{\exp{(x)}-1}{x} \dx$. We have the inequality $f(t)\leq 2\cdot \frac{\exp{(t)}}{t}$.
    \end{lemma}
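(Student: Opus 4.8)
The plan is to split the argument at $t = 2$ and, in both regimes, exploit the elementary fact that the integrand $x \mapsto \tfrac{e^x-1}{x}$ is nondecreasing on $(0,\infty)$. This monotonicity is easiest to see from the representation $\tfrac{e^x-1}{x} = \int_0^1 e^{sx}\,\diff s$ (extended continuously by the value $1$ at $x=0$), which is manifestly increasing in $x$.

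For $0 < t \le 2$, monotonicity of the integrand gives the crude bound $f(t) \le t \cdot \tfrac{e^t-1}{t} = e^t - 1$, so it suffices to verify $e^t - 1 \le \tfrac{2e^t}{t}$ on this range. Clearing the positive denominator, this is equivalent to $e^t(t-2) \le t$, which holds because the left-hand side is nonpositive whenever $t \le 2$ while the right-hand side is positive.

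For $t > 2$ the crude bound is too weak — it is of order $e^t$ rather than $e^t/t$ — so instead I would write $f(t) = f(2) + \int_2^t \tfrac{e^x-1}{x}\,\diff x$, bound the first piece by $f(2) \le 2\cdot\tfrac{e^2-1}{2} = e^2 - 1$ using monotonicity again, and bound the second piece by $\int_2^t \tfrac{e^x}{x}\,\diff x$. The key step is an antiderivative comparison: since $\tfrac{\diff}{\diff x}\!\bigl(\tfrac{2e^x}{x}\bigr) = \tfrac{2e^x(x-1)}{x^2}$ and $2(x-1)\ge x$ for $x \ge 2$, we get $\tfrac{e^x}{x} \le \tfrac{\diff}{\diff x}\!\bigl(\tfrac{2e^x}{x}\bigr)$ on $[2,\infty)$. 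Integrating this from $2$ to $t$ yields $\int_2^t \tfrac{e^x}{x}\,\diff x \le \tfrac{2e^t}{t} - e^2$, and adding the two pieces gives $f(t) \le (e^2-1) + \bigl(\tfrac{2e^t}{t} - e^2\bigr) = \tfrac{2e^t}{t} - 1 < \tfrac{2e^t}{t}$.

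There is no genuine obstacle here; the only point requiring a little care is recognizing that a single global estimate cannot work — near $t=0$ one must use the boundedness of $\tfrac{e^x-1}{x}$, while for large $t$ one must use the self-improving comparison that reflects the exponential-integral asymptotic $\int^t \tfrac{e^x}{x}\,\diff x \sim \tfrac{e^t}{t}$ — and that splitting precisely at $t=2$ is what makes the constant $2$ in the target bound and the threshold in $2(x-1)\ge x$ align.
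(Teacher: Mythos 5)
Your proof is correct, but it takes a genuinely different route from the paper's. The paper expands the integrand as $\frac{e^x-1}{x} = \sum_{i=0}^\infty \frac{x^i}{(i+1)!}$, integrates term by term to get $f(t) = \sum_{i=0}^\infty \frac{t^{i+1}}{(i+1)\cdot(i+1)!}$, and then uses the factorial inequality $(i+1)\cdot(i+1)! \geq \tfrac{1}{2}(i+2)!$ to dominate the series by $2\sum_{i\geq 2} \frac{t^{i-1}}{i!} \leq \frac{2e^t}{t}$ --- a single uniform argument with no case analysis. You instead split at $t=2$: for small $t$ you use monotonicity of the integrand (via the representation $\frac{e^x-1}{x}=\int_0^1 e^{sx}\,\diff s$) to get the crude bound $e^t-1$ and verify $e^t(t-2)\leq t$ directly, and for large $t$ you use the antiderivative comparison $\frac{e^x}{x} \leq \frac{\diff}{\diff x}\bigl(\frac{2e^x}{x}\bigr)$, valid exactly when $2(x-1)\geq x$, i.e.\ $x\geq 2$. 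All of your steps check out, including the telescoping $\int_2^t \frac{e^x}{x}\,\diff x \leq \frac{2e^t}{t}-e^2$ and the cancellation of the $e^2$ terms. The paper's series argument is shorter and yields the marginally sharper bound $f(t)\leq \frac{2(e^t-1-t)}{t}$; your argument uses only elementary calculus (no series manipulation) and makes transparent why the constant $2$ and the exponential-integral asymptotic $\int^t \frac{e^x}{x}\,\diff x \sim \frac{e^t}{t}$ force the split at $t=2$.
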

    \begin{proof}
        According to the Taylor expansion of $\exp(x)$ at $x=0$, we have
        \begin{align*}
            \frac{\exp(x) - 1}{x} 
            = \fr{\sum_{i=0}^\infty \fr{x^i}{i!} - 1}
                 {x} 
            = \sum_{i=0}^\infty \frac{x^{i}}{(i+1)!}
        \end{align*}
        Then for $f(t)$,
        \begin{align*}
            f(t) 
            =& \int_0^t \sum_{i=0}^\infty \frac{x^i}{(i+1)!} \dx
            \\
            =& 
                \sum_{i=0}^\infty \int_0^t \frac{x^i}{(i+1)!} \dx
            \\
            =& 
                \sum_{i=0}^\infty \frac{t^{i+1}}{ (i+1) \cdot (i+1)!}
            \\
            \leq& 
                2 \cd \sum_{i=0}^\infty \frac{t^{i+1}}{ (i+2)!}
            \\
            =& 
                2 \cd \sum_{i=2}^\infty \frac{t^{i-1}}{ i!}
            \\
            \leq&
                2 \cd \fr{\exp(t)}{t}
            \qedhere
        \end{align*}
    \end{proof}
    
	\begin{lemma} \label{lemma:integral-inequality}
         Given an integrable function $f(x)$ which is monotonically increasing in the range $\RR^+$. For two integers $1 \leq a < b$, we have the following inequality
        \begin{align*}
            \sum_{i=a}^b f(I) \geq f(a) + \int_{a}^{b} f(x) \dx             
        \end{align*}
    \end{lemma}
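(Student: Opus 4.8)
The plan is to prove this by a direct Riemann-sum comparison, exploiting monotonicity of $f$ on each unit interval. First I would observe that since $f$ is monotonically increasing on $\RR^+$, for every integer $i$ with $a+1 \le i \le b$ we have $f(x) \le f(i)$ for all $x \in [i-1,i]$; integrating this pointwise bound over the length-one interval $[i-1,i]$ gives $\int_{i-1}^{i} f(x)\,\dx \le f(i)$.

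Next I would sum this bound over $i = a+1, \dots, b$ and use additivity of the integral over the partition $[a,a+1] \cup \dots \cup [b-1,b]$ of $[a,b]$, obtaining
\[
\int_{a}^{b} f(x)\,\dx \;=\; \sum_{i=a+1}^{b} \int_{i-1}^{i} f(x)\,\dx \;\le\; \sum_{i=a+1}^{b} f(i) \;=\; \del{\sum_{i=a}^{b} f(i)} - f(a).
\]
Rearranging yields $\sum_{i=a}^{b} f(i) \ge f(a) + \int_{a}^{b} f(x)\,\dx$, which is the claimed inequality. The integrability hypothesis on $f$ is what guarantees that each $\int_{i-1}^{i} f$ is well defined and that the splitting of $\int_a^b$ into unit pieces is legitimate.

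There is essentially no obstacle here; the only point requiring care is the bookkeeping of indices, namely that the $b-a$ unit subintervals of $[a,b]$ are naturally indexed by their right endpoints $a+1, \dots, b$, so that $\sum_{i=a+1}^b f(i)$ is exactly $\sum_{i=a}^b f(i)$ with the single term $f(a)$ removed — which is precisely why the additive $f(a)$ term appears on the right-hand side. (Had one instead bounded $\int_{i-1}^i f(x)\,\dx \ge f(i-1)$ and summed over $i$ from $a+1$ to $b$, one would get the complementary inequality $\int_a^b f(x)\,\dx \ge \sum_{i=a}^{b-1} f(i)$, i.e. $\sum_{i=a}^{b} f(i) \le f(b) + \int_a^b f(x)\,\dx$; the lemma only needs the former direction.)
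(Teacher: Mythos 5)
Your proof is correct and follows essentially the same Riemann-sum comparison as the paper: bound each $f(i)$ below by $\int_{i-1}^{i} f(x)\,\dif x$ for $i = a+1,\dots,b$ using monotonicity, sum, and keep the $f(a)$ term aside. If anything, your bookkeeping is cleaner than the paper's own write-up, whose displayed chain ends at $\int_{a-1}^{b} f(x)\,\dif x$ rather than the stated $f(a) + \int_{a}^{b} f(x)\,\dif x$ (a minor index slip; the intended argument is the one you give).
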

    
    \begin{proof}
        Since $f(x)$ is monotonically increasing,
        \begin{align*}
            \sum_{i=a}^b f(i)
            &= f(a) + \sum_{i=a+1}^b f(i) \cdot (i+1 - i)
            \geq
                \sum_{i=a}^b \int_{i-1}^{i} f(x) \dx
            =
                \int_{a-1}^{b} f(x) \dx
        \end{align*}
    \end{proof}
    
	\subsection{Bounding \mathinhead{H}{H}}

    \begin{lemma} \label{lemma: h-concavity-ineq}
        Given $h(\mu_1,\varepsilon_2) = \ln\del{\frac{(1-\mu_1+\varepsilon_2)\mu_1}{(1-\mu_1)(\mu_1-\varepsilon_2)}}$ with $0 < \varepsilon_2 < \mu_1$, there exists an inequality
        \[
            h(\mu_1,\varepsilon_2) \geq \frac{\varepsilon_2(1+\varepsilon_2)}{\mu_1(1-\mu_1+\varepsilon_2)}
        \]
    \end{lemma}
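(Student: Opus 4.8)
The plan is to split $h(\mu_1,\varepsilon_2)$ into two single-logarithm pieces and lower bound each one by the elementary inequality $\ln s \ge 1 - \tfrac1s$, valid for all $s>0$ (which follows from $\ln u \le u-1$ applied to $u = 1/s$). Concretely, I would first observe that, since $0<\varepsilon_2<\mu_1<1$ makes all the arguments positive,
\[
h(\mu_1,\varepsilon_2)
= \ln\frac{(1-\mu_1+\varepsilon_2)\mu_1}{(1-\mu_1)(\mu_1-\varepsilon_2)}
= \ln\frac{\mu_1}{\mu_1-\varepsilon_2} + \ln\frac{1-\mu_1+\varepsilon_2}{1-\mu_1}.
\]

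Next I would apply the $\ln$-lower bound termwise. For the first term, take $s = \frac{\mu_1}{\mu_1-\varepsilon_2}>1$, so that $1-\frac1s = \frac{\varepsilon_2}{\mu_1}$ and hence $\ln\frac{\mu_1}{\mu_1-\varepsilon_2}\ge \frac{\varepsilon_2}{\mu_1}$. For the second term, take $s = \frac{1-\mu_1+\varepsilon_2}{1-\mu_1}>1$, so that $1-\frac1s = \frac{\varepsilon_2}{1-\mu_1+\varepsilon_2}$ and hence $\ln\frac{1-\mu_1+\varepsilon_2}{1-\mu_1}\ge \frac{\varepsilon_2}{1-\mu_1+\varepsilon_2}$. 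Summing the two bounds and putting them over a common denominator yields
\[
h(\mu_1,\varepsilon_2)\ \ge\ \frac{\varepsilon_2}{\mu_1} + \frac{\varepsilon_2}{1-\mu_1+\varepsilon_2}
= \varepsilon_2\cdot\frac{(1-\mu_1+\varepsilon_2)+\mu_1}{\mu_1(1-\mu_1+\varepsilon_2)}
= \frac{\varepsilon_2(1+\varepsilon_2)}{\mu_1(1-\mu_1+\varepsilon_2)},
\]
which is exactly the claimed inequality.

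I do not expect a real obstacle here: the whole argument is two applications of a one-line convexity inequality followed by arithmetic. The only point worth flagging is the choice of lower bound on $\ln$ — using $\ln s \ge 1 - 1/s$ is precisely what makes the right-hand side collapse to the stated closed form, whereas, e.g., a midpoint/convexity estimate of the equivalent integral $\int_{\mu_1-\varepsilon_2}^{\mu_1}\frac{dx}{x(1-x)}$ would give a correct but differently-shaped bound requiring extra algebra to match the target. One harmless edge case is $\mu_1 = 1$ (or $\mu_1-\varepsilon_2 = 0$), where $h = +\infty$ and the inequality is trivial; I would dispatch this in a sentence and otherwise assume $\mu_1\in(0,1)$.
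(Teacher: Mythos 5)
Your proof is correct and is essentially the paper's own argument: the paper also splits $h$ into the same two logarithmic differences and lower-bounds each via the tangent-line inequality $\ln y \le \ln x + \frac{y-x}{x}$ (equivalent to your $\ln s \ge 1 - 1/s$), obtaining exactly the same two terms $\frac{\varepsilon_2}{\mu_1}$ and $\frac{\varepsilon_2}{1-\mu_1+\varepsilon_2}$ before combining over a common denominator. No differences worth noting.
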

    
    \begin{proof}
        Using concavity of logarithm function which is for two nonnegative point $x$, $y$
        \[
        \forall x, y > 0,  \ln{y} \leq \ln{x} + \frac{y-x}{x}
        \]
        We apply this property to get the lower bound $h(\mu_a,\varepsilon_2)$ by
        \begin{align*}
            h(\mu_1, \varepsilon_2) 
            &= 
                \ln\del{\frac{(1-\mu_1+\varepsilon_2)\mu_1}{(1-\mu_1)(\mu_1-\varepsilon_2)}} 
            \\
            &= 
                \ln{ \mu_1} -
                \ln( \mu_1-\varepsilon_2 ) +
                \ln{(1-\mu_1+\varepsilon_2)} - 
                \ln{(1-\mu_1)}
            \\
            &\geq 
                \frac{\varepsilon_2}{\mu_1} +
                \frac{\varepsilon_2}{1-\mu_1+\varepsilon_2}
                    \tag{concavity property of logarithm}
            \\
            &= 
                \frac{\varepsilon_2(1+\varepsilon_2)}{\mu_1(1-\mu_1+\varepsilon_2)}
                \qedhere
        \end{align*}
    \end{proof}

    \begin{lemma} \label{lemma:H-concavity-ineq}
         Given $H:=\defSplitFthree$, $h(\mu_1,\varepsilon_2):= \ln\del{\frac{(1-\mu_1+\varepsilon_2)\mu_1}{(1-\mu_1)(\mu_1-\varepsilon_2)}}$, $0 \leq \varepsilon_2 \leq \fr{1}{2} \mu_1$ and $0 < \mu_1 \leq 1$. $H$  is bounded by the following inequality
         \[
            H \leq \frac{2\dmu_1}{\varepsilon_2^2} + \frac{2}{\varepsilon_2}
         \]
         where $\dmu_1 := (1-\mu_1)\mu_1$.
    \end{lemma}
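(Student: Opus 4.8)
The plan is to reduce the claimed bound on $H$ to the pointwise lower bound on $h(\mu_1,\varepsilon_2)$ that is already established in Lemma~\ref{lemma: h-concavity-ineq}, and then to discharge the remaining elementary inequality using the two hypotheses $\varepsilon_2 \le \frac12\mu_1$ and $\mu_1 \le 1$.

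First I would observe that for $\varepsilon_2 > 0$ the argument of the logarithm defining $h(\mu_1,\varepsilon_2)$ strictly exceeds $1$, so $h(\mu_1,\varepsilon_2) > 0$; consequently squaring the inequality $h(\mu_1,\varepsilon_2) \ge \frac{\varepsilon_2(1+\varepsilon_2)}{\mu_1(1-\mu_1+\varepsilon_2)}$ from Lemma~\ref{lemma: h-concavity-ineq} is legitimate and gives a lower bound on $h^2(\mu_1,\varepsilon_2)$. Plugging this into $H = \bigl((1-\mu_1+\varepsilon_2)(\mu_1-\varepsilon_2)h^2(\mu_1,\varepsilon_2)\bigr)^{-1}$ and cancelling one factor of $(1-\mu_1+\varepsilon_2)$ yields
\[
H \;\le\; \frac{\mu_1^2\,(1-\mu_1+\varepsilon_2)}{(\mu_1-\varepsilon_2)\,(1+\varepsilon_2)^2\,\varepsilon_2^2}.
\]

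Next I would crudely bound the two remaining nuisance factors: the hypothesis $\varepsilon_2 \le \frac12\mu_1$ gives $\mu_1-\varepsilon_2 \ge \frac12\mu_1$, hence $\frac{\mu_1^2}{\mu_1-\varepsilon_2} \le 2\mu_1$; and $(1+\varepsilon_2)^2 \ge 1$. Together these yield $H \le \frac{2\mu_1(1-\mu_1+\varepsilon_2)}{\varepsilon_2^2} = \frac{2\mu_1(1-\mu_1)}{\varepsilon_2^2} + \frac{2\mu_1\varepsilon_2}{\varepsilon_2^2}$, and finally $\mu_1 \le 1$ turns the last term into $\frac{2}{\varepsilon_2}$, producing exactly $H \le \frac{2\dmu_1}{\varepsilon_2^2} + \frac{2}{\varepsilon_2}$, which is the claim.

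I do not expect a genuine obstacle here; the only points needing care are (i) verifying $h(\mu_1,\varepsilon_2) > 0$ so that the squaring step is valid, and dismissing the boundary cases $\varepsilon_2 = 0$ or $\mu_1 \in \{0,1\}$, where $H$ and/or the right-hand side degenerate (to $+\infty = +\infty$ or $0 \le \text{anything}$) and the inequality holds trivially; and (ii) checking that the slack introduced by the estimates $\mu_1-\varepsilon_2 \ge \frac12\mu_1$ and $(1+\varepsilon_2)^2 \ge 1$ is not wasteful — a quick inspection shows the constants land exactly on $2$, so no sharper estimate is required.
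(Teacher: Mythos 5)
Your proposal is correct and follows essentially the same route as the paper: both invoke the pointwise lower bound $h(\mu_1,\varepsilon_2)\ge \frac{\varepsilon_2(1+\varepsilon_2)}{\mu_1(1-\mu_1+\varepsilon_2)}$ from Lemma~\ref{lemma: h-concavity-ineq}, substitute it into the definition of $H$, and then use $\mu_1-\varepsilon_2\ge\frac12\mu_1$, $(1+\varepsilon_2)^2\ge1$, and $\mu_1\le1$ to land exactly on $\frac{2\dmu_1}{\varepsilon_2^2}+\frac{2}{\varepsilon_2}$. Your extra care about positivity of $h$ and the degenerate boundary cases is a minor addition the paper leaves implicit.
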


    \begin{proof}
        According to Lemma \ref{lemma: h-concavity-ineq}, $h(\mu_1,\varepsilon_2)$ is lower bounded by
        \[
            h(\mu_1,\varepsilon_2) \geq \frac{\varepsilon_2(1+\varepsilon_2)}{\mu_1(1-\mu_1+\varepsilon_2)}
        \]
        To upper bound $H$,
        \begin{align*}
            H 
            &\leq 
                \frac{1}
                     {(\mu_1-\varepsilon_2)(1-\mu_1+\varepsilon_2) 
                        \del{ \frac{\varepsilon_2(1+\varepsilon_2)}{\mu_1(1-\mu_1+\varepsilon_2)}}^2
                     }
            \\
            &=
                \frac{ \mu_1^2 (1-\mu_1+\varepsilon_2) }
                     { (\mu_1-\varepsilon_2) \varepsilon_2^2 (1+\varepsilon_2)^2 }
            \\
            &=
                \frac{\mu_1}{\mu_1-\varepsilon_2} \cdot
                \del{ \frac{1}{1+\varepsilon_2} }^2 \cdot
                \frac{(1-\mu_1+\varepsilon_2) \mu_1}{\varepsilon_2^2}
            \\
            &\leq
                2 \cdot
                1 \cdot
                \frac{ (1-\mu_1+\varepsilon_2)\mu_1 }{ \varepsilon_2^2 }
                        \tag{$0 \leq \varepsilon_2 \leq \fr{\mu_1}{2}$}
            \\
            &\leq
                \frac{2\dot\mu}{\varepsilon_2^2} +
                \frac{2}{\varepsilon_2}
        \qedhere
        \end{align*}
    \end{proof}

    \subsection{Probability Transferring Inequality}
    \begin{lemma} \label{lemma:prob-transfer}
        Let $\cH_{t-1}$ be the $\sigma$-field generated by historical trajectory up to time (and including) $t-1$, which is defined as $\sigma\rbr{ \{I_i, r_i\}_{i=1}^{t-1} }$ ($I_i$ is the arm pulling at the time round $i$ and $r_i$ is its return reward).
        Given the algorithm~\ref{alg:KL-MS}, the probability of pulling a sub-optimal arm $a$ has the following relationship.
        \[
            \PP(I_t = a|\cH_{t-1}) 
            \leq 
            \exp(-N_{t-1,a} \kl(\hat{\mu}_{t-1,a},\hat{\mu}_{t-1,\max}))
        \]
        Also,
        \[
            \PP(I_t = a|\cH_{t-1}) 
            \leq 
            \exp(N_{t-1,1} \kl(\hat{\mu}_{t-1,1},\hat{\mu}_{t-1,\max}) ) \PP(I_t = 1 \mid \cH_{t-1})
        \]
    \end{lemma}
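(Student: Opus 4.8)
The plan is to read both inequalities directly off the closed-form sampling rule of Algorithm~\ref{alg:KL-MS}. Throughout, I would fix a time step $t \geq K+1$ (so that all the empirical means appearing below are well-defined) and recall that, conditioned on $\cH_{t-1}$, the arm $I_t$ is drawn from $p_t$, so $\PP(I_t = a \mid \cH_{t-1}) = p_{t,a}$, where $p_{t,a} = \exp(-N_{t-1,a}\kl(\hat{\mu}_{t-1,a},\hat{\mu}_{t-1,\max}))/M_t$ and $M_t = \sum_{a'=1}^K \exp(-N_{t-1,a'}\kl(\hat{\mu}_{t-1,a'},\hat{\mu}_{t-1,\max}))$ is the normalizer from step~\ref{step:sampling-prob}.

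For the first inequality, the key observation is that $M_t \geq 1$. Indeed, let $a^\star$ be an arm attaining the maximum empirical mean at time $t-1$, i.e.\ $\hat{\mu}_{t-1,a^\star} = \hat{\mu}_{t-1,\max}$; then $\kl(\hat{\mu}_{t-1,a^\star},\hat{\mu}_{t-1,\max}) = \kl(\hat{\mu}_{t-1,\max},\hat{\mu}_{t-1,\max}) = 0$, so the corresponding summand in $M_t$ equals $1$, and since all summands are nonnegative we get $M_t \geq 1$. Dividing the numerator of $p_{t,a}$ by $M_t \geq 1$ then gives $p_{t,a} \leq \exp(-N_{t-1,a}\kl(\hat{\mu}_{t-1,a},\hat{\mu}_{t-1,\max}))$, which is the first claim.

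For the second inequality, I would form the ratio $p_{t,a}/p_{t,1}$, in which the common normalizer $M_t$ cancels:
\[
\frac{p_{t,a}}{p_{t,1}} = \exp\del{ N_{t-1,1}\kl(\hat{\mu}_{t-1,1},\hat{\mu}_{t-1,\max}) - N_{t-1,a}\kl(\hat{\mu}_{t-1,a},\hat{\mu}_{t-1,\max}) }.
\]
Since binary KL divergence is nonnegative and $N_{t-1,a} \geq 0$, the term $-N_{t-1,a}\kl(\hat{\mu}_{t-1,a},\hat{\mu}_{t-1,\max})$ is nonpositive, hence $p_{t,a}/p_{t,1} \leq \exp(N_{t-1,1}\kl(\hat{\mu}_{t-1,1},\hat{\mu}_{t-1,\max}))$. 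Multiplying through by $p_{t,1} = \PP(I_t = 1 \mid \cH_{t-1})$ and using $p_{t,a} = \PP(I_t = a \mid \cH_{t-1})$ yields the second claim; note that $p_{t,1} > 0$ always, since the exponential is strictly positive, so there is no division-by-zero issue.

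The only (minor) obstacle is the bookkeeping around well-definedness: one must restrict to $t \geq K+1$, where by the initialization phase (steps~\ref{step:t-leq-k-1}--\ref{step:t-leq-k-2}) every arm has been pulled at least once, so $N_{t-1,a} \geq 1$ and every $\hat{\mu}_{t-1,\cdot}$ entering a $\kl$ term is defined; all invocations of this lemma in the regret analysis are of this form. No probabilistic or concentration argument is needed — both bounds are deterministic consequences of the sampling rule together with the nonnegativity of $\kl(\cdot,\cdot)$ and the fact that the empirically best arm contributes a summand of $1$ to $M_t$.
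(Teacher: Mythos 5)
Your proof is correct and follows essentially the same route as the paper's: the first bound comes from $M_t \geq 1$ (since the empirically best arm contributes a summand of $\exp(0)=1$ to the normalizer), and the second from taking the ratio $p_{t,a}/p_{t,1}$ and using $\exp(-N_{t-1,a}\kl(\hat{\mu}_{t-1,a},\hat{\mu}_{t-1,\max})) \leq 1$. Your added remarks on well-definedness for $t \geq K+1$ and positivity of $p_{t,1}$ are sensible but not substantively different from the paper's argument.
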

    
    \begin{proof}
        For the first item, recall the definition of $p_{t,a} = \exp\del{-N_{t-1,a} \kl(\hat{\mu}_{t-1,a},\hat{\mu}_{t-1,\max})} / M_t$.
        \begin{align*}
            & \PP(I_t = a|\cH_{t-1}) = p_{t,a}
            \\
            =&
                \frac{\exp(-N_{t-1,a} \kl(\hat{\mu}_{t-1,a},\hat{\mu}_{t-1,\max}))}{M_t}
            \\
            \leq& 
                \exp(-N_{t-1,a} \kl(\hat{\mu}_{t-1,a},\hat{\mu}_{t-1,\max}))
            \\
        \end{align*}
        Since $M_t \geq 1$ from the fact that $KL(\hat{\mu}_{t-1,a},\hat{\mu}_{t-1,\max}) = 0$ when $a = \arg\max_{i\in[K]}\hat{\mu}_{t-1,i}$, recall the definition of$M_t$, we have $M_t \geq 1$.
        
        For the second item, recall the algorithm setting, there exists the following relationship
        \begin{align*}
            & \PP(I_t = a|\cH_{t-1})
            \\
            =&
                \frac{\exp(-N_{t-1,a} \kl(\hat{\mu}_{t-1,a},\hat{\mu}_{t-1,\max})}{M_t}
            \\
            =& 
                \frac{\exp(-N_{t-1,a} \kl(\hat{\mu}_{t-1,a},\hat{\mu}_{t-1,\max}) )}
                     {\exp(-N_{t-1,1} \kl(\hat{\mu}_{t-1,1},\hat{\mu}_{t-1,\max}) )}
                \cdot
                \frac{\exp(-N_{t-1,1} \kl(\hat{\mu}_{t-1,1},\hat{\mu}_{t-1,\max}) )}{M_t}
            \\
            =&
                \frac{\exp(-N_{t-1,a} \kl(\hat{\mu}_{t-1,a},\hat{\mu}_{t-1,\max}) )}
                     {\exp(-N_{t-1,1} \kl(\hat{\mu}_{t-1,1},\hat{\mu}_{t-1,\max}) )}
                \cdot
                \PP(I_t = 1 \mid \cH_{t-1})
            \\
            \leq&
                \frac{ 1 }
                     {\exp(-N_{t-1,1} \kl(\hat{\mu}_{t-1,1},\hat{\mu}_{t-1,\max}) )}
                \cdot
                \PP(I_t = 1 \mid \cH_{t-1})
            \\
            =&
                \exp(N_{t-1,1} \kl(\hat{\mu}_{t-1,1},\hat{\mu}_{t-1,\max}) )
                \PP(I_t = 1 \mid \cH_{t-1})
        \end{align*}
        The first inequality is due to $ \kl(\hat{\mu}_{t-1,a}, \hat{\mu}_{t-1,\max}) \geq 0 $ and $ \exp(-N_{t-1,a} \kl(\hat{\mu}_{t-1,a}, \hat{\mu}_{t-1,\max})) \leq 1$. 
    \end{proof}
    
    \subsection{Bounding the Deviation of Running Averages from the Population Mean}
    
    \begin{lemma} \label{lemma:seq-estimator-deviation-bernoulli} 
        The distribution of random variable $X$ is $\nu_i$ which is a distribution with bounded support $\sbr{0,1}$ and mean $\mu$. Suppose that there is a sequence of sample $\cbr{X_i}_{i=1}^k$ draw i.i.d. from $\nu_i$. Denote $\sum_{i=1}^s X_{i}/s$ as $\hat{\mu}_{s}$. 

        Let $\epsilon > 0$, assume $T \geq k \geq 1$. 
        Then,
        \begin{align*}
            \mathbb{P}\del{
                \exists 1 \leq s \leq k: 
                    \kl(\hat{\mu}_s, \mu) \geq \frac{2\ln(T/s)}{s}
                } 
            \leq 
            \frac{2k}{T}
        \end{align*}
    \end{lemma}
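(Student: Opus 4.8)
The plan is to establish this as a time-uniform (maximal) Chernoff bound via a \emph{peeling} argument over the index $s$, splitting the two-sided deviation into a lower and an upper tail.

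\textbf{Step 1: reduce to one-sided monotone boundaries.} Split according to the sign of $\hat{\mu}_s-\mu$. Since $a\mapsto\kl(a,\mu)$ is decreasing on $[0,\mu]$ and increasing on $[\mu,1]$, the event $\{\kl(\hat{\mu}_s,\mu)\ge 2\ln(T/s)/s\}$ equals $\{\hat{\mu}_s\le a_s^-\}\cup\{\hat{\mu}_s\ge a_s^+\}$, where $a_s^{\pm}$ are the two roots of $\kl(a,\mu)=2\ln(T/s)/s$ (a side is the empty event whenever its level exceeds $\kl(0,\mu)$, resp.\ $\kl(1,\mu)$, which only helps). Because $s\mapsto 2\ln(T/s)/s$ is nonincreasing for $s\le T/e$, the lower threshold $s\mapsto a_s^-$ is nondecreasing and the upper threshold nonincreasing on that range; the residual range $s\in(T/e,k]$ carries a threshold below $2/s$ and can be merged into the last block or handled crudely. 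It thus suffices to show $\PP(\exists\,1\le s\le k:\hat{\mu}_s\le a_s^-)\le k/T$, the upper tail being symmetric (using the other side of the MGF bound, Lemma~\ref{lemma:mgf-bounded-support}).

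\textbf{Step 2: peel and bound each block.} Partition $\{1,\dots,k\}$ into geometric blocks $B_j=[\rho^{j},\rho^{j+1})$. Monotonicity of $a_\cdot^-$ gives $\{\exists s\in B_j:\hat{\mu}_s\le a_s^-\}\subseteq\{\exists s\ge\lceil\rho^{j}\rceil:\hat{\mu}_s\le c_j\}$ with $c_j:=a_{\lceil\rho^{j+1}\rceil}^-$, so $\kl(c_j,\mu)\approx 2\ln(T/\rho^{j+1})/\rho^{j+1}$. Now apply the maximal Chernoff inequality (Lemma~\ref{lemma:maximal-inequality}); or, if only its pointwise form is available, apply Ville's inequality to the supermartingale $W_s^{\lambda}=(\mu e^{\lambda}+1-\mu)^{-s}\exp(\lambda\sum_{i\le s}X_i)$, which satisfies $W_0^\lambda=1$ by Lemma~\ref{lemma:mgf-bounded-support}, with $\lambda=\log\tfrac{c_j(1-\mu)}{\mu(1-c_j)}<0$: on $\{\hat{\mu}_s\le c_j\}$ one has $W_s^{\lambda}\ge e^{s\,\kl(c_j,\mu)}\ge e^{\rho^{j}\kl(c_j,\mu)}$, and $\rho^{j}\kl(c_j,\mu)\approx\tfrac{2}{\rho}\ln(T/\rho^{j+1})$, whence $\PP(B_j\text{ crosses})\lesssim(\rho^{j+1}/T)^{2/\rho}$. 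Summing this (near-)geometric series over the $O(\ln k)$ blocks yields a bound of order $k/T$ for the lower tail, and doubling gives the two-sided bound $O(k/T)$.

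\textbf{Main obstacle.} The above mechanics are routine; the genuinely delicate point is pinning the constant to exactly $2$. Plain dyadic peeling ($\rho=2$) loses a multiplicative constant (each block contributes $\approx 2\rho^{j+1}/T$, and the $2\ln(T/s)$ form is tuned precisely so that the exponent only loses the "off-by-one" factor $\rho/2$), while a very fine peeling ($\rho\downarrow 1$) degrades when $k$ is comparable to $T$. The clean constant is recovered by combining a peeling base adapted to the $2\ln(T/s)$ boundary with the observation that the bound is only nontrivial when $k\ll T$ (for $k\ge T/2$ the claim $\PP\le 2k/T$ is vacuous), so that the leftover constants can be absorbed; one also has to dispose of the degenerate regimes — $s>T/e$, and indices where a tail's level exceeds $\kl(0,\mu)$ or $\kl(1,\mu)$, making that side's event empty — all of which only improve the bound.
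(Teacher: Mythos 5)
Your approach---peel over $s$ and, on each block, apply a maximal Chernoff bound obtained from Ville's inequality on the exponential supermartingale---is exactly the paper's proof (its Lemma~\ref{lemma:maximal-inequality} is precisely your $W_s^{\lambda}$ argument), and the mechanics of your Step 2 are sound. The one concrete device you are missing, which dissolves your ``main obstacle,'' is the direction of the peeling: the paper anchors the dyadic blocks at $k$, peeling over $s\in(k/2^{n+1},\,k/2^{n}]$ for $n=0,1,\dots$. With block minimum $\lceil k/2^{n+1}\rceil$ and threshold level $\frac{2^{n+1}\ln(2^{n}T/k)}{k}$ (the smallest level in the block, attained at its right endpoint $k/2^{n}$), the exponent is at least $\ln(2^{n}T/k)$, so block $n$ contributes at most $k/(2^{n}T)$ and the series sums to exactly $2k/T$. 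Your bottom-anchored blocks $[\rho^{j},\rho^{j+1})$ allow the top two blocks to each contribute on the order of $k/T$, which is the source of your stray constant; no adapted peeling base and no appeal to vacuity for $k\ge T/2$ is needed.

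Two smaller remarks. First, the boundary $s\mapsto 2\ln(T/s)/s$ has derivative $-(2+2\ln(T/s))/s^{2}<0$ for all $s<eT$, so it is decreasing on the entire range $1\le s\le k\le T$; your ``residual range $s>T/e$'' never needs separate treatment. Second, your worry about the two-sided split is legitimate: taking a union over the lower and upper tails doubles the per-block bound and yields $4k/T$ rather than $2k/T$ (the paper's displayed chain applies its one-tail maximal inequality directly to the two-sided event and elides this factor). Only the lower tail is used where the lemma is invoked, namely to bound $\PP(\mathcal{E}^{c})$ in the $F3_1$ analysis, so nothing downstream is affected; but for the two-sided statement as written you should either prove only the lower-tail version or accept the extra factor of $2$.
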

    
    \begin{proof}
        We apply the peeling device $\frac{k}{2^{n+1}} < s \leq \frac{k}{2^n}$ to upper bound the upper left term
        \begin{align}
            & \mathbb{P}\del{
                \exists s \leq k: 
                \kl(\hat{\mu}_s, \mu) \geq \frac{2\ln( T/s )}{s}
                }
            \\
            \leq& \sum_{n=0}^\infty
                \mathbb{P}\del{
                \exists s:
                    s \in [k] \cap (\frac{k}{2^{n+1}}, \frac{k}{2^n}],
                    \kl(\hat{\mu}_s, \mu) \geq \frac{2\ln( T/s )}{s}
                }
            \\
            \leq& \sum_{n=0}^\infty
                \mathbb{P}\del{
                \exists s: 
                    s \in [k] \cap (\frac{k}{2^{n+1}}, \frac{k}{2^n}],
                    \kl(\hat{\mu}_s, \mu) \geq \frac{2^{n+1}\ln( 2^{n} T/k )}{k}
                }
                        \tag{Relax $s$ to the maximum in each subcase}
            \\
                    \label{eqn:before-subcases}
            \end{align}
            
            For $n \geq \lfloor \log_2 k\rfloor + 1$, $[k] \cap (\frac{k}{2^{n+1}}, \frac{k}{2^n}] = \emptyset$ , which means that the event
            $\cbr{\exists s: 
                    s \in [k] \cap (\frac{k}{2^{n+1}}, \frac{k}{2^n}],
                    \kl(\hat{\mu}_s, \mu) \geq \frac{2^{n+1}\ln( 2^{n} T/k )}{k}}$
            cannot happen and its probability is 0 trivially. Therefore,

            \begin{align*}
            \eqref{eqn:before-subcases} 
            =&
            \sum_{n=0}^{\lfloor \log_2 k\rfloor}
                \mathbb{P}\del{
                \exists s: 
                    s \in [k] \cap (\frac{k}{2^{n+1}}, \frac{k}{2^n}],
                    \kl(\hat{\mu}_s, \mu) \geq \frac{2^{n+1}\ln\del{ 2^{n} T/k }}{k}
                }
            +
            \sum_{n=\lfloor \log_2 k\rfloor+1}^\infty
                0
            \\
            \leq&
            \sum_{n=0}^{\lfloor \log_2 k\rfloor}
                \mathbb{P}\del{
                    \exists s \geq \fr{k}{2^{n+1}},
                    \kl(\hat{\mu}_s, \mu) \geq \frac{2^{n+1}\ln\del{ 2^{n} T/k }}{k}
                }
            \\
            =&
            \sum_{n=0}^{\lfloor \log_2 k\rfloor}
                \mathbb{P}\del{
                    \exists s \geq \lceil \fr{k}{2^{n+1}} \rceil,
                    \kl(\hat{\mu}_s, \mu) \geq \frac{2^{n+1}\ln\del{ 2^{n} T/k }}{k}
                }
            \\
            \leq& \sum_{n=0}^{\lfloor \log_2 k\rfloor}
                \exp\del{ 
                    - \lceil \frac{k}{2^{n+1}} \rceil
                    \cdot 
                    \fr{2^{n+1} \ln\del{2^{n} T / k}}{k}
                }
                    \tag{Maximal Inequality  Lemma \ref{lemma:maximal-inequality}}
            \\
            =& \sum_{n=0}^\infty
                \exp\del{ -\ln\frac{2^{n} T}{k} }
            \\
            =& \sum_{n=0}^\infty
                \frac{k}{2^{n} T}
            \\
            =& \frac{2k}{T}
        \end{align*}

        The first inequality relies on the Lemma~\ref{lemma:maximal-inequality}, for each choice of $n$, we set $y$ to be $\frac{2^{n+1}\ln\del{ 2^{n+1} T/k }}{k}$.
    \end{proof}

    The following lemma is standard in the literature, see e.g.~\cite{menard17minimax}; we include a proof for completeness.
    
    \begin{lemma} \label{lemma:maximal-inequality}
         Given a natural number $N$ in $\NN^+$, and a sequence of R.V.s $\cbr{X_i}_{i=1}^\infty$ is drawn from a distribution $\nu$ with bounded support $[0,1]$ and mean $\mu$. Let $\hat{\mu}_n = \frac{1}{n}\sum_{i=1}^n X_i, n \in \NN$, which is the empirical mean of the first $n$ samples.
         
         Then, for $y \geq 0$ 
        \begin{align}
            \PP(\exists n \geq N, \kl(\hat{\mu}_n, \mu) \geq y, \hat{\mu}_n < \mu ) 
            \leq& 
            \exp(-N y) \label{eqn:maximal-inequality-lower}
            \\
            \PP(\exists n \geq N, \kl(\hat{\mu}_n, \mu) \geq y, \hat{\mu}_n > \mu ) 
            \leq& 
            \exp(-N y) \label{eqn:maximal-inequality-upper}
        \end{align}
        Consequently, the following inequalities are also true:
        \begin{align}
            \PP(\hat{\mu}_N < \mu - \varepsilon ) 
            \leq 
            \exp(-N\cdot \kl(\mu - \varepsilon, \mu))
            \label{eqn:chernoff-lower-tail-bound}
            \\
            \PP(\hat{\mu}_N > \mu + \varepsilon ) 
            \leq 
            \exp(-N\cdot \kl(\mu + \varepsilon, \mu))
            \label{eqn:chernoff-upper-tail-bound}
        \end{align}

    \end{lemma}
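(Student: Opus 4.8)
The plan is to derive both maximal inequalities from a single nonnegative-supermartingale bound and then read off the two one-shot Chernoff bounds as corollaries. Throughout write $\psi(\lambda) := \ln\del{1-\mu+\mu e^{\lambda}}$ for the log-MGF of $\Ber(\mu)$, and recall the classical Legendre-duality identity $\sup_{\lambda\in\RR}\del{\lambda x - \psi(\lambda)} = \kl(x,\mu)$ for $x\in[0,1]$, whose maximizer (when $x\in(0,1)$) is $\lambda^\star(x) = \ln\frac{x(1-\mu)}{\mu(1-x)}$. The first step is a reduction: since $x\mapsto\kl(x,\mu)$ is continuous, strictly decreasing on $[0,\mu]$ and strictly increasing on $[\mu,1]$, for $y\in(0,\kl(0,\mu)]$ the event $\cbr{\kl(\hat\mu_n,\mu)\ge y,\ \hat\mu_n<\mu}$ equals $\cbr{\hat\mu_n\le x_-}$, where $x_-\in[0,\mu)$ is the unique root of $\kl(x_-,\mu)=y$ (for $y=0$ the claimed bound is trivial, and for $y>\kl(0,\mu)$ the event is empty). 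Hence it suffices to prove that for every $x\in[0,\mu)$ and every $N\in\NN^+$,
\[
\PP\del{\exists n\ge N:\ \hat\mu_n\le x}\ \le\ \exp\del{-N\,\kl(x,\mu)},
\]
and symmetrically $\PP(\exists n\ge N:\hat\mu_n\ge x)\le\exp(-N\kl(x,\mu))$ for $x\in(\mu,1]$.

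To prove the threshold bound, fix $x\in(0,\mu)$, put $\lambda^\star = \lambda^\star(x) < 0$, and consider the process $W_n := \exp\del{\lambda^\star\sum_{i=1}^n X_i - n\psi(\lambda^\star)} = \exp\del{n\del{\lambda^\star\hat\mu_n - \psi(\lambda^\star)}}$. By Lemma~\ref{lemma:mgf-bounded-support}, $\EE\sbr{e^{\lambda^\star X}}\le e^{\psi(\lambda^\star)}$, so $\EE\sbr{W_n\mid X_1,\ldots,X_{n-1}} = W_{n-1}\,\EE\sbr{e^{\lambda^\star X_n}}e^{-\psi(\lambda^\star)} \le W_{n-1}$; thus $(W_n)_{n\ge0}$ is a nonnegative supermartingale with $W_0=1$. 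On $\cbr{\hat\mu_n\le x}$ with $n\ge N$, because $\lambda^\star<0$ we get $\lambda^\star\hat\mu_n \ge \lambda^\star x$, hence $W_n \ge \exp\del{n\del{\lambda^\star x-\psi(\lambda^\star)}} = \exp\del{n\,\kl(x,\mu)} \ge \exp\del{N\,\kl(x,\mu)}$, using $\kl(x,\mu)\ge0$. Therefore $\cbr{\exists n\ge N:\hat\mu_n\le x}\subseteq\cbr{\sup_{n\ge0}W_n\ge e^{N\kl(x,\mu)}}$, and Ville's maximal inequality for nonnegative supermartingales, $\PP(\sup_{n}W_n\ge a)\le\EE\sbr{W_0}/a = 1/a$, yields the bound. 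The upper-tail statement \eqref{eqn:maximal-inequality-upper} follows by the mirror-image choice $\lambda^\star(x)>0$ for $x\in(\mu,1)$; the endpoints $x\in\cbr{0,1}$ (and $\mu\in\cbr{0,1}$) are degenerate and handled directly, e.g.\ $\PP(\hat\mu_n=0\text{ for some }n\ge N)=\PP(X_1=\cdots=X_N=0)\le(1-\mu)^N=e^{-N\kl(0,\mu)}$ since $\mu=\EE\sbr{X\one\cbr{X>0}}\le\PP(X>0)$. Finally, \eqref{eqn:chernoff-lower-tail-bound} is immediate from \eqref{eqn:maximal-inequality-lower}: when $\mu-\varepsilon\ge0$, monotonicity of $\kl(\cdot,\mu)$ gives $\cbr{\hat\mu_N<\mu-\varepsilon}\subseteq\cbr{\exists n\ge N:\kl(\hat\mu_n,\mu)\ge\kl(\mu-\varepsilon,\mu),\ \hat\mu_n<\mu}$ (take $n=N$), so apply \eqref{eqn:maximal-inequality-lower} with $y=\kl(\mu-\varepsilon,\mu)\ge0$; \eqref{eqn:chernoff-upper-tail-bound} is symmetric.

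The conceptual content is entirely in the supermartingale step and is standard; the part that needs care — and which I view as the main obstacle to a fully rigorous writeup — is the bookkeeping around the reduction and the degenerate cases: verifying the monotonicity/sign statements ($\kl(\cdot,\mu)$ decreasing to the left of $\mu$, increasing to the right, $\lambda^\star(x)$ correspondingly negative/positive) so that the inequality $\lambda^\star\hat\mu_n\ge\lambda^\star x$ points the right way; checking that the set identity $\cbr{\kl(\hat\mu_n,\mu)\ge y,\hat\mu_n<\mu}=\cbr{\hat\mu_n\le x_-}$ is correct including when $y$ exceeds $\kl(0,\mu)$; and making sure Ville's inequality is invoked for a genuinely nonnegative supermartingale started at $1$, so that the constant is exactly $1$ and no extra factor appears in front of $e^{-N\kl}$.
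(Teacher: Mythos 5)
Your proposal is correct and follows essentially the same route as the paper's proof: the exponential process $\exp(n\lambda\hat{\mu}_n - n g_\mu(\lambda))$ is shown to be a supermartingale via the same MGF bound (Lemma~\ref{lemma:mgf-bounded-support}), the event is rewritten as a threshold crossing at the unique root $z$ of $\kl(z,\mu)=y$ with the tilting parameter $\lambda^\star(z)$ of the corresponding sign, and Ville's maximal inequality closes the argument, with the Chernoff bounds read off by choosing $y=\kl(\mu\mp\varepsilon,\mu)$. The only difference is that you treat the degenerate endpoint cases slightly more explicitly, which is harmless.
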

    
    \begin{proof}
    First, we prove a useful fact that for any $\lambda \in \RR$, $S_n(\lambda) := \exp\del{ n \hat{\mu}_n  \lambda - n g_\mu(\lambda) }$ (abbrev. $S_n$) is a super-martingale sequence when $n \in \NN^+$ and $n \geq N$, where $g_\mu(\lambda) := \ln\del{1-\mu+\mu e^\lambda}$ is the log moment generating function of $\Ber(\mu)$. 

    Then, we have the following inequalities to finish the proof of the above fact:

        \begin{align*}
            \EE\sbr{ S_{n+1} \mid S_n, \dots, S_1}
            =&
            \EE\sbr{ S_{n+1} \mid S_n}
            \\
            =&
            \EE\sbr{ \exp\del{ (n+1) \hat{\mu}_{n+1}  \lambda - (n+1) g_\mu(\lambda) } \mid S_n}
            \\
            =&
            \EE\sbr{ S_n \cd \exp\del{  X_{n+1}  \lambda - g_\mu(\lambda) } \mid S_n}
            \\
            =&
            S_n \cd \fr { \EE\sbr{ \exp\del{  X_{n+1}  \lambda }}}
                        { \exp\del{g_\mu(\lambda)}}
            \\
            \leq&
            S_n \cd \fr{ 1-\mu+\mu e^{\lambda} }{ 1-\mu+\mu e^{\lambda} }
            = S_n
                \tag{Lemma \ref{lemma:mgf-bounded-support}}
        \end{align*}
        here, for the first equality, note that $S_{n+1}$, which is determined by $\hat{\mu}_{n+1}$ and $\hat{\mu}_{n+1}$ is conditionally independent of the trajectory up to time step $n-1$ given the condition $S_n$. 
        The second and third equalities are due to the definitions of $S_{n+1}$ and $S_n$ respectively. 
        In the first inequality, we apply Lemma \ref{lemma:mgf-bounded-support} to upper bound the numerator $\EE[\exp\del{X_{n+1} \lambda}]$ by $1-\mu+\mu e^\lambda$.
        
        We now prove Eq.~\eqref{eqn:maximal-inequality-lower} and Eq.~\eqref{eqn:maximal-inequality-upper} respectively.
        
        For Eq.~\eqref{eqn:maximal-inequality-lower}, we consider two cases: 

        \paragraph{Case 1: $y > \kl(0,\mu) = \ln\frac{1}{1-\mu}$.} In this case, event
        $\kl(\hat{\mu}_n, \mu) \geq y$ can never happen. Therefore, $\mathrm{LHS} = 0 \leq \mathrm{RHS}$.

        \paragraph{Case 2: $y \leq \kl(0,\mu)$.}
         In this case, there exists a unique $z_0 \in [0, \mu)$ such that $\kl(z_0, \mu) = y$. 
        We denote $\lambda_0 := \ln\frac{z_0(1-\mu)}{(1-z_0)\mu} < 0$.

        Observe that 
        \[
            y = \kl(z_0, \mu) = z_0 \lambda_0 - g_\mu(\lambda_0)
        \]
        
        Therefore, $\mathrm{LHS}$ of Eq.~\eqref{eqn:maximal-inequality-lower} is equal to
        \begin{align}
        &
        \PP(\exists n \geq N, \kl(\hat{\mu}_n, \mu) \geq y, \hat{\mu}_n < \mu ) 
        \\
        =& 
            \PP\del{\exists n \geq N, \hat{\mu}_n \leq z_0}
        \\
        \leq& 
            \PP\del{\exists n \geq N, n \hat{\mu}_n \lambda_0 - n g_{\mu}(\lambda_0) \geq n z_0 \lambda_0 - n g_{\mu}(\lambda_0) }
                \tag{$\lambda_0 <0$ and $\hat{\mu}_n \leq z_0$}
        \\
        \leq&
            \PP\del{\exists n \geq N, 
                n \hat{\mu}_n \lambda_0 - n g_{\mu}(\lambda_0)  \geq n y }
                    \tag{By the definition of $z_0$}
        \\
        \leq&
            \PP\del{ \exists n \geq N, 
                \exp(n \lambda_0 \hat{\mu}_n - n g_\mu(z_0))  \geq \exp(N y) }
        \label{eqn:s-n-event-prob}
        \\
        = & 
        \PP(\exists n \geq N, S_n(\lambda_0) \geq \exp(Ny)) \\
        \leq &
        \frac{\EE[S_N(\lambda_0)]}{\exp(Ny)}
        \leq 
        \tag{Ville's maximal inequality}
        \exp(-Ny)
        \end{align}

      For Eq.~\eqref{eqn:maximal-inequality-upper}, we consider two cases: 

        \paragraph{Case 1: $y > \kl(1,\mu) = \ln\frac{1}{\mu}$.} In this case, event
        $\kl(\hat{\mu}_n, \mu) \geq y$ can never happen. Therefore, $\mathrm{LHS} = 0 \leq \mathrm{RHS}$.

        \paragraph{Case 2: $y \leq \kl(1,\mu)$.}
        In this case, there exists a unique $z_1 \in (\mu, 1]$ such that $\kl(z_1,\mu)=y$. Let $\lambda_1 := \ln\del{\frac{z_1(1-\mu)}{(1-z_1)\mu}} > 0$. 
        Observe that 
        \[
            y = \kl(z_1, \mu) = z_1 \lambda_1 - g_\mu(\lambda_1)
        \]
        Then we have
        \begin{align*}
            &
            \PP(\exists n \geq N, \kl(\hat{\mu}_n, \mu) \geq y, \hat{\mu}_n > \mu )
            \\
            =& 
                \PP\del{\exists n \geq N, \hat{\mu}_n \geq z_1}
            \\
            \leq& 
                \PP\del{\exists n \geq N, n \hat{\mu}_n \lambda_1 - n g_{\mu}(\lambda_1) \geq n z_1 \lambda_1 - n g_{\mu}(\lambda_1) }
                    \tag{$\lambda_1 > 0$ and $\hat{\mu}_n \geq z_1$}
            \\
            \leq&
                \PP\del{\exists n \geq N, 
                    n \hat{\mu}_n \lambda_1 - n g_{\mu}(\lambda_1)  \geq n y }
                        \tag{By the definition of $z_1$}
            \\
            \leq&
                \PP\del{ \exists n \geq N, 
                    \exp(n \lambda_1 \hat{\mu}_n - n g_\mu(\lambda_1))  \geq \exp(N y) }
        \\
        = & 
        \PP(\exists n \geq N, S_n(\lambda_1) \geq \exp(Ny)) \\
        \leq &
        \frac{\EE[S_N(\lambda_1)]}{\exp(Ny)}
        \leq 
        \tag{Ville's maximal inequality}
        \exp(-Ny)
        \end{align*}

    where the first inequality is due to the fact that $\lambda_1 > 0$ and the condition $\hat\mu_n \geq z_1$ which is equivalent to the event $\cbr{ \kl\del{\hat\mu_n, \mu} \geq \kl\del{z_1,\mu} ,\hat{\mu}_n > \mu }$.

    Finally we derive Eq.~\eqref{eqn:chernoff-lower-tail-bound} and~\eqref{eqn:chernoff-upper-tail-bound} from Eq.~\eqref{eqn:maximal-inequality-lower} and Eq.~\eqref{eqn:maximal-inequality-upper} respectively.
    
    For Eq.~\eqref{eqn:chernoff-lower-tail-bound}, by letting $y= \kl(\mu-\varepsilon, \mu)$ we have that
    \begin{align*}
        \PP\del{ \hat{\mu}_N < \mu - \varepsilon }  
        =&
        \PP\del{ \kl(\hat{\mu}_N, \mu) > \kl(\mu - \varepsilon, \mu), 
                 \hat{\mu}_n < \mu }
        =
        \PP\del{ \kl(\hat{\mu}_N, \mu) > y,
                 \hat{\mu}_n < \mu}
        \\
        \leq&
        \PP\del{ \exists n \geq N, \kl(\hat{\mu}_n, \mu) \geq y,
                 \hat{\mu}_n < \mu) }
        \\
        \leq& \exp(-N y) 
        =
        \exp(-N \cdot \kl(\mu-\varepsilon, \mu))
    \end{align*}
    
    For Eq.~\eqref{eqn:chernoff-upper-tail-bound}, by letting $y= \kl(\mu+\varepsilon, \mu)$ we have that
    \begin{align*}
        \PP\del{ \hat{\mu}_N > \mu + \varepsilon }  
        =&
        \PP\del{ \kl(\hat{\mu}_N, \mu) > \kl(\mu + \varepsilon, \mu), 
                 \hat{\mu}_n > \mu }
        =
        \PP\del{ \kl(\hat{\mu}_N, \mu) > y,
                 \hat{\mu}_n > \mu}
        \\
        \leq&
        \PP\del{ \exists n \geq N, \kl(\hat{\mu}_n, \mu) \geq y,
                 \hat{\mu}_n > \mu) }
        \\
        \leq& \exp(-N y) 
        =
        \exp(-N \cdot \kl(\mu+\varepsilon, \mu)
    \qedhere
    \end{align*}
    \end{proof}

    \subsection{Lower Bound of KL}
    \begin{lemma} \label{lemma:KL-lower-bound}
        Given a KL-divergence $\kl(\mu_i, \mu_j)$ between two Bernoulli distribution $\nu(\mu_i)$ and $\nu(\mu_j)$ where $\mu_i,\mu_j \in \sbr{0,1}$. Denote $\dmu_i := \mu_i(1-\mu_i)$, $\dmu_j := \mu_j(1-\mu_j)$ and $\Delta := \abs{\mu_j - \mu_i}$, we have a lower bound to $\kl(\mu_i, \mu_j)$. 
        
        \[
            \kl(\mu_i,\mu_j) 
            \geq
            \fr{1}{2}\del{ 
                \fr{\Delta^2}{\dmu_i+\Delta} 
                \vee 
                \fr{\Delta^2}{\dmu_j+\Delta} } 
        \]
    \end{lemma}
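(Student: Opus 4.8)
The plan is to combine the exact integral form of the binary KL divergence in its second argument with a uniform upper bound on the map $t \mapsto t(1-t)$ over the interval between $\mu_i$ and $\mu_j$.

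First I would reduce to the case $\mu_i < \mu_j$; write $p := \mu_i$ and $q := \mu_j$, so that $\Delta = q - p$, with the case $\mu_i > \mu_j$ being symmetric (one simply swaps the two arguments in the integral below) and $\mu_i = \mu_j$ trivial. If $q \in \{0,1\}$ then $\kl(p,q) = +\infty$ while the right-hand side is finite, so the claim is immediate; hence assume $q \in (0,1)$ and $p \in [0,1)$. Using $\partial_t \kl(p,t) = \frac{t-p}{t(1-t)}$ together with $\kl(p,p) = 0$, the fundamental theorem of calculus gives
\[
\kl(p,q) = \int_p^q \frac{t-p}{t(1-t)}\, dt,
\]
where the integrand remains integrable even at $t = 0$ (it equals $\tfrac{1}{1-t}$ there).

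The crux is the following elementary claim: for every $t \in [p,q]$, both $t(1-t) \le \dmu_i + \Delta$ and $t(1-t) \le \dmu_j + \Delta$ hold. Since $g(t) := t(1-t)$ is concave, its maximum over $[p,q]$ is attained at $p$, at $q$, or at $\tfrac12$ if $\tfrac12 \in [p,q]$. In the first case $g(t) \le \dmu_i$; in the second, $g(t) \le \dmu_j \le \dmu_i + \Delta$ because $\dmu_j - \dmu_i = \Delta(1 - \mu_i - \mu_j) \le \Delta$; in the third, $\mu_i \le \tfrac12 \le q$ forces $\Delta \ge \tfrac12 - \mu_i$, so $\dmu_i + \Delta \ge (\mu_i - \mu_i^2) + (\tfrac12 - \mu_i) = \tfrac12 - \mu_i^2 \ge \tfrac14 \ge g(t)$. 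The bound with $\dmu_j + \Delta$ follows by the symmetric argument (writing $t = q - s$ and splitting on the sign of $1 - 2\mu_j$, or by the same concavity/endpoint argument). I expect this case analysis — keeping track of where $\tfrac12$ sits relative to $[p,q]$, equivalently the sign of $1-2\mu$ — to be the only fiddly part; everything else is bookkeeping.

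Finally, for either choice $M \in \{\dmu_i + \Delta,\ \dmu_j + \Delta\}$, the two ingredients give
\[
\kl(p,q) \ge \frac{1}{M}\int_p^q (t-p)\, dt = \frac{(q-p)^2}{2M} = \frac{\Delta^2}{2M}.
\]
Applying this to both values of $M$ and retaining the larger lower bound yields
\[
\kl(\mu_i,\mu_j) \ge \frac12\left( \frac{\Delta^2}{\dmu_i + \Delta} \vee \frac{\Delta^2}{\dmu_j + \Delta} \right),
\]
which is the asserted inequality.
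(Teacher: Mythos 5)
Your proof is correct and follows essentially the same route as the paper: both start from the integral representation $\kl(\mu_i,\mu_j)=\int_{\mu_i}^{\mu_j}\frac{x-\mu_i}{x(1-x)}\,dx$ and then bound $x(1-x)$ uniformly by $\dmu_i+\Delta$ and $\dmu_j+\Delta$ on the interval of integration before integrating. The only difference is that the paper obtains the uniform bound in one line from the $1$-Lipschitzness of $z\mapsto z(1-z)$ (namely $x(1-x)\le \dmu_j+|x-\mu_j|\le \dmu_j+\Delta$), which replaces your concavity-based case analysis on the location of $\tfrac12$.
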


    \begin{proof}
        Let $V(x):=x(1-x)$ to the variance of a Bernoulli distribution with mean $x$. According to \cite{Harremo_s_2017}, the KL divergence $\kl(\mu_i, \mu_k)$ can be computed from the following formula
            \[
                \kl(\mu_i, \mu_j) = \int_{\mu_i}^{\mu_j} \frac{x-\mu_i}{V(x)} \dx.
            \]
        Since by 1-Lipshizness of $z \mapsto z (1-z)$ we have $V(x):=x(1-x) \leq \dmu_j + 1 \cd \abs{x - \mu_j} \leq \dmu_j + \Delta$ and $V(x) \leq \dmu_i + \Delta$.
        Then we have,

        \begin{align*}
            \kl(\mu_i, \mu_j)
            =&
            \int_{\mu_i}^{\mu_j} \frac{x-\mu_i}{V(x)} \dx
            \geq
            \int_{\mu_i}^{\mu_j}
            \del{ \frac{x-\mu_i}{\dmu_i + \Delta} \vee \frac{x-\mu_i}{\dmu_j + \Delta} }\dx
            =
            \fr12 \cd \del{
            \frac{\Delta^2}{\dmu_i + \Delta} \vee
            \frac{\Delta^2}{\dmu_j + \Delta}
            }.
        \end{align*}
    \end{proof}

    \subsection{Algebraic Lemmas}

    \begin{lemma} \label{lemma:monoticity}
    Let $q \geq p > 0$ and $b 
    > 0$, and define $f_{p,q}(x) := \frac{\ln( b x^p \vee e^q )}{x}$. Then $f(x)$ is monotonically decreasing in $\RR_+$.
    Specifically, 
    both $f_{1,2}(x) := \frac{\ln( b x \vee e^2 )}{x}$ and $f_{2,2}(x) := \frac{\ln( b x^2 \vee e^2 )}{x}$ are monotonically decreasing.
    \label{lem:ln-x-x-decr}
    \end{lemma}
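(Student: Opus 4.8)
The plan is to reduce the claim to two elementary sub-cases by splitting $\RR_+$ at the unique point where the two arguments of the maximum in $\ln(bx^p)\vee q$ coincide. First I would observe that since $x\mapsto bx^p$ is a strictly increasing bijection of $\RR_+$ onto itself, there is a unique $x_0 := (e^q/b)^{1/p}>0$ with $bx_0^p = e^q$; hence $\ln(bx^p)\le q$ on $(0,x_0]$ and $\ln(bx^p)\ge q$ on $[x_0,\infty)$, so that $f_{p,q}(x) = q/x$ on the first interval and $f_{p,q}(x) = \frac{\ln b + p\ln x}{x}$ on the second.

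On $(0,x_0]$ the monotonic decrease is immediate since $q>0$. On $[x_0,\infty)$ I would differentiate:
\[
f_{p,q}'(x) = \frac{(p/x)\cdot x - (\ln b + p\ln x)}{x^2} = \frac{p - \ln(bx^p)}{x^2},
\]
and then invoke the hypothesis $q\ge p$: for $x\ge x_0$ one has $\ln(bx^p)\ge q\ge p$, so $f_{p,q}'(x)\le 0$, with equality possible only at the single point $x=x_0$ (and only in the boundary case $q=p$). This gives strict decrease on $[x_0,\infty)$ as well.

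To finish, I would note that the two branches agree at $x_0$ (both equal $q/x_0$), so $f_{p,q}$ is continuous on $\RR_+$; gluing strict monotonicity on the two closed pieces $(0,x_0]$ and $[x_0,\infty)$ yields that $f_{p,q}$ is decreasing on all of $\RR_+$. The special cases claimed for $f_{1,2}$ and $f_{2,2}$ then follow by taking $(p,q)=(1,2)$ and $(p,q)=(2,2)$. There is no real obstacle here; the only subtlety worth flagging is that the argument genuinely needs $q\ge p$ --- it is precisely this inequality that makes the logarithmic branch non-increasing --- so the case split must be arranged so that the $q/x$ regime always lies to the left of the $\ln(bx^p)/x$ regime.
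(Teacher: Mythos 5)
Your proposal is correct and follows essentially the same route as the paper's proof: split at the point where $bx^p=e^q$, note the left branch is $q/x$, and on the right branch compute $f_{p,q}'(x)=\frac{p-\ln(bx^p)}{x^2}\le\frac{p-q}{x^2}\le 0$. The only (welcome) extra detail you supply is the explicit continuity/gluing check at $x_0$, which the paper leaves implicit.
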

    \begin{proof}
    Note that 
    \[
    f_{p,q}(x) 
    =
    \begin{cases}
    \frac{q}{x} & b x^p \leq e^q \\
    \frac{\ln(b x^p)}{x} & b x^p > e^q
    \end{cases}
    \]
    \begin{itemize}
    \item When $x \in (0, \frac{e^\frac{q}{p}}{b^{\frac1p}})$, $bx^p < e^q$. In this case, $f_{p,q}$ is monotonically decreasing as $f_{p,q}(x)$ is inverse proportional to $x$.
    \item When $x \in [\frac{e^\frac{q}{p}}{b^{\frac1p}}, +\infty)$, $bx^p \geq e^q$. In this case, 
    \[
    f_{p,q}'(x) = \frac{ p - \ln(b x^p)  }{x^2} \leq \frac{p - q}{x^2} \leq 0,
    \]
    which implies that $f_{p,q}$ is also monotonically decreasing in this region. 
    \qedhere
    \end{itemize}
    \end{proof}

    \begin{lemma}
    \label{lem:log-c-move}
    For $C \geq 1$ and $a > 0$,
    \[
    \ln(C a \vee e^2) 
    \leq 
    C \ln(a \vee e^2)
    \]
    \end{lemma}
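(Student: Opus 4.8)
The plan is to prove the inequality by a short case analysis on the positions of $a$ and $Ca$ relative to the threshold $e^2$, relying throughout on the elementary bound $\ln x \le x-1$ (so in particular $\ln C \le C-1$ for $C\ge 1$) and on the fact that $C \ge 1$ forces $Ca \ge a$, so that each of $a\vee e^2$ and $Ca\vee e^2$ can be replaced by one of two explicit values.

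First I would handle the case $a \ge e^2$. Here $Ca \ge a \ge e^2$, so both maxima are attained at their linear arguments, and the claim reduces to $\ln C + \ln a \le C\ln a$, i.e.\ $\ln C \le (C-1)\ln a$. Since $a \ge e^2$ gives $\ln a \ge 2 \ge 1$, this follows from the chain $\ln C \le C-1 \le (C-1)\ln a$.

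Next I would treat $a < e^2$, in which case the right-hand side equals exactly $C\ln(e^2) = 2C$. If moreover $Ca \le e^2$, then the left-hand side is $\ln(e^2) = 2 \le 2C$ because $C \ge 1$, and we are done. If instead $Ca > e^2$, the left-hand side is $\ln(Ca) = \ln C + \ln a < \ln C + 2$ (using $a < e^2$), so it suffices to check $\ln C + 2 \le 2C$, i.e.\ $\ln C \le 2(C-1)$, which again follows from $\ln C \le C-1 \le 2(C-1)$ for $C\ge 1$. Combining the three sub-cases proves the lemma.

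I do not anticipate any genuine obstacle here; the only points requiring care are to make the split on $Ca$ versus $e^2$ exhaustive and to reduce, within each sub-case, both $\ln(Ca\vee e^2)$ and $\ln(a\vee e^2)$ to their explicit closed forms \emph{before} invoking $\ln C \le C-1$.
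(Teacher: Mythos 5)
Your proof is correct, but it takes a different route from the paper. The paper derives this lemma as a one-line corollary of Lemma~\ref{lem:ln-x-x-decr}: since $x \mapsto \frac{\ln(x \vee e^2)}{x}$ is monotonically decreasing on $\RR_+$ and $Ca \geq a$, one gets $\frac{\ln(Ca \vee e^2)}{Ca} \leq \frac{\ln(a \vee e^2)}{a}$, and multiplying through by $Ca$ finishes. You instead give a self-contained three-way case analysis on the positions of $a$ and $Ca$ relative to $e^2$, closing each case with $\ln C \leq C-1$; all three cases check out and the split is exhaustive. The trade-off is that the paper's argument is shorter and reuses a monotonicity lemma it needs elsewhere anyway, while yours requires no auxiliary lemma and makes explicit where the threshold $e^2$ matters (namely, that $\ln a \geq 2 \geq 1$ in the large-$a$ case is exactly what absorbs $\ln C$ into $(C-1)\ln a$). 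Either proof is acceptable.
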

    \begin{proof}
    From Lemma~\ref{lem:ln-x-x-decr}, 
    $f_{1,2}(x) := \frac{\ln(bx \vee e^2)}{x}$ is monotonically decreasing. Therefore, we have 
    \[
    \frac{\ln(Ca \vee e^2) }{Ca}
    \leq 
    \frac{\ln(a \vee e^2) }{a},
    \]
    this yields the lemma.
    \end{proof}

    \subsection{Bregman divergence identity}
    \begin{lemma}[Lemma 6.6 in \citet{orabona2019modern}]
    \label{lemma:Bregman-equation}
        Let $B_{\phi}$ the Bregman divergence w.r.t. $\phi: X \to \RR$. Then, for any three points $x, y \in \mathrm{interior}(X)$ and $z \in X$, the following equality holds:
        \[
            B_{\phi} (z,x) + B_{\phi} (x, y) - B_{\phi}(z, y) = \inner{\nabla{\phi} (y) - \nabla{\phi} (x)}{z - x},
        \]
        where $B_{\phi}(z,x):= \phi(z) - \phi(x) - \inner{\nabla \phi(x)}{z-x}$.
    \end{lemma}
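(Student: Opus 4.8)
The plan is to prove this identity purely by algebraic expansion; no convexity, optimality, or analytic property of $\phi$ beyond differentiability at the two interior points is needed. First I would unfold the three Bregman divergences appearing on the left-hand side according to the definition $B_\phi(u,v) := \phi(u) - \phi(v) - \inner{\nabla\phi(v)}{u-v}$:
\begin{align*}
B_\phi(z,x) &= \phi(z) - \phi(x) - \inner{\nabla\phi(x)}{z-x}, \\
B_\phi(x,y) &= \phi(x) - \phi(y) - \inner{\nabla\phi(y)}{x-y}, \\
B_\phi(z,y) &= \phi(z) - \phi(y) - \inner{\nabla\phi(y)}{z-y}.
\end{align*}

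Next I would form the combination $B_\phi(z,x) + B_\phi(x,y) - B_\phi(z,y)$ and collect terms. The six scalar contributions $\pm\phi(z)$, $\pm\phi(x)$, $\pm\phi(y)$ cancel in pairs, leaving only the inner-product remainder $-\inner{\nabla\phi(x)}{z-x} - \inner{\nabla\phi(y)}{x-y} + \inner{\nabla\phi(y)}{z-y}$. By bilinearity of the inner product, the two terms carrying $\nabla\phi(y)$ combine to $\inner{\nabla\phi(y)}{(z-y)-(x-y)} = \inner{\nabla\phi(y)}{z-x}$, and adding $-\inner{\nabla\phi(x)}{z-x}$ yields $\inner{\nabla\phi(y) - \nabla\phi(x)}{z-x}$, which is exactly the claimed right-hand side.

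There is essentially no obstacle here: the only point that requires care is sign bookkeeping when rewriting $x-y$, $z-y$, and $z-x$, together with the observation that $\nabla\phi$ must be well defined at $x$ and $y$ (hence these two points are taken in $\mathrm{interior}(X)$), whereas $z$ enters only through $\phi(z)$ and the difference $z-x$, so it may be an arbitrary point of $X$. For the application in this paper, one instantiates $X$ as a subset of the $2$-simplex and $\phi(p) = p\ln p + (1-p)\ln(1-p)$ applied coordinatewise, for which $B_\phi(z,x) = \kl(z,x)$; the identity above then specializes to the three-point relation for binary KL divergence invoked in the proof of Lemma~\ref{lemma:bound-F3}.
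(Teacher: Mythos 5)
Your proof is correct: the identity follows exactly from expanding the three Bregman divergences and cancelling, which is the standard argument (the paper itself gives no proof and simply cites Lemma~6.6 of \citet{orabona2019modern}, whose proof is this same algebraic expansion). Your closing remark about the instantiation with $\phi(p)=p\ln p+(1-p)\ln(1-p)$ also matches how the lemma is used in the paper's proof of Lemma~\ref{lemma:bound-F3}.
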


\section{Refined worst-case guarantees for existing algorithms}
\label{sec:refined}
    
\subsection{KL-UCB's refined regret guarantee}
\label{sec:kl-ucb-refined}

In this section, we show that KL-UCB~\cite{cappe2013kullback} also can enjoy a worst-case regret bound of the form $\sqrt{ \dmu_1 T K \ln T }$ in the bandits with $[0,1]$ bounded reward setting. We first recall the KL-UCB algorithm, Algorithm~\ref{alg:kl-ucb}, and we take the version of~\cite[][Section 10.2]{lattimore20bandit}.

\begin{algorithm}[t]
\begin{algorithmic}[1]
\STATE \textbf{Input:} $K\geq 2$
\FOR{$t=1,2,\cdots,n$}
    \IF{$t\leq K$} 
        \STATE Pull the arm $I_t=t$ and observe reward $y_t \sim \nu_i$.
    \ELSE 
        \STATE For every $a \in [K]$, compute 
        \[ 
        \UCB_t(a) = \max\cbr{\mu \in [0,1]: \kl(\hat{\mu}_{t-1,a}, \mu) \leq \frac{\ln f(t)}{ N_{t-1,a} } },
        \]
        where $f(t) = 1 + t\ln^2 t$.
        \STATE Choose arm $I_t = \argmax_{a \in [K]} \UCB_t(a)$
        \STATE Receive reward $y_t \sim \nu_{I_t}$
    \ENDIF
\ENDFOR
\end{algorithmic}
\caption{The KL-UCB algorithm (taken from~\citet[][Section 10.2]{lattimore20bandit})}
\label{alg:kl-ucb}
\end{algorithm}

The following theorem is a refinement of the guarantee of KL-UCB in~\cite[][Theorem 10.6]{lattimore20bandit}. 

\begin{theorem}[KL-UCB: refined guarantee]
\label{thm:kl-ucb-refined}
For any $K$-arm bandit problem with reward distributions supported on $[0,1]$, KL-UCB (Algorithm~\ref{alg:kl-ucb}) has regret bounded as follows. For any $\Delta > 0$ and $c \in (0,\frac14]$, 
\begin{equation}
\Reg(T)
\leq 
T\Delta + \sum_{a: \Delta_a > \Delta} \frac{\Delta_a \ln(1+T\ln^2T)}{\KL(\mu_a + c\Delta_a, \mu_1 - c\Delta_a)}
+ 
O\rbr{ \sum_{a: \Delta_a > \Delta} \frac{\dmu_1 + \Delta_a}{c^2 \Delta_a} }.
\label{eqn:kl-ucb-refined}
\end{equation}
and consequently, 
\begin{equation}
\Reg(T)
\leq 
O\rbr{ \sqrt{ \dmu_1 T K \ln T } + K \ln T}.
\label{eqn:kl-ucb-worstcase}
\end{equation}
\end{theorem}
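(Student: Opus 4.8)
The plan is to first establish the gap-dependent bound~\eqref{eqn:kl-ucb-refined} by retracing the standard finite-time analysis of KL-UCB (cf.~\citet[Chapter~10]{lattimore20bandit},~\citet{cappe2013kullback}) while tracking the dependence on $\dmu_1$, and then to deduce the worst-case bound~\eqref{eqn:kl-ucb-worstcase} from it by tuning $\Delta$ exactly as in the proof of Theorem~\ref{thm:minimax-regret-bound}. Throughout, fix a suboptimal arm $a$ and $c\in(0,\tfrac14]$, set $\varepsilon_1=\varepsilon_2=c\Delta_a$ (so $\mu_a+\varepsilon_1<\mu_1-\varepsilon_2$ since $c\le\tfrac14$), and bound $\EE[N_{T,a}]$; summing $\Delta_a\EE[N_{T,a}]$ over arms with $\Delta_a>\Delta$ and adding $T\Delta$ for the $\Delta$-near-optimal arms yields the stated regret bound.

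\textbf{The gap-dependent bound.} We decompose the pulls of arm $a$ at rounds $t>K$ by the usual bad-event cascade. (i) On rounds where $\hat\mu_{t-1,a}\le\mu_a+\varepsilon_1$ and $\UCB_t(a)\ge\mu_1-\varepsilon_2$: monotonicity of $\kl$ gives $\kl(\mu_a+\varepsilon_1,\mu_1-\varepsilon_2)\le\kl(\hat\mu_{t-1,a},\mu_1-\varepsilon_2)\le\ln f(t)/N_{t-1,a}\le\ln(1+T\ln^2T)/N_{t-1,a}$, so there are at most $\ln(1+T\ln^2T)/\kl(\mu_a+\varepsilon_1,\mu_1-\varepsilon_2)+1$ such rounds, which is the main term. (ii) On rounds where $\hat\mu_{t-1,a}>\mu_a+\varepsilon_1$: the maximal Chernoff inequality (Lemma~\ref{lemma:maximal-inequality}, Eq.~\eqref{eqn:chernoff-upper-tail-bound}) plus a geometric sum give expected count $\lesssim 1/\kl(\mu_a+\varepsilon_1,\mu_a)$. (iii) On the remaining rounds, $I_t=a$ together with $\hat\mu_{t-1,a}\le\mu_a+\varepsilon_1$ and $\UCB_t(a)<\mu_1-\varepsilon_2$ force $\UCB_t(1)\le\UCB_t(a)<\mu_1$, so these are contained in the event $\{\UCB_t(1)<\mu_1\}$, and the standard KL-UCB argument — where the precise choice $f(t)=1+t\ln^2 t$ enters, via the maximal Chernoff inequality (Eq.~\eqref{eqn:chernoff-lower-tail-bound}) and a peeling device over $N_{t-1,1}$ — bounds their expected number by a lower-order quantity, at most $O\!\left(1/\kl(\mu_1-\varepsilon_2,\mu_1)\right)$ up to an additive term that is lower-order in $T$. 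Finally invoke Lemma~\ref{lemma:KL-lower-bound}: $1/\kl(\mu_a+\varepsilon_1,\mu_a)\lesssim(\dmu_a+\varepsilon_1)/\varepsilon_1^2$ and $1/\kl(\mu_1-\varepsilon_2,\mu_1)\lesssim(\dmu_1+\varepsilon_2)/\varepsilon_2^2$; since $\dmu_a\le\dmu_1+\Delta_a$ by $1$-Lipschitzness of $z\mapsto z(1-z)$ and $\varepsilon_1=\varepsilon_2=c\Delta_a$, both are $\lesssim(\dmu_1+\Delta_a)/(c^2\Delta_a^2)$. Multiplying by $\Delta_a$ and summing gives~\eqref{eqn:kl-ucb-refined}.

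\textbf{The worst-case bound.} Take $\Delta=\sqrt{\dmu_1 K\ln T/T}$, as in Theorem~\ref{thm:minimax-regret-bound}. In~\eqref{eqn:kl-ucb-refined}, Lemma~\ref{lemma:KL-lower-bound} bounds the main term's denominator from below: using $(\mu_1-c\Delta_a)(1-\mu_1+c\Delta_a)\le\dmu_1+\Delta_a$ and the gap $(\mu_1-c\Delta_a)-(\mu_a+c\Delta_a)=(1-2c)\Delta_a\gtrsim\Delta_a$, we get $\kl(\mu_a+c\Delta_a,\mu_1-c\Delta_a)\gtrsim\Delta_a^2/(\dmu_1+\Delta_a)$, while $\ln(1+T\ln^2T)=O(\ln T)$. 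Hence the main term contributes $\lesssim\sum_{a:\Delta_a>\Delta}(\dmu_1+\Delta_a)\ln T/\Delta_a$; splitting $(\dmu_1+\Delta_a)/\Delta_a=\dmu_1/\Delta_a+1$ and using $\Delta_a>\Delta$, this is $\lesssim K\dmu_1\ln T/\Delta+K\ln T$, and the lower-order term of~\eqref{eqn:kl-ucb-refined} is similarly $\lesssim K\dmu_1/\Delta+K$. Adding $T\Delta$ and substituting $\Delta$ (all three pieces $T\Delta$, $K\dmu_1\ln T/\Delta$, $K\dmu_1/\Delta$ are then $\lesssim\sqrt{\dmu_1 KT\ln T}$) yields $\Reg(T)\lesssim\sqrt{\dmu_1 KT\ln T}+K\ln T$, which is~\eqref{eqn:kl-ucb-worstcase}. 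The bookkeeping — and, if one prefers to carry a $T\Delta_a^2/\dmu_1$ factor inside the logarithm, the monotonicity of $x\mapsto\ln(bx^2\vee e^2)/x$ from Lemma~\ref{lemma:monoticity} — is identical to the proof of Theorem~\ref{thm:minimax-regret-bound}.

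\textbf{Main obstacle.} The one genuinely delicate step is (iii): certifying that the rounds on which arm $1$'s index under-estimates its mean contribute only a lower-order amount. This does not follow from the black-box concentration lemmas of the appendix, because the KL-UCB index uses the time-indexed confidence radius $\ln f(t)/N_{t-1,a}$ rather than one indexed by the local count $N_{t-1,1}$; the union bound over the global clock $t$ must be converted into a peeling over $N_{t-1,1}$ before the Chernoff tail bound~\eqref{eqn:chernoff-lower-tail-bound} can be summed, and it is exactly here that the choice $f(t)=1+t\ln^2 t$ is used to make the resulting series converge (up to lower-order factors). Once this estimate is secured, the remaining steps are routine applications of Lemma~\ref{lemma:KL-lower-bound} and Lemma~\ref{lemma:monoticity}.
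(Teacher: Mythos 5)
Your proposal is correct and follows essentially the same route as the paper: retrace the standard finite-time KL-UCB analysis (the paper organizes it via Lattimore's stopping time $\tau$ and local count $\kappa$ from Lemmas 10.7--10.8, whereas you do the equivalent per-round three-event cascade), keep the lower-order terms as inverse binary KL divergences, sharpen them to $(\dmu_1+\Delta_a)/(c^2\Delta_a^2)$ via Lemma~\ref{lemma:KL-lower-bound}, and tune $\Delta\asymp\sqrt{\dmu_1 K\ln T/T}$ for the worst-case bound. The only nit is in your step (iii): you state containment in $\{\UCB_t(1)<\mu_1\}$ but the bound you invoke needs the margin, i.e.\ $\{\UCB_t(1)<\mu_1-\varepsilon_2\}$, which fortunately is exactly what your displayed chain $\UCB_t(1)\le\UCB_t(a)<\mu_1-\varepsilon_2$ already gives.
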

\begin{proof}[Proof sketch]
To show Eq.~\eqref{eqn:kl-ucb-refined}, 
fix any suboptimal arm $a$; it suffices to show that 
\begin{equation}
\EE\sbr{ N_{T,a} }
\leq
\frac{\Delta_a \ln(1+T\ln^2T)}{\KL(\mu_a + c\Delta_a, \mu_1 - c\Delta_a)}
+ 
O\rbr{ \sum_{a: \Delta_a > \Delta} \frac{\dmu_1 + \Delta_a}{c^2 \Delta_a} }.
\label{eqn:kl-ucb-arm-pull-coarse}
\end{equation}

To this end, 
following~\citet[][proof of Theorem 10.6]{lattimore20bandit}, let $\varepsilon_1, \varepsilon_2 > 0$ be such that $\varepsilon_1 + \varepsilon_2 < \Delta_a$.

Define 
\[
\tau = \min\cbr{t: \max_{s \in \cbr{1,\ldots,T}} \kl( \hat{\mu}_{1,(s)}, \mu_1 - \varepsilon_2 ) - \frac{\ln f(t)}{s} \leq 0 },
\]
and
\[
\kappa = \sum_{s=1}^T \one\cbr{ \kl( \hat{\mu}_{a,(s)}, \mu_1 - \varepsilon_2 ) \leq \frac{\ln f(T)}{s} }.
\]
A close examination of~\citet[][proof of Lemma 10.7]{lattimore20bandit} reveals that a stronger bound on $\EE\sbr{\tau}$ holds, i.e.,
\[
\EE\sbr{\tau} \leq \frac{2}{\kl(\mu_1-\varepsilon_2, \mu_1) } 
\]
and similarly, a close examination of ~\citet[][proof of Lemma 10.8]{lattimore20bandit} reveals that a stronger bound on $\EE\sbr{\kappa}$ holds,
\[
\EE\sbr{\kappa} \leq \frac{\ln f(T)}{\kl(\mu_a + \varepsilon_1, \mu_1 - \varepsilon_2)} + \frac{1}{\kl(\mu_a + \varepsilon_1, \mu_a)}
\]
Therefore, by~\citet[][proof of Theorem 10.6]{lattimore20bandit}, we have 
\begin{equation}
\EE\sbr{N_{T,a}} 
\leq 
\EE\sbr{\tau} + \EE\sbr{\kappa}
\leq 
\frac{\ln f(T)}{\kl(\mu_a + \varepsilon_1, \mu_1 - \varepsilon_2)} + \frac{1}{\kl(\mu_a + \varepsilon_1, \mu_a)}
+
\frac{2}{\kl(\mu_1-\varepsilon_2, \mu_1) }
\label{eqn:kl-ucb-arm-pull-bound}
\end{equation}

We now set $\varepsilon_1 = \varepsilon_2 = c\Delta_a$. 
Observe that by Lemma~\ref{lemma:KL-lower-bound},
\[
\frac{1}{\kl(\mu_a + \varepsilon_1, \mu_a)}
\lesssim 
\frac{\dmu_a + \varepsilon_1}{ 
 \varepsilon_1^2}
\lesssim
\frac{\dmu_1 + \Delta_a}{ 
 c^2 \Delta_a^2},
\]
and 
\[
\frac{2}{\kl(\mu_1-\varepsilon_2, \mu_1) }
\lesssim 
\frac{\dmu_1 + \varepsilon_2}{ 
 \varepsilon_2^2}
\lesssim 
\frac{\dmu_1 + \Delta_a}{ 
 c^2 \Delta_a^2}
\]
Plugging these two inequalities into Eq.~\eqref{eqn:kl-ucb-arm-pull-bound} yields Eq.~\eqref{eqn:kl-ucb-arm-pull-coarse}.

As for Eq.~\eqref{eqn:kl-ucb-worstcase}, we note that $\frac{1}{\kl(\mu_a + c\Delta_a, \mu_a - c\Delta_a)} \lesssim \frac{\dmu_1 + \Delta_a}{c^2 \Delta_a^2}$, and therefore,
Eq.~\eqref{eqn:kl-ucb-refined} implies that
for any $\Delta > 0$, 
\begin{align*}
\Reg(T)
\leq & 
\Delta T + \sum_{a: \Delta_a > \Delta} \frac{\dmu_1 + \Delta_a}{c^2 \Delta_a} \ln f(T) \\
\leq & 
\Delta T + K \frac{\dmu_1 + \Delta}{c^2 \Delta} \ln f(T)
\end{align*}
Choosing $\Delta = \sqrt{ \dmu_1 \frac{K \ln f(T)}{T} }$ yields Eq.~\eqref{eqn:kl-ucb-worstcase}.
\end{proof}

\subsection{KL-UCB++'s refined regret guarantee}

\label{sec:kl-ucb++-refined}
In this section, we show a worst-case regret guarantee of KL-UCB++ of order $\tilde{O}\del{ \sqrt{\dmu_1 K^3 T \ln T} + K^2 \ln T }$ by adapting the original KL-UCB++ analysis (Theorem 2 of~\cite{menard17minimax}). 

First, we derive a refined bound of the number of suboptimal arm pulling, corresponding to Eq. (24) in \cite{menard17minimax}, which we state in the following theorem.
    \begin{theorem}[KL-UCB++: refined upper bound of suboptimal arm pulling]
    \label{thm:KL-UCB++-arm-number}
        For any suboptimal arm $a$, the expected number of its pulling up to time step $T$, namely $\EE\sbr{N_a(T)}$, is bounded by
        \begin{align}
            \EE\sbr{N_a(T)} 
            \leq 
            \frac{\ln(T)}{\kl(\mu_a+\delta, \mu_1-\delta)} + 
            O\del{\frac{(K + \ln\ln(T)) (\dmu_1+\Delta_a)}{\delta^2}},
            \label{eqn:kl-ucb++-arm-pulling}
        \end{align}
    for any $\delta \in [ \frac{88K}{T} + \sqrt{\frac{88 \dmu_1 K}{T}}, \frac{\Delta_a}{3}]$.
    \end{theorem}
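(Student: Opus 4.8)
The plan is to adapt the proof of Theorem~2 (specifically, the derivation leading to equation~(24)) in~\citet{menard17minimax}, re-examining each concentration estimate so that the variance proxy $\dmu_1$ of the optimal arm surfaces in the lower-order term. Throughout, I would fix a suboptimal arm $a$ and a parameter $\delta$ in the stated range, and recall that KL-UCB++'s exploration function $g(n)$ (applied at $n$ pulls of an arm) obeys $g(n) \le \ln(T) + O(\ln\ln T)$ and $g(n) = \ln_+\!\big(\tfrac{T}{Kn}\big) + O\!\big(\ln\ln_+\!\big(\tfrac{T}{Kn}\big) + 1\big)$; the appearance of $\tfrac{T}{Kn}$ rather than $\tfrac{T}{n}$ is what forces the factor $K$, and the iterated logarithm is what forces the $\ln\ln T$.

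Following~\citet{menard17minimax} (and mirroring the KL-UCB argument recalled in Section~\ref{sec:kl-ucb-refined}), I would bound $\EE[N_a(T)] \le \EE[\tau] + \EE[\kappa]$, where $\kappa$ counts the pulls $s$ of arm $a$ for which $\kl(\hat\mu_{(s),a}, \mu_1 - \delta) \le g(s)/s$ or $\hat\mu_{(s),a} > \mu_a + \delta$, and $\tau$ counts the rounds in which arm $1$'s index underestimates $\mu_1 - \delta$. For $\EE[\kappa]$: the ``$\kl(\hat\mu_{(s),a}, \mu_1 - \delta) \le g(s)/s$'' part gives at most $\ln(T)/\kl(\mu_a + \delta, \mu_1 - \delta)$ indices by monotonicity of $\kl$, and I would absorb the $O(\ln\ln T)$ slack in $g$ into the lower-order term via $\kl(\mu_a + \delta, \mu_1 - \delta) \gtrsim \delta^2/(\dmu_1 + \Delta_a)$ — the refined binary Pinsker inequality (Lemma~\ref{lemma:KL-lower-bound}), legitimate since $\Delta_a - 2\delta \ge \delta$ when $\delta \le \Delta_a/3$; the ``$\hat\mu_{(s),a} > \mu_a + \delta$'' part I would control by the KL--Chernoff bound~\eqref{eqn:chernoff-upper-tail-bound}, yielding $\sum_{s \ge 1} e^{-s\,\kl(\mu_a + \delta, \mu_a)} \le 1/\kl(\mu_a + \delta, \mu_a) \lesssim (\dmu_a + \delta)/\delta^2 \le (\dmu_1 + \Delta_a)/\delta^2$, the last step using $1$-Lipschitzness of $z \mapsto z(1-z)$.

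The heart of the argument is $\EE[\tau]$. Here I would peel the pull-count $N_1$ into dyadic blocks and, on each block, apply the maximal inequality~\eqref{eqn:maximal-inequality-lower} of Lemma~\ref{lemma:maximal-inequality} to the event $\{\hat\mu_{(s),1} < \mu_1 - \delta,\ s\,\kl(\hat\mu_{(s),1}, \mu_1 - \delta) > g(s)\}$ — exactly the style of peeling in Lemma~\ref{lemma:seq-estimator-deviation-bernoulli}. The number of blocks that can contribute is $O(K + \ln\ln T)$ because of the shape of $g$, and the per-block contributions, each a geometric tail in $s\,\kl(\mu_1 - \delta, \mu_1)$, telescope to $O\!\big((K + \ln\ln T)/\kl(\mu_1 - \delta, \mu_1)\big)$; one last application of Lemma~\ref{lemma:KL-lower-bound} together with $(\mu_1 - \delta)(1 - \mu_1 + \delta) \le \dmu_1 + \delta$ turns this into $O\!\big((K + \ln\ln T)(\dmu_1 + \Delta_a)/\delta^2\big)$. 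Combining the two estimates yields~\eqref{eqn:kl-ucb++-arm-pulling}. The lower endpoint $\delta \ge \tfrac{88K}{T} + \sqrt{\tfrac{88 \dmu_1 K}{T}}$ is exactly the condition guaranteeing that the truncated peeling sums still decay geometrically — it is the positive root of the quadratic inequality ensuring $K(\dmu_1 + \delta)/\delta^2 \lesssim T$ — matching the threshold of~\citet{menard17minimax} but now carrying the extra $\dmu_1$.

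I expect the main obstacle to be bookkeeping rather than any single inequality: at every point where~\citet{menard17minimax} collapse a concentration tail into a constant times $1/\delta^2$, I must instead retain the exponent as a binary KL divergence $\kl(\cdot, \mu_1)$ or $\kl(\cdot, \mu_a)$ at displacement $\Theta(\delta)$ and defer Lemma~\ref{lemma:KL-lower-bound} to the very end; invoking plain Pinsker ($\kl \ge 2\delta^2$) any earlier would destroy the $\dmu_1$-adaptivity, while a too-coarse peeling of $N_1$ would inflate $K + \ln\ln T$ to a power of itself. Making the dyadic peeling for arm $1$ simultaneously preserve the KL quantities and keep the effective block count at $O(K + \ln\ln T)$ will be the delicate step.
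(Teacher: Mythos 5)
Your overall strategy matches the paper's: split $\EE[N_a(T)]$ into an optimal-arm-underestimation term and a suboptimal-arm-index-too-high term, keep every concentration exponent as a binary KL divergence, and invoke the refined Pinsker bound (Lemma~\ref{lemma:KL-lower-bound}) only at the very end so that $\dmu_1$ survives. Your treatment of the suboptimal-arm term is essentially identical to the paper's bound on its term $B$: the threshold index $n(\delta) = \lceil \ln(\tfrac{T}{K}(1+\ln^2\tfrac{T}{K}))/\kl(\mu_a+\delta,\mu_1-\delta)\rceil$ yields the leading $\ln(T)/\kl(\mu_a+\delta,\mu_1-\delta)$ plus an $O(\ln\ln T\cdot(\dmu_1+\Delta_a)/\delta^2)$ remainder, and the overshoot event is summed via the Chernoff bound to $1/\kl(\mu_a+\delta,\mu_a)\lesssim (\dmu_1+\Delta_a)/\delta^2$. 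You also correctly identify why the lower endpoint of the $\delta$-range is what it is (it is the condition under which $T\kl(\mu_1-\delta,\mu_1)/K \geq e^{3/2}$, via $\kl(\mu_1-\delta,\mu_1)\gtrsim \delta^2/(\dmu_1+\delta)$).

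The gap is in the optimal-arm term, which you yourself flag as the delicate step but do not actually carry out, and the mechanism you sketch for it is wrong. You claim the factor $K$ arises because ``the number of blocks that can contribute is $O(K+\ln\ln T)$'' in a dyadic peeling of $N_{t,1}$. It does not: peeling $s\in[1,T]$ produces $\Theta(\ln T)$ blocks regardless of $K$, and if each block contributed $\Omega(1/\kl(\mu_1-\delta,\mu_1))$ your bound would degrade to $\ln T/\kl(\mu_1-\delta,\mu_1)$, which is far too large. In the paper the $K$ enters entirely differently: for each fixed round $t$, the probability $\PP(U_1(t)\leq \mu_1-\delta)$ is split at the threshold $f(\delta)=\tfrac{1}{\kl(\mu_1-\delta,\mu_1)}\ln\tfrac{T\kl(\mu_1-\delta,\mu_1)}{K}$ into a tail event (bounded by $K/(T\kl(\mu_1-\delta,\mu_1))$ via the maximal inequality and the choice of $f(\delta)$) and a self-normalized deviation event $\{\exists n\leq f(\delta):\kl(\hat\mu_{1,n},\mu_1)\geq g(n)/n\}$, which is controlled by a specific nontrivial inequality imported from~\citet{menard17minimax} (their page 7) and also evaluates to $O(K/(T\kl(\mu_1-\delta,\mu_1)))$; summing these per-round bounds over the $T$ rounds produces the $O(K(\dmu_1+\delta)/\delta^2)$ contribution. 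Without that imported self-normalized inequality (or a replacement for it), your $\EE[\tau]$ bound does not go through as sketched, so the $K$-dependent part of the claimed lower-order term is unsupported.
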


    \begin{proof}
    
        First we decompose the expected number of arm pulling w.r.t. suboptimal arm $a$, $\EE\sbr{N_a(T)}$ as
        \[
            \EE\sbr{N_a(T)} \leq 1 + 
            \underbrace{\sum_{t=K}^{T-1} \PP\del{U_{1}(t) \leq \mu_1 - \delta}}_{A} 
            +
            \underbrace{\sum_{t=K}^{T-1} \PP\del{\mu_1 - \delta < U_{a}(t) \textit{ and } I_{t+1}=a }}_{B}
        \]
        Following~\cite{menard17minimax}, we can bound each term in $A$ as: 
        \begin{align*}
        & \PP\del{U_{1}(t) \leq \mu_1 - \delta} \\
        \leq & 
        \underbrace{ \PP\del{\exists 1 \leq n \leq f(\delta), \hat{\mu}_{1,n} \leq  \mu_1, \kl(\hat{\mu}_{1,n}, \mu_1) \geq \frac{g(n)}{n} } }_{A_1}
        + 
        \underbrace{ \PP\del{\exists f(\delta) \leq n \leq T, \hat{\mu}_{1,n} \leq  \mu_1 - \delta } }_{A_2},
        \end{align*}
        here, with foresight, we choose $f(\delta) = \frac{1}{\kl(\mu_1-\delta, \mu_1)} \ln\frac{ \kl(\mu_1-\delta, \mu_1) T }{K}$.
        
        Note that $A_2 \leq \exp(- f(\delta) \kl( \mu_1 - \delta, \mu_1 ))$ by the maximal inequality (Lemma \ref{lemma:maximal-inequality}). 
            \begin{align}
                A_2 \leq \frac{K}{T \kl(\mu_1-\delta, \mu_1)}.
                \label{eqn:kl-ucb++A_1}
            \end{align}
            
        For bounding $A_1$, we rely on the following inequality borrowed from \cite[][page $7$]{menard17minimax}:
        for any $N$ such that $\frac{T}{K N} \geq e^{3/2}$,\footnote{we only replaced their $f(u)$ with $N$. The proof still goes through since the proof has no assumption except $\frac{T}{K f(u)} \geq e^{3/2}$.}
        \begin{align*}
        & \PP\del{\exists 1 \leq n \leq N, \hat{\mu}_{1,n} \leq  \mu_1, \kl(\hat{\mu}_{1,n}, \mu_1) \geq \frac{g(n)}{n} } 
                \\
                \leq &
                    4 e^2 
                    \frac{ \ln( \frac{T}{K N} (1 + \ln^2(\frac{T}{K N})) ) }{ \ln(\frac{T}{K N}) }
                    \cdot 
                    \frac{N \kl(\mu_1-\delta, \mu_1) }{\ln(\frac{T}{K N})} 
                    \cdot 
                    \frac{K}{T \kl(\mu_1-\delta, \mu_1)}.
        \end{align*}

        Therefore, setting $N = f(\delta)$ we have the following inequality when $\frac{T}{K f(\delta)} \geq e^{3/2}$:
            \begin{align*}
                & 
                \PP\del{\exists 1 \leq n \leq f(\delta), \hat{\mu}_{1,n} \leq  \mu_1, \kl(\hat{\mu}_{1,n}, \mu_1) \geq \frac{g(n)}{n} } 
                \\
                \leq &
                    4 e^2 
                    \underbrace{
                    \frac{ \ln( \frac{T}{K f(\delta)} (1 + \ln^2(\frac{T}{K f(\delta)})) ) }{ \ln(\frac{T}{K f(\delta)}) }}_{C}
                    \cdot 
                    \underbrace{\frac{f(\delta) \kl(\mu_1-\delta, \mu_1) }{\ln(\frac{T}{K f(\delta)})} 
                    }_{D} 
                    \cdot 
                    \frac{K}{T \kl(\mu_1-\delta, \mu_1)}.
            \end{align*}

        Also, based on the assumption that $\delta \geq \frac{88K}{T} + \sqrt{\frac{88 \dmu_1 K}{T}}$, we have that $\frac{T \kl(\mu_1-\delta,\mu_1)}{K} \geq e^{3/2}$ and $\frac{T}{K f(\delta)} > 1$
        (we defer the justification at the end of this paragraph).
        Now:
        \begin{itemize}
        \item For $C$, we apply the elementary inequality that $\frac{\ln (x (1 + \ln^2 x) )}{ \ln x} \leq 2$ for $x > 1$ with $x = \frac{T}{K f(\delta)}$; therefore, $C \leq 2$.
        \item For $D =  \frac{ \ln\frac{\kl(\mu_1-\delta)T}{K} }{\ln \del{\frac{\kl(\mu_1-\delta)T}{K} / \ln\frac{\kl(\mu_1-\delta)T}{K}} }$, we apply the elementary inequality that $\frac{\ln (x) }{ \ln( x / \ln x) } \leq 2$ for $x \geq e^{3/2}$ with $x = \frac{T \kl(\mu_1-\delta,\mu_1)}{K}$; therefore, $D \leq 2$. 
        \end{itemize}

        Now we are going to justify the condition that $\delta \geq \frac{88K}{T} + \sqrt{\frac{88 \dmu_1 K}{T}}$ ensures these two elementary inequalities being true. In proving $\frac{T \kl(\mu_1-\delta,\mu_1)}{K} \geq e^{3/2}$, we use the KL lower bound lemma (lemma \ref{lemma:KL-lower-bound}). More specifically,
        \begin{align}
            & \frac{T \kl\del{\mu_1-\delta, \mu_1}}{K}
            \geq
            e^{3/2}
            \\
            \Leftarrow & \;
            \frac{T\delta^2}{4K(\dmu_1+\delta)}
            \geq
            e^{3/2}
                \tag{Lemma \ref{lemma:KL-lower-bound}}
            \\
            \Leftarrow & \;
            \delta^2
            \geq
            \frac{44K(\dmu_1+\delta)}{T}
            \\
            \Leftarrow & \;
            \delta^2
            \geq
            2 \cdot \max\cbr{\frac{44 \dmu_1 K}{T}, \frac{44 K \delta}{T}}
            \\
            \Leftarrow & \;
            \delta
            \geq
            \max\cbr{\frac{88K}{T}, \sqrt{\frac{88 \dmu_1 K}{T}}}
            \\
            \Leftarrow & \; 
            \delta
            \geq
            \frac{88K}{T} + \sqrt{\frac{88 \dmu_1 K}{T}}
        \end{align}
        In summary, from the above derivation,  
        $\delta \geq \frac{88K}{T} + \sqrt{\frac{88 \dmu_1 K}{T}}$ implies that
        $\frac{T \kl\del{\mu_1-\delta,\mu_1}}{K} \geq e^{3/2}$. In this case, furthermore we have $\frac{T}{K f(\delta)} = \frac{T\kl(\mu_1-\delta,\mu_1)/K}{\ln\del{T\kl(\mu_1-\delta,\mu_1)/K}} \geq \frac{2}{3} e^{3/2} > 1$. 
        
        Therefore we bound $A_1$ by
            \begin{align}
                A_1 
                \leq 
                4e^2 \cdot 2 \cdot 2 \cdot \frac{K}{T \kl\del{\mu_1-\delta, \mu_1}}
                \leq
                16e^2 \frac{K}{T \kl\del{\mu_1-\delta, \mu_1}}.
                    \label{eqn:kl-ucb++A_2}
            \end{align}
        Combining Eq~\eqref{eqn:kl-ucb++A_1} and \eqref{eqn:kl-ucb++A_2}, we derive the upper bound for $A$:
        
        \begin{align}
            A 
            &\leq
            \sum_{t=K}^{T-1} 
            \del{ 16e^2 +1 } 
            \frac{K}{ T \kl(\mu_1-\delta, \mu_1) }
            \leq
            \del{ 16e^2 +1 } 
            \frac{K}{ \kl(\mu_1-\delta, \mu_1) }
            \leq 
            O\del{ \frac{K (\dmu_1 + \delta) }{\delta^2} },
                \label{eqn:kl-ucb++-A}
        \end{align}
        where in the last inequality we use Lemma~\ref{lemma:KL-lower-bound}.

        To bound $B$, we reuse the same idea in \cite{menard17minimax} but change the definition of $n(\delta)$ to accommodate our new analysis,
        \[
            n(\delta) = \Biggl\lceil \frac{ \ln\del{\frac{T}{K} \del{1+\ln^2(\frac{T}{K})}}}{\kl\del{\mu_a+\delta, \mu_1-\delta}} \Biggr\rceil
        \]

        applying the same analysis in \cite{menard17minimax} (specifically, from their Eq. (28) to Eq.(29)), we bound $B$ by
        \begin{align}
            B 
            \leq& 
            n(\delta) - 1 + \sum_{n=n(\delta)}^T \PP\del{\kl\del{\hat{\mu}_{a, (n)}, \mu_1-\delta} \leq \kl\del{\mu_a + \delta, \mu_1 - \delta}} 
            \\
            \leq&
            n(\delta) - 1 + \sum_{n=n(\delta)}^T \PP\del{\hat{\mu}_{a,(n)} \geq \mu_a + \delta}
            \\
            \leq&
            n(\delta) - 1 + \sum_{n=1}^T \exp\del{-n \kl(\mu_a + \delta, \mu_a) }
                \tag{Lemma~\ref{lemma:maximal-inequality}}
            \\
            \leq&
            n(\delta) - 1 + \frac{1}{\exp\del{\kl(\mu_a + \delta, \mu_a) } - 1}
                \tag{Geometric sum}
            \\
            \leq &
            n(\delta) - 1 + \frac{1}{\kl(\mu_a + \delta, \mu_a)}
                \tag{$e^x \geq x + 1$ when $x \geq 0$}
            \\
            \leq&
            \frac{ \ln\del{\frac{T}{K} \del{1+\ln^2\del{\frac{T}{K}}}}}{\kl\del{\mu_a+\delta, \mu_1-\delta}} +
            \frac{4\del{\dmu_a + \delta}}{\delta^2}
                \tag{Lemma~\ref{lemma:KL-lower-bound}}
            \\
            =&
            \frac{ \ln\del{T}}{\kl\del{\mu_a+\delta, \mu_1-\delta}}
            +
            \frac{\ln\del{\frac{1}{K} \del{1+\ln^2\del{\frac{T}{K}}}}}{\kl\del{\mu_a+\delta, \mu_1-\delta}}
            +
            \frac{4\del{\dmu_1 + \Delta_a + \delta}}{\delta^2}
                \tag{By the $1$-Lipshitzness of $\mu \mapsto \dmu$}
            \\
            \leq&
            \frac{ \ln\del{T}}{\kl\del{\mu_a+\delta, \mu_1-\delta}}
            +
            O\del{ \frac{\ln\del{\frac{1}{K} \del{1+\ln^2\del{\frac{T}{K}}}}}{\delta^2 / ( \dmu_1 + \Delta_a )}
            +
            \frac{4\del{\dmu_1 + \Delta_a + \delta}}{\delta^2}
            } 
                \tag{Lemma \ref{lemma:KL-lower-bound}} 
            \\
            \leq & 
            \frac{ \ln\del{T}}{\kl\del{\mu_a+\delta, \mu_1-\delta}}
            +
            O\del{ \ln \ln T \cdot \frac{\dmu_1 + \Delta_a}{\delta^2}  }.
                \label{eqn:kl-ucb++-B}
        \end{align}
    
        Combining Eq.\eqref{eqn:kl-ucb++-A} and Eq.\eqref{eqn:kl-ucb++-B}, we get the final inequality Eq.\eqref{eqn:kl-ucb++-arm-pulling}.
    \end{proof}

Based on the above refinement and replace $\delta$ by $c\Delta_a$, we can have the following theorem.
    \begin{theorem}[KL-UCB++: refined guarantee]
    \label{thm:kl-ucb++-refined}
    For any $K$-arm bandit problem with reward distributions supported on $[0,1]$, KL-UCB++
    \begin{equation}
    \Reg(T)
    \leq 
    O\rbr{ \sqrt{ \dmu_1 T K^3 \ln T } + K^2 \ln T}.
    \label{eqn:kl-ucb++-worstcase}
    \end{equation}
\end{theorem}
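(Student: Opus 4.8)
The plan is to deduce Theorem~\ref{thm:kl-ucb++-refined} from the per-arm bound of Theorem~\ref{thm:KL-UCB++-arm-number} by the same recipe that turns Theorem~\ref{thm:expected-regret-total} into Theorem~\ref{thm:minimax-regret-bound}: instantiate the free parameter as a constant fraction of the gap, lower-bound the binary KL divergence in the denominator by a variance-adaptive quantity, split the arms at a threshold, and optimize the threshold. Concretely, for each suboptimal arm $a$ I would invoke Theorem~\ref{thm:KL-UCB++-arm-number} with $\delta = \Delta_a/3$, the largest admissible value, which minimizes the $\delta^{-2}$ loss; admissibility then reads $\Delta_a \ge \Delta_0 := 3\del{\tfrac{88K}{T} + \sqrt{\tfrac{88\dmu_1 K}{T}}}$. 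For such arms, Lemma~\ref{lemma:KL-lower-bound} together with the $1$-Lipschitzness of $z \mapsto z(1-z)$ (to pass from $(\mu_1-\delta)(1-\mu_1+\delta)$ to $\dmu_1 + O(\Delta_a)$) gives $\kl(\mu_a+\delta,\mu_1-\delta) \gtrsim \frac{\Delta_a^2}{\dmu_1+\Delta_a}$, whence, after absorbing $\ln\ln T$ into $\ln T$,
\[
\Delta_a \, \EE\sbr{N_a(T)} \lesssim \frac{(K+\ln T)(\dmu_1+\Delta_a)}{\Delta_a}.
\]

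Then I would fix $\Delta \ge \Delta_0$ and write $\Reg(T) = \sum_a \Delta_a \EE\sbr{N_a(T)} \le T\Delta + \sum_{a:\Delta_a>\Delta}\Delta_a\EE\sbr{N_a(T)}$; every arm in the second sum satisfies $\Delta_a > \Delta \ge \Delta_0$, so the per-arm bound applies, and $\sum_{a:\Delta_a>\Delta}\frac{\dmu_1+\Delta_a}{\Delta_a} \le \frac{K\dmu_1}{\Delta}+K$ gives
\[
\Reg(T) \lesssim T\Delta + \frac{K(K+\ln T)\dmu_1}{\Delta} + K(K+\ln T).
\]
Choosing $\Delta = \max\del{\Delta_0, \Delta^\star}$ with $\Delta^\star := \sqrt{K(K+\ln T)\dmu_1/T}$ (the unconstrained minimizer of the first two terms) makes the first two terms $O\del{\sqrt{K(K+\ln T)\dmu_1 T}}$ when $\Delta^\star \ge \Delta_0$, and $O\del{\sqrt{\dmu_1 K T} + K}$ when $\Delta_0 > \Delta^\star$ (a short computation using $\Delta_0 \asymp \frac{K}{T} + \sqrt{\frac{\dmu_1 K}{T}}$); either way, bounding $K(K+\ln T) \le K^3\ln T$ inside the root and $K(K+\ln T) \lesssim K^2\ln T$ additively yields $\Reg(T) = O\del{\sqrt{\dmu_1 K^3 T\ln T} + K^2\ln T}$.

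The main thing to be careful about is the interaction between the threshold choice and the $\delta$-floor $\sqrt{88\dmu_1 K/T}$ in Theorem~\ref{thm:KL-UCB++-arm-number} — verifying when $\Delta^\star$ is feasible and that the feasibility-limited case is lower order — which is precisely where the bound loses a factor relative to the sharper KL-UCB guarantee of Theorem~\ref{thm:kl-ucb-refined}. Everything else is routine: checking that the $\delta$-interval $[\tfrac{88K}{T}+\sqrt{\tfrac{88\dmu_1 K}{T}},\,\tfrac{\Delta_a}{3}]$ is nonempty exactly when $\Delta_a \ge \Delta_0$, the two applications of Lemma~\ref{lemma:KL-lower-bound}, the count $\sum_{a:\Delta_a>\Delta}1 \le K$, and the fact that arms with $\Delta_a \le \Delta$ contribute at most $T\Delta$ to the pseudo-regret and hence need no individual control.
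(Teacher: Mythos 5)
Your proposal is correct and follows essentially the same route as the paper: apply Theorem~\ref{thm:KL-UCB++-arm-number} with $\delta = \Delta_a/3$, convert the KL denominator via Lemma~\ref{lemma:KL-lower-bound} into $\frac{\dmu_1+\Delta_a}{\Delta_a^2}$, and optimize a gap threshold of order $\sqrt{\dmu_1 K(K+\ln T)/T}$. The only cosmetic difference is that the paper keeps the inadmissible arms (those with $\Delta_a/3$ below the $\delta$-floor) as a separate third sum bounded by $T\Delta_a \lesssim K + \sqrt{\dmu_1 KT}$ per arm and then sets $\Delta = \Delta^\star$ unconditionally, whereas you fold them into the below-threshold bucket by taking $\Delta = \max(\Delta_0,\Delta^\star)$; the two bookkeeping choices yield the same bound.
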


\begin{proof}
    Define $S = \cbr{a \in [K]: \frac{88K}{T} + \sqrt{\frac{88 \dmu_1 K}{T}} \leq \frac{\Delta_a}{3}}$. For $a \in S$, applying Theorem \ref{thm:KL-UCB++-arm-number} with $\delta = \frac{\Delta_a}{3}$,  
and observe that by Lemma \ref{lemma:KL-lower-bound},
    \begin{align*}
        \frac{1}{\kl(\mu_a + \delta, \mu_1 - \delta)} \lesssim 
        \frac{\dmu_a + \delta}{\delta^2} \lesssim
        \frac{\dmu_1 + \Delta_a}{\Delta_a^2},
    \end{align*}
    we get:
    \begin{align}
        \EE[N_a(T)] 
        \lsim& 
        \frac{(\dmu_1 + \Delta_a)\ln T}{\Delta_a^2} + O\del{ \frac{(K + \ln\ln (T)) (\dmu_1+\Delta_a)}{\Delta_a^2} }
        \\
        \lsim&
        O\del{ \frac{(K + \ln T) (\dmu_1+\Delta_a)}{\Delta_a^2} }
    \end{align}
    Therefore, for any $\Delta > 0$, 
    the regret given a timespan of $T$ is bounded by
    \begin{align*}
        \Reg(T)
        \leq&
            \sum_{a: \Delta_a \leq \Delta} 
            \Delta_a \EE\sbr{N_a(T)} +
            \sum_{a: \Delta_a > \Delta, a \in S} 
            \Delta_a \EE\sbr{N_a(T)} +
            \sum_{a: \Delta_a > \Delta, a \notin S} 
            \Delta_a \EE\sbr{N_a(T)}
        \\
        \leq & 
            T \Delta+ 
            \sum_{a: \Delta_a > \Delta, a \in S}
                O\del{ \frac{(K + \ln T) (\dmu_1+\Delta_a)}{\Delta_a} } +
            \sum_{a: \Delta_a > \Delta, a \notin S}
            O\del{ T \del{ \frac{K}{T} + \sqrt{\frac{\dmu_1 K}{T}} } }
        \\
        \leq & 
            T \Delta+ 
            O\del{ \frac{K (K + \ln T) (\dmu_1+\Delta)}{\Delta} } +
            O\del{ K^2 + \sqrt{\dmu_1 K^{3} T} }
    \end{align*}
    Choosing $\Delta = \sqrt{ \frac{ \dmu_1 K \del{ K + \ln (T)}}{T} }$ yields Eq.~\eqref{eqn:kl-ucb++-worstcase}.
\end{proof}

\subsection{The worst-case regret bound of UCB-V}
\label{sec:ucbv}

In this section, we will show that the problem dependent regret bound presented in UCB-V\cite{audibert09exploration} can also be adaptive to $\dmu_1$ in the bandits with $[0,1]$ bounded reward setting. The starting point is that we will obtain a lemma (Lemma \ref{lemma:ucb-v-main-lemma}) to bound the arm pulling for all suboptimal arms like what we did in our paper.

\begin{lemma}\label{lemma:ucb-v-main-lemma}
    Let $N_{i}(T)$ to be the number of the arm pulling in terms of the arm $i$ until the time step $T$ (inclusively) in the algorithm UCB-V from \cite{audibert09exploration}. Then we can bound $\EE\sbr{N_i(T)}$ by the following inequality
    \begin{align}
        \EE \sbr{ N_{i}(T) } 
        \lsim
        \del{ \frac{\dmu_i^2}{\Delta_i^2} + \frac{1}{\Delta_i} } \log T
    \end{align}
\end{lemma}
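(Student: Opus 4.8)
The plan is to follow the original UCB-V analysis of~\cite{audibert09exploration}, specialized to $[0,1]$-bounded rewards (so the boundedness parameter is $b=1$) with the exploration function $\mathcal{E}_{s,t}=\zeta\ln t$ for a fixed constant $\zeta>1$, and then to pass from the true per-arm variance to $\dmu_i$ via Lemma~\ref{lemma:control-variance}. Recall that after $s$ pulls of arm $i$, UCB-V uses the index $B_{i,s,t}=\hat{\mu}_{i,s}+\sqrt{2V_{i,s}\mathcal{E}_{s,t}/s}+3\mathcal{E}_{s,t}/s$, built from the empirical mean $\hat{\mu}_{i,s}$ and empirical variance $V_{i,s}$, and at round $t+1$ pulls an arm of maximal index; in particular, arm $i$ is pulled at round $t+1$ only if $B_{i,N_i(t),t}\ge B_{1,N_1(t),t}$.

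First I would decompose $\EE[N_i(T)]$ into two pieces, exactly as in~\cite{audibert09exploration}: (i) the expected number of rounds on which the optimal arm's index underestimates its mean, i.e.\ $B_{1,N_1(t),t}<\mu_1$; and (ii) the expected number of rounds on which arm $i$ is pulled while $B_{1,N_1(t),t}\ge\mu_1$. For (i), a peeling argument over the number of pulls $s$ of arm $1$ together with an empirical-Bernstein deviation inequality shows that the confidence bound of arm $1$ fails with probability $O(t^{-\zeta})$, so summing over $t$ contributes only a constant; this plays the same role as the maximal inequality, Lemma~\ref{lemma:maximal-inequality}, does for KL-MS, and the needed exponential-moment control is furnished by Lemma~\ref{lemma:mgf-bounded-support}. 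For (ii), on those rounds $B_{i,N_i(t),t}\ge B_{1,N_1(t),t}\ge\mu_1$; on the further high-probability event that $\hat{\mu}_{i,s}\le\mu_i+\Delta_i/2$ and $V_{i,s}\lesssim\mathrm{Var}(\nu_i)+\Delta_i$ (again via Bernstein-type concentration plus peeling), this forces $\sqrt{2V_{i,s}\zeta\ln t/s}+3\zeta\ln t/s\gtrsim\Delta_i$ with $s=N_i(t)$, hence either $s\lesssim\mathrm{Var}(\nu_i)\,\zeta\ln T/\Delta_i^2$ or $s\lesssim\zeta\ln T/\Delta_i$. Summing, the number of such rounds is at most $c_\zeta(\mathrm{Var}(\nu_i)/\Delta_i^2+1/\Delta_i)\ln T$ plus the $O(1)$ cost of the exceptional concentration events.

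Combining (i) and (ii) gives $\EE[N_i(T)]\lesssim(\mathrm{Var}(\nu_i)/\Delta_i^2+1/\Delta_i)\ln T$; since $\nu_i$ is supported on $[0,1]$ with mean $\mu_i$, Lemma~\ref{lemma:control-variance} yields $\mathrm{Var}(\nu_i)\le\dmu_i$, and the claimed variance-adaptive bound on $\EE[N_i(T)]$ follows.

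The main obstacle is the uniform-in-$s$ control of the empirical variance $V_{i,s}$: one needs a self-bounded / Bernstein-type concentration inequality for $V_{i,s}$ around $\mathrm{Var}(\nu_i)$ together with a dyadic peeling so that its exceptional event can be folded into the additive constant, precisely as in the proof of Theorem~1 of~\cite{audibert09exploration}. The remaining work is routine bookkeeping of the $\zeta$-dependent constant $c_\zeta$ and verifying that the $\sum_t t^{-\zeta}$-type tails converge for $\zeta>1$.
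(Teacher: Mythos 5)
Your plan is essentially the paper's proof with the black box opened: the paper simply quotes the intermediate inequality from the proof of Theorem 3 of \citet{audibert09exploration} (the same decomposition into rounds where arm $1$'s index drops below $\mu_1$ and rounds where arm $i$'s index exceeds $\mu_1$, controlled by empirical-Bernstein concentration with peeling), sets $\zeta=1.1$, and checks that the residual $\sum_t \beta(\mathcal{E}_t,t)$ term is $O(1)$; you re-derive that inequality rather than citing it, which is the same route in substance. Your caution about the tail bookkeeping is warranted: the peeling gives terms of order $\log t\cdot t^{-\zeta/\alpha}$ for $\alpha\in(1,3]$, so one needs $\alpha<\zeta$ strictly for the sum to converge, a point the paper's own constant choices gloss over.

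One substantive discrepancy: your argument yields $\EE[N_i(T)]\lesssim(\dmu_i/\Delta_i^2+1/\Delta_i)\ln T$, i.e., with $\mathrm{Var}(\nu_i)\le\dmu_i$ in the numerator via Lemma~\ref{lemma:control-variance}, whereas the lemma statement has $\dmu_i^2/\Delta_i^2$. Since $\dmu_i\le 1/4$, the stated bound is strictly stronger than yours and is also not what UCB-V's Theorem 3 delivers (which has $\sigma_i^2/\Delta_i^2$ with $\sigma_i^2$ the variance, hence $\dmu_i/\Delta_i^2$ after bounding the variance); the squared exponent appears to be a transcription slip from $\sigma_i^2$, and the paper's subsequent use of this lemma in Theorem~\ref{thm:ucb-v-refined} in fact only relies on the $\dmu_i/\Delta_i^2$ form. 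So your version proves the bound that is actually needed and justifiable, not the literal statement.
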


\begin{proof}

    Inside the proof of Theorem $3$ in \cite{audibert09exploration}, by setting $c = 1$, for each arm $i$, we obtain the following inequality
    for any $\zeta > 0$:
    \begin{align}
        \EE \sbr{ N_{i}(T) } 
        \leq 
        1 + 
        8 \cE_T \del{ \frac{\dmu_i^2}{\Delta_i^2} + \frac{2}{\Delta_i} } +
        T e^{-\cE_T} \del{ \frac{24\dmu_i}{\Delta_i^2} + \frac{4}{\Delta_i} } +
        \sum_{t=u+1}^T \beta \del{ \cE_t, T },   
    \end{align}

where $u:= \lceil 8 \zeta \del{\frac{\dmu_k^2}{\Delta_k^2} + \frac{2}{\Delta_k}} \log T \rceil$, $\cE_T := \zeta \log T$ and $\beta \del{\cE_t, t} := \inf_{1 < \alpha \leq 3} \del{ \frac{\log t}{\log \alpha} \wedge t} e^{-\frac{\cE_t}{\alpha}}$.
We pick $\zeta = 1.1$. The last term is bounded by
\begin{align}
    \sum_{t=u+1}^T \beta \del{ \cE_t, t }
    \leq&
        \sum_{t=u+1}^T 3 \cdot \inf_{1 < \alpha \leq 3} \del{ \frac{\log t}{\log \alpha} \wedge t } e^{-\frac{\cE_t}{\alpha}}
    \leq
        \sum_{t=u+1}^T 3 \cdot \frac{\log t}{\log \del{1.1}} e^{-\frac{\cE_t}{1.1}}
    \\
    \leq&
        \frac{3}{\log (1.1)} \sum_{t=u+1}^T \frac{\log t}{t^{1.1}}
    \lsim
        \sum_{t=1}^\infty \frac{\log t}{t^{1.1}}
    \lsim
        1
\end{align}

Therefore, we have the following inequality
\begin{align}
    \EE \sbr{ N_{i}(T) } 
    \lsim
    \del{ \frac{\dmu_i^2}{\Delta_i^2} + \frac{1}{\Delta_i} } \log T
        \label{eqn:UCB-V-main-theorem}
\end{align}

\end{proof}

By using the lemma \ref{lemma:ucb-v-main-lemma} we just obtained, we can obtain the following theorem about worst-case regret bound of UCB-V.
\begin{theorem}
    The regret of the algorithm UCB-V\cite{audibert09exploration} is bounded by:
    \begin{align}
        \Reg(T) \lsim \sqrt{\dmu_1 K T \ln(T)} +  K\ln(T)
    \end{align}
    \label{thm:ucb-v-refined}
\end{theorem}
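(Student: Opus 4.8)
The plan is to obtain the worst-case bound directly from Lemma~\ref{lemma:ucb-v-main-lemma} via a variance-transfer step followed by the standard ``threshold-and-optimize'' argument. First I would fix a suboptimal arm $a$. Since $\dmu_a = \mu_a(1-\mu_a) \in [0,\frac14]$, we have $\dmu_a^2 \le \dmu_a$, so Lemma~\ref{lemma:ucb-v-main-lemma} already gives $\EE\sbr{N_a(T)} \lesssim \rbr{\frac{\dmu_a}{\Delta_a^2} + \frac{1}{\Delta_a}}\ln T$. The key observation is that the optimal arm's variance controls that of any suboptimal arm up to the gap: since $z \mapsto z(1-z)$ is $1$-Lipschitz and $\mu_a = \mu_1 - \Delta_a$, we get $\dmu_a \le \dmu_1 + \Delta_a$. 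Substituting and absorbing $\Delta_a/\Delta_a^2 = 1/\Delta_a$ yields
\[
\EE\sbr{N_a(T)} \lesssim \rbr{ \frac{\dmu_1}{\Delta_a^2} + \frac{1}{\Delta_a} } \ln T .
\]

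Next I would decompose $\Reg(T) = \sum_{a:\Delta_a>0}\Delta_a\,\EE\sbr{N_a(T)}$ around a threshold $\Delta \ge 0$. The near-optimal arms contribute $\sum_{a:\Delta_a \le \Delta}\Delta_a\EE\sbr{N_a(T)} \le \Delta\sum_{a\in[K]}\EE\sbr{N_a(T)} = T\Delta$, using $\sum_a N_a(T) = T$. For arms with $\Delta_a > \Delta$, the display above together with $1/\Delta_a < 1/\Delta$ gives $\sum_{a:\Delta_a>\Delta}\Delta_a\EE\sbr{N_a(T)} \lesssim \sum_{a:\Delta_a>\Delta}\rbr{\frac{\dmu_1}{\Delta_a}+1}\ln T \le \rbr{\frac{K\dmu_1}{\Delta}+K}\ln T$. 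Hence $\Reg(T) \lesssim T\Delta + \frac{K\dmu_1\ln T}{\Delta} + K\ln T$, and choosing $\Delta = \sqrt{K\dmu_1\ln T/T}$ balances the first two terms and gives $\Reg(T) \lesssim \sqrt{\dmu_1 KT\ln T} + K\ln T$. The degenerate case $\dmu_1 = 0$ (i.e.\ $\mu_1 \in \{0,1\}$) needs no threshold: the transfer step then gives $\EE\sbr{N_a(T)} \lesssim \frac{1}{\Delta_a}\ln T$, so $\Reg(T) \lesssim K\ln T$ directly.

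Given Lemma~\ref{lemma:ucb-v-main-lemma}, the rest is bookkeeping, so the real obstacle lies in that lemma itself: extracting a $\dmu_i$-adaptive bound on $\EE\sbr{N_i(T)}$ from the UCB-V analysis of~\citet{audibert09exploration} while keeping the residual $\sum_t \beta(\mathcal{E}_t,T)$ term at $O(1)$ --- which is why one inflates the exploration rate slightly, taking $\mathcal{E}_T = \zeta\log T$ with $\zeta > 1$, so that $\sum_{t}\log t / t^{\zeta}$ converges. The one point in the theorem-level argument that deserves care is ensuring the lower-order term stays $O(K\ln T)$ rather than degrading to something like $K\ln T/\dmu_{\min}$ (as happens for the logistic-bandit reductions discussed in Section~\ref{sec:related}); this holds precisely because the $\frac{1}{\Delta_a}$ summand in Lemma~\ref{lemma:ucb-v-main-lemma} contributes only $\Delta_a\cdot\frac{1}{\Delta_a}=1$ per arm after multiplication by $\Delta_a$, so we never divide by a variance.
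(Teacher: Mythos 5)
Your proof is correct and follows essentially the same route as the paper: both start from Lemma~\ref{lemma:ucb-v-main-lemma}, transfer the suboptimal arm's variance to $\dmu_1+\Delta_a$, split the regret at a threshold $\Delta$, and optimize over $\Delta$. The only differences are cosmetic and slightly in your favor: you replace the paper's two-case analysis on $\mu_1<\frac34$ versus $\mu_1\ge\frac34$ with the one-line Lipschitz bound $\dmu_a\le\dmu_1+\Delta_a$ (the same device the paper uses in Lemma~\ref{lem:kl-ms-pre-sub-ucb}), and your threshold $\Delta=\sqrt{K\dmu_1\ln T/T}$ correctly balances the two leading terms, whereas the paper's stated choice $\Delta=\sqrt{K\dmu_1/T}\wedge\frac14$, read literally, balances to $\sqrt{\dmu_1KT}\,\ln T$ rather than the claimed $\sqrt{\dmu_1KT\ln T}$.
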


\begin{proof}

\begin{align*}
  \Reg(T) 
  & = 
    \sum_{i : \Delta_i \leq \Delta } \Delta_i \EE[N_{i}(T)] +
    \sum_{i : \Delta_i > \Delta } \Delta_i \EE[N_{i}(T)]
    \\
    &\leq
    T\Delta +
    \sum_{i : \Delta_i > \Delta } \Delta_i \EE[N_{i}(T)]
    \\
  &\lsim 
    T\Delta +
    \sum_{i: \Delta_i > \Delta} \del{\fr{\dmu_i}{\Delta_i} + 1} \log(T) \tag{By Eq.~\eqref{eqn:UCB-V-main-theorem} }
\\&\le T\Delta + \sum_{i: \Delta_i \in[\Delta, 1/4]} \del{\fr{\dmu_i}{\Delta_i} + 1} \log(T)
     + \sum_{i: \Delta_i > 1/4} \del{\fr{\dmu_i}{\Delta_i} + 1} \log(T)
\\&\lsim T\Delta + \sum_{i: \Delta_i \in[\Delta, 1/4]} \fr{\dmu_i}{\Delta_i} \log(T)
     + K\log(T)~.
\end{align*}

To bound the second term above, we consider two cases.

\textbf{Case 1}: $\mu_1 < \fr34$~.\\

In this case, one can show that $\dmu_i \lsim \dmu_1$.
Thus,
\begin{align*}
  \sum_{i: \Delta_i \in[\Delta, 1/4]} \fr{\dmu_i}{\Delta_i} \ln(T)
  \lsim  K\fr{\dmu_1}{\Delta} \ln(T) ~.
\end{align*}

\textbf{Case 2}: $\mu_1 \ge \fr34$~.\\

We observe that if $i$ satisfies $\Delta_i\in[\Delta,1/4]$, then $\dmu_i = \mu_i(1-\mu_i) \le 1-\mu_i = 1-\mu_i+\mu_1 - \mu_1 = 1 - \mu_1 + \Delta_i \lsim \dmu_1 + \Delta_i$.
Thus,
\begin{align*}
  \sum_{i: \Delta_i \in[\Delta, 1/4]} \fr{\dmu_i}{\Delta_i} \ln(T)
  \lsim \sum_{i: \Delta_i \in[\Delta, 1/4]} \del{\fr{\dmu_1}{\Delta_i}+1} \ln(T)
  \le K\fr{\dmu_1}{\Delta} \ln(T) +  K\ln(T)~.
\end{align*}

Altogether, we have
\begin{align*}
  \Reg(T) \lsim T \Delta + K\fr{\dmu_1}{\Delta} \ln(T) +  K\ln(T)~.
\end{align*}
Let us choose $\Delta = \sqrt{\fr{K\dmu_1}{T}} \wedge \fr14$.
If $T > K\dmu_1$, then we obtain the desired bound.
If $T \le K\dmu_1$, we get $\Delta = 1/4$, so
\begin{align*}
  \Reg(T) 
  \lsim n + K \dmu_1 \ln(T) +  K\ln(T)
  \le   K\dmu_1 + K \dmu_1 \ln(T) +  K\ln(T)~,
\end{align*}
which is less than the desired bound.
This concludes the proof.
\end{proof}

\section{Improved minimax analysis of the sub-Gaussian MS}
\label{sec:sgms}

We sketch how to change the proof of the sub-Gaussian MS regret bound in~\citet{bian2022maillard} so it can achieve the minimax ratio of $\sqrt{\ln(K)}$.

It suffices to show that $\forall a: \mu_a < \mu_1, \EE[N_{T,a}] \lsim \fr{\sig^2}{\varepsilon^2}\ln(\fr{T\varepsilon^2}{\sig^2} \vee e^2)$.
To bound $\EE[N_{T,a}]$, recall that there are three terms to bound: $(F1)$, $(F2)$, and $(F3)$.
Recall the symbols in~\citet{bian2022maillard}:
\begin{itemize}
  \item $\sigma^2$: the sub-Gaussian parameter.
  \item $u := \llcl \fr{2\sig^2(1+c)^2 \ln(T\Delta_a^2/(2\sig^2) \vee e^2)}{\Delta_a^2}  \rrcl$ for some $c > 0$.
  \item $\varepsilon>0$: an analysis parameter that will be chosen later to be $\Delta_a$ up to a constant factor.
\end{itemize} 

The reason why one does not obtain the minimax ratio of $\sqrt{\ln(K)}$ is that the bound obtained in~\citet{bian2022maillard} for $(F3)$ is $O(\fr{\sig^2}{\varepsilon^2}\ln(\fr{\sig^2}{\varepsilon^2} \vee e^2))$ 
rather than $O(\fr{\sig^2}{\varepsilon^2}\ln(\fr{ T \varepsilon^2 }{\sig^2} \vee e^2))$.
To achieve the latter bound for $(F3)$, first we choose the splitting threshold $\fr{\sigma^2}{\varepsilon^2}$ which takes the same role as $H$ for KL-MS in the $[0,1]$-bounded reward case and $F3$ will be separated into $F3_1$ and $F3_2$. $F3_1$ is the case where $F3$ is with the extra condition that $N_{t-1,1} \leq \fr{\sigma^2}{\varepsilon^2}$ for $1 \leq t \leq T$ and $F3_2$ the case where $F3$ is with the extra condition that $N_{t-1,1} > \fr{\sigma^2}{\varepsilon^2}$ for $1 \leq t \leq T$.
It is easy to bound $F3_2$ using a similar argument as our Claim~\ref{claim:F3-2-upper-bound} that $F3_2 \lsim \fr{\sig^2}{\varepsilon^2}$.

For $F3_1$, we define the following event
\begin{align*}
  \cE := \cbr{\forall k \in [1, \lfl \fr{\sig^2}{\varepsilon^2} \rfl], \hmu_{(k),1} \ge \mu_1 - \sqrt{\fr{4\sig^2\ln(T/k)}{k} }  }
\end{align*}
where $\hmu_{(k),1}$ is the empirical mean of arm 1 (the true best arm) after $k$ arm pulls.

We have
\begin{align*}
  F3_1 
  &= \EE \sbr{ \sum_{t=K+1}^{T} \one\cbr{I_{t}=a, N_{t-1,a} > u, \hmu_{t-1,\max} < \mu_1 - \varepsilon, N_{t-1,1} \leq \fr{\sigma^2}{\varepsilon^2} } }
\\&= \EE \sbr{ \sum_{t=K+1}^{T} \one\cbr{I_{t}=a, N_{t-1,a} > u, \hmu_{t-1,\max} < \mu_1 - \varepsilon, N_{t-1,1} \leq \fr{\sigma^2}{\varepsilon^2}, \cE} }
   + \EE \sbr{ \sum_{t=K+1}^{T-1} \one\cbr{\cE^c} }
\\&\le \EE \sbr{ \sum_{t=K+1}^{T} \one\cbr{I_{t}=a, N_{t-1,a} > u, \hmu_{t-1,\max} < \mu_1 - \varepsilon, \cE} } + T \cd \PP(\cE^c)
\end{align*}

Note that one can show that $T \cd \PP(\cE^c) \lsim \fr{\sig^2}{\varepsilon^2}$ using a similar argument to Lemma \ref{lemma:seq-estimator-deviation-bernoulli}.
One can also see that the first term above corresponds to the first term of Eq.~\eqref{eqn:f3-fix-arm-1-E} in KL-MS, and one can use a similar technique therein to bound the first term above by $\fr{\sig^2}{\varepsilon^2}\ln(\fr{T\varepsilon^2}{\sig^2} \vee e^2)$ up to a constant factor.

Adding the bounds of $F3_1$ and $F3_2$ together, we conclude that $F3 \lsim \fr{\sig^2}{\varepsilon^2}\ln(\fr{T\varepsilon^2}{\sig^2} \vee e^2)$. 

\section{Additional Experiments}
\label{sec:addl-experiments}

    \subsection{Regret comparison}
    We compare KL-MS with the Bernoulli Thompson Sampling and MS~\citep{bian2022maillard}.
    Bernoulli Thompson Sampling chooses beta distribution as the prior (Beta(0.5, 0.5)) and the posterior. 
    The reward environment is borrowed from  \cite{kaufmann12thompson}, where there are two reward environments. Both are two-arm bandit, one has the mean reward $[0,20, 0,25]$ and the other has the mean reward $[0.80, 0.90]$. From Figure~\ref{fig:regret-1} and Figure~\ref{fig:regret-2} we find that the performance of KL-MS is better than MS by a margin, although worse than Bernoulli Thompson Sampling. Nevertheless, we will see in the next section that Bernoulli Thompson Sampling tends to generate somewhat unreliable logged data for offline evaluation.

    \begin{figure}[H]
      \centering
      \text{Regret comparison with $2000$ times simulation}
      
      \begin{minipage}{0.49\textwidth}
      \includegraphics[width=\linewidth]{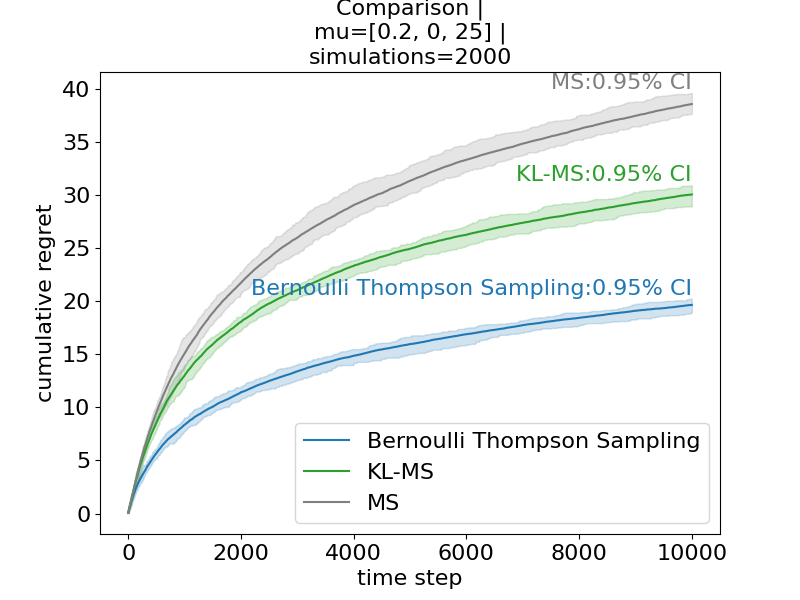}
        \caption{$\mu = [0.20, 0.25], T = 10,000$}
        \label{fig:regret-1}
      \end{minipage}
      \hfill
      \begin{minipage}{0.49 \textwidth}
      \includegraphics[width=\linewidth]{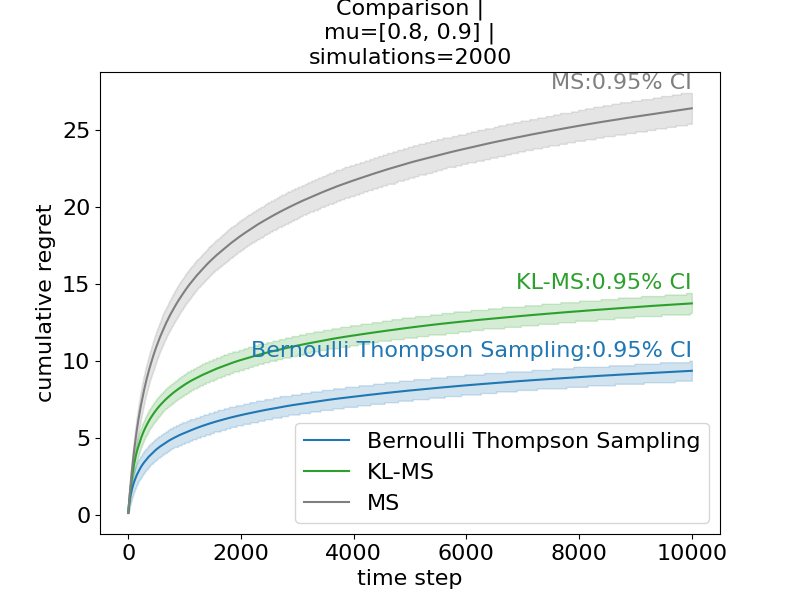}
        \caption{$\mu = [0.80, 0.90], T = 10,000$}
        \label{fig:regret-2}
      \end{minipage}
    \end{figure}
    
    \subsection{Offline evaluation}
    This section presents our simulation results on offline evaluation using logged data. We use the logged data generated by our algorithm, KL-MS, and standard Thompson Sampling, to estimate the expected reward of the policy that takes an action uniformly at random in $[K]$, which is equal to $\bar{\mu} = \frac1K \sum_{i=1}^K \mu_i$. 
    The logged data are of the form 
    $(I_t, p_{t, I_t}, r_t)_{t=1}^T$, where $I_t$ is the action taken, $p_{t,I_t}$ is the action probability (which can be exact or approximate), $r_t$ is the received reward, all at time step $t$. 
    We consider the IPW estimator~\cite{horvitz52generalization} that estimates $\mu$, defined as 
    \[
    \hat{\mu} = \frac1T \sum_{t=1}^T \frac{1/K}{p_{t, I_t}} r_t. 
    \]
    We set $T$, the time horizon of the interaction log, to be $1,000$ or $10,000$.
    For Thompson sampling, we use Monte Carlo (MC) to estimate the action probabilities; 
    we vary the number of MC samples $M$ in $\cbr{ 10^3, 10^4, 10^5}$.
    Note that MC estimation of action probabilities induces a high time cost: in our simulations, for $T=10^3$, KL-MS uses $0.43$s to generate its logged data; in contrast, BernoulliTS with $M=10^3$ uses $15.21$s to generate its logged data. This suggest that setting $M=10^4$ or $10^5$ may be impractical in applications. 

    Figures \ref{fig:exp-1-T-1K-M-1K} to \ref{fig:exp-2-T-10K-M-100K} shows the histogram of the IPW estimates of the average reward 
    induced by logged data generated by KL-MS and Bernoulli-TS with MC estimation of action probabilities, 
    based on $N=2000$ independent trials in the same reward environment used in the previous experiment. Repeatedly, We have two $2$-armed bandit problems, whose mean rewards are $[0.20, 0.25]$ and  $[0.8, 0.9]$ respectively. 
    Tables~\ref{table:mse-exp-1-T-1K} to~\ref{table:bias-exp-2-T-10K} report the MSE and the bias estimate of the respective estimator. 
    It can be seen from the figures and tables that: (1)
    the logged data induced by KL-MS consistently give more accurate estimates of $\mu$, compared to that of BernoulliTS with MC estimation of action probabilities; 
    (2)
    the offline evaluation performance of the logged data induced by BernoulliTS is sensitive to the number of MC samples $M$; while the performance of setting $M=10^4$ or $10^5$ is on par with KL-MS, the estimation error of the more-practical $M=10^3$ setting is evidently higher.
    (3) When time step $T$ is increasing, the error between the IPW estimator induced by BernoulliTS logged data and the true performance become larger while KL-MS remains the same level of error which is smaller than the BernoulliTS.

    \begin{figure}[H]
      \centering
      \textit{$\mu = [0.20, 0.25], T = 1,000$}
      
      \begin{minipage}{0.30\textwidth}
      \includegraphics[width=\linewidth]{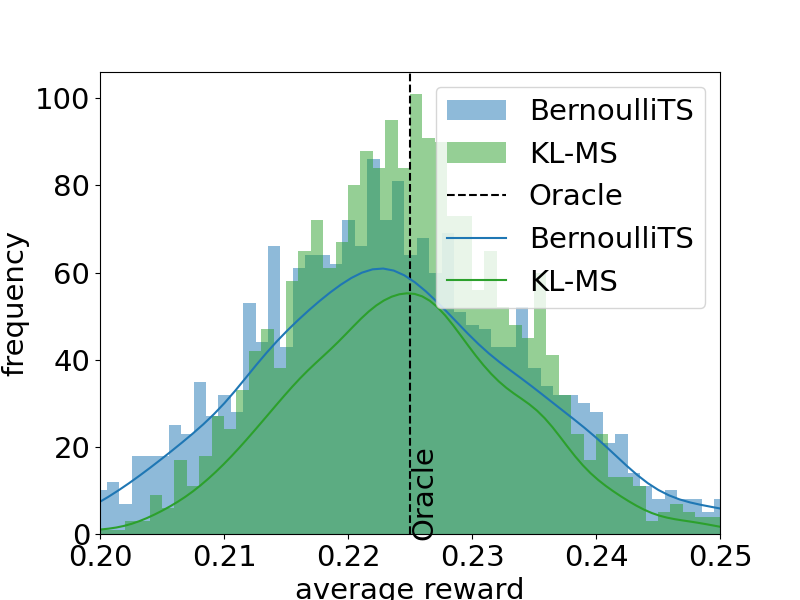}
        \caption{$M = 10^3$}
        \label{fig:exp-1-T-1K-M-1K}
      \end{minipage}
      \hfill
      \begin{minipage}{0.30 \textwidth}
      \includegraphics[width=\linewidth]{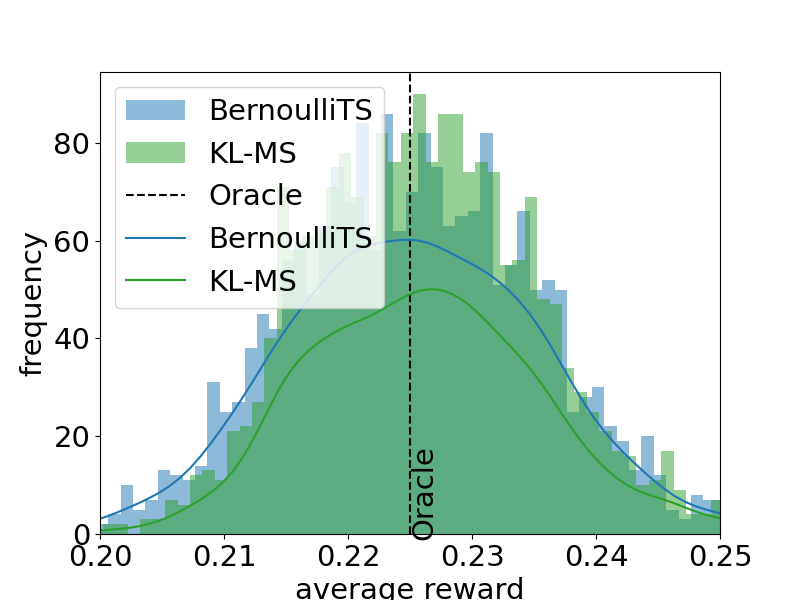}
        \caption{$M = 10^4$}
        \label{fig:exp-1-T-1K-M-10K}
      \end{minipage}
      \hfill
      \begin{minipage}{0.30 \textwidth}
      \includegraphics[width=\linewidth]{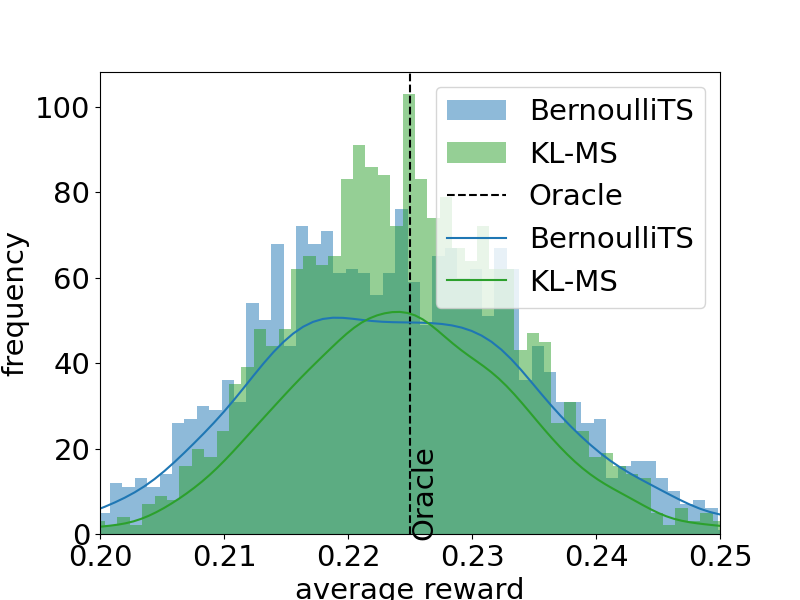}
        \caption{$M = 10^5$}
        \label{fig:exp-1-T-1K-M-100K}
      \end{minipage}
      \label{fig:exp-1-T-1K}
    \end{figure}

    \begin{figure}[H]
      \centering
      \textit{$\mu = [0.80, 0.90], T = 1,000$}

      \begin{minipage}{0.30\textwidth}
      \includegraphics[width=\linewidth]{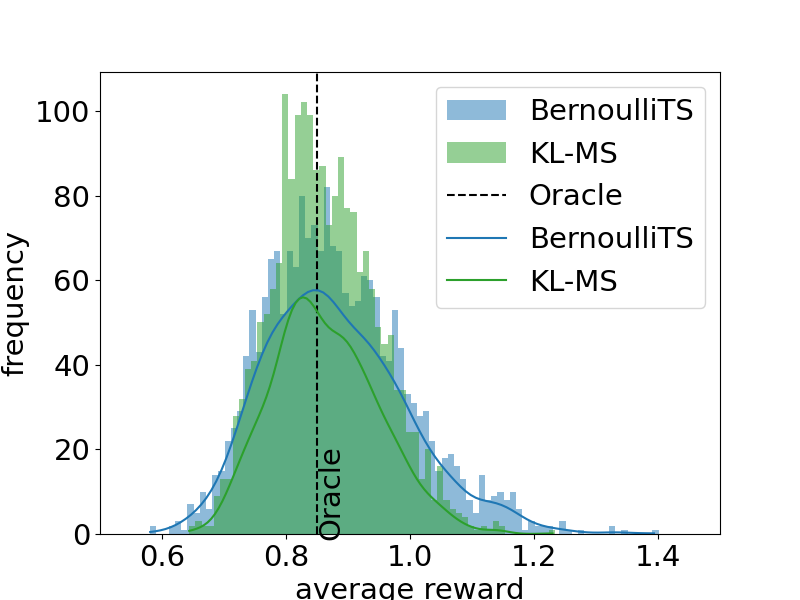}
        \caption{$M = 10^3$}
        \label{fig:exp-2-T-1K-M-1K}
      \end{minipage}
      \hfill
      \begin{minipage}{0.30 \textwidth}
      \includegraphics[width=\linewidth]{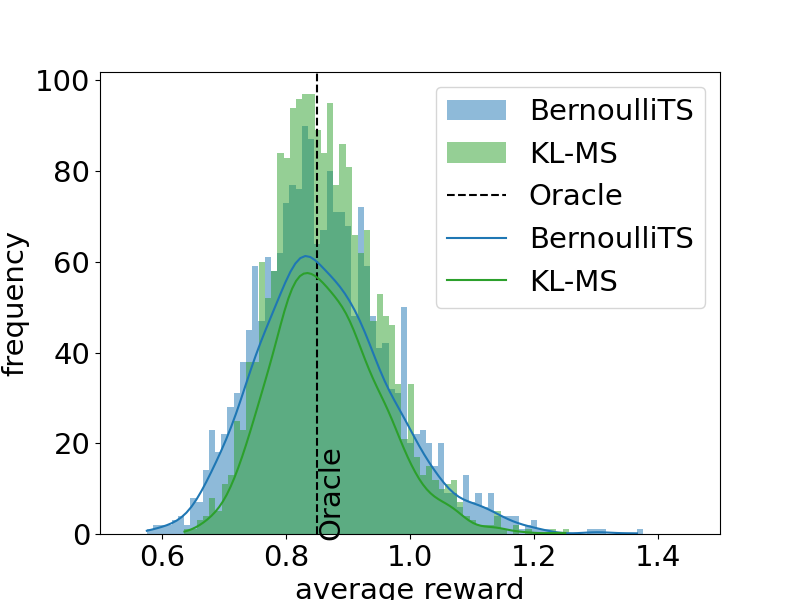}
        \caption{$M = 10^4$}
        \label{fig:exp-2-T-1K-M-10K}
      \end{minipage}
      \hfill
      \begin{minipage}{0.30 \textwidth}
      \includegraphics[width=\linewidth]{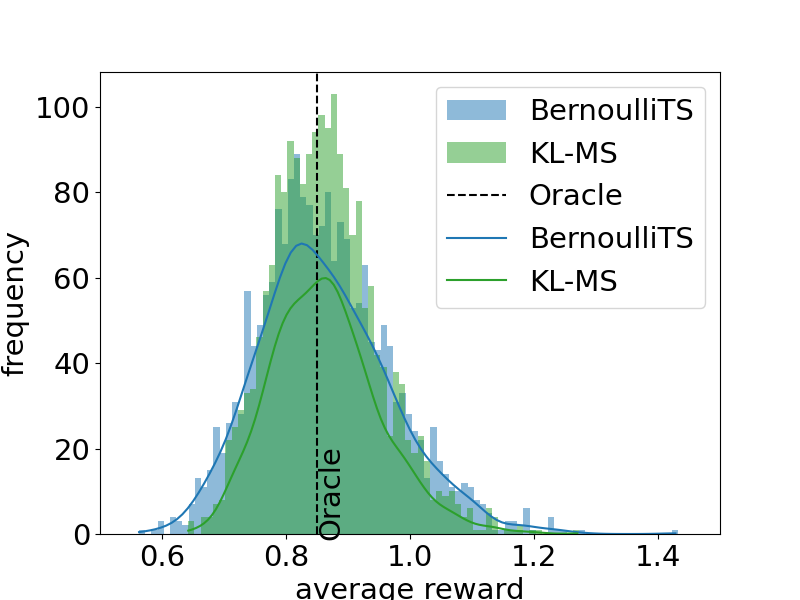}
        \caption{$M = 10^5$}
        \label{fig:exp-2-T-1K-M-100K}
      \end{minipage}
      \label{fig:exp-2-T-1K}
    \end{figure}
    
    \begin{figure}[H]
      \centering
      \text{$\mu = [0.20, 0.25], T = 10,000$}
      
      \begin{minipage}{0.30\textwidth}
      \includegraphics[width=\linewidth]{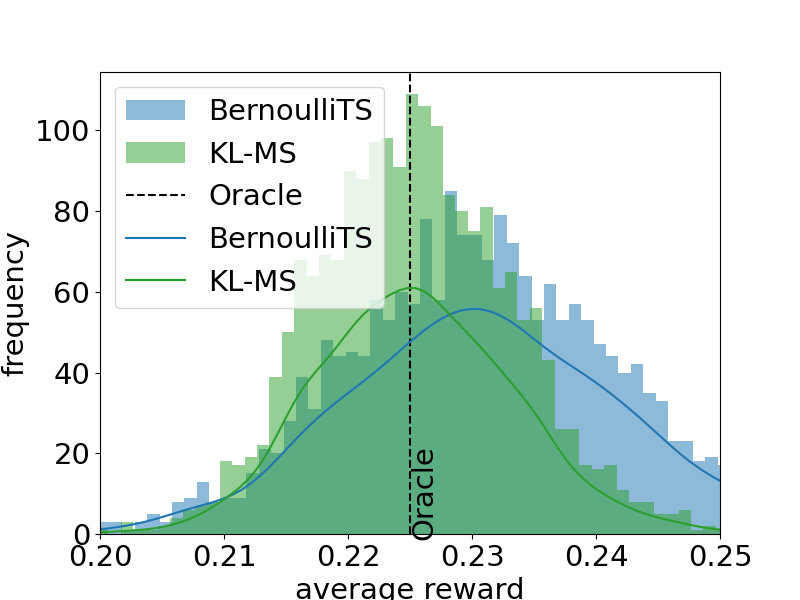}
        \caption{$M = 10^3$}
        \label{fig:exp-1-T-10K-M-1K}
      \end{minipage}
      \hfill
      \begin{minipage}{0.30 \textwidth}
      \includegraphics[width=\linewidth]{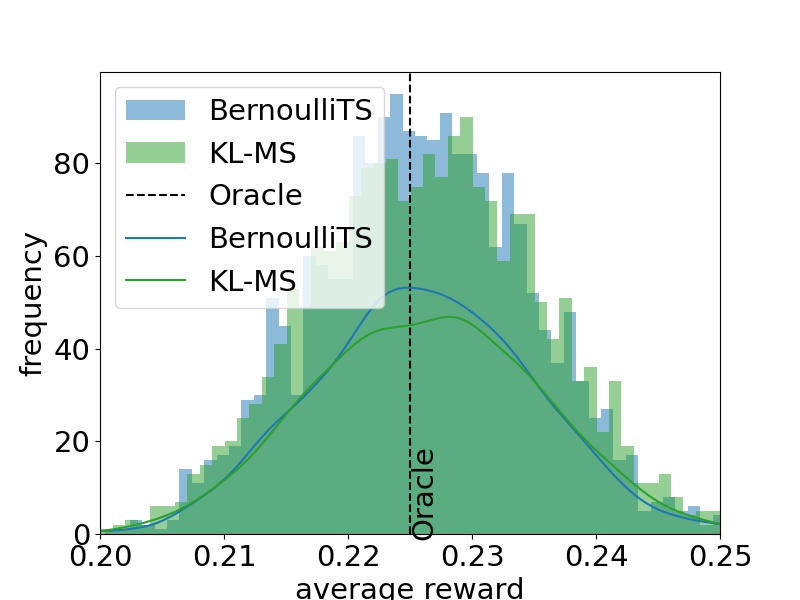}
        \caption{$M = 10^4$}
        \label{fig:exp-1-T-10K-M-10K}
      \end{minipage}
      \hfill
      \begin{minipage}{0.30 \textwidth}
      \includegraphics[width=\linewidth]{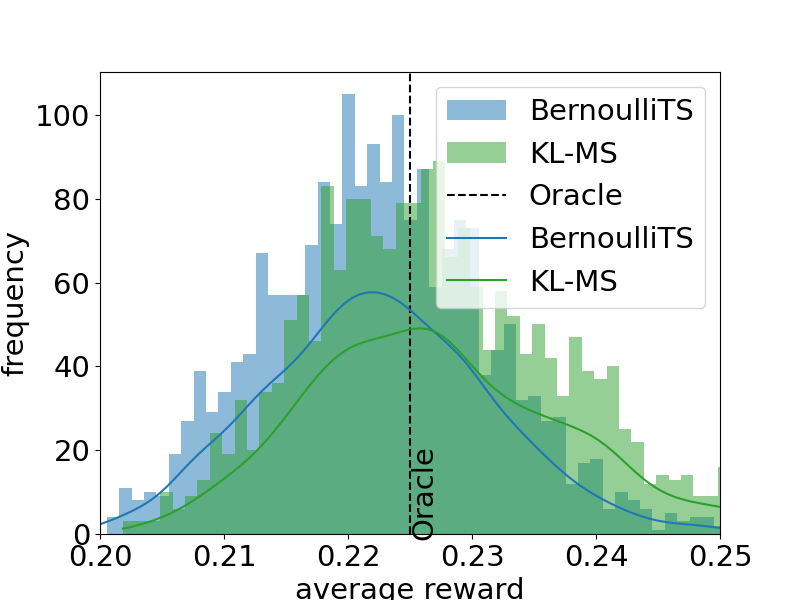}
        \caption{$M = 10^5$}
        \label{fig:exp-1-T-10K-M-100K}
      \end{minipage}
      
      \label{fig:exp-1-T-10K}
    \end{figure}

    \begin{figure}[H]
      \centering
      \textit{$\mu = [0.80, 0.90], T = 10,000$}
      
      \begin{minipage}{0.30\textwidth}
      \includegraphics[width=\linewidth]{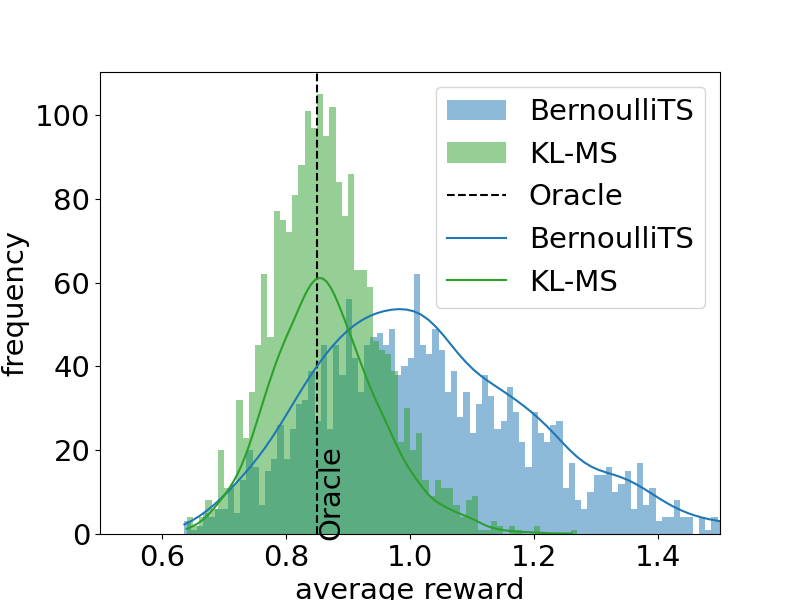}
        \caption{$M = 10^3$}
        \label{fig:exp-2-T-10K-M-1K}
      \end{minipage}
      \hfill
      \begin{minipage}{0.30 \textwidth}
      \includegraphics[width=\linewidth]{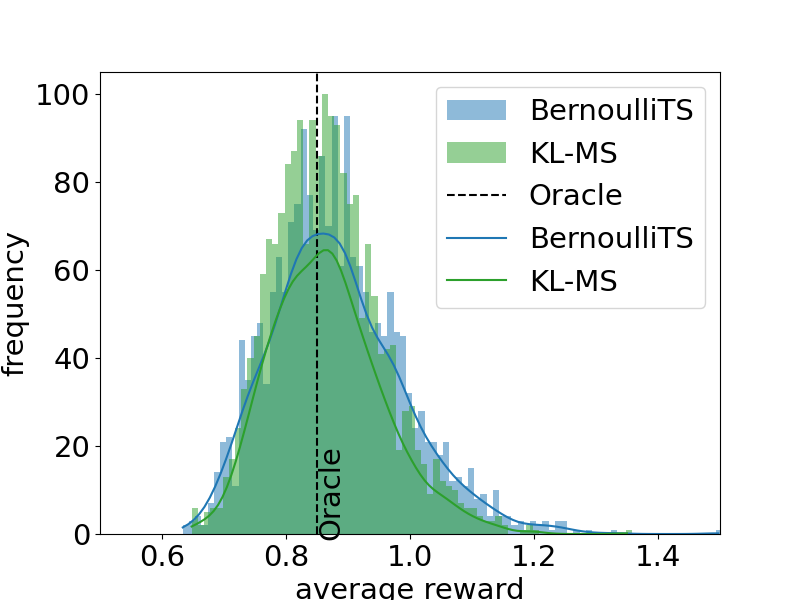}
        \label{fig:exp-2-T-10K-M-10K}
        \caption{$M = 10^4$}
      \end{minipage}
      \hfill
      \begin{minipage}{0.30 \textwidth}
      \includegraphics[width=\linewidth]{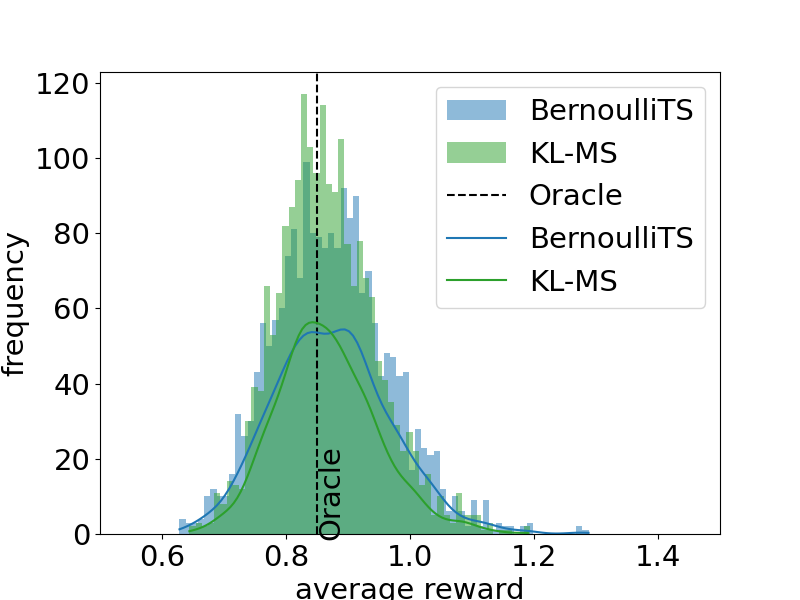}
        \caption{$M = 10^5$}
        \label{fig:exp-2-T-10K-M-100K}
      \end{minipage}
      \label{fig:exp-2-T-10K}
    \end{figure}

    \begin{table}[H]
        \begin{minipage}{0.40\textwidth}
        \hfill
            \caption{MSEs for $\mu = [0.20, 0.25]$, $T=1,000$}
            \label{table:mse-exp-1-T-1K}
        \hfill
            \begin{tabular}{llll}
            \hline
                        & \multicolumn{3}{c}{$M$}  \\ \cline{2-4} 
                        & $10^3$ & $10^4$     & $10^5$    \\ \hline
            BernoulliTS &  0.00014  & 0.00012 & 0.00014 \\
            KL-MS       &  0.00001  & 0.00001 & 0.00001 \\ \hline
            \end{tabular}
        \end{minipage}
    \hfill
        \begin{minipage}{0.40\textwidth}
        \hfill
            \caption{Bias for $\mu = [0.20, 0.25]$, $T=1,000$}
            \label{table:bias-exp-1-T-1K}
        \hfill
            \begin{tabular}{llll}
            \hline
                        & \multicolumn{3}{c}{$M$}  \\ \cline{2-4} 
                        & $10^3$ & $10^4$     & $10^5$   \\ \hline
            BernoulliTS &  -0.00059  & 0.00106 & -0.00068 \\
            KL-MS       &  -0.00096  & 0.00118 & 0.00011 \\ \hline
            \end{tabular}
        \hfill
        \end{minipage}
    \hfil
        \begin{minipage}{0.40\textwidth}
        \hfill
            \caption{MSEs for $\mu = [0.80, 0.90]$, $T=1,000$}
            \label{table:mse-exp-2-T-1K}
        \hfill
            \begin{tabular}{llll}
            \hline
                        & \multicolumn{3}{c}{$M$}  \\ \cline{2-4} 
                        & $10^3$ & $10^4$     & $10^5$    \\ \hline
            BernoulliTS &  0.01464  & 0.01143 & 0.01228 \\
            KL-MS       &  0.00733  & 0.00782 & 0.00749 \\ \hline
            \end{tabular}
        \end{minipage}
    \hfill
        \begin{minipage}{0.40\textwidth}
        \hfill
            \caption{Bias for $\mu = [0.80, 0.90]$, $T=1,000$}
            \label{table:bias-exp-2-T-1K}
        \hfill
            \begin{tabular}{llll}
            \hline
                        & \multicolumn{3}{c}{$M$}  \\ \cline{2-4} 
                        & $10^3$ & $10^4$     & $10^5$   \\ \hline
            BernoulliTS &  0.02911  & 0.01741 & 0.01636 \\
            KL-MS       &  0.01304  & 0.01412 & 0.01355 \\ \hline
            \end{tabular}
        \hfill
        \end{minipage}
    \hfill
        \begin{minipage}{0.40\textwidth}
        \hfill
            \caption{MSEs for $\mu = [0.20, 0.25]$, $T=10,000$}
            \label{table:mse-exp-1-T-10K}
        \hfill
            \begin{tabular}{llll}
            \hline
                        & \multicolumn{3}{c}{$M$}  \\ \cline{2-4} 
                        & $10^3$ & $10^4$     & $10^5$    \\ \hline
            BernoulliTS &  0.00017  & 0.00010 & 0.00009 \\
            KL-MS       &  0.00007  & 0.00006 & 0.00011 \\ \hline
            \end{tabular}
        \end{minipage}
    \hfill
        \begin{minipage}{0.40\textwidth}
        \hfill
            \caption{Bias for $\mu = [0.20, 0.25]$, $T=10,000$}
            \label{table:bias-exp-1-T-10K}
        \hfill
            \begin{tabular}{llll}
            \hline
                        & \multicolumn{3}{c}{$M$}  \\ \cline{2-4} 
                        & $10^3$ & $10^4$     & $10^5$    \\ \hline
            BernoulliTS &  0.00637  & 0.00142 & -0.00240 \\
            KL-MS       &  0.00052  & 0.00066 & 0.00220 \\ \hline
            \end{tabular}
        \end{minipage}
    \hfill
        \begin{minipage}{0.40\textwidth}
        \hfill
            \caption{MSEs for $\mu = [0.80, 0.90]$, $T=10,000$}
            \label{table:mse-exp-2-T-10K}
        \hfill
            \begin{tabular}{llll}
            \hline
                        & \multicolumn{3}{c}{$M$}  \\ \cline{2-4} 
                        & $10^3$ & $10^4$     & $10^5$    \\ \hline
            BernoulliTS &  0.06842  & 0.01276 & 0.01220 \\
            KL-MS       &  0.00898  & 0.00804 & 0.00929 \\ \hline
            \end{tabular}
        \end{minipage}
    \hfill
        \begin{minipage}{0.40\textwidth}
        \hfill
            \caption{Bias for $\mu = [0.80, 0.90]$, $T=10,000$}
            \label{table:bias-exp-2-T-10K}
        \hfill
            \begin{tabular}{llll}
            \hline
                        & \multicolumn{3}{c}{$M$}  \\ \cline{2-4} 
                        & $10^3$ & $10^4$     & $10^5$   \\ \hline
            BernoulliTS &  0.17947  & 0.03401 & 0.04313 \\
            KL-MS       &  0.02046  & 0.01731 & 0.01123 \\ \hline
            \end{tabular}
        \hfill
        \end{minipage}
    \hfill
    
    \end{table}

\end{document}